\newcommand{\myparagraph}[1]{\vspace{0.3\baselineskip}\noindent{\textbf{#1.}}~}
\setlist[itemize]{leftmargin=*}
\DeclareMathOperator*{\argmin}{argmin}
\newcommand{\bw}{\boldsymbol{w}}
\newcommand{\bP}{\boldsymbol{P}}
\newcommand{\flame}{\texttt{FLAME}\xspace}
\newcommand{\lpproj}{\texttt{Lp-Proj}\xspace}
\newcommand{\lpprojtwo}{\texttt{Lp-Proj-2}\xspace}
\newcommand{\ditto}{\texttt{Ditto}\xspace}
\newcommand{\pfedme}{\texttt{pFedMe}\xspace}
\newcommand{\flamehm}{\texttt{FLAME-HM}\xspace}
\newcommand{\flamepm}{\texttt{FLAME-PM}\xspace}
\newcommand{\dittopm}{\texttt{Ditto-PM}\xspace}
\newcommand{\pfedmepm}{\texttt{pFedMe-PM}\xspace}
\newcommand{\flamegm}{\texttt{FLAME-GM}\xspace}
\newcommand{\dittogm}{\texttt{Ditto-GM}\xspace}
\newcommand{\pfedmegm}{\texttt{pFedMe-GM}\xspace}
\newcommand{\loss}{\text{Loss}\xspace}
\newcommand{\fedadmm}{\texttt{FedADMM}\xspace}
\newcommand{\fedavg}{\texttt{FedAvg}\xspace}
\newcommand{\mnist}{\textsf{MNIST}\xspace}
\newcommand{\fmnist}{\textsf{FMNIST}\xspace}
\newcommand{\mmnist}{\textsf{MMNIST}\xspace}
\newcommand{\femnist}{\textsf{FEMNIST}\xspace}
\newcommand{\cifar}{\textsf{CIFAR10}\xspace}
\newcommand{\kl}{KŁ\xspace}
\newcommand{\ml}{\mathcal{L}}
\newcommand{\bx}{\boldsymbol{x}}
\newcommand{\by}{\boldsymbol{y}}
\newcommand{\biden}{\boldsymbol{I}}
\newcommand{\bz}{\boldsymbol{z}}
\newcommand{\bu}{\boldsymbol{u}}
\newcommand{\bone}{\boldsymbol{1}}
\newcommand{\bzero}{\boldsymbol{0}}
\newcommand{\sva}{\text{SVA}}
\newcommand{\sfa}{\text{SFA}}
\newcommand{\bV}{\boldsymbol{V}}
\newcommand{\bpi}{\boldsymbol{\pi}}
\newcommand{\btheta}{\boldsymbol{\theta}}
\newtheorem{theorem}{Theorem}
\newtheorem{proposition}{Proposition}
\newtheorem{definition}{Definition}
\newtheorem{assumption}{Assumption}
\newtheorem{lemma}{Lemma}
\DeclareMathOperator{\trace}{tr}
\DeclareMathOperator{\var}{var}
\begin{document}
\title{On ADMM in Heterogeneous Federated Learning: Personalization, Robustness, and Fairness}
\author{Shengkun Zhu,
        Jinshan Zeng,
        Sheng Wang,
        Yuan Sun,
        Xiaodong Li,~\IEEEmembership{Fellow,~IEEE,}
        Yuan Yao,
        Zhiyong Peng
\IEEEcompsocitemizethanks{\IEEEcompsocthanksitem Shengkun Zhu,  Sheng Wang, and Zhiyong Peng are with the School of Computer Science, Wuhan University.
E-mail: \{whuzsk66, swangcs, peng\}@whu.edu.cn
\IEEEcompsocthanksitem Jinshan Zeng is with the School of Computer and Information Engineering, Jiangxi Normal University.
E-mail: jinshanzeng@jxnu.edu.cn
\IEEEcompsocthanksitem Yuan Sun is with La Trobe Business School, La Trobe University.
Email: yuan.sun@latrobe.edu.au.
\IEEEcompsocthanksitem Xiaodong Li is with the School of Computing Technologies, RMIT University.
Email: xiaodong.li@rmit.edu.au.
\IEEEcompsocthanksitem Yuan Yao is with Hong Kong University of Science and Technology.
Email: yuany@ust.hk.
\IEEEcompsocthanksitem Shengkun Zhu and Jinshan Zeng are co-first authors.
}

}

\markboth{Journal of \LaTeX\ Class Files,~Vol.~14, No.~8, August~2024}%
{Shell \MakeLowercase{\textit{et al.}}: Bare Demo of IEEEtran.cls for Computer Society Journals}
\IEEEtitleabstractindextext{
\begin{abstract}
Statistical heterogeneity is a root cause of tension among accuracy, fairness, and robustness of federated learning (FL), and is key in paving a path forward.
Personalized federated learning (PFL) is an approach that aims to reduce the impact of statistical heterogeneity by developing personalized models for individual users, while also inherently providing benefits in terms of fairness and robustness.
However, existing PFL frameworks focus on improving the performance of personalized models while neglecting the global model. This results in PFL suffering from lower solution accuracy when clients have different kinds of heterogeneous data.
Moreover, these frameworks typically achieve sublinear convergence rates and rely on strong assumptions.
In this paper, we employ the Moreau envelope as a regularized loss function and propose \flame, an optimization framework by utilizing the alternating direction method of multipliers (ADMM) to train personalized and global models. Due to the gradient-free nature of ADMM, \flame alleviates the need for tuning the learning rate during training of the global model. 
We demonstrate that \flame can generalize to the existing PFL and FL frameworks. 
Moreover, we propose a model selection strategy to improve performance in situations where clients have different types of heterogeneous data.
Our theoretical analysis establishes the global convergence and two kinds of convergence rates for \flame under mild assumptions. 
Specifically, under the assumption of gradient Lipschitz continuity, we obtain a sublinear convergence rate. Further assuming the loss function is lower semicontinuous, coercive, and either real analytic or semialgebraic, we can obtain constant, linear, and sublinear convergence rates under different conditions.
We also theoretically demonstrate that \flame is more robust and fair than the state-of-the-art methods on a class of linear problems.
We thoroughly conduct experiments by utilizing six schemes to partition non-i.i.d. data, confirming the performance comparison among state-of-the-art methods.
Our experimental findings show that \flame outperforms state-of-the-art methods in convergence and accuracy, and it achieves higher test accuracy under various attacks and performs more uniformly across clients in terms of robustness and fairness.


\end{abstract}
\begin{IEEEkeywords}
Federated learning, ADMM, global convergence, heterogeneity, personalization, robustness, fairness. 
\end{IEEEkeywords}}

\maketitle

\IEEEdisplaynontitleabstractindextext
\IEEEpeerreviewmaketitle

\section{Introduction}
\IEEEPARstart{F}{ederated} learning (FL) plays a crucial role in the field of artificial intelligence \cite{mcmahan2017communication}, particularly in critical applications such as next-word prediction \cite{hard2018federated}, smart healthcare \cite{nguyen2022federated,antunes2022federated,Hu2023Source}, and its recent integration into emerging large language models (LLMs) \cite{wang2023can,kuang2023federatedscope,zhang2024towards,cho2022heterogeneous}.
In scenarios where privacy issues become very acute, and training becomes particularly challenging, such as in edge computing \cite{Wang2019Adaptive,Ma2021FedSA}, FL enables the collaborative training of models across devices while preserving users' privacy \cite{Sun2023Decentralized,kairouz2021advances,li2021survey,Kummari2024Impact,Li2023FedIPR}.


Despite the advantages of FL in preserving privacy, it still encounters challenges with respect to the statistical heterogeneity of data, affecting its accuracy and convergence \cite{li2020federated,li2020federatedoptimization, Luo2021No,huang2023generalizable}. 
The statistical heterogeneity of data primarily manifests in the non-independent and non-identically distributed (non-i.i.d.) data across different clients \cite{zhang2021survey}.
When training FL models on non-i.i.d. data, the generalization error significantly increases, and the models converge in different directions. 
Beyond accuracy and convergence, statistical heterogeneity also affects fairness in terms of providing a fair quality of service for all participants in the network \cite{Li2020qffl}.
Specifically, an FL system promotes uniform accuracy distribution among clients to ensure \textit{performance fairness}\footnote{All mentions of fairness in this paper refer to performance fairness.} \cite{li2021ditto}. This is closely related to resource allocation, as FL can be viewed as a joint optimization system over a heterogeneous network \cite{Lin2022Personalized, Huang2024Federated}.
Moreover, Li et al. \cite{li2021ditto} found that statistical heterogeneity is a root cause for tension among \textit{accuracy}, \textit{fairness}, and \textit{robustness} of FL, where the robustness of FL refers to the ability against training-time attacks (including data poisoning and model poisoning) \cite{li2021ditto}. 
Exploring statistical heterogeneity in FL is key in paving a path forward to allow for competing constraints of accuracy, robustness, and fairness.

Personalized federated learning (PFL) is a method that aims to mitigate the impact of heterogeneous data by developing personalized models for individual users based on their distinct preferences \cite{vettoruzzo2024advances}.
Numerous strategies have been proposed to achieve PFL. 
A widely recognized strategy is known as \textit{meta-learning}, also referred to as ``learning to learn'' \cite{tan2022towards}. 
Model-agnostic meta-learning (\texttt{MAML}) \cite{finn2017model} is regarded as the pioneering approach to meta-learning,  notable for its ability to generalize effectively and quickly adapt to new heterogeneous tasks.
However, \texttt{MAML} necessitates computing the Hessian term, which poses significant computational challenges.
Several studies, including \cite{nichol2018first,Fallah2020Convergence}, aimed to address this issue by approximating the Hessian matrix.
\texttt{Per-FedAvg} \cite{fallah2020personalized}, inspired by \texttt{MAML}, established a meta-model that can be effectively updated with just one gradient descent step.
Dinh et al. \cite{t2020personalized} expanded \texttt{Per-FedAvg} to introduce a federated meta-learning framework by employing \textit{Moreau envelope} (\pfedme). This framework integrates an $l_2$-norm regularization term, leveraging the global model to optimize personalized models with respect to local data.
Li et al. \cite{li2021ditto} innovatively proposed that PFL can be used to improve accuracy and balance the competing constraints of robustness and fairness.
Lin et al. \cite{Lin2022Personalized} proposed projecting local models into a shared and fixed low-dimensional random subspace and using infimal convolution to control deviations between personalized and projected models, ensuring robustness and fairness while also improving communication efficiency.

However, these PFL frameworks face a common issue: They focus on improving the performance of the personalized model while neglecting the global model. Heterogeneous data can be classified into four types \cite{kairouz2021advances,li2022federated,ye2023heterogeneous}: \textit{label skew}, \textit{feature skew}, \textit{quality skew}, and \textit{quantity skew}. The personalized model typically performs well in the presence of label skew but often struggles to achieve good results with the other types of heterogeneous data. In contrast, the global model tends to perform well across these various types of data.
When clients have different kinds of heterogeneous data, existing PFL frameworks struggle to perform well.
Moreover, current PFL frameworks rely on strong assumptions for convergence, including gradient Lipschitz continuity, bounded variance, and bounded diversity. Nonetheless, they are only capable of achieving a sublinear convergence rate.
Furthermore, these approaches rely on the gradient method to compute inexact solutions for personalized and global models, leading to decreased solution accuracy and model performance. Moreover, the gradient-based method typically requires manual adjustment of the learning rate, a highly sensitive hyperparameter \cite{goodfellow2016deep}.
An excessively large learning rate can trigger model instability or even divergence, whereas an overly small learning rate leads to sluggish convergence and an elevated risk of becoming trapped in local minima. 

The \textit{Alternating Direction Method of Multipliers} (ADMM) is an iterative algorithm that transforms optimization problems into an augmented Lagrangian function and updates primal and dual variables alternately to reach the optimal solution \cite{boyd2011distributed}.
ADMM has been shown to achieve higher solution accuracy in various disciplines, such as matrix completion and separation \cite{xu2012alternating, Shen2014Augmented}, compressive sensing \cite{Chartrand2013nonconvex, Yang2020ADMM-CSNet}, and machine learning \cite{Liu2021Accelerated, Zeng2021Deep,gong2022fedadmm,zhou2023federated, zhu2024SIGMOD}.
Moreover, as a primal-dual scheme, ADMM is more stable. Compared to primal or dual schemes, ADMM typically allows for larger step sizes in gradient updates, which can speed up convergence.
However, there is currently no research applying ADMM to PFL, and the theoretical convergence, fairness, robustness, and experimental performance on different types of non-i.i.d. data partitioning settings remain unknown.

We aim to solve the optimization problem of PFL by leveraging the superior performance offered by ADMM, resulting in improved convergence, accuracy, fairness, and robustness.
Building on this concept, we propose \texttt{FLAME}, a PFL framework with Moreau envelope based on ADMM for training models. 
Specifically, we consider employing the Moreau envelope as clients’ regularized loss function \cite{t2020personalized}. This helps decouple personalized model optimization from global model learning in a bi-level problem framework. Next, we consider transforming the bi-level optimization problem into a multi-block optimization problem and solving it using ADMM, which eliminates the need to adjust the learning rate during the training of the global model. 
When clients have various types of heterogeneous data, we propose a strategy to flexibly choose between personalized and global models.
Moreover, we show that \flame can generalize to existing PFL and FL frameworks by selecting appropriate hyperparameters.

We theoretically establish a convergence analysis for \flame utilizing the Kurdyka-Łojasiewicz (KŁ) inequality framework \cite{kurdyka1998gradients} formulated in \cite{Attouch2013Convergence,wang2019global}. 
Specifically, under the assumption of gradient Lipschitz continuity, we obtain a sublinear convergence rate. Further, assuming that the loss function is lower semicontinuous and either real analytic or semialgebraic (these assumptions are weaker than bounded variance and bounded diversity), we can derive constant, linear, and sublinear convergence rates under different conditions.
Note that our convergence analysis is different from the state-of-the-art method \cite{Attouch2013Convergence,wang2019global} in several aspects, thereby allowing us to attain the aforementioned general convergence results.
Based on \cite{Attouch2013Convergence,wang2019global}, the \textit{sufficient descent}, \textit{relative error}, \textit{continuity conditions}, and the \textit{KŁ property} ensure the global convergence of a nonconvex algorithm. 
We establish sufficient descent and relative error conditions under an inexact ADMM optimization framework while existing theoretical analysis frameworks \cite{Attouch2013Convergence,wang2019global} necessitate all subproblems in ADMM should be solved exactly.
The treatments of this paper are of their own value to the optimization community.
Furthermore, we theoretically analyze the fairness and robustness of \flame. 
By examining test losses and corresponding variances across the network on federated linear regression, we demonstrate that \flame is more robust and fair compared to two state-of-the-art methods \cite{li2021ditto,t2020personalized} under regular conditions.

We conduct comprehensive experiments on various types of heterogeneous data, overcoming the limitation of existing PFL methods that only focus on label skew. 
We compare the performance of \flame with several state-of-the-art PFL methods through experiments conducted on five real-world datasets and six data partition strategies. 
Notably, we experimentally validate the performance of PFL on various types of heterogeneous data for the first time and proposed a strategy for selecting models.
Our experimental findings indicate that under various non-i.i.d. data partitioning schemes, \flame exhibits superior convergence and accuracy compared to state-of-the-art methods.
In terms of robustness and fairness, our experimental results show that \flame achieves higher test accuracy under various attacks and performs more uniformly across clients.
Additionally, we validate the impact of hyperparameters in \flame on its performance.

Our contributions are summarized as follows:
\begin{itemize}
    \item We propose \flame, an ADMM-based PFL framework that boosts the accuracy of both personalized and global models.
    \item We demonstrate that \flame can generalize to other PFL and FL frameworks by configuring certain hyperparameters.
    \item We propose a model selection strategy to improve the performance of \flame in situations where clients have different types of heterogeneous data.
    \item We establish the global convergence and convergence rates for \flame under mild assumptions.
    \item Our theoretical results demonstrate that \flame offers better fairness and robustness compared to state-of-the-art methods under mild conditions.
    \item We generate comprehensive non-i.i.d data distribution cases to validate the accuracy, convergence, robustness, and fairness of several methods on several real-world datasets.
\end{itemize}

The remaining content is structured as follows: Section \ref{sec related work} presents the related work, Section \ref{sce preliminary} introduces the preliminaries used in our study, Section \ref{sec proposed flame} outlines our proposed method, \flame, 
Section \ref{sec:theory} establishes the global convergence, robustness, and fairness results,
Section \ref{sec experiment} validates \flame through experiments, and Section \ref{sec conclusion} concludes our paper.

\section{Related Work}\label{sec related work}
Considering the impact of data heterogeneity on the training model in FL, we first review several specific patterns of data heterogeneity. 
Subsequently, due to our consideration of using ADMM as the optimization method, which is a primal-dual framework, we therefore examine recent research on the integration of the primal-dual framework within FL.
Next, we summarize the PFL research closely related to our work.
Finally, since personalization can provide robustness and fairness for FL, we introduce these concepts.

\myparagraph{Data heterogeneity}
Kairouz et al. \cite{kairouz2021advances} provided a thorough overview of heterogeneous data scenarios from a distribution perspective.
Ye et al. \cite{ye2023heterogeneous} further categorized the statistical heterogeneity of data into four distinct skew patterns: label skew, feature skew, quality skew, and quantity skew. 
Fig. \ref{fig: heterogeneous data} shows examples of the four skew patterns.
Label skew refers to the dissimilarity in label distributions among the participating clients \cite{kairouz2021advances,zhang2022federated}. 
Feature skew denotes a situation in which the feature distributions among participating clients diverge \cite{li2022federated,luo2022disentangled}. 
Quality skew illustrates the inconsistency in data collection quality across different clients \cite{yang2022robust}.
Quantity skew denotes an imbalance in the amount of local data across clients \cite{shang2022federated}. 
These skew patterns lead to local models converging in different directions \cite{ye2023heterogeneous}, thereby resulting in the trained model not being optimal.
Li et al. \cite{li2022federated} conducted thorough experiments to evaluate the effectiveness of current FL algorithms. Their findings indicate that non-i.i.d data does indeed pose significant challenges to the accuracy of FL algorithms during the learning process. 
Chen et al. \cite{chen2021theorem} demonstrated that when data heterogeneity exceeds a certain threshold, purely local training is minimax optimal; otherwise, the global model is minimax optimal. 
In practice, we prefer PFL because it intervenes between the two extremes by interpolating between global and personalized models \cite{Lin2022Personalized}. However, its performance remains uncertain when dealing with different heterogeneous data types across clients.
\begin{figure}[t]
  \centering
  \includegraphics[width=\linewidth]{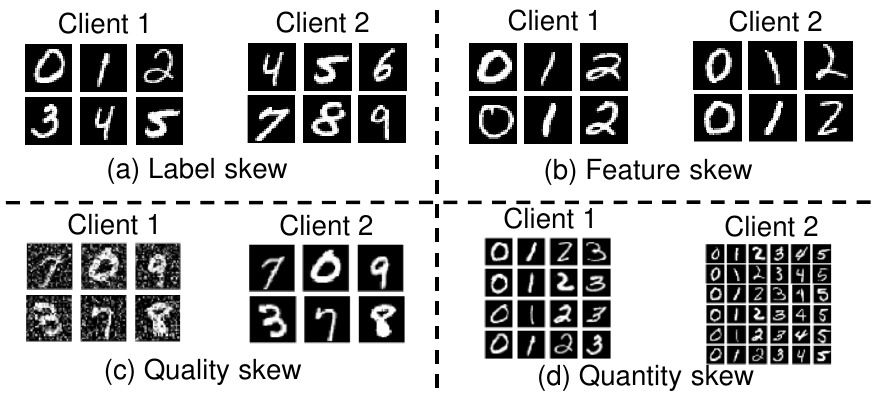}
  \vspace{-1em}
  \caption{Four skew patterns: (a) the labels vary among different clients; (b) the features of the data differ among different clients, manifested as variations in the stroke thickness and slant angle; (c) the data quality varies, notably due to the presence of noise; (d) the quantity of data differs among clients.}
  \label{fig: heterogeneous data}
\end{figure}

\myparagraph{Primal-dual scheme for FL}
From an optimization perspective, we categorize FL into three types: primal scheme \cite{mcmahan2017communication, li2020federatedoptimization, Sai2020SCAFFOLD, Wang2020Tackling, Li2019FedDANE}, dual scheme \cite{Yang2013Trading,ma2015adding,smith2017federated,smith2018cocoa}, and primal-dual scheme \cite{zhang2021fedpd, gong2022fedadmm,zhou2023federated,zhou2023fedgia, Kang2024FedAND}.
Most existing FL frameworks are based on the primal scheme, where each client trains a local model by solving the primal problem via gradient descent, and then the server aggregates these local models. 
In contrast, FL frameworks based on the dual scheme involve solving dual problems, which have been shown to converge faster \cite{smith2018cocoa}. However, the dual scheme is only suitable for convex problems.
In recent years, primal-dual schemes have gained widespread utilization in the context of FL. 
Zhang et al. \cite{zhang2021fedpd} introduced a primal-dual FL framework designed to handle non-convex objective functions. However, this method suffers restrictive assumptions for convergence. Zhou and Li \cite{zhou2023federated} proposed the \texttt{FedADMM}, establishing convergence under mild conditions. Gong et al. \cite{gong2022fedadmm} proposed that within \texttt{FedADMM}, the dual variables can effectively mitigate the impact of data heterogeneity on the training model.
Based on \texttt{FedADMM}, Zhou and Li \cite{zhou2023fedgia} proposed a method that differs from \texttt{FedADMM} by performing a single step of gradient descent on the unselected clients, thereby enhancing communication efficiency.
However, although these primal-dual schemes generally exhibit superior accuracy and convergence performance, they have not yet been applied in PFL.

\myparagraph{Personalized federated learning}
Tan et al. \cite{tan2022towards} categorized the methods of PFL into two classes: global model personalization and learning personalized models. 
Learning personalized models is not the focus of this paper, so interested readers can refer to \cite{Zhu2021Data, Lin2020Ensemble, He2020Group,bistritz2020distributed}.
Tan et al. \cite{tan2022towards} further classified the techniques for personalizing global models into two categories: data-based approaches \cite{zhao2018federated,duan2020self,wu2020fedhome,wang2020optimizing} and model-based approaches \cite{finn2017model,fallah2020personalized,t2020personalized}.
As our proposed method falls within the realm of model-based approaches, we provide a detailed overview of the relevant literature pertaining to model-based approaches.
Finn et al. \cite{finn2017model} considered a model-agnostic meta-learning (MAML) algorithm that is designed to be compatible with different learning problems, enabling the training of a model across various tasks.
However, MAML necessitates computing the Hessian term, which poses significant computational challenges.
Several studies, including \cite{nichol2018first, Fallah2020Convergence}, aimed to address this issue by approximating the Hessian matrix.
Inspired by the principles of MAML, Fallah et al. \cite{fallah2020personalized} introduced \texttt{Per-FedAvg}, which builds a meta-model that can be efficiently updated with a single additional gradient descent step.
Inspired by \texttt{Per-FedAvg}, Dinh et al. \cite{t2020personalized} proposed \texttt{pFedMe} which implements multiple gradient descent steps within the meta-model update process to enhance the solution precision. 
Subsequent studies proposed that employing PFL can enhance the fairness and robustness of the trained models \cite{li2021ditto, Liping2019AAAI, Lin2022Personalized}.
Li et al. \cite{li2021ditto} proposed a framework called \ditto, which was the first to improve the robustness and fairness of FL through personalization.
Lin et al. \cite{Lin2022Personalized} further considered using $l_p$ norm and low-dimensional random projection for regularization. However, the convergence analysis of this method introduces a stronger assumption, namely the low-dimensional condition.
Although their approach can improve the accuracy of personalized models, it renders the global model unusable due to the low-dimensional projection. Consequently, this method becomes ineffective in our scenarios where clients have various types of heterogeneous data.

\myparagraph{Robustness and fairness in FL}
Zhou et al. \cite{Zhou2021Towards} classified fairness in FL into three categories: performance fairness \cite{li2021ditto, Li2020qffl, Li2023term, Lin2022Personalized}, collaboration fairness \cite{Lyu2020Collaborative, xu2020reputation, Yu2020Fairness}, and model fairness \cite{Dwork2012Fairness, Hardt2016Equality}. This paper focuses on performance fairness. In federated networks, the heterogeneity of data across different clients can lead to significant variations in model performance. This issue, referred to as \textit{representation disparity} \cite{Tatsunori2018Fairness}, poses a significant challenge in FL because it can result in unfair outcomes for different clients. 
Next, we present the definition of performance fairness in FL \cite{li2021ditto}. 
\begin{definition}[Performance fairness]
    We define model $\btheta_1$ as fairer than $\btheta_2$ if the test performance distribution of $\btheta_1$ across the network exhibits greater uniformity compared to that of $\btheta_2$. This is quantified by the condition $\var \{f_i(\btheta_1)\} < \var {f_i(\btheta_2)},\,i\!\in\![m]$, where $m$ is the number of clients, $f_i(\cdot)$ denotes the test loss on client $i$, and $\var\{\cdot\}$ denotes variance.
\end{definition}
Li et al. \cite{Li2020qffl, Li2023term} first proposed sample reweighting approaches in FL to promote a more uniform quality of service. 
However, Li et al. \cite{li2021ditto} later found that by increasing the importance of rare devices or data, fairness methods might not be robust, as they can easily overfit to corrupted devices.

Robustness refers to the ability to defend against training-time attacks, which have been extensively studied in previous studies \cite{Biggio2012Poisoning,gu2017badnets,Shafahi2018Poison,Liu2018Trojaning,Huang2020MetaPoison,Xie2020DBA,chen2017targeted,Wang2020Attack,Lyu2024Privacy,Fang2020Local}.
This paper aims to investigate common attacks related to Byzantine robustness \cite{Lamport2019Byzantine}, as formally described below \cite{li2021ditto}.
\begin{definition}[Robustness]
    We are interested in Byzantine robustness \cite{Lamport2019Byzantine}, where malicious devices send arbitrary updates to the server to compromise training. To measure robustness, we compare the mean test performance on benign devices. Specifically, we consider model $\btheta_1$ to be more robust than model $\btheta_2$ if the mean test performance across benign devices is higher for $\btheta_1$ after training with the attack. 
\end{definition}

Robust aggregation is one of the most common strategies to mitigate the effects of malicious updates \cite{Blanchard2017Adversaries, Pillutla2022Robust, sun2019can, Chen2017Distributed, Yin2018Byzantine}. However, it may result in unfair models by filtering out informative updates, particularly in heterogeneous settings. 
Recently, Li et al. \cite{li2021ditto} suggested that personalization can reconcile competing constraints of robustness and fairness. Lin et al. \cite{Lin2022Personalized} further proposed that using low-dimensional random projection can balance communication efficiency, robustness, and fairness simultaneously. 
However, these methods rely on gradient-based optimization methods, and the impact of ADMM on fairness and robustness remains unclear.

\noindent\underline{\textit{\textbf{Remarks}}}.
1) Current PFL methods struggle to perform well when different clients have various types of heterogeneous data;
2) The convergence of existing PFL frameworks still relies on strong assumptions (gradient Lipschitz continuity, bounded variance, and bounded diversity);
3) Existing PFL frameworks rely on gradient-based methods for training models, leading to low model accuracy and difficulties in fine-tuning hyperparameters;
4) The impact of ADMM on the model's accuracy, convergence, robustness, and fairness in PFL is still unknown.


\section{Preliminaries}\label{sce preliminary}

This section describes the notations used throughout the paper, details the definition of FL, introduces the concept of ADMM, and provides an overview of the Moreau envelope.
\subsection{Notations}
We use different text-formatting styles to represent different mathematical concepts: plain letters for scalars, bold letters for vectors, and capitalized letters for matrices. For instance, $m$ represents a scalar, $\boldsymbol{w}$ represents a vector, and $W$ denotes a matrix. Without loss of generality, all training models in this paper are represented using vectors.
We use $[m]$ to represent the set $\{1, 2, ..., m\}$. The symbol $\mathbb{E}$ denotes the expectation of a random variable, and we use ``$:=$" to indicate a definition, while $\mathbb{R}^n$ represents the $n$-dimensional Euclidean space. We represent the inner product of vectors, such as $\langle\boldsymbol{a},\boldsymbol{b}\rangle$, as the sum of the products of their corresponding elements. We use $||\cdot||$ to denote the Euclidean norm of a vector and the spectral norm of a matrix.
We use $\biden$ to represent the identity matrix and $\boldsymbol{1}$ to represent the all-ones matrix.
Table \ref{tab:notation} enumerates the notations used in this paper along with the description.

\begin{table}
\centering
\setlength\tabcolsep{15pt}
\caption{Summary of notations}
\label{tab:notation}
\begin{tabular}{ll}
\toprule
\textbf{Notations} & \textbf{Description}\\
\midrule
$X_i$, $i\in[m]$ & The local dataset \\
$\btheta_i$, $i\in[m]$ & The personalized model\\
$\bw_i$, $i\in[m]$ & The local model\\
$\bpi_i$, $i\in[m]$ & The dual variable\\
$\alpha_i$, $i\in[m]$ & The weight parameter\\
$\bw$ & The global model\\
$\lambda$  & The regularization parameter\\
$\rho$ & The penalty parameter\\
$H$ & The number of local iterations\\
$\mathcal{S}^t$ & The selected clients set in the $t$-th iteration\\
\bottomrule
\end{tabular}
\end{table}

\subsection{Federated Learning}
Consider an FL scenario with $m$ clients, where each client $i$ possesses a local dataset $X_i$ comprising $n_i$ data samples with data distribution $\mathcal{D}_i$. 
These clients are interconnected through a central server and aim to collectively train a model $\bw$ that minimizes the empirical risk \cite{mcmahan2017communication}:
\begin{equation}\label{eq:problem of FL}
    \min_{\bw}\left\{\sum_{i=1}^{m}\alpha_i f_i(\bw):=\sum_{i=1}^{m}\alpha_i\mathbb{E}_{\bx\sim\mathcal{D}_i}[\ell_i(\bw;\bx)]\right\},
\end{equation}
where $\alpha_i$ is a weight parameter, $f_i(\bw):=\mathbb{E}_{\bx\sim\mathcal{D}_i}[\ell_i(\bw;\bx)]$ denotes the expected loss over the data distribution of client $i$, $\bx$ is a random data sample drawn from $\mathcal{D}_i$, and $\ell_i(\bw;\bx)$ denotes the loss function for sample $\bx$ with respect to model $\bw$. Typically, the value of $\alpha_i$ is set to $1/m$ or $n_i/n$, where $n = \sum_{i=1}^m n_i$ is the total number of data points.

\subsection{Alternating Direction Method of Multipliers}
ADMM is an optimization method that belongs to the class of augmented Lagrangian methods and is particularly well-suited for solving the following general problem \cite{boyd2011distributed}:
\begin{align*}
    \min_{\bw\in\mathbb{R}^{r},\boldsymbol{v}\in\mathbb{R}^q}f(\bw)+g(\boldsymbol{v}),\quad\text{s.t. } A\bw+B\boldsymbol{v}-\boldsymbol{b}=\boldsymbol{0},
\end{align*}
where $A\in\mathbb{R}^{p\times r}$, $B\in\mathbb{R}^{p\times q}$, and $\boldsymbol{b}\in\mathbb{R}^{p}$. We directly give the augmented Lagrangian function of the problem as follows,
\begin{equation*}
\begin{split}
    \ml(\bw,\boldsymbol{v},\bpi)&:=f(\bw)+g(\boldsymbol{v})\\
    &+\langle\bpi,A\bw+B\boldsymbol{v}-\boldsymbol{b}\rangle+\frac{\rho}{2}||A\bw+B\boldsymbol{v}-\boldsymbol{b}||^2,
\end{split}
\end{equation*}
where $\bpi\in\mathbb{R}^p$ is the dual variable, and $\rho>0$ is the penalty parameter. After initializing the variables with $(\bw^0,\boldsymbol{v}^0,\bpi^0)$, ADMM iteratively performs the following steps:
\begin{equation*}
\begin{split}
\bw^{t+1}&=\operatorname{argmin}_{\mathbf{w} \in \mathbb{R}^r} \mathcal{L}(\bw, \boldsymbol{v}^t, \boldsymbol{\pi}^t), \\
\boldsymbol{v}^{t+1}&=\operatorname{argmin}_{\boldsymbol{v} \in \mathbb{R}^q} \mathcal{L}(\bw^{t+1}, \boldsymbol{v}, \boldsymbol{\pi}^t), \\
\boldsymbol{\pi}^{t+1}&=\boldsymbol{\pi}^t+\rho(A \bw^{t+1}+B \boldsymbol{v}^{t+1}-\boldsymbol{b}).
\end{split}
\end{equation*}

ADMM exhibits distributed and parallel computing capabilities, effectively addresses equality-constrained problems, and provides global convergence guarantees \cite{wang2019global}, making it particularly well-suited for tackling large-scale optimization problems. It finds widespread applications in distributed computing, machine learning, and related fields.
\subsection{Moreau Envelope}
The Moreau envelope is an essential concept in the fields of mathematics and optimization \cite{moreau1965proximite}. It finds widespread application in convex analysis, non-smooth optimization, and numerical optimization. Here, we present the definition.
\begin{definition}[Moreau envelope \cite{rockafellar2009variational}]
Consider a function $f:\mathbb{R}^p\rightarrow\mathbb{R}$, its \textit{Moreau envelope} is defined as:
\begin{equation}
    F(\bw):=\min_{\btheta\in\mathbb{R}^p}f(\btheta)+\frac{\lambda}{2}||\bw-\btheta||^2,
\end{equation}
where $\lambda$ is a hyperparameter. Its associated proximal operator is defined as follows,
\begin{equation}
    \!\hat{\btheta}(\bw):=\text{prox}_{f/\lambda}(w)=\argmin_{\btheta\in\mathbb{R}^p}\left\{f(\btheta)+\frac{\lambda}{2}||\btheta-\bw||^2 \right\}\!.\!
\end{equation}
\end{definition}

The Moreau envelope provides a smooth approximation of the original function $f$. This approximation is helpful when dealing with optimization algorithms that require smooth functions. As $\lambda$ becomes smaller, the Moreau envelope approaches the original function, making it useful for approximating and optimizing non-smooth functions.
Next, we describe a useful property of Moreau envelope \cite{rockafellar2009variational}.
\begin{proposition}\label{proposition: smooth of moreau envelope}
If $f$ is a proper, lower semicontinuous, and weakly convex (or nonconvex with $L$-\textit{Lipschitz} $\nabla f$) function, then $F$ is $L_F$-smooth with $L_F=\lambda$ (with the condition that $\lambda>2L$ for nonconvex $L$-smooth $f$), and the gradient of $F$ is defined as
\begin{equation}
    \nabla F(\bw)=\lambda(\bw-\hat{\btheta}(\bw)).
\end{equation}
\end{proposition}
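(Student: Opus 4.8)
The plan is to establish the three claims of the proposition in sequence: (i) the proximal map $\hat{\btheta}(\bw)$ is single-valued, so that $F$ and $\hat{\btheta}$ are genuine functions; (ii) the ``outer'' map $\bu(\bw):=\lambda(\bw-\hat{\btheta}(\bw))$ is $\lambda$-Lipschitz; and (iii) $F$ is Fréchet differentiable with $\nabla F=\bu$. Combining (ii) and (iii) then yields $\lambda$-smoothness. For (i), I would fix $\bw$ and study the inner objective $\phi_{\bw}(\btheta):=f(\btheta)+\tfrac{\lambda}{2}\|\btheta-\bw\|^{2}$. In the weakly convex branch (modulus $\rho<\lambda$, which includes ordinary convexity with $\rho=0$), $\phi_{\bw}$ is $(\lambda-\rho)$-strongly convex; in the nonconvex branch, $L$-Lipschitzness of $\nabla f$ gives $\langle\nabla f(\bx)-\nabla f(\by),\bx-\by\rangle\ge-L\|\bx-\by\|^{2}$, i.e.\ $f$ is $L$-weakly convex, and since $\lambda>2L$ we again obtain $(\lambda-L)$-strong convexity of $\phi_{\bw}$. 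Properness and lower semicontinuity of $f$ plus the coercive quadratic guarantee a minimizer, and strong convexity makes it unique; call it $\hat{\btheta}(\bw)$. The first-order condition, via the smooth sum rule $\partial\phi_{\bw}(\btheta)=\partial f(\btheta)+\lambda(\btheta-\bw)$ (in the nonconvex branch the limiting subdifferential of the strongly convex $\phi_{\bw}$ coincides with the convex one, and $\partial f=\{\nabla f\}$), reads $\bu(\bw)=\lambda(\bw-\hat{\btheta}(\bw))\in\partial f(\hat{\btheta}(\bw))$.

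For (ii), I would take $\bw_{1},\bw_{2}$, set $\btheta_{j}:=\hat{\btheta}(\bw_{j})$, $\bu_{j}:=\bu(\bw_{j})\in\partial f(\btheta_{j})$, so $\btheta_{j}=\bw_{j}-\tfrac1\lambda\bu_{j}$. In all branches $\partial f$ is $\rho$-hypomonotone (with $\rho=0$, the weak-convexity modulus, or $\rho=L$): $\langle\bu_{1}-\bu_{2},\btheta_{1}-\btheta_{2}\rangle\ge-\rho\|\btheta_{1}-\btheta_{2}\|^{2}$. Substituting $\btheta_{1}-\btheta_{2}=(\bw_{1}-\bw_{2})-\tfrac1\lambda(\bu_{1}-\bu_{2})$ and expanding, this becomes, with $a:=\|\bu_{1}-\bu_{2}\|$, $b:=\|\bw_{1}-\bw_{2}\|$ and $s:=\langle\bu_{1}-\bu_{2},\bw_{1}-\bw_{2}\rangle\le ab$,
\begin{equation*}
(\lambda-\rho)\,a^{2}-\lambda(\lambda-2\rho)\,s-\rho\lambda^{2}b^{2}\;\le\;0 .
\end{equation*}
Under $\lambda>2\rho$ the coefficient of $s$ is nonnegative, so replacing $s$ by $ab$ preserves the inequality; solving the resulting quadratic in $a$ — whose discriminant collapses because $(\lambda-2\rho)^{2}+4\rho(\lambda-\rho)=\lambda^{2}$ — gives exactly $a\le\lambda b$. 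Hence $\bu$ is $\lambda$-Lipschitz, and in particular $\hat{\btheta}=\mathrm{id}-\tfrac1\lambda\bu$ is (globally) Lipschitz.

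For (iii), I would use the Lipschitz continuity of $\hat{\btheta}$ in a squeeze argument. For any $\bw'$, the suboptimal choice $\btheta=\hat{\btheta}(\bw)$ in the definition of $F(\bw')$, together with $F(\bw)=f(\hat{\btheta}(\bw))+\tfrac\lambda2\|\bw-\hat{\btheta}(\bw)\|^{2}$, gives $F(\bw')-F(\bw)\le\langle\bu(\bw),\bw'-\bw\rangle+\tfrac\lambda2\|\bw'-\bw\|^{2}$; symmetrically, using the true minimizer $\hat{\btheta}(\bw')$ (suboptimal for the $\bw$-problem), $F(\bw')-F(\bw)\ge\langle\bu(\bw),\bw'-\bw\rangle+\tfrac\lambda2\|\bw'-\bw\|^{2}-\lambda\langle\hat{\btheta}(\bw')-\hat{\btheta}(\bw),\bw'-\bw\rangle$, and the last term is $O(\|\bw'-\bw\|^{2})$ by Lipschitzness of $\hat{\btheta}$. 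Thus $F(\bw')-F(\bw)-\langle\bu(\bw),\bw'-\bw\rangle=O(\|\bw'-\bw\|^{2})$, which is Fréchet differentiability with $\nabla F(\bw)=\bu(\bw)=\lambda(\bw-\hat{\btheta}(\bw))$; with (ii), $F$ is $\lambda$-smooth. Alternatively one can invoke a Danskin/parametric-minimization theorem, since the argmin is unique and continuous and $\phi(\bw,\cdot)$ depends smoothly on $\bw$ with $\partial_{\bw}\phi(\bw,\btheta)=\lambda(\bw-\btheta)$.

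I expect the main obstacle to be the nonconvex bookkeeping: justifying that the limiting/Fréchet subdifferential of $\phi_{\bw}$ splits as $\partial f+\lambda(\cdot-\bw)$ and that, at the minimizer, $\partial f$ behaves like the single-valued $\nabla f$, so that uniqueness of $\hat{\btheta}$, the optimality inclusion, and the hypomonotonicity estimate all go through without convexity of $f$ itself. The hypothesis $\lambda>2L$ (more generally $\lambda>2\rho$) is exactly what makes the coefficient of $s$ nonnegative and the discriminant collapse to $\lambda^{2}$ in step (ii). Note that $L_{F}=\lambda$ is the constant this argument produces directly; it is sharp in the (weakly) convex case, whereas for the $L$-smooth case a finer accounting via $\bu_{1}-\bu_{2}=\nabla f(\btheta_{1})-\nabla f(\btheta_{2})$ together with $\|\btheta_{1}-\btheta_{2}\|\le\tfrac{\lambda}{\lambda-L}\|\bw_{1}-\bw_{2}\|$ even yields the smaller constant $\tfrac{L\lambda}{\lambda-L}\le\lambda$.
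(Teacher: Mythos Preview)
The paper does not prove this proposition; it is stated as a known property of the Moreau envelope with a citation to Rockafellar and Wets, \emph{Variational Analysis}, and no argument is given. Your proposal therefore supplies substantially more than the paper does: a correct, self-contained proof. The three-step strategy---uniqueness of the prox via strong convexity of $\phi_{\bw}$, $\lambda$-Lipschitzness of $\bu$ via hypomonotonicity and the quadratic-in-$a$ factorisation (the discriminant collapse $(\lambda-2\rho)^{2}+4\rho(\lambda-\rho)=\lambda^{2}$ is exactly right), and Fr\'echet differentiability via the two-sided squeeze---is the standard variational-analytic route and works as written. One small caveat worth flagging: in step~(ii) your argument for the weakly convex branch needs $\lambda>2\rho$ (not merely $\lambda>\rho$) to keep the coefficient of $s$ nonnegative; this matches the stated $\lambda>2L$ in the smooth branch but is not explicit in the proposition's hypothesis for the weakly convex case, so you should note it as an implicit assumption. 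Since the paper offers no proof, there is nothing to compare your approach against.
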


\section{Proposed \texttt{FLAME}}\label{sec proposed flame}
In this section, we first present the formulation of the optimization problem along with the stationary points for PFL based on ADMM. We then provide an algorithmic description of \flame along with a specific example. Following this, we propose a model selection strategy to adapt to FL scenarios with different types of heterogeneous data.
Finally, we demonstrate that \flame can generalize to the existing PFL and FL frameworks by configuring certain hyperparameters.

\subsection{Problem Formulation}
To construct the objective function for PFL, we employ the approach outlined in \cite{t2020personalized}, where $f_i$ is substituted by the Moreau envelope of $f_i$ in the optimization Problem (\ref{eq:problem of FL}).
The specific formulation of the problem is presented as follows: 
\begin{equation}\label{eq: pfedme}
\begin{split}
    &\min_{\bw}\!\sum_{i=1}^{m}\alpha_i F_i(\bw),\\
    \text{where } F_i(\bw):=&\min_{\btheta_i}f_i(\btheta_i)+\frac{\lambda}{2}||\btheta_i-\bw||^2,\quad i\in[m].
\end{split}
\end{equation} 

Note that $F_i$ is the Moreau envelope of $f_i$, and $\btheta_i$ is the personalized model of client $i$. The hyperparameter $\lambda$ controls the influence of the global model $\bw$ on the personalized model $\btheta_i$. 
A higher value of $\lambda$ provides an advantage to clients with unreliable data by harnessing extensive data aggregation, whereas a lower $\lambda$ places greater emphasis on personalization for clients with a substantial amount of useful data. Note that $\lambda\in(0,\infty)$ is used to prevent extreme cases where $\lambda=0$ (no FL) or $\lambda=\infty$ (no PFL). 
The overall concept is to enable clients to develop their personalized models in different directions while remaining close to the global model $\bw$ contributed by every client.
Note that Problem (\ref{eq: pfedme}) is a bi-level optimization problem. The conventional approach to solving bi-level problems typically involves initially using a first-order gradient method to solve $\btheta_i$ in the lower-level problem, obtaining an approximate solution. This approximate solution is then incorporated into the upper-level problem, followed by another round of the first-order gradient method to solve $\bw$ in the upper-level problem. Iterating through this process multiple times yields the final solutions. Even though the first-order gradient method is simple, it suffers from low solution accuracy and is cumbersome to fine-tune parameters like the learning rate. To address these issues, we propose a relaxed form of Problem (\ref{eq: pfedme}) as follows,
\begin{equation}\label{eq: bi-variable problem}
\min_{\bw,\Theta}\Bigl\{f(\Theta,\bw):=\sum_{i=1}^{m}\alpha_i \Bigl(f_i(\btheta_i)+\frac{\lambda}{2}||\btheta_i-\bw||^2\Bigr)\Bigr\},
\end{equation}
where $\Theta:=\{\btheta_i\}_{i=1}^m$ is the set of personalized models. Note that Problem (\ref{eq: bi-variable problem}) is a multi-block optimization problem with respect to $\bw$ and $\Theta$. It is obvious that Problem (\ref{eq: bi-variable problem}) serves as a lower bound for Problem (\ref{eq: pfedme}).
We aim to learn an optimal personalized model $\Theta^*$ and an optimal global model $\bw^*$ that minimizes $f(\Theta,\bw)$. That is
\begin{align}
    \Theta^*,\bw^*:=\argmin_{\Theta,\bw} f(\Theta,\bw),
\end{align}
and the corresponding optimal function value is given as
\begin{align}
    f^*:=f(\Theta^*,\bw^*).
\end{align}

Given that ADMM, as a primal-dual method, is often regarded as more iteration-stable and converges faster compared to gradient-based approaches, we consider employing ADMM to solve Problem \eqref{eq: bi-variable problem}.
Firstly, we consider introducing the auxiliary variable $W:=\{\bw_i\}_{i=1}^m$ to transform Problem (\ref{eq: bi-variable problem}) into a separable form (with respect to a partition or splitting of the variable into multi-block variables). That is 
\begin{equation}\label{eq:problem pfl}
\begin{split}
    &\min_{\btheta_i,\bw_i,\bw}\Bigr\{\tilde{f}(\Theta,W)\!:=\!\sum_{i=1}^{m}\alpha_i \Bigl(f_i(\btheta_i)+\frac{\lambda}{2}||\btheta_i-\bw_i||^2\Bigr)\Bigl\},\!\!\\
    &\,\,\,\text{ s.t. }\,\bw_i=\bw,\quad i\in[m],
\end{split}
\end{equation}     
where $\bw_i$ can be regarded as the \textit{local model} of client $i$. Note that Problem (\ref{eq:problem pfl}) is equivalent to Problem (\ref{eq: bi-variable problem}) in the sense that the optimal solutions coincide.
We consider using the exact penalty method, ADMM, to solve Problem \eqref{eq:problem pfl}, as it involves linear constraints and multiple block variables, making ADMM well-suited for this optimization problem. 
Moreover, in Section \ref{sec: Connections with Existing Work}, we will analyze in detail how other PFL frameworks can be seen as solving Problem \eqref{eq:problem pfl} by using an inexact penalty method.
To implement ADMM for Problem (\ref{eq:problem pfl}), we establish the corresponding augmented Lagrangian function as follows:
\begin{equation}\label{eq: lagrangian function}
\begin{split}
    \ml(\Theta,W,\Pi,\bw)&:=\sum_{i=1}^m\ml_i(\btheta_i,\bw_i,\bw,\bpi_i), \\
    \ml_i(\btheta_i,\bw_i,\bpi_i,\bw)&:=\alpha_i(f_i(\btheta_i)+\frac{\lambda}{2}||\btheta_i\!-\bw_i||^2)\\
    &+\langle\bpi_i,\bw_i-\bw\rangle+\frac{\rho}{2}||\bw_i-\bw||^2,
\end{split}
\end{equation} 
where $\Pi:=\left\{\bpi_i\right\}_{i=1}^{m}$ is the set of dual variables, and $\rho>0$ is the penalty parameter. The ADMM framework for solving Problem (\ref{eq:problem pfl}) can be summarized as follows: after initializing the variables with $(\Theta^0,W^0,\Pi^0,\bw^0)$, the following update steps are executed iteratively for each $t\geq0, $
\begin{align}
\btheta_i^{t+1}&=\operatorname{argmin}_{\btheta_i}\ml_i(\btheta_i,  \bw_i^t, \boldsymbol{\pi}^t_i,\bw^{t}),\label{eq12}\\ 
\bw_i^{t+1} & =\operatorname{argmin}_{\bw_i} \ml_i(\btheta_i^{t+1}, \bw_i, \boldsymbol{\pi}^t_i, \bw^{t})  \notag\\ 
&=\frac{1}{\lambda\alpha_i+\rho}(\lambda\alpha_i\btheta_i^{t+1}+\rho\bw^t-\bpi_i^t),\label{eq13}\\
\boldsymbol{\pi}_i^{t+1} & =\boldsymbol{\pi}_i^t+\rho(\bw_i^{t+1}-\bw^{t}),\label{eq14}\\
\bw^{t+1} & =\operatorname{argmin}_{\bw} \mathcal{L}(\Theta^{t+1}, W^{t+1}, \bw, \Pi^{t+1})\notag\\
&=\frac{1}{m} \sum_{i=1}^m(\bw_i^{t+1}+\frac{1}{\rho}\boldsymbol{\bpi}_i^{t+1})\label{eq15}. 
\end{align}

Note that we do not provide a closed-form solution for $\btheta_i$ due to the possibly non-convex nature of $f_i$. We will discuss this issue in detail in Section \ref{sec:Algorithmic Design}.

\subsection{Stationary Points}
We present the optimal conditions of Problems \eqref{eq: bi-variable problem} and \eqref{eq:problem pfl}.
\begin{definition}[Stationary point]\label{definition: optimal condition}
    A point $(\Theta^*, \bw^*)$ is a stationary point of Problem \eqref{eq: bi-variable problem} if it satisfies
\begin{align}\label{eq: optimal condition of (6)}
\left\{\begin{aligned}
\nabla f_i(\btheta_i^*)+\lambda(\btheta_i^*-\bw^*)& =0, & i \in[m], \\
\bw^* - \sum_{i=1}^m\alpha_i\btheta_i^*& =0. &  \\
\end{aligned}\right.    
\end{align}
A point $(\Theta^*,W^*,\bw^*,\Pi^*)$ is a stationary point of Problem \eqref{eq:problem pfl} if it satisfies
\begin{align}\label{eq: optimal condition of (7)}
\left\{\begin{aligned}
\nabla f_i(\btheta_i^*)+\lambda(\btheta_i^*-\bw_i^*)& =0, & i \in[m], \\
\alpha_i\lambda(\bw_i^* - \btheta_i^*)+\bpi_i^*& =0, &  i \in[m],\\
\bw_i^* - \bw^*& =0, & i \in[m], \\
\sum_{i=1}^m \bpi_i^*& =0. &  \\
\end{aligned}\right.    
\end{align}

\end{definition}

Definition \ref{definition: optimal condition} indicates that a locally optimal solution of Problem \eqref{eq: bi-variable problem} (resp. \eqref{eq:problem pfl}) must satisfy \eqref{eq: optimal condition of (6)} (resp. \eqref{eq: optimal condition of (7)}). When $f_i$ is convex for any $i\in[m]$, then a point is the globally optimal solution of Problem \eqref{eq: bi-variable problem} (resp. \eqref{eq:problem pfl}) if and only if it satisfies \eqref{eq: optimal condition of (6)} (resp. \eqref{eq: optimal condition of (7)}). Moreover, a stationary point $(\Theta^*,W^*,\bw^*,\Pi^*)$ of Problem \eqref{eq:problem pfl} can imply \eqref{eq: optimal condition of (6)}, which indicates that $(\Theta^*,\bw^*)$ is also a stationary point of Problem \eqref{eq: bi-variable problem}.

\subsection{Algorithmic Design}\label{sec:Algorithmic Design}
In Algorithm \ref{alg: flame on server side}, we introduce \flame, a PFL framework that employs ADMM to solve Problem (\ref{eq:problem pfl}). 
We employ ADMM for training models due to the following advantages: the subproblems for solving $\bw_i$ and $\bw$ are convex, allowing direct derivation of closed-form solutions. In contrast, other PFL frameworks \cite{fallah2020personalized,t2020personalized,li2021ditto} employ gradient-based methods for solving, resulting in lower solution accuracy and necessitating adjustments to learning rates.
Fig. \ref{fig example of flame} shows an example of our algorithm. In each communication round $t$, the server selects $s$ clients from the entire client pool to form $\mathcal{S}^t$ (Line \ref{line 3 alg 1}). The selected clients update their local parameters using Equations (\ref{eq12})-(\ref{eq14}) (Lines \ref{line 8 alg 1}-\ref{line 14 alg 1}). However, when $f_i$ is non-convex, obtaining a closed-form solution of (\ref{eq12}) may be challenging. Therefore, for Problem (\ref{eq12}), we employ gradient descent to iteratively update the personalized model until we get an $\epsilon_i^{t+1}$-approximate solution. That is
\begin{equation}\label{eq16}
\begin{split}
&||\nabla_{\btheta_i}\ml(\btheta_i^{t+1},\bw_i^t,\bpi_i^t,\bw^t)||^2\\
=&||\alpha_i(\nabla f_i(\btheta_i^{t+1})+\lambda(\btheta_i^{t+1}-\bw_i^t)||^2\leq\epsilon_i^{t+1}.    
\end{split}
\end{equation}      

Note that Equation (\ref{eq16}) can be satisfied after $H:=\mathcal{O}(\zeta\log(\frac{r}{ \epsilon^{t+1}_i}))$ iterations, where $\zeta$ is a condition number measuring the difficulty of optimizing Problem (\ref{eq12}), $r$ is the diameter of the search space \cite{bubeck2015convex}. Given a user-defined constant $\upsilon_i\in(0,1)$, we additionally establish the condition 
\begin{align}\label{eq: iteration condition}
\left\{\begin{aligned}
\epsilon^{t+1}_i &\leq\upsilon_i \epsilon_i^{t},  &\,\,\,i \in \mathcal{S}^t, \\
\epsilon^{t+1}_i&=  \epsilon^{t}_i,  \,\,\, &i  \notin \mathcal{S}^t.
\end{aligned}\right.    
\end{align}
which is useful in the convergence analysis.
For clients not included in $\mathcal{S}^t$, their local parameters remain unchanged (Line \ref{line 16 alg 1}).
After the completion of local parameter updates for each client, the update messages $\{\bu_i^{t+1}\}_{i=1}^m$ are sent to the server for updating the global model (Lines \ref{line 5 alg 1} and \ref{line 6 alg 1}).

\begingroup
\begin{algorithm}[t]
  \caption{\flame}
  \label{alg: flame on server side}
  \SetAlgoLined
  \KwIn{$T:$ the total communication rounds, $\rho$: the penalty parameter, $\lambda$: the regularization parameter, $m$: the number of clients, $X_i,i\in[m]$: the local dataset, $\eta$: the learning rate, $H$: the number of local iterations.}
  \textbf{Initialize:} $\btheta_i^0,\bw_i^0,\bpi_i^0,\bu_i^{0}=\bw_i^{0}+\frac{1}{\rho}\bpi_i^{0}$, $i\in[m]$.
  
  \For{$t=0,1,\dots, T-1$}{
  \tcc{On the server side.}
  Randomly select $s$ clients $\mathcal{S}^t\subset[m]$\label{line 3 alg 1}\;

  Call each client to upload $\{\boldsymbol{u}^{t}_i\}_{i=1}^m$ to the server\;
  

  Update $\bw^{t}=\frac{1}{m}\sum_{i=1}^{m}\bu_i^{t}$\label{line 5 alg 1}\;
  

  Broadcast $\bw^{t}$ to the selected clients\label{line 6 alg 1}\;
  
  
  \tcc{On the client side.}

  \For{each client $i\in\mathcal{S}^t$}{
  Create Batches $\mathcal{B}$\label{line 8 alg 1}\;
  
  
  \For{$h=0,1,\dots,H-1$}{
    \For{batch $\xi\in\mathcal{B}$}{
    Compute the gradient $f_i(\btheta_i^{t,h},\xi)$\;

    $\btheta_i^{t,h+1}=\btheta_i^{t,h}-\eta(\nabla f_i(\btheta_i^{t,h},\xi)+\lambda(\btheta_i^{t,h}-\bw_i^t))$\;

    }
  }
    $\btheta_i^{t+1}=\btheta_i^{t,H-1}$\;
    
    $\bw_i^{t+1}=\frac{1}{\lambda\alpha_i+\rho}(\lambda\alpha_i\btheta_i^{t+1}+\rho\bw^t-\bpi_i^t)$\;

    $\bpi_i^{t+1}=\bpi_i^t+\rho(\bw_i^{t+1}-\bw^t)$\;
    

    $\bu_i^{t+1}=\bw_i^{t+1}+\frac{1}{\rho}\bpi_i^{t+1}$\label{line 14 alg 1}\;
    
  }
  \For{each client $i\notin\mathcal{S}^t$}{
  $(\btheta_i^{t+1},\bw_i^{t+1},\bpi_i^{t+1},\bu_i^{t+1})=(\btheta_i^{t},\bw_i^{t},\bpi_i^{t},\bu_i^{t})$\label{line 16 alg 1}\;
  
  }
  }
\KwRet $\bw\,(global),\, \{\btheta_i\}_{i=1}^m\,(personalized)$.




 

\end{algorithm}    
\setlength{\textfloatsep}{2pt}
\endgroup

\subsection{Model Selection}
Since different clients often have various types of heterogeneous data, we refer to this situation as a \textit{hybrid skew}. As shown in Fig. \ref{fig example of flame}, client 1 has label skew data, while client 2 has quantity skew data.
Personalized models often perform well under label skew but generally lag behind global models in other types of heterogeneous data, as validated in our numerical experiments. 
Therefore, in the context of PFL, we propose a model selection strategy: after training models for each client, we choose the model with better performance between personalized and global models for deployment, which we refer to as a \textit{hybrid model}. 
However, existing PFL frameworks usually do not focus on the performance of global models \cite{t2020personalized,li2021ditto}. They even make the global model unusable by mapping it to a low-dimensional space through low-dimension projection \cite{Lin2022Personalized}. Due to the poor performance of the global model, these methods typically do not perform well on heterogeneous data with hybrid skew. In contrast, \flame uses ADMM as an optimization method, which can improve the accuracy of both personalized and global models, thereby adapting to various types of heterogeneous data.


\begin{figure}
  \centering
  \includegraphics[width=\linewidth]{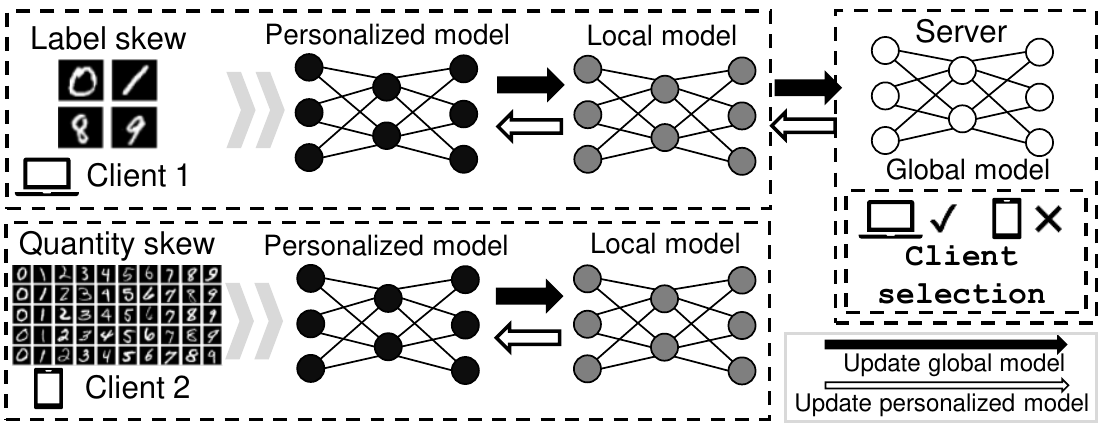}
  \caption{An example of \texttt{FLAME}. Various clients may have different types of non-i.i.d. data (label and quantity skew). Selected clients download the global model from the server, and update their personalized models and local models, while unselected clients keep their model parameters unchanged. Client selection is performed on the server, and update messages $\{\bu_i\}_{i=1}^m$ uploaded by clients are used to update the global model.}
  \label{fig example of flame}
\end{figure}

\subsection{Connections with Existing Work}\label{sec: Connections with Existing Work}
In this section, we demonstrate the connections and comparison between \flame and other FL frameworks: \pfedme \cite{t2020personalized}, \fedadmm \cite{gong2022fedadmm,zhou2023federated}, and \fedavg \cite{mcmahan2017communication}. Note that \pfedme is a PFL framework, while \fedadmm and \fedavg are different FL frameworks. 
We found that \pfedme can be considered as an inexact penalty method. Specifically, \pfedme can be regarded as solving the following alternative minimization problem using gradient-based methods:
\begin{equation*}
\begin{split}
    &\min_{\Theta,W,\bw} \overline{\ml}(\Theta,W,\bw):=\sum_{i=1}^m\overline{\ml}_i(\btheta_i,\bw_i,\bw), \\
    \overline{\ml}_i(\btheta_i,\bw_i,&\bw):=\alpha_i(f_i(\btheta_i)+\frac{\lambda}{2}||\btheta_i\!-\bw_i||^2)+\frac{\rho}{2}||\bw_i-\bw||^2.
\end{split}
\end{equation*}

Note that the difference between $\overline{\ml}(\Theta,W,\bw)$ and $\ml(\Theta,W,\Pi,\bw)$ lies in the fact that $\overline{\ml}(\Theta,W,\bw)$ does not include the dual variable.
When we set $\bpi_i=\textbf{0}$ in $\ml(\Theta, W,\Pi,\bw)$, and iteratively solve $\btheta_i$, $\bw_i$, and $\bw$ (solving $\btheta_i$ and $\bw_i$ via first-order gradient method), we can recover the training framework of \pfedme. 
Next, we consider the connections among \flame, \fedadmm, and \fedavg.
The optimization problem of \fedadmm is established as \cite{zhou2023federated}
\begin{align}
    \min_{\Theta,\bw} \sum_{i=1}^m \alpha_i f_i(\btheta_i),\,\, \text{s.t. }\btheta_i = \bw,\, i\in[m].
\end{align}
Recalling Problem \eqref{eq:problem pfl}, if we set $\lambda\to \infty$, we can infer $\btheta_i=\bw_i$.
Consequently, Problem \eqref{eq:problem pfl} generalizes to the optimization problem of \fedadmm.
Moreover, if we further set the dual variable to zero in the optimization framework of \fedadmm \cite{gong2022fedadmm}, we can recover \fedavg.

\section{Theoretical Analysis}\label{sec:theory}
\subsection{Main Assumptions}
We begin by providing the definitions of \textit{graph}, \textit{semicontinuous}, \textit{real analytic}, and \textit{semialgebraic} functions that are used in our assumptions.
\begin{definition}[Graph]
    Let $f:\mathbb{R}^p\to\mathbb{R}\cup\{+\infty\}$ be an extended real-valued function, its graph is defined by
\begin{align*}
    \text{Graph}(f):=\{(\boldsymbol{x},y)\in\mathbb{R}^p\times\mathbb{R}:y=f(\boldsymbol{x})\},
\end{align*}
and its domain is defined by $\text{dom}(f):=\{\boldsymbol{x}\in\mathbb{R}^p:f(\boldsymbol{x})<+\infty\}$. If $f$ is a proper function, i.e., $\text{dom}(f)\neq\emptyset$, then the set of its global minimizers is defined by
\begin{align*}
    \argmin f:=\{\boldsymbol{x}\in\mathbb{R}^p:f(\boldsymbol{x})=\inf f\}
\end{align*}
\end{definition}

\begin{definition}[Semicontinuous]
    A function $f:\mathcal{X}\to \mathbb{R}$ is called lower semicontinuous if for any $x_0\in\mathcal{X}$, 
\begin{align*}
    \lim_{x\to x_0}\inf f(x)\geq f(x_0).
\end{align*}
\end{definition}

\begin{definition}[Real analytic function\cite{krantz2002primer}]\label{definition: real analytic}
    A function $f$ is real analytic on an open set $\mathcal{X}$ in the real line if for any $x_0\in \mathcal{X}$, $f(x)$ can be represented as
\begin{align*}
   f(x)=\sum_{i=1}^{+\infty}a_i(x-x_0)^i,
\end{align*}
where the coefficients $\{a_i\}_{i=1}^{+\infty}$ are real numbers and the series is convergent to $f(x)$ for $x$ in a neighborhood of $x_0$.
\end{definition}

\begin{definition}[Semialgebraic set and function\cite{bochnak2013real}]\label{definition: Semialgebraic}
       \quad
\begin{itemize}
    \item[a)] A set $\mathcal{X}$ is called semialgebraic if it can be represented by  
    \begin{align*}
        \mathcal{X}=\cup_{i=1}^r\cap_{j=1}^s\{\bx\in\mathbb{R}^p: P_{ij}(\bx)=0,Q_{ij}(\bx)>0\},
    \end{align*}
    where $P_{ij}$ and $Q_{ij}$ are real polynomial functions for $1\leq i\leq r$ and $1\leq j\leq s$.
    \item[b)] A function $f$ is called semialgebraic if its graph $Graph(f)$ is semialgebraic.
\end{itemize}
\end{definition}

According to \cite{law1965ensembles,bochnak2013real,shiota1997geometry}, the class of semialgebraic sets is stable under operations such as finite union, finite intersection, Cartesian product, and complementation. Common examples include polynomial functions, the indicator function of a semialgebraic set, and the Euclidean norm.
Next, we present the assumptions used in our convergence analysis.
\begin{assumption}\label{assumption: either real analytic or semialgebraic}
Suppose that
\begin{itemize}
    \item[a)] the expected loss function $f_i,\,i\in[m]$ is a proper lower semicontinuous and nonnegative function,
    \item[b)] the expected loss function $f_i,\,i\in[m]$ is either real analytic or semialgebraic.
\end{itemize}
\end{assumption}

\begin{assumption}[Gradient Lipschitz continuity]\label{assmption lipschitz}
    The expected loss function $f_i$ is $L$-smooth (gradient Lipschitz continue), i.e., for $\forall \btheta_1, \btheta_2$, the following inequality holds
\begin{align}\label{eq: gradient lipschitz}
\|\nabla f_i(\btheta_1)-\nabla& f_i(\btheta_2)\| \leq L\|\btheta_1-\btheta_2\|.
\end{align}
\end{assumption}

\begin{assumption}\label{assumption: coercive}
    The expected loss function $f_i$ is coercive. That is, $f_i(\btheta_i)\to+\infty$ when $\theta_i\to+\infty$.
\end{assumption}

According to \cite{krantz2002primer,law1965ensembles,bochnak2013real,shiota1997geometry}, most of the commonly used loss functions, such as squared, logistic, hinge, or cross-entropy losses, can be verified to satisfy Assumption \ref{assumption: either real analytic or semialgebraic}.
Assumption \ref{assmption lipschitz} is a standard assumption in the convergence analysis of FL, as evidenced by \cite{t2020personalized,zhou2023federated,zhou2023fedgia,li2019convergence}.
Assumption \ref{assumption: coercive} is widely used in establishing the convergence properties of optimization algorithms to bound the generated sequence, as evidenced by \cite{Zeng2019Global,zhou2023fedgia,Zeng2021Deep}.
Our assumptions for convergence are weaker than the assumptions of other PFL frameworks \cite{t2020personalized,li2021ditto}, which include gradient Lipschitz continuity, bounded variance, and bounded diversity.

\subsection{Global Convergence}
Our global convergence analysis is based on the analytical framework delineated in \cite{Attouch2013Convergence}. 
Let $\mathcal{P}^t:=(\Theta^t, W^t, \Pi^t, \bw^t)$, then by defining the Lyapunov function $\tilde{\ml}(\mathcal{P}^t)$ as 
\begin{equation}
\begin{split}
    \tilde{\ml}(\mathcal{P}^t):=\ml(\mathcal{P}^t)+\sum_{i=1}^m \iota_i \epsilon_i^t,
\end{split}  
\end{equation}
where $\iota_i=\frac{\alpha_i^2}{\bigl((\frac{1}{\rho^2}-1)\lambda^2\alpha_i^2-\frac{L\alpha_i+\rho}{2}\bigr)(1-\upsilon_i)}$,
the analytical framework identifies four crucial components: ensuring \textit{sufficient descent} and \textit{relative error} conditions, verifying the \textit{continuity} condition, and confirming the \textit{Kurdyka-Łojasiewicz (KŁ) property} of sequences $\{\tilde{\ml}(\mathcal{P}^t)\}$. 
Next, we
present two key lemmas of sufficient decent and relative error, while leaving other details in Appendix \ref{Appendix: convergence}.
\begin{lemma}[Sufficient descent] \label{lemma1}
Suppose that Assumption \ref{assmption lipschitz} holds. Let $\{\mathcal{P}^t\}$ denote the sequence generated by Algorithm \ref{alg: flame on server side}, let each client $i$ choose $\frac{\lambda^2\alpha_i^2(1+\rho)}{\rho^2}-\frac{\lambda\alpha_i+\rho}{2}<0$, $\rho\geq\lambda\alpha_i$, and $\frac{1-\rho^2}{\rho^2}\lambda^2\alpha_i^2-\frac{L\alpha_i+\rho}{2}>0$, then for all $t\geq 0$, it holds that
\begin{align*}
    &\tilde{\ml}(\mathcal{P}^{t})-\tilde{\ml}(\mathcal{P}^{t+1})\geq\mathcal{D}_1\Delta\Gamma^{t+1},\,\text{where}\\
    \Delta\Gamma^{t+1} \!:=\!\sum_{i=1}^m(&\|\bw^{t+1}\!-\!\bw^{t}\|^2\!+\!\|\bw^{t+1}_i\!-\!\bw_i^{t}\|^2\!+\!\|\btheta_i^{t+1}\!-\!\btheta_i^t\|^2),
\end{align*}
and $\mathcal{D}_1:=\min_i\{\frac{\rho}{2},\frac{\lambda\alpha_i+\rho}{2}-\frac{\lambda^2\alpha_i^2(1+\rho)}{\rho^2}\}$.
\end{lemma}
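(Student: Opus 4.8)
\medskip

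The plan is to decompose the decrease $\tilde{\ml}(\mathcal{P}^t)-\tilde{\ml}(\mathcal{P}^{t+1})$ according to the four block updates performed in one iteration of Algorithm \ref{alg: flame on server side}, namely the $\btheta_i$-step \eqref{eq12}, the $\bw_i$-step \eqref{eq13}, the dual $\bpi_i$-step \eqref{eq14}, and the $\bw$-step \eqref{eq15}, and then absorb the ``bad'' term coming from the dual update into the ``good'' terms coming from the primal updates plus the $\iota_i\epsilon_i^t$ correction in the Lyapunov function. Since $\ml(\mathcal{P}^t)=\sum_i \ml_i(\btheta_i^t,\bw_i^t,\bpi_i^t,\bw^t)$ is separable across clients (the coupling is only through the shared $\bw$, which is handled by the server step), and since clients not in $\mathcal{S}^t$ leave all their variables and $\epsilon_i$ unchanged, it suffices to prove the per-client descent inequality for $i\in\mathcal{S}^t$ and sum.

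\medskip

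First I would handle the $\bw$-step and the $\bw_i$-step, which each contribute a genuine decrease. The $\bw$-update \eqref{eq15} minimizes $\ml(\Theta^{t+1},W^{t+1},\cdot,\Pi^{t+1})$, a function that is $\frac{m\rho}{2}$-strongly convex in $\bw$ (it is $\sum_i[\langle\bpi_i,\bw_i-\bw\rangle+\frac{\rho}{2}\|\bw_i-\bw\|^2]$ plus terms independent of $\bw$), so standard strong-convexity gives $\ml(\cdot,\bw^t,\cdot)-\ml(\cdot,\bw^{t+1},\cdot)\geq \frac{m\rho}{2}\|\bw^{t+1}-\bw^t\|^2$; after distributing this becomes a per-client $\frac{\rho}{2}\|\bw^{t+1}-\bw^t\|^2$ term, matching the $\frac{\rho}{2}$ inside $\mathcal{D}_1$. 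Similarly the $\bw_i$-update \eqref{eq13} minimizes a $(\lambda\alpha_i+\rho)$-strongly convex function of $\bw_i$, yielding a decrease of at least $\frac{\lambda\alpha_i+\rho}{2}\|\bw_i^{t+1}-\bw_i^t\|^2$. Next, for the $\btheta_i$-step, since we only have the $\epsilon_i^{t+1}$-inexact stationarity \eqref{eq16} rather than exact minimization, I would use $L$-smoothness of $f_i$ (Assumption \ref{assmption lipschitz}), which makes $\ml_i$ have $\lambda\alpha_i+L\alpha_i$-Lipschitz gradient in $\btheta_i$ and is $\lambda\alpha_i$-strongly convex in $\btheta_i$ when $\lambda>L$; combining a descent-lemma bound with the inexactness bound $\|\nabla_{\btheta_i}\ml_i\|^2\le\epsilon_i^{t+1}$ gives something like $\ml_i(\btheta_i^t,\cdots)-\ml_i(\btheta_i^{t+1},\cdots)\ge c_i\|\btheta_i^{t+1}-\btheta_i^t\|^2 - c_i'\epsilon_i^{t+1}$ for explicit constants.

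\medskip

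The main obstacle — and the reason for the Lyapunov correction and the algebraic constraints in the statement — is the dual update \eqref{eq14}, which \emph{increases} $\ml$ by exactly $\langle\bpi_i^{t+1}-\bpi_i^t,\bw_i^{t+1}-\bw^t\rangle = \frac{1}{\rho}\|\bpi_i^{t+1}-\bpi_i^t\|^2$. The standard ADMM trick is to bound $\|\bpi_i^{t+1}-\bpi_i^t\|$ by the movement of the other blocks: from the $\bw_i$-optimality condition $\alpha_i\lambda(\bw_i^{t+1}-\btheta_i^{t+1})+\bpi_i^{t+1}=\bzero$ (exact, since that subproblem is solved exactly), we get $\bpi_i^{t+1}=-\alpha_i\lambda(\bw_i^{t+1}-\btheta_i^{t+1})$, hence $\bpi_i^{t+1}-\bpi_i^t = -\alpha_i\lambda\big((\bw_i^{t+1}-\bw_i^t)-(\btheta_i^{t+1}-\btheta_i^t)\big)$, so $\|\bpi_i^{t+1}-\bpi_i^t\|^2 \le 2\alpha_i^2\lambda^2\big(\|\bw_i^{t+1}-\bw_i^t\|^2+\|\btheta_i^{t+1}-\btheta_i^t\|^2\big)$ (or a $(1+\rho)/\rho$-weighted Young's-inequality split to get the precise coefficients appearing in the hypotheses). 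Plugging this into $\frac{1}{\rho}\|\bpi_i^{t+1}-\bpi_i^t\|^2$ produces terms proportional to $\frac{\lambda^2\alpha_i^2}{\rho^2}\|\bw_i^{t+1}-\bw_i^t\|^2$ and $\frac{\lambda^2\alpha_i^2}{\rho^2}\|\btheta_i^{t+1}-\btheta_i^t\|^2$ that must be dominated by the $\frac{\lambda\alpha_i+\rho}{2}$ and $c_i$ coefficients above — this is precisely why the conditions $\frac{\lambda^2\alpha_i^2(1+\rho)}{\rho^2}-\frac{\lambda\alpha_i+\rho}{2}<0$ and $\frac{1-\rho^2}{\rho^2}\lambda^2\alpha_i^2-\frac{L\alpha_i+\rho}{2}>0$ are imposed, and why $\rho\ge\lambda\alpha_i$ is needed. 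Finally, the leftover negative multiples of $\epsilon_i^{t+1}$ are handled using the geometric-decay condition \eqref{eq: iteration condition}, $\epsilon_i^{t+1}\le\upsilon_i\epsilon_i^t$: one checks that with $\iota_i=\frac{\alpha_i^2}{((\frac{1}{\rho^2}-1)\lambda^2\alpha_i^2-\frac{L\alpha_i+\rho}{2})(1-\upsilon_i)}$ the telescoping quantity $\sum_i\iota_i(\epsilon_i^t-\epsilon_i^{t+1}) \ge \sum_i\iota_i(1-\upsilon_i)\epsilon_i^{t+1}$ exactly cancels the residual inexactness terms, so that $\tilde\ml(\mathcal{P}^t)-\tilde\ml(\mathcal{P}^{t+1})$ is bounded below by $\mathcal{D}_1\Delta\Gamma^{t+1}$ with $\mathcal{D}_1=\min_i\{\frac{\rho}{2},\frac{\lambda\alpha_i+\rho}{2}-\frac{\lambda^2\alpha_i^2(1+\rho)}{\rho^2}\}$. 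I would close by noting the bookkeeping for $i\notin\mathcal{S}^t$ is trivial since every term in $\Delta\Gamma^{t+1}$ and every changed Lyapunov summand vanishes for those clients.
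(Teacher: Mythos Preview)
Your proposal is correct and follows essentially the same four-block decomposition as the paper's proof: the $\bw$-step gives the $\frac{\rho}{2}$ term via strong convexity, the $\bw_i$-step gives $\frac{\lambda\alpha_i+\rho}{2}\|\bw_i^{t+1}-\bw_i^t\|^2$, the dual ascent contributes $\frac{1}{\rho}\|\bpi_i^{t+1}-\bpi_i^t\|^2$ which is controlled via the identity $\bpi_i^{t+1}=\lambda\alpha_i(\btheta_i^{t+1}-\bw_i^{t+1})$ (the paper records this as a separate Lemma~\ref{lemma2}) and a $(1+\rho)/\rho$-weighted Young split, and the inexact $\btheta_i$-step is handled by the descent lemma plus the gradient bound \eqref{eq16}. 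One small remark: you do not need, and the paper does not use, strong convexity of the $\btheta_i$-subproblem (no $\lambda>L$ assumption); the $L$-smoothness descent inequality applied at $\btheta_i^{t+1}$ together with Young's inequality on $\langle \nabla_{\btheta_i}\ml_i(\btheta_i^{t+1},\bw_i^t,\cdot),\btheta_i^{t+1}-\btheta_i^t\rangle$ already yields the required $c_i\|\btheta_i^{t+1}-\btheta_i^t\|^2 - c_i'\epsilon_i^{t+1}$ bound, and the specific Young parameter is what produces the coefficient $\frac{1-\rho^2}{\rho^2}\lambda^2\alpha_i^2-\frac{L\alpha_i+\rho}{2}$ appearing in the hypothesis and in $\iota_i$.
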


\begin{lemma}[Relative error]\label{lemma: relative error}
    Suppose that Assumption \ref{assmption lipschitz} holds. Let $\{\mathcal{P}^t\}$ denote the sequence generated by Algorithm \ref{alg: flame on server side}, and let $\varepsilon^{t+1}=\sum_{i=1}^m\epsilon_{i}^{t+1}$, then for all $t\geq 0$, it holds that
\begin{align*}
    \|\partial \tilde{\ml}(\mathcal{P}^t)\|^2\leq D_2(\Delta\Gamma^{t+1}+\varepsilon^{t+1}), \,\text{where}\\
    \partial\tilde{\ml}(\mathcal{P}^t):=(\{\partial_{\Theta}\tilde{\ml}\},\{\partial_{W}\tilde{\ml}\},\{\partial_{\Pi}\tilde{\ml}\},\{\partial_{\bw}\tilde{\ml}\})(\mathcal{P}^t),
\end{align*}
and $D_2:=\max_i\{4\lambda^2\alpha_{i}^2(1+\frac{1}{\rho^2})+2\alpha_{i}^2(L^2+2\lambda^2)+2\rho^2+2\}$.
\end{lemma}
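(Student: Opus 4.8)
The plan is to bound the subdifferential $\partial\tilde{\ml}(\mathcal{P}^t)$ block by block, using the optimality conditions that define each ADMM update to cancel the leading terms, and then to control the residual terms by successive-iterate differences $\Delta\Gamma^{t+1}$ and by the inexactness error $\varepsilon^{t+1}$. First I would write out the four blocks of $\partial\tilde{\ml}(\mathcal{P}^t)$ explicitly. For the $\bpi_i$-block, $\partial_{\bpi_i}\tilde{\ml}(\mathcal{P}^t)=\bw_i^t-\bw^t$; using the dual update \eqref{eq14}, $\bw_i^t-\bw^t=\tfrac{1}{\rho}(\bpi_i^t-\bpi_i^{t-1})$, but more usefully $\bw_i^{t}-\bw^{t}$ can be re-expressed via the shift from step $t$ to $t+1$: since $\bpi_i^{t+1}-\bpi_i^t=\rho(\bw_i^{t+1}-\bw^t)$ and we also need $\bw_i^t-\bw^t$, I would add and subtract $\bw_i^{t+1}$ and $\bw^{t+1}$ to write it as a combination of $\bw_i^{t+1}-\bw_i^t$, $\bw^{t+1}-\bw^t$, and $\bw_i^{t+1}-\bw^t$ (the last being $\tfrac1\rho(\bpi_i^{t+1}-\bpi_i^t)$, itself bounded via the Lipschitz relation between dual increments and primal increments — this is exactly the estimate that appears in the relative-error step of \cite{wang2019global} and should be reproved here under inexactness).

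Next I would handle the $\bw$-block: $\partial_{\bw}\tilde{\ml}(\mathcal{P}^t)=-\sum_i\bpi_i^t-\rho\sum_i(\bw_i^t-\bw^t)=-\sum_i\bpi_i^{t}-\rho\sum_i(\bw_i^t-\bw^t)$. The update \eqref{eq15} is the optimality condition $\sum_i(\bpi_i^{t+1}+\rho(\bw_i^{t+1}-\bw^{t+1}))=0$ (equivalently $\bw^{t+1}=\tfrac1m\sum_i\bu_i^{t+1}$), so subtracting this from the block expression makes everything a telescoping combination of $\bpi_i^{t+1}-\bpi_i^t$ and $\bw_i^{t+1}-\bw_i^t$ and $\bw^{t+1}-\bw^t$; again all of these are controlled by $\Delta\Gamma^{t+1}$. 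For the $\bw_i$-block, $\partial_{\bw_i}\tilde{\ml}(\mathcal{P}^t)=-\lambda\alpha_i(\btheta_i^t-\bw_i^t)+\bpi_i^t+\rho(\bw_i^t-\bw^t)$, and the update \eqref{eq13} is precisely the vanishing of the analogous expression evaluated at $(\btheta_i^{t+1},\bw_i^{t+1},\bpi_i^t,\bw^t)$; subtracting, the difference is $-\lambda\alpha_i(\btheta_i^{t+1}-\btheta_i^t)+\lambda\alpha_i(\bw_i^{t+1}-\bw_i^t)+\rho(\bw_i^{t+1}-\bw_i^t)+\rho(\bw^{t+1}-\bw^t)+(\bpi_i^{t+1}-\bpi_i^t)$, all bounded by $\Delta\Gamma^{t+1}$ after invoking the dual-increment bound. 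Finally, the $\btheta_i$-block is $\partial_{\btheta_i}\tilde{\ml}(\mathcal{P}^t)=\alpha_i(\nabla f_i(\btheta_i^t)+\lambda(\btheta_i^t-\bw_i^t))$; here the $\btheta_i^{t}$-update only gives an \emph{inexact} stationarity, namely \eqref{eq16}, $\|\alpha_i(\nabla f_i(\btheta_i^{t})+\lambda(\btheta_i^{t}-\bw_i^{t-1}))\|^2\le\epsilon_i^{t}$ — wait, more carefully, the inexactness bound at round $t$ reads $\|\alpha_i(\nabla f_i(\btheta_i^{t+1})+\lambda(\btheta_i^{t+1}-\bw_i^t))\|^2\le\epsilon_i^{t+1}$. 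So for the block at $\mathcal{P}^t$ I would write $\nabla f_i(\btheta_i^t)+\lambda(\btheta_i^t-\bw_i^t)=[\nabla f_i(\btheta_i^t)+\lambda(\btheta_i^t-\bw_i^{t-1})]+\lambda(\bw_i^{t-1}-\bw_i^t)$, bound the first bracket by $\sqrt{\epsilon_i^t/\alpha_i^2}$ and absorb it into $\varepsilon^{t+1}$ (using the monotonicity $\epsilon_i^{t}\le\epsilon_i^{t+1}/\upsilon_i$ or simply keeping $\varepsilon^{t+1}$ as stated with the $D_2$ constant chosen large enough), and bound the second by $\lambda\|\bw_i^{t-1}-\bw_i^t\|$ — which after re-indexing is part of $\Delta\Gamma^{t+1}$-type quantities; alternatively, and more cleanly, I would state the lemma for $\partial\tilde\ml(\mathcal P^{t+1})$-in-terms-of-round-$t$-quantities, matching how the sufficient-descent Lemma \ref{lemma1} is phrased, so that the Lipschitz constant $L$ enters through $\|\nabla f_i(\btheta_i^{t+1})-\nabla f_i(\btheta_i^t)\|\le L\|\btheta_i^{t+1}-\btheta_i^t\|$.

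After all four blocks are bounded, I would combine them using $\|\partial\tilde\ml(\mathcal P^t)\|^2\le (\text{number of terms})\sum(\text{squared norms of blocks})$, apply $\|a+b+\cdots\|^2\le k(\|a\|^2+\cdots)$ within each block, collect coefficients, and read off that every term is a constant multiple of either a squared successive-iterate difference (hence $\le\Delta\Gamma^{t+1}$) or of some $\epsilon_i^{t+1}$ (hence $\le\varepsilon^{t+1}$); taking $D_2$ to be the stated maximum over $i$ of $4\lambda^2\alpha_i^2(1+\tfrac1{\rho^2})+2\alpha_i^2(L^2+2\lambda^2)+2\rho^2+2$ then closes the estimate. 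The main obstacle is the $\btheta_i$-block and, relatedly, the bound on the dual increments $\|\bpi_i^{t+1}-\bpi_i^t\|$: in the exact-ADMM analyses of \cite{Attouch2013Convergence,wang2019global} one controls $\bpi_i^{t+1}-\bpi_i^t$ by the $\bw_i$-optimality condition, which there is \emph{exact}; here that optimality is clean (the $\bw_i$-update \eqref{eq13} is a closed form), so the dual increment is $\|\bpi_i^{t+1}-\bpi_i^t\|=\|\lambda\alpha_i(\btheta_i^{t+1}-\bw_i^{t+1})- (\text{the exact gradient term})\|$ and one must feed in the \emph{inexact} $\btheta_i$-stationarity \eqref{eq16} to relate $\lambda\alpha_i(\btheta_i^{t+1}-\bw_i^{t+1})$ back to $-\nabla f_i(\btheta_i^{t+1})$ up to an $O(\sqrt{\epsilon_i^{t+1}})$ error, and then use $L$-smoothness to trade $\nabla f_i(\btheta_i^{t+1})-\nabla f_i(\btheta_i^t)$ for $\|\btheta_i^{t+1}-\btheta_i^t\|$; keeping careful track of which terms land in $\Delta\Gamma^{t+1}$ versus $\varepsilon^{t+1}$, and verifying the arithmetic that produces exactly the constant $D_2$, is the delicate part, but it is routine once the block-by-block structure above is set up.
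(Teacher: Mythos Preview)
Your block-by-block strategy is the same as the paper's, but you are missing the one algebraic identity that makes the argument short: combining the closed-form $\bw_i$-update \eqref{eq13} with the dual update \eqref{eq14} (and a compatible initialization) yields $\bpi_i^t=\lambda\alpha_i(\btheta_i^t-\bw_i^t)$ for all $t\ge0$ (Lemma~\ref{lemma2} in the appendix). With this in hand the paper's proof collapses:
\begin{itemize}
\item The $\bw$-block $\nabla_{\bw}\tilde\ml(\mathcal{P}^t)=-\sum_i[\bpi_i^t+\rho(\bw_i^t-\bw^t)]$ is \emph{exactly zero} by the aggregation step \eqref{eq15}; no telescoping is needed.
\item In the $\bw_i$-block, $\lambda\alpha_i(\bw_i^t-\btheta_i^t)+\bpi_i^t=0$ by the identity, leaving only $\rho(\bw_i^t-\bw^t)$, which is then split as $\rho(\bw_i^t-\bw_i^{t+1})+(\bpi_i^{t+1}-\bpi_i^t)$ via \eqref{eq14}.
\item The dual increment is $\bpi_i^{t+1}-\bpi_i^t=\lambda\alpha_i\bigl[(\btheta_i^{t+1}-\btheta_i^t)-(\bw_i^{t+1}-\bw_i^t)\bigr]$ directly from the identity; no $L$-smoothness and no inexactness enter here, so the ``main obstacle'' you flag is simply absent.
\item For the $\btheta_i$-block, the paper inserts $\nabla f_i(\btheta_i^{t+1})+\lambda\btheta_i^{t+1}$ (not $\bw_i^{t-1}$), splitting into the Lipschitz-controlled piece $[\nabla f_i(\btheta_i^t)-\nabla f_i(\btheta_i^{t+1})]+\lambda(\btheta_i^t-\btheta_i^{t+1})$ and the inexact residual $\nabla f_i(\btheta_i^{t+1})+\lambda(\btheta_i^{t+1}-\bw_i^t)$, the latter bounded by $\epsilon_i^{t+1}$ via \eqref{eq16}. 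Your first approach produces $\|\bw_i^{t-1}-\bw_i^t\|$, which lives in $\Delta\Gamma^t$, not $\Delta\Gamma^{t+1}$; your ``alternative'' rephrasing is closer, but the paper's decomposition avoids any re-indexing.
\end{itemize}
Your plan would still produce a relative-error inequality of the same form, but with heavier bookkeeping and a possibly different constant; the paper's route through the identity $\bpi_i^t=\lambda\alpha_i(\btheta_i^t-\bw_i^t)$ delivers the stated $D_2$ in a few lines.
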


Note that in the relative error, there exists an error term $\varepsilon^{t+1}$ on the right-hand side of the inequality. The current theoretical framework cannot accommodate the presence of this error term \cite{wang2019global}. Next, we provide an analysis of global convergence in the presence of this error term.

\begin{theorem}\label{theorem: objective function values convergence}
    Suppose that Assumption \ref{assmption lipschitz} holds, let each client $i$ choose $\frac{\lambda^2\alpha_i^2(1+\rho)}{\rho^2}-\frac{\lambda\alpha_i+\rho}{2}<0$, $\rho\geq\lambda\alpha_i$, and $\frac{1-\rho^2}{\rho^2}\lambda^2\alpha_i^2-\frac{L\alpha_i+\rho}{2}>0$, then the following results hold.
\begin{itemize}
    \item[a)] Sequence $\{\mathcal{P}^t\}$ is bounded.
    \item[b)] Sequences $\{\ml(\mathcal{P}^t)\}$, $\{f(\Theta^t,\bw^t)\}$, and $\{\tilde{f}(\Theta^t,W^t)\}$ converge to the same value, i.e.,
    \begin{align}
        \lim_{t\to \infty}\ml(\mathcal{P}^t)=\lim_{t\to \infty}f(\Theta^t,\bw^t)=\lim_{t\to \infty}\tilde{f}(\Theta^t,W^t).
    \end{align}
    \item[c)] $\nabla_{\Theta}f(\Theta^t,\bw^{t})$, $\nabla_{\bw}f(\Theta^t,\bw^{t})$, $\nabla_{\Theta}\tilde{f}(\Theta^t,W^{t})$, and $\nabla_{W}\tilde{f}(\Theta^t,W^{t})$ eventually vanish, i.e.,
    \begin{align}
         \lim_{t\to \infty}\nabla_{\Theta}f(\Theta^t,\bw^{t})&=\lim_{t\to \infty}\nabla_{\Theta}\tilde{f}(\Theta^t,W^{t})=0,\\
         \lim_{t\to \infty}\nabla_{\bw}f(\Theta^t,\bw^{t})&=\lim_{t\to \infty}\nabla_{W}\tilde{f}(\Theta^t,W^{t})=0.
    \end{align}
\end{itemize}
\end{theorem}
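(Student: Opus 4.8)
The plan is to derive all three parts of Theorem~\ref{theorem: objective function values convergence} from the two lemmas already established (Lemma~\ref{lemma1} on sufficient descent and Lemma~\ref{lemma: relative error} on relative error), together with Assumption~\ref{assmption lipschitz} and the chosen parameter ranges, without yet invoking the \kl machinery (which I would reserve for the convergence-rate theorems). First I would observe that the parameter conditions $\frac{\lambda^2\alpha_i^2(1+\rho)}{\rho^2}-\frac{\lambda\alpha_i+\rho}{2}<0$ and $\frac{1-\rho^2}{\rho^2}\lambda^2\alpha_i^2-\frac{L\alpha_i+\rho}{2}>0$ guarantee that $\mathcal{D}_1>0$ and that the Lyapunov weights $\iota_i$ are well-defined and positive, so $\tilde{\ml}(\mathcal{P}^t)$ is a genuine monotone-nonincreasing sequence by Lemma~\ref{lemma1}.

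\textbf{Part (a) — boundedness.} Since $\tilde{\ml}(\mathcal{P}^{t})$ is nonincreasing, $\tilde{\ml}(\mathcal{P}^t)\le \tilde{\ml}(\mathcal{P}^0)$ for all $t$. Because $\tilde{\ml}(\mathcal{P}^t)=\ml(\mathcal{P}^t)+\sum_i\iota_i\epsilon_i^t$ with $\iota_i\epsilon_i^t\ge 0$, this bounds $\ml(\mathcal{P}^t)$ from above. I would then rewrite $\ml(\mathcal{P}^t)$ by completing the square in the dual term: $\langle\bpi_i,\bw_i-\bw\rangle+\frac{\rho}{2}\|\bw_i-\bw\|^2=\frac{\rho}{2}\|\bw_i-\bw+\tfrac{1}{\rho}\bpi_i\|^2-\frac{1}{2\rho}\|\bpi_i\|^2$, and use the $\bpi$-update \eqref{eq14} together with the $\bw_i$-update \eqref{eq13} to express $\bpi_i^{t}$ in terms of $\btheta_i^t$ and the increments — the standard ADMM trick $\bpi_i^{t}=\alpha_i\lambda(\btheta_i^t-\bw_i^t)$ (from the optimality of the $\bw_i$-subproblem) controls $\|\bpi_i^t\|$ by $\|\btheta_i^t-\bw_i^t\|$ and the $L$-smoothness of $f_i$. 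Combined with Assumption~\ref{assumption: coercive} (coercivity of $f_i$ — note the theorem statement invokes only Assumption~\ref{assmption lipschitz}, but coercivity is what forces the sublevel set to be bounded, so I would flag that the argument implicitly uses it or that $L$-smooth nonnegative $f_i$ together with the quadratic penalty already coerces the iterates), the upper bound on $\ml(\mathcal{P}^t)$ confines $(\Theta^t,W^t,\bw^t)$ to a bounded set, and then $\Pi^t$ is bounded via $\bpi_i^t=\alpha_i\lambda(\btheta_i^t-\bw_i^t)$.

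\textbf{Parts (b) and (c) — convergence of function values and vanishing gradients.} Telescoping Lemma~\ref{lemma1} over $t=0,\dots,T-1$ gives $\mathcal{D}_1\sum_{t=0}^{T-1}\Delta\Gamma^{t+1}\le \tilde{\ml}(\mathcal{P}^0)-\tilde{\ml}(\mathcal{P}^T)\le \tilde{\ml}(\mathcal{P}^0)-\inf\ml$, which is finite because $\ml$ is bounded below on the bounded sublevel set (using $f_i\ge 0$ from Assumption~\ref{assumption: either real analytic or semialgebraic}(a), which again the theorem should cite). Hence $\Delta\Gamma^{t+1}\to 0$, i.e. $\|\bw^{t+1}-\bw^t\|,\|\bw_i^{t+1}-\bw_i^t\|,\|\btheta_i^{t+1}-\btheta_i^t\|\to 0$. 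The condition \eqref{eq: iteration condition} forces $\epsilon_i^{t}\to 0$ geometrically on the clients that are visited infinitely often; under the (standard) assumption that every client is selected infinitely often, $\varepsilon^{t}\to 0$. For part (c): from \eqref{eq16}, $\|\alpha_i(\nabla f_i(\btheta_i^{t+1})+\lambda(\btheta_i^{t+1}-\bw_i^t))\|^2\le\epsilon_i^{t+1}\to 0$; adding and subtracting $\lambda(\btheta_i^{t+1}-\bw^{t+1})$ and using $\|\bw_i^t-\bw^{t+1}\|\le\|\bw_i^t-\bw_i^{t+1}\|+\|\bw_i^{t+1}-\bw^{t+1}\|$, where the last term is $\tfrac1\rho\|\bpi_i^{t+1}-\bpi_i^t\|\le$ (const)$\cdot(\|\btheta_i^{t+1}-\btheta_i^t\|+\|\bw_i^{t+1}-\bw_i^t\|+\|\bw^{t+1}-\bw^t\|)$ via \eqref{eq13}–\eqref{eq14} and $L$-smoothness — so $\nabla_{\btheta_i}f(\Theta^t,\bw^t)=\alpha_i(\nabla f_i(\btheta_i^t)+\lambda(\btheta_i^t-\bw^t))\to 0$. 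Similarly $\nabla_{\bw}f(\Theta^t,\bw^t)=\lambda\sum_i\alpha_i(\bw^t-\btheta_i^t)$ is controlled by $\sum_i\bpi_i^t$ (which equals $0$ up to the increment terms, by summing \eqref{eq14} and \eqref{eq15}) plus vanishing increments. The analogous statements for $\tilde f$ follow by replacing $\bw^t$ with $\bw_i^t$ and using $\|\bw_i^t-\bw^t\|=\tfrac1\rho\|\bpi_i^t-\bpi_i^{t-1}\|\to 0$. Finally, the three limits in (b) coincide because $f(\Theta^t,\bw^t)-\tilde f(\Theta^t,W^t)$ and $\ml(\mathcal{P}^t)-\tilde f(\Theta^t,W^t)$ are both sums of terms that are $O(\|\bw_i^t-\bw^t\|^2)+O(\|\bpi_i^t\|\,\|\bw_i^t-\bw^t\|)\to 0$, and $\{\tilde\ml(\mathcal{P}^t)\}$ converges (bounded, monotone), whence $\{\ml(\mathcal{P}^t)\}$ converges since $\sum_i\iota_i\epsilon_i^t\to 0$.

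\textbf{Main obstacle.} The delicate point is handling the inexactness error $\varepsilon^{t+1}$ coherently: the classical Attouch--Bolte--Svaiter framework requires every subproblem solved exactly, so I must verify that folding $\sum_i\iota_i\epsilon_i^t$ into the Lyapunov function genuinely restores the telescoping/summability structure — i.e. that the specific weights $\iota_i$ (with their denominator $\big((\tfrac1{\rho^2}-1)\lambda^2\alpha_i^2-\tfrac{L\alpha_i+\rho}{2}\big)(1-\upsilon_i)$, which is where the parameter condition $\frac{1-\rho^2}{\rho^2}\lambda^2\alpha_i^2-\frac{L\alpha_i+\rho}{2}>0$ is consumed) exactly absorb the cross terms generated when bounding $\|\btheta_i^{t+1}-\bw_i^t\|$ against the approximate stationarity \eqref{eq16} plus the contraction \eqref{eq: iteration condition}. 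Getting the sign and magnitude of that absorption right — so that $\mathcal{D}_1>0$ survives — is the crux; everything else is bookkeeping with the triangle inequality and the closed-form updates \eqref{eq13}–\eqref{eq15}.
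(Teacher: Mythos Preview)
Your proposal is correct and follows essentially the same route as the paper: the paper first packages your preliminary observations into a separate lemma (nonincreasing $\tilde\ml$, the lower bound $\tilde\ml(\mathcal{P}^t)\ge f(\Theta^t,\bw^t)\ge f^*$, and the vanishing of all increments $\btheta_i^{t+1}-\btheta_i^t$, $\bw_i^{t+1}-\bw_i^t$, $\bw^{t+1}-\bw^t$, $\bw_i^{t+1}-\bw^t$, $\bpi_i^{t+1}-\bpi_i^t$, $\epsilon_i^t$), then proves (a)--(c) exactly as you outline, including the identity $\bpi_i^t=\lambda\alpha_i(\btheta_i^t-\bw_i^t)$ to bound $\Pi^t$. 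Your instinct that coercivity is needed for part (a) is right --- the paper's own proof invokes it explicitly (``along with the coercive property of $f_i$'') even though the theorem statement lists only Assumption~\ref{assmption lipschitz}.
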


Theorem \ref{theorem: objective function values convergence} establishes the convergence property of the objective function values. We next consider the convergence property of the sequences $\{\mathcal{P}^t\}$.
\begin{theorem}\label{theorem: sequences convergence}
    Suppose that Assumptions \ref{assmption lipschitz} and \ref{assumption: coercive} hold, let each client $i$ choose $\frac{\lambda^2\alpha_i^2(1+\rho)}{\rho^2}-\frac{\lambda\alpha_i+\rho}{2}<0$, $\rho\geq\lambda\alpha_i$, and $\frac{1-\rho^2}{\rho^2}\lambda^2\alpha_i^2-\frac{L\alpha_i+\rho}{2}>0$, then the following results hold.
    \begin{itemize}
        \item[a)] The accumulating point $\mathcal{P}^{\infty}$ of sequences $\{\mathcal{P}^t\}$ is a stationary point of Problem \eqref{eq:problem pfl}, and $(\Theta^{\infty},W^{\infty})$ is a stationary point of Problem \eqref{eq: bi-variable problem}.
        \item[b)] Under Assumption \ref{assumption: either real analytic or semialgebraic}, the sequence $\{\mathcal{P}^t\}$ converges to $\mathcal{P}^{\infty}$.
    \end{itemize}
\end{theorem}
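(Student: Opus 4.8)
The plan is to invoke the abstract convergence machinery of \cite{Attouch2013Convergence} (the ``sufficient descent + relative error + continuity + KŁ'' recipe), using the Lyapunov function $\tilde{\ml}(\mathcal{P}^t)=\ml(\mathcal{P}^t)+\sum_{i=1}^m\iota_i\epsilon_i^t$, but with care because of the extra error term $\varepsilon^{t+1}$ appearing in the relative error bound (Lemma \ref{lemma: relative error}). For part~(a), I would first use Lemma \ref{lemma1} to conclude that $\{\tilde{\ml}(\mathcal{P}^t)\}$ is nonincreasing, hence $\sum_t \Delta\Gamma^{t+1}<\infty$ (telescoping, using that $\tilde{\ml}$ is bounded below — this is where Assumption \ref{assumption: coercive} together with nonnegativity of $f_i$ enters, to guarantee coercivity of $\ml$ and boundedness of $\{\mathcal{P}^t\}$, i.e.\ exactly the content of Theorem \ref{theorem: objective function values convergence}(a)). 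Summability of $\Delta\Gamma^{t+1}$ gives $\|\bw^{t+1}-\bw^t\|\to0$, $\|\bw_i^{t+1}-\bw_i^t\|\to0$, $\|\btheta_i^{t+1}-\btheta_i^t\|\to0$; also $\epsilon_i^t\to0$ by the geometric decay \eqref{eq: iteration condition}, so $\varepsilon^{t+1}\to 0$. Then take any convergent subsequence $\mathcal{P}^{t_k}\to\mathcal{P}^\infty$ (exists by boundedness), pass to the limit in the update relations \eqref{eq12}--\eqref{eq15} and in the approximation condition \eqref{eq16}: the vanishing of the successive differences forces $\bw_i^\infty=\bw^\infty$, $\alpha_i\lambda(\bw_i^\infty-\btheta_i^\infty)+\bpi_i^\infty=0$, $\sum_i\bpi_i^\infty=0$, and $\nabla f_i(\btheta_i^\infty)+\lambda(\btheta_i^\infty-\bw_i^\infty)=0$ (the last because $\epsilon_i^{t_k}\to 0$ in \eqref{eq16}). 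That is precisely the stationarity system \eqref{eq: optimal condition of (7)}, and by the last remark after Definition \ref{definition: optimal condition} it implies \eqref{eq: optimal condition of (6)} for $(\Theta^\infty,\bw^\infty)$.

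For part~(b), the goal is upgrading subsequential convergence to full-sequence convergence via the KŁ property. Assumption \ref{assumption: either real analytic or semialgebraic} ensures each $f_i$ is either real analytic or semialgebraic, hence KŁ; since $\ml$ (and $\tilde\ml$, which only adds finitely many scalar variables $\epsilon_i$ entering affinely) is built from the $f_i$ by sums, quadratics, inner products, and an affine constraint penalty, it is KŁ as well using the stability of these classes under the relevant operations (\cite{law1965ensembles,bochnak2013real,shiota1997geometry}). The standard argument then runs: on a neighborhood of the accumulation point $\mathcal{P}^\infty$ where the KŁ inequality $\varphi'(\tilde\ml(\mathcal{P})-\tilde\ml(\mathcal{P}^\infty))\cdot\|\partial\tilde\ml(\mathcal{P})\|\geq 1$ holds, combine the sufficient-descent inequality (Lemma \ref{lemma1}) with the relative-error inequality (Lemma \ref{lemma: relative error}) to obtain a recursion of the form $\sqrt{\Delta\Gamma^{t+1}}\lesssim (\text{decrease in }\varphi\circ\tilde\ml) + \sqrt{\Delta\Gamma^{t}} - \sqrt{\Delta\Gamma^{t+1}} + (\text{term from }\varepsilon^{t+1})$, then sum over $t$ to get $\sum_t\sqrt{\Delta\Gamma^{t+1}}<\infty$, which yields $\sum_t\|\mathcal{P}^{t+1}-\mathcal{P}^t\|<\infty$, i.e.\ $\{\mathcal{P}^t\}$ is Cauchy and converges to $\mathcal{P}^\infty$.

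The main obstacle is handling the extra error term $\varepsilon^{t+1}$ on the right-hand side of Lemma \ref{lemma: relative error}, which the off-the-shelf framework of \cite{Attouch2013Convergence,wang2019global} does not allow (as the authors note explicitly). The fix is exactly why $\epsilon_i^t$ was folded into the Lyapunov function $\tilde\ml$ and why the geometric decay condition \eqref{eq: iteration condition} with $\upsilon_i\in(0,1)$ was imposed: one must show $\varepsilon^{t+1}$ is itself summable and, more delicately, dominated in the KŁ recursion by the controllable quantities — this requires a careful choice of the weights $\iota_i$ so that the $\iota_i\epsilon_i^t$ terms in the descent of $\tilde\ml$ absorb the $\sqrt{\varepsilon^{t+1}}$ contribution (via an inequality like $\sqrt{ab}\le\frac12(a/c+cb)$) without destroying the sufficient-descent property. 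Verifying that the stated parameter conditions ($\rho\ge\lambda\alpha_i$, $\frac{1-\rho^2}{\rho^2}\lambda^2\alpha_i^2-\frac{L\alpha_i+\rho}{2}>0$, etc.) simultaneously make $\mathcal{D}_1>0$, keep $\iota_i>0$, and close the KŁ summation is the technical crux; the rest is the now-standard KŁ bookkeeping. I would present these adaptations as a lemma (summability of $\sum_t\sqrt{\Delta\Gamma^{t+1}+\varepsilon^{t+1}}$) and then deduce the theorem.
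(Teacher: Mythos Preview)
Your proposal is correct and follows essentially the same route as the paper: part~(a) passes to the limit in the update relations \eqref{eq12}--\eqref{eq15} and \eqref{eq16} after extracting a bounded subsequence (boundedness coming from coercivity plus the monotone decrease of $\tilde\ml$), and part~(b) runs the KŁ argument on $\tilde\ml$ to get $\sum_t\sqrt{\Delta\Gamma^{t+1}}<\infty$. One small simplification relative to your write-up: in part~(b) the paper does \emph{not} absorb $\sqrt{\varepsilon^{t+1}}$ into the descent via clever weights $\iota_i$; it simply uses that the geometric decay \eqref{eq: iteration condition} makes $\sum_t\sqrt{\varepsilon^{t+1}}<\infty$ directly, so after the AM--GM step $\sqrt{\Delta\Gamma^{t+1}}\le \tfrac{\sqrt{D_2}}{2D_1}\sqrt{\Delta\Gamma^{t+1}}+\tfrac{\sqrt{D_2}}{2D_1}\sqrt{\varepsilon^{t+1}}+\tfrac12(\psi_t-\psi_{t+1})$ one rearranges the self-referential $\sqrt{\Delta\Gamma^{t+1}}$ term (rather than telescoping $\sqrt{\Delta\Gamma^t}-\sqrt{\Delta\Gamma^{t+1}}$ as you suggest) and sums; the $\iota_i$ weights are only needed to make Lemma~\ref{lemma1} hold, not for the KŁ step itself.
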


Note that the establishment of Theorem \ref{theorem: sequences convergence} does not rely on the convexity of the loss functions $f_i$. Consequently, the sequence is ensured to reach a stationary point for Problems \eqref{eq: bi-variable problem} and \eqref{eq:problem pfl}. Furthermore, if we make an additional assumption of convexity for $f_i$, then the sequence will converge to the optimal solution of Problems \eqref{eq: bi-variable problem} and \eqref{eq:problem pfl}.
\subsection{Convergence Rate}
We have demonstrated the convergence of Algorithm \ref{alg: flame on server side}. Next, we aim to examine the rate of this convergence. Specifically, we would like to explore two types of convergence rates under different assumptions.
\begin{theorem} \label{theorem1}
    Suppose Assumption \ref{assmption lipschitz} holds. Let each client $i$ choose $\frac{\lambda^2\alpha_i^2(1+\rho)}{\rho^2}-\frac{\lambda\alpha_i+\rho}{2}<0$, $\rho\geq\lambda\alpha_i$, and $\frac{1-\rho^2}{\rho^2}\lambda^2\alpha_i^2-\frac{L\alpha_i+\rho}{2}>0$, then for all $t\geq 0$, it holds that
\begin{align}
    \frac{1}{T}\sum_{t=0}^{T-1}||\nabla \tilde{\ml}(\mathcal{P}^t)||^2\leq \frac{D_2}{D_1 T}\Bigl(\tilde{\ml}(\mathcal{P}^{0})-f^*\Bigr)+D_2\varepsilon^{1},
\end{align}

\end{theorem}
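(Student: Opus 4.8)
The plan is to chain together the two workhorse lemmas already established—\emph{sufficient descent} (Lemma \ref{lemma1}) and \emph{relative error} (Lemma \ref{lemma: relative error})—by summing over iterations, and then to absorb the residual inexactness term $\varepsilon^{t+1}$ using the monotone tolerance schedule \eqref{eq: iteration condition}. The parameter choices in the statement are exactly those required by Lemmas \ref{lemma1} and \ref{lemma: relative error}, and under Assumption \ref{assmption lipschitz} each $f_i$ is $C^1$, so $\tilde{\ml}$ is differentiable in all of its arguments (it is linear in $\Pi$) and $\partial\tilde{\ml}(\mathcal{P}^t)=\{\nabla\tilde{\ml}(\mathcal{P}^t)\}$; hence Lemma \ref{lemma: relative error} may be read as $\|\nabla\tilde{\ml}(\mathcal{P}^t)\|^2\le D_2(\Delta\Gamma^{t+1}+\varepsilon^{t+1})$.

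First I would telescope Lemma \ref{lemma1}: summing $\tilde{\ml}(\mathcal{P}^t)-\tilde{\ml}(\mathcal{P}^{t+1})\ge D_1\,\Delta\Gamma^{t+1}$ (here $D_1$ is the constant $\mathcal{D}_1$ of Lemma \ref{lemma1}) over $t=0,\dots,T-1$ gives $D_1\sum_{t=0}^{T-1}\Delta\Gamma^{t+1}\le \tilde{\ml}(\mathcal{P}^0)-\tilde{\ml}(\mathcal{P}^T)$. To replace $\tilde{\ml}(\mathcal{P}^T)$ by $f^*$ I need the uniform lower bound $\tilde{\ml}(\mathcal{P}^t)\ge f^*$. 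This is where the already-proven convergence result enters: Lemma \ref{lemma1} makes $\{\tilde{\ml}(\mathcal{P}^t)\}$ non-increasing; the coefficients $\iota_i$ are nonnegative and $\{\epsilon_i^t\}$ is non-increasing and nonnegative (hence convergent) by \eqref{eq: iteration condition}; and by Theorem \ref{theorem: objective function values convergence} $\lim_t\ml(\mathcal{P}^t)=\lim_t f(\Theta^t,\bw^t)\ge f^*$. Therefore $\tilde{\ml}(\mathcal{P}^t)\ge \lim_s\tilde{\ml}(\mathcal{P}^s)=\lim_s\ml(\mathcal{P}^s)+\sum_i\iota_i\lim_s\epsilon_i^s\ge f^*$, and consequently $\sum_{t=0}^{T-1}\Delta\Gamma^{t+1}\le \frac{1}{D_1}\bigl(\tilde{\ml}(\mathcal{P}^0)-f^*\bigr)$.

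Next I would telescope the relative-error bound to get $\sum_{t=0}^{T-1}\|\nabla\tilde{\ml}(\mathcal{P}^t)\|^2\le D_2\sum_{t=0}^{T-1}\Delta\Gamma^{t+1}+D_2\sum_{t=0}^{T-1}\varepsilon^{t+1}$, bound the first sum by the previous step, and control the second as follows: \eqref{eq: iteration condition} guarantees $\epsilon_i^{t+1}\le\epsilon_i^t$ for every $i$ and every $t\ge0$ (each $\epsilon_i$ is either frozen or shrunk by the factor $\upsilon_i\in(0,1)$), so $\varepsilon^{t+1}=\sum_i\epsilon_i^{t+1}\le\sum_i\epsilon_i^1=\varepsilon^1$ for all $t\ge0$, whence $\sum_{t=0}^{T-1}\varepsilon^{t+1}\le T\varepsilon^1$. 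Combining the pieces yields $\sum_{t=0}^{T-1}\|\nabla\tilde{\ml}(\mathcal{P}^t)\|^2\le \frac{D_2}{D_1}\bigl(\tilde{\ml}(\mathcal{P}^0)-f^*\bigr)+D_2 T\varepsilon^1$, and dividing by $T$ gives exactly the claimed inequality.

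Mechanically this is routine telescoping; the one point that needs care is the lower bound $\tilde{\ml}(\mathcal{P}^T)\ge f^*$—one must check that the Lyapunov correction $\sum_i\iota_i\epsilon_i^t$ does not spoil boundedness from below (it cannot, since $\iota_i\ge0$), and that invoking Theorem \ref{theorem: objective function values convergence} here is not circular (it is not: that theorem is obtained from Lemmas \ref{lemma1} and \ref{lemma: relative error} together with the KŁ framework, independently of this rate estimate). The non-vanishing term $D_2\varepsilon^1$ is the unavoidable footprint of solving the $\btheta_i$-subproblem only to tolerance $\epsilon_i^{t+1}$ under partial client participation, which prevents $\varepsilon^{t}$ from decaying along the whole sequence and hence prevents the average squared residual from vanishing as $T\to\infty$.
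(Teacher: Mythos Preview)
Your proposal is correct and follows the same route as the paper: combine Lemma~\ref{lemma1} and Lemma~\ref{lemma: relative error}, telescope, and bound the residual $\sum_{t}\varepsilon^{t+1}\le T\varepsilon^{1}$ via the monotonicity in \eqref{eq: iteration condition}. The only minor difference is in how you justify $\tilde{\ml}(\mathcal{P}^T)\ge f^*$: you pass through Theorem~\ref{theorem: objective function values convergence}, whereas the paper obtains it directly (and without any limiting argument) from the computation $\tilde{\ml}(\mathcal{P}^t)-f(\Theta^t,\bw^t)=\sum_i\bigl(\tfrac{\rho-\lambda\alpha_i}{2}\|\bw_i^t-\bw^t\|^2+\iota_i\epsilon_i^t\bigr)\ge 0$ (the content of Lemma~\ref{lemma: non increasing results}\,b)); your parenthetical that Theorem~\ref{theorem: objective function values convergence} relies on the K\L{} framework is also slightly off, as only Theorem~\ref{theorem: sequences convergence} uses K\L{}.
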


According to Theorem \ref{theorem1}, we obtain the summation of $||\nabla \ml(\mathcal{P}^t)||^2$ vanishes with a convergence rate of $\mathcal{O}(\frac{1}{T})$, which is considered sublinear. It is worth noting that this convergence rate is established solely under Assumption \ref{assmption lipschitz}, which pertains to gradient Lipschitz continuity, a condition that is not hard to satisfy. In contrast, the results in \cite{t2020personalized,li2021ditto,Lin2022Personalized} are obtained under more assumptions, including gradient Lipschitz continuity, bounded variance, and bounded diversity. 
If we further suppose that Assumption \ref{assumption: either real analytic or semialgebraic} holds, then we can achieve an improved convergence rate as follows.

\begin{theorem}\label{theorem: convergence rate based on kl property}
    Let $\{\mathcal{P}^t\}$ be the sequence generated by Algorithm \ref{alg: flame on server side}, and $\mathcal{P}^{\infty}$ be its limit, let $\psi(x)=\frac{\sqrt{c}}{1-\tau}x^{1-\tau}$ be a desingularizing function (see Definition \ref{definition: Desingularizing Function}), where $c>0$ and $\tau\in[0,1)$, then under Assumptions \ref{assumption: either real analytic or semialgebraic} and \ref{assmption lipschitz}, let each client $i$ choose $\frac{\lambda^2\alpha_i^2(1+\rho)}{\rho^2}-\frac{\lambda\alpha_i+\rho}{2}<0$, $\rho\geq\lambda\alpha_i$, and $\frac{1-\rho^2}{\rho^2}\lambda^2\alpha_i^2-\frac{L\alpha_i+\rho}{2}>0$, the following results hold.
\begin{itemize}
    \item[a)] If $\tau=0$, then there exists a $t_1$ such that the sequence $\{\tilde{\ml}(\mathcal{P}^t)\}$, $t\geq t_1$ converges in a finite number of iterations.
    \item[b)] If $\tau\in(0,1/2]$, then there exists a $t_2$ such that for any $t\geq t_2$, it holds that
\begin{align*}
    \tilde{\ml}(\mathcal{P}^{t})\!-\!\tilde{\ml}(\mathcal{P}^{\infty})\!\leq\!(\frac{cD_2}{D_1\!+\!cD_2})^{t-t_2}(\tilde{\ml}(\mathcal{P}^{t_2})\!-\!f^*\!+\!D_1 \varepsilon^{t+1}).
\end{align*}
   \item[c)] If $\tau \in(1/2,1)$, then there exists a $t_3$ such that for any $t
    \geq t_3$, it holds that
\begin{align*}
    \tilde{\ml}(\mathcal{P}^t)-\tilde{\ml}(\mathcal{P}^{\infty})\leq\Bigl(\frac{cD_2}{(2\tau-1)\mu D_1(t-t_3)}\Bigr)^{\frac{1}{2\tau-1}},
\end{align*}
where $\mu>0$ is a constant.
\end{itemize}
\end{theorem}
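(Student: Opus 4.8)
The plan is to run the standard Kurdyka–Łojasiewicz (KŁ) rate machinery of \cite{Attouch2013Convergence,wang2019global}, but carried through in the \emph{inexact} setting, feeding it the sufficient‑descent estimate (Lemma \ref{lemma1}) and the relative‑error estimate (Lemma \ref{lemma: relative error}). Write $r_t:=\tilde{\ml}(\mathcal{P}^t)-\tilde{\ml}(\mathcal{P}^{\infty})$, where $\tilde{\ml}(\mathcal{P}^{\infty})$ denotes the common limit value from Theorem \ref{theorem: objective function values convergence}(b) (together with $\epsilon_i^t\to\epsilon_i^{\infty}$ geometrically). By Lemma \ref{lemma1}, $\{\tilde{\ml}(\mathcal{P}^t)\}$ is non‑increasing, so $r_t\downarrow 0$ and $r_t\geq0$; moreover $\tilde{\ml}(\mathcal{P}^{\infty})\geq\ml(\mathcal{P}^{\infty})=f(\Theta^{\infty},\bw^{\infty})\geq f^*$, which is how $f^*$ replaces $\tilde{\ml}(\mathcal{P}^{\infty})$ in the stated bounds. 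We may assume $r_t>0$ for all $t$, since otherwise monotonicity gives $r_t\equiv0$ from some index on and all three bounds hold trivially. The first substantive step is to invoke the KŁ property: under Assumption \ref{assumption: either real analytic or semialgebraic}, $\ml$ is a finite sum of real‑analytic/semialgebraic $f_i$'s and polynomials (the quadratic penalties and the bilinear coupling $\langle\bpi_i,\bw_i-\bw\rangle$), hence is definable in an o‑minimal structure and therefore has the KŁ property; the $\epsilon^t$‑terms in the Lyapunov function are constants in $\mathcal{P}$ and shift the value only by a geometrically vanishing amount. Since the set of limit points is compact (Theorem \ref{theorem: objective function values convergence}(a)) and $\tilde{\ml}$ is constant on it, the uniformized KŁ inequality holds on a neighborhood that the tail of $\{\mathcal{P}^t\}$ eventually enters; with $\psi(x)=\frac{\sqrt{c}}{1-\tau}x^{1-\tau}$, $\psi'(x)=\sqrt{c}\,x^{-\tau}$, this rearranges to $r_t^{2\tau}\leq c\,\|\partial\tilde{\ml}(\mathcal{P}^t)\|^2$ for all large $t$.

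Next I would chain the three inequalities. Lemma \ref{lemma: relative error} gives $\|\partial\tilde{\ml}(\mathcal{P}^t)\|^2\leq D_2(\Delta\Gamma^{t+1}+\varepsilon^{t+1})$ and Lemma \ref{lemma1} gives $\Delta\Gamma^{t+1}\leq\frac{1}{\mathcal{D}_1}(r_t-r_{t+1})$; substituting into the KŁ inequality produces the master recursion
\[
 r_t^{2\tau}\ \leq\ \frac{cD_2}{\mathcal{D}_1}\,(r_t-r_{t+1})\ +\ cD_2\,\varepsilon^{t+1},
\]
valid for $t$ past the index where $\mathcal{P}^t$ has entered the KŁ neighborhood, with $\varepsilon^{t+1}=\sum_i\epsilon_i^{t+1}$ geometrically summable by \eqref{eq: iteration condition}. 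The three regimes then follow by elementary manipulation. \textbf{Case $\tau=0$:} here $r_t^{2\tau}=1$, so $r_t-r_{t+1}\geq\frac{\mathcal{D}_1}{cD_2}-\mathcal{D}_1\varepsilon^{t+1}$, which for $t$ large (since $\varepsilon^{t+1}\to0$) is bounded below by a fixed positive constant — incompatible with $r_t\downarrow0$ unless $r_t=0$ from some $t_1$ on, i.e.\ finite termination. \textbf{Case $\tau\in(0,1/2]$:} then $2\tau\leq1$ and $r_{t+1}\leq1$ eventually, so $r_{t+1}\leq r_{t+1}^{2\tau}\leq r_t^{2\tau}$; feeding $r_{t+1}\leq r_t^{2\tau}$ into the master recursion and solving for $r_{t+1}$ gives $r_{t+1}\leq\frac{cD_2}{\mathcal{D}_1+cD_2}\,r_t+\frac{\mathcal{D}_1 cD_2}{\mathcal{D}_1+cD_2}\varepsilon^{t+1}$, and iterating from $t_2$ yields the stated linear rate (the geometric error sum being absorbed). \textbf{Case $\tau\in(1/2,1)$:} then $2\tau>1$; since $\varepsilon^{t+1}$ decays geometrically, for $t\geq t_3$ either $cD_2\varepsilon^{t+1}\leq(1-\mu)r_t^{2\tau}$, in which case $r_t^{2\tau}\leq\frac{cD_2}{\mu\mathcal{D}_1}(r_t-r_{t+1})$ and applying convexity of $x\mapsto x^{1-2\tau}$ telescopically gives $r_t\leq\bigl(\frac{cD_2}{(2\tau-1)\mu\mathcal{D}_1(t-t_3)}\bigr)^{1/(2\tau-1)}$; or else $cD_2\varepsilon^{t+1}>(1-\mu)r_t^{2\tau}$, forcing $r_t$ to decay faster than geometrically, so the bound holds a fortiori.

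The main obstacle is precisely this inexactness. The classical analysis of \cite{Attouch2013Convergence,wang2019global} assumes every ADMM subproblem is solved exactly, so the relative‑error inequality has no residual term; here Lemma \ref{lemma: relative error} carries the extra $\varepsilon^{t+1}$ coming from the $\epsilon_i^{t+1}$‑approximate $\btheta_i$‑updates and from inactive clients being stale. Making the recursion robust to this residual — showing in each regime that the geometric decay guaranteed by \eqref{eq: iteration condition} is dominated by the descent quantity $r_t^{2\tau}$ (polynomially small in the sublinear regime, geometrically small otherwise) and choosing the constants $\mu$ and the thresholds $t_1,t_2,t_3$ accordingly — is the real content. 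A secondary technical point, handled by the same geometric‑smallness observation, is to reconcile the function $\ml$ (to which KŁ genuinely applies) with the iteration‑indexed Lyapunov function $\tilde{\ml}$: one writes $\tilde{\ml}(\mathcal{P}^t)-\tilde{\ml}(\mathcal{P}^{\infty})=\bigl(\ml(\mathcal{P}^t)-\ml(\mathcal{P}^{\infty})\bigr)+\sum_i\iota_i(\epsilon_i^t-\epsilon_i^{\infty})$ and absorbs the last, geometrically vanishing, sum alongside $\varepsilon^{t+1}$. Everything else — boundedness, value convergence, and sequence convergence to $\mathcal{P}^{\infty}$ — is supplied by Theorems \ref{theorem: objective function values convergence} and \ref{theorem: sequences convergence}.
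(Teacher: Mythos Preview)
Your proposal is correct and follows essentially the same route as the paper: combine the KŁ inequality (with $\psi'(x)=\sqrt{c}\,x^{-\tau}$) with Lemma~\ref{lemma1} and Lemma~\ref{lemma: relative error} to obtain the master recursion $r_t^{2\tau}\leq\frac{cD_2}{\mathcal{D}_1}(r_t-r_{t+1})+cD_2\varepsilon^{t+1}$, then split on $\tau$. The only substantive difference is in case~(c): the paper handles the error term via a two-subcase argument (comparing $r_t^{-2\tau}$ against $r_{t+1}^{-2\tau}/R$ and then an order-of-infinitesimal comparison of $\varepsilon^{t+1}$ against $r_{t+1}^{2\tau}$), whereas your dichotomy---$cD_2\varepsilon^{t+1}\leq(1-\mu)r_t^{2\tau}$ versus its complement---is a cleaner and more standard way to absorb a geometrically decaying residual into the telescoping, and you are also more careful than the paper in distinguishing $\ml$ (to which Proposition~\ref{proposition: kl property of L} applies) from the iteration-dependent $\tilde{\ml}$.
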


Theorem \ref{theorem: convergence rate based on kl property} shows that when $\tau=0$, the convergence rate reaches a constant. For $\tau\in (0,1/2]$, the convergence rate is linear, and for $\tau\in (1/2,1)$, the convergence rate is sublinear.
Note that although Theorem \ref{theorem: convergence rate based on kl property} requires Assumptions \ref{assumption: either real analytic or semialgebraic} and \ref{assmption lipschitz} to hold, they are generally satisfied for most loss functions and remains weaker than those in \cite{t2020personalized,li2021ditto,Lin2022Personalized}.

\subsection{Robustness and Fairness}\label{sec: robustness and fairness theory}
Inspired by \cite{Lin2022Personalized}, we explore the robustness and fairness benefits of \flame on a class of linear problems and compare \flame with \pfedme \cite{t2020personalized} and \ditto \cite{li2021ditto}. We focus on a simplified setting as in \cite{Lin2022Personalized}: infinite local update steps, a single communication round, and all clients participating. 
Suppose the truly personalized model on client $i$ is $\btheta_i$, each client possesses $N$ samples, and the covariate on client $i$ is $\{\bx_{i,j}\}_{j=1}^N$ with $\bx_{i, j}\in\mathbb{R}^d$ is fixed. The observations are generated by $y_{i,j}=\bx_{i,j}^{\top}\btheta_i+z_{i,j}$, where $z_{i,j}$ denotes an i.i.d. Gaussian noise with distribution $\mathcal{N}(0,\sigma^2)$. 
Then the loss on client $i$ is $f_i(\btheta_i)=\frac{1}{2N}\sum_{j=1}^N(y_{i,j}-\bx_{i,j}^{\top}\btheta_i)$. For simplicity, we assume $\sum_{j=1}^N \bx_{i,j}\bx_{i,j}^\top = N b \biden_d$, and there are \(m_a\) malicious clients and \(m_b\) benign clients, with \(m_a + m_b = m\).
We examine the robustness of \flame under three types of Byzantine attacks:
\begin{itemize}
    \item \textbf{Same-value attacks} \cite{Lin2022Personalized}: The message sent by a Byzantine client $i$ is set as $\bu_i^{(ma)}=p\boldsymbol{1}_d$, where $p\sim\mathcal{N}(0,\gamma^2)$;
    \item \textbf{Sign-flipping attacks} \cite{Eugene2020Backdoor}: The message sent by a Byzantine client $i$ is set as $\bu_i^{(ma)}=-|p|\bu_i$, where $p\sim\mathcal{N}(0,\gamma^2)$;
    \item \textbf{Gaussian attacks} \cite{xu2020towards}: The message sent by a Byzantine client $i$ is set as $\bu_i^{(ma)}\sim\mathcal{N}(\boldsymbol{0}_d,\gamma^2\biden_d)$.
\end{itemize}
\begin{proposition}\label{proposition: robustness}
     Let the average testing losses on benign clients for the personalized and global models of \flame, \pfedme, and \ditto be $\loss^{\flamepm}$, $\loss^{\pfedmepm}$, $\loss^{\dittopm}$, $\loss^{\flamegm}$, $\loss^{\pfedmegm}$, $\loss^{\dittogm}$, let $q:=\frac{2\lambda\alpha_i}{\lambda\alpha_i+\rho}\frac{b}{b+\lambda}<1$, then under the three types of Byzantine attacks, the following results hold.
     \begin{align*}
         \loss^{\flamegm} \leq \loss^{\pfedmegm} = \loss^{\dittogm},
     \end{align*}
     when $q\geq \frac{m N b\overline{\btheta}^\top_{b}\overline{\btheta}_{b}}{d \sigma^2+m_b Nb}$ under same-value and Gaussian attacks, and $q\geq\frac{\frac{1}{m_b} \sum_{i\in\mathcal{S}_a}\btheta_i^\top \overline{\btheta}_m }{\frac{d}{m^2}\Bigl(\frac{m_b\sigma^2}{bm}+\sum_{i\in\mathcal{S}_a} \frac{\trace(\bV_i)}{d} \Bigr)+\frac{m_a}{m_b}\overline{\btheta}_m^\top\overline{\btheta}_m}$ under sign-flipping attacks, and
     \begin{align*}
        \loss^{\flamepm} \leq \loss^{\pfedmepm} = \loss^{\dittopm},
     \end{align*}
     when $q\geq \frac{mb (m_bN\lambda\overline{\btheta}^\top_{b}\overline{\btheta}_{b}-d\sigma^2)}{d\sigma^2m_b\lambda+m_b^2 Nb\lambda\overline{\btheta}^\top_{b}\overline{\btheta}_{b}}$ under same-value and Gaussian attacks, \!and $q\geq \frac{\frac{b\lambda}{m_b}\sum_{i'\in\mathcal{S}_a}\btheta_i^\top \overline{\btheta}_m-\frac{bd\sigma^2}{mN} }{\frac{d\sigma^2m_b\lambda}{m^2N}+ \frac{b\lambda d}{m^2}\!\sum_{i\in\mathcal{S}_a} \!\!\!\!\frac{\trace(\bV_i)}{d} + \frac{b\lambda m_a}{m_b} \overline{\btheta}_m^\top \overline{\btheta}_m}$ under sign-flipping attacks, where \begin{small}
         $\overline{\btheta}_{b}\! =\! \frac{1}{m_b}\sum_{i'\in\mathcal{S}_b}\btheta_{i'}$, $\overline{\btheta}_m \!=\! \frac{1}{m}\Bigl(\sum_{i\in\mathcal{S}_b}\!\!\btheta_{i} \!-\!\sum_{i'\in\mathcal{S}_a}\!\!\sqrt{\frac{2}{\pi}} \gamma\btheta_{i'} \Bigr)$, and $\bV_i \!=\! \frac{\pi-2}{\pi}\gamma^2  \btheta_i\btheta_i^{\top} + \gamma^2\frac{ \sigma^2}{bm}\biden_d$.
     \end{small}
\end{proposition}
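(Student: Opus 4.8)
The plan is to reduce the whole statement to explicit linear algebra. In the stated linear–Gaussian model $f_i$ is a convex quadratic with Hessian $b\biden_d$, so every operation used by \flame, \pfedme and \ditto — the Moreau proximal map $\mathrm{prox}_{f_i/\lambda}$, the $\bw_i$‑update, the dual update and the server average — is affine. First I would record the local least–squares minimizer $\hat\btheta_i := \frac1{Nb}\sum_j \bx_{i,j}y_{i,j} = \btheta_i+\bxi_i$ with $\bxi_i\sim\mathcal N(\bzero,\frac{\sigma^2}{Nb}\biden_d)$, and the identity $\mathrm{prox}_{f_i/\lambda}(\bv)=\frac{b}{b+\lambda}\hat\btheta_i+\frac{\lambda}{b+\lambda}\bv$. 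Then, invoking the simplified regime (infinite local steps, one communication round, full participation), I would solve the resulting finite system of affine equations to obtain, for each algorithm, its global model as an explicit affine function of $\{\hat\btheta_i\}_{i\in\mathcal S_b}$ and the adversarial messages $\{\bu_i^{(ma)}\}_{i\in\mathcal S_a}$, and its personalized model on benign client $i$, which in every case has the form $\frac{b}{b+\lambda}\hat\btheta_i+\frac{\lambda}{b+\lambda}\bw$ for the relevant global model $\bw$ (up to a simple affine adjustment). Two structural facts then emerge and should be isolated first: (i) in this regime \pfedme and \ditto have the same benign aggregate and the same personalized map, hence identical global and personalized models, which gives the equalities $\loss^{\pfedmegm}=\loss^{\dittogm}$ and $\loss^{\pfedmepm}=\loss^{\dittopm}$; (ii) \flame differs precisely because the ADMM/dual bookkeeping rescales each benign client's own contribution to the aggregate by the factor $q=\frac{2\lambda\alpha_i}{\lambda\alpha_i+\rho}\cdot\frac{b}{b+\lambda}$, which is where $q$ (and the constraint $q<1$) comes from.

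Next I would convert the loss comparison into a mean–squared–error comparison. On a benign client the population test loss of a model $\btheta$ is, up to an additive constant shared by all methods, proportional to $\|\btheta-\btheta_i\|^2$, so $\loss^{A}\le\loss^{B}$ is equivalent to $\frac1{m_b}\sum_{i\in\mathcal S_b}\mathbb E\|\btheta_i^{A}-\btheta_i\|^2\le\frac1{m_b}\sum_{i\in\mathcal S_b}\mathbb E\|\btheta_i^{B}-\btheta_i\|^2$, the expectation being over the training noise $\{z_{i,j}\}$ and the adversarial randomness; each side splits into a squared–bias and a trace–of–covariance term. Here the three attacks enter only through the first two moments of $\bu_i^{(ma)}$: same‑value and Gaussian attacks both have $\mathbb E\,\bu_i^{(ma)}=\bzero$ and $\trace(\mathrm{Cov}(\bu_i^{(ma)}))=\gamma^2 d$ (which is why they produce identical thresholds), whereas sign‑flipping has $\mathbb E\,\bu_i^{(ma)}=-\sqrt{2/\pi}\,\gamma\,\bu_i$ (using $\mathbb E|p|=\sqrt{2/\pi}\,\gamma$) with second moment generating the matrix $\bV_i=\frac{\pi-2}{\pi}\gamma^2\btheta_i\btheta_i^{\top}+\gamma^2\frac{\sigma^2}{bm}\biden_d$; this is exactly the source of the quantities $\overline\btheta_m$ and $\bV_i$ appearing in the statement and of the different sign‑flipping thresholds.

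For the global–model comparison, substituting the closed forms yields $\loss^{\flamegm}-\loss^{\pfedmegm}$ as a scalar affine/quadratic expression in $q$ and in the heterogeneity quantity $\overline\btheta_b^{\top}\overline\btheta_b$ (respectively the attack–heterogeneity cross term $\frac1{m_b}\sum_{i\in\mathcal S_a}\btheta_i^{\top}\overline\btheta_m$ for sign‑flipping); setting it $\le 0$ and solving for $q$ gives exactly the displayed thresholds, and since $q<1$ these describe the parameter regime in which \flame's extra shrinkage toward consensus more than compensates for the signal it discards. For the personalized models I would use that on benign client $i$ the personalized model of each method is the affine image, with contraction factor $\frac{\lambda}{b+\lambda}$, of that method's global model, so that $\mathbb E\|\btheta_i^{\flamepm}-\btheta_i\|^2-\mathbb E\|\btheta_i^{\pfedmepm}-\btheta_i\|^2$ equals $\bigl(\frac{\lambda}{b+\lambda}\bigr)^2$ times the global–model MSE difference plus a cross term $2\frac{b\lambda}{(b+\lambda)^2}\langle\hat\btheta_i-\btheta_i,\ \bw^{\flamegm}-\bw^{\pfedmegm}\rangle$; simplifying this (the cross term contributes the $-d\sigma^2$‑type corrections and the $b\lambda$ rescaling the extra factors of $b$ and $\lambda$) yields the personalized–model thresholds.

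The main obstacle is the bookkeeping in the first and third steps: correctly resolving the coupled affine system for \flame's iterates when a known subset $\mathcal S_a$ of clients injects adversarial messages, and then taking expectations while tracking cross–covariances. The delicate case is sign‑flipping, where the adversarial message $-|p|\bu_i$ is correlated with the honest message $\bu_i$, so bias and variance of the aggregate do not decouple; one must expand $\mathbb E[|p|\,\bu_i]$ and $\mathbb E[|p|^2\,\bu_i\bu_i^{\top}]$ using the half–normal moments of $p$ together with the already–computed first and second moments of $\bu_i$. All of this is routine but lengthy; the conceptual content, once the algebra is organized, is the single statement that \flame's dual update damps each benign client's contribution by the factor $q$, so that as soon as $q$ exceeds the displayed thresholds the reduced exposure to heterogeneity and to adversarial noise outweighs the lost benign signal, and the benign test loss of \flame falls below that of \pfedme and \ditto.
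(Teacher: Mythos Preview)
Your plan is structurally correct and matches the paper closely: you correctly identify the affine closed forms, the fact that benign messages in \flame\ are scaled by $q=\frac{2\lambda\alpha_i}{\lambda\alpha_i+\rho}\cdot\frac{b}{b+\lambda}$, that \pfedme\ and \ditto\ coincide and correspond to the case $q=1$ of the \flame\ loss, and the moment computations for each attack (including the half-normal moments for sign-flipping that produce $\overline\btheta_m$ and $\bV_i$). The test-loss decomposition into constant $+$ bias $+$ trace-of-covariance is also exactly what the paper uses.

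The one substantive discrepancy is how the thresholds are obtained. You propose to compute the \emph{difference} $\loss^{\flamegm}(q)-\loss^{\pfedmegm}$ and solve the resulting quadratic inequality in $q$. The paper instead treats $\loss^{\flame}(q)$ as a function of $q$, computes $\partial_q\loss^{\flame}(q)$, and derives the threshold as the value above which this derivative is nonnegative; since the loss is convex quadratic in $q$, nonnegative derivative on $[q,1]$ gives $\loss(q)\le\loss(1)$ by monotonicity. For a quadratic $aq^2+bq+c$ with $a>0$, the derivative vanishes at $-b/(2a)$, whereas $\loss(q)-\loss(1)=(q-1)(a(q+1)+b)$ changes sign at $-1-b/a$; these are different numbers. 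The displayed thresholds in the proposition are precisely the zeros of the derivative, so your direct-difference computation will \emph{not} reproduce them---you would obtain different (in fact weaker) thresholds. Your argument is valid and arguably gives a sharper sufficient condition, but the claim that it ``gives exactly the displayed thresholds'' is incorrect; to match the proposition you must follow the derivative/monotonicity route.
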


The proof of Proposition \ref{proposition: robustness} can be obtained in Appendix \ref{appendix: robustness}. Proposition \ref{proposition: robustness} implies \flame outperforms both \pfedme and \ditto in terms of robustness under regular conditions. From the constraints on $q$, we can see that when the dataset has many features, and each client has a small number of data points, $m_bN\lambda\overline{\btheta}^\top_{b}\overline{\btheta}_{b}-d\sigma^2$ and $\frac{b\lambda^2}{m_b}\sum_{i'\in\mathcal{S}_a}\btheta_i^\top \overline{\btheta}_m-\frac{bd\sigma^2\lambda}{mN}$ may be smaller than 0. In this case, for any $q\in(0,1)$, the robustness of \flame is better than that of \pfedme and \ditto. Therefore, \flame is a potentially better PFL framework for scenarios with many features and limited dataset sizes in each client. 
Next, we turn to the fairness analysis of \flame.
\begin{proposition}\label{proposition: fairness}
    Let the variance of test losses on different clients for the personalized and global models of \flame, \pfedme, and \ditto be 
    \begin{small}$\var\{f_i(\btheta_i^\flamepm)\}$, $\var\{f_i(\btheta_i^\pfedmepm)\}$, $\var\{f_i(\btheta_i^\dittopm)\}$, $\var\{f_i(\btheta_i^\flamegm)\}$, $\var\{f_i(\btheta_i^\pfedmegm)\}$, $\var\{f_i(\btheta_i^\dittogm)\}$\end{small}, then we have
\begin{small}
\begin{align*}
    \var\{f_i(\btheta_i^\flamepm)\}<\var\{f_i(\btheta_i^\pfedmepm)\}=\var\{f_i(\btheta_i^\dittopm)\},\\
    \var\{f_i(\btheta_i^\flamegm)\}<\var\{f_i(\btheta_i^\pfedmegm)\}=\var\{f_i(\btheta_i^\dittogm)\}.
\end{align*}    
\end{small}
\end{proposition}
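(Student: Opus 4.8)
The plan is to specialize to the linear-regression model of Section~\ref{sec: robustness and fairness theory} and reuse the closed-form expressions for the trained models that are derived for Proposition~\ref{proposition: robustness}; once those are in hand, the fairness statement reduces to a direct variance computation. First I would record the two basic facts of this setting: since $\sum_{j=1}^N\bx_{i,j}\bx_{i,j}^{\top}=Nb\biden_d$, the local least-squares estimate is $\hat{\btheta}_i=\btheta_i+\boldsymbol{n}_i$ with $\boldsymbol{n}_i$ zero-mean Gaussian of isotropic covariance $\tfrac{\sigma^2}{Nb}\biden_d$, independent across clients, and the test loss of any fixed model $\boldsymbol{v}$ on client~$i$ equals $\tfrac{b}{2}\|\btheta_i-\boldsymbol{v}\|^2$ plus a client-independent constant. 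Next, with infinite local update steps, one communication round, and full participation, the first-order condition $\nabla f_i(\btheta_i)+\lambda(\btheta_i-\bw_i)=0$ makes every personalized model the affine image $\tfrac{b}{b+\lambda}\hat{\btheta}_i+\tfrac{\lambda}{b+\lambda}\bw_i$ of the corresponding local iterate, and the closed-form updates \eqref{eq13}--\eqref{eq15} then express the \flame personalized, local, and global models---and likewise the \pfedme and \ditto ones---as explicit affine functions of $\{\hat{\btheta}_i\}$, hence of $\{\boldsymbol{n}_i\}$ (and of the Byzantine messages when those are present).

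The structural heart of the argument, already exploited for Proposition~\ref{proposition: robustness}, is that the dual variable $\bpi_i$---threaded through the local-model update \eqref{eq13}--\eqref{eq14} and the aggregation message $\bu_i=\bw_i+\tfrac{1}{\rho}\bpi_i$ used in \eqref{eq15}---makes the \flame quantities equal the corresponding \pfedme/\ditto quantities with their \emph{noise-dependent} (and, under attack, malicious-message-dependent) component contracted by the factor $q=\tfrac{2\lambda\alpha_i}{\lambda\alpha_i+\rho}\cdot\tfrac{b}{b+\lambda}$, while \pfedme and \ditto agree in this quadratic regime because the Moreau-envelope global objective of the former and the \fedavg global objective of the latter both reduce, under $\sum_j\bx_{i,j}\bx_{i,j}^{\top}=Nb\biden_d$, to the same average of local minimizers (this is the source of the equalities $\loss^{\pfedmepm}=\loss^{\dittopm}$ and $\loss^{\pfedmegm}=\loss^{\dittogm}$ used earlier). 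Substituting into $f_i(\cdot)=\tfrac{b}{2}\|\btheta_i-\cdot\|^2+\mathrm{const}$, I would expand the squared norm, take expectation over the noise where the definition of performance fairness calls for it, and compute the variance across the network.

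The key point is that the contraction factor enters the variance only as a square: the deterministic, client-common part of each model contributes nothing to the variance, and the cross term between it and the fluctuating part---the very cross term that carried the sign obstruction in the mean comparison of Proposition~\ref{proposition: robustness} and therefore forced the side conditions on $q$ there---vanishes because the fluctuating part is symmetric and zero-mean. What remains is the variance of the fluctuating part, which for \flame is $q^2$ times the corresponding \pfedme/\ditto quantity (up to the exact power of $q$, which the expansion pins down), so $\var\{f_i(\btheta_i^{\flamepm})\}=q^2\,\var\{f_i(\btheta_i^{\pfedmepm})\}$ and likewise for the global model. Since $q<1$---which holds in the regime $\rho\ge\lambda\alpha_i$ considered throughout, as $\tfrac{b}{b+\lambda}<1$---both strict inequalities of the proposition follow at once and, crucially, without any of the extra hypotheses needed for the mean; the \pfedme--\ditto equalities are inherited from the previous step.

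The main obstacle I anticipate is the bookkeeping in the second step: one must verify that after the single round the factor $q$ multiplies the \emph{entire} noise-/attack-dependent component of both the personalized model (through $\bw_i$ and the proximal step) and the aggregated global model (through $\bu_i$), with no residual $q$-dependence left in the client-common component, so that the variance collapses to a clean $q^2$ rescaling rather than a condition-laden comparison. A secondary care point is justifying the identification $\pfedme\equiv\ditto$ in this regime, i.e.\ that their two structurally different global updates produce the same iterate once $f_i$ is the quadratic above.
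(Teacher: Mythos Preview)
Your proposal has a genuine gap: the claimed identity $\var\{f_i(\btheta_i^{\flamegm})\}=q^{2}\,\var\{f_i(\btheta_i^{\pfedmegm})\}$ is false in general, and the argument behind it confuses a mean calculation with a variance one. After taking expectation over the training noise, the global-model test loss on client $i$ is a client-independent constant plus $\tfrac{b}{2}\|q\overline{\btheta}-\btheta_i\|^{2}$, where $\overline{\btheta}=\tfrac{1}{m}\sum_j\btheta_j$ (the \pfedme/\ditto analogue is the same expression with $q=1$). The fairness variance is therefore
\[
\var_i\bigl[\|q\overline{\btheta}-\btheta_i\|^{2}\bigr]=\var_i\bigl[\|\btheta_i\|^{2}-2q\,\btheta_i^{\top}\overline{\btheta}\bigr],
\]
a quadratic in $q$ whose linear coefficient is $-4\,\mathrm{Cov}_i\bigl[\|\btheta_i\|^{2},\,\btheta_i^{\top}\overline{\btheta}\bigr]$. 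The zero-mean fact $\tfrac{1}{m}\sum_i(\btheta_i-\overline{\btheta})=0$ that you invoke kills cross terms in a \emph{mean} (which is exactly why it helped in Proposition~\ref{proposition: robustness}), but it does not make a covariance vanish. For a one-dimensional counterexample take $m=2$, $\btheta_1=1$, $\btheta_2=3$: then $\overline{\btheta}=2$, the $q=1$ variance is $0$ since $\|\overline{\btheta}-\btheta_i\|^{2}\equiv1$, while for any $q<1$ the two values $\|q\overline{\btheta}-\btheta_i\|^{2}$ differ and the variance is strictly positive---the reverse of the proposition. The same expansion governs the personalized-model case, up to the prefactor $\lambda^{2}/(b+\lambda)^{2}$.

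The paper does not attempt a clean $q^{2}$ scaling. It studies $q\mapsto\var_i\bigl[\|q\overline{\btheta}-\btheta_i\|^{2}\bigr]$ directly via the pairwise identity $\var_i[g_i]=\tfrac{1}{2m^{2}}\sum_{i\neq i'}(g_i-g_{i'})^{2}$, differentiates in $q$, and argues monotonicity on $(0,1]$; but even there the argument only closes after imposing the additional hypothesis that all true personalized models share the same norm, $\|\btheta_i\|=\|\btheta_{i'}\|$, which kills precisely the troublesome covariance above (and under which, incidentally, your $q^{2}$ scaling would hold). So the missing ingredient in your plan is twofold: the across-client variance is not a pure power of $q$, and an extra structural assumption on $\{\btheta_i\}$ is needed to control the sign of the linear-in-$q$ term.
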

The proof of Proposition \ref{proposition: fairness} is given in Appendix \ref{appendix: fairness}.
Proposition \ref{proposition: fairness} demonstrates that \flame consistently results in more uniform test losses, which implies \flame is more fair than \pfedme and \ditto.

\section{Experiments}\label{sec experiment}

\begin{table}[t]
\centering
\renewcommand{\arraystretch}{1.2}
\setlength{\tabcolsep}{5.5pt} 
\caption{An overview of the datasets and models.}
\label{table: datasets and models}
\begin{tabular}{cccccc}
\toprule
\textbf{Datasets}  & \textbf{\# samples} & \textbf{\# classes}   & \textbf{Ref.} & \textbf{Models} & \textbf{\# parameters} \\ \midrule
\mnist & 70,000 &  10  & \cite{lecun1998gradient}     & MLP    & 7,850       \\
\fmnist     & 70,000  &   10    & \cite{xiao2017fashion}    & MLP    & 7,850     \\
\mmnist    & 58,954  & 6 & \cite{mmnist}    & CNN    & 206,678    \\
\cifar    &  60,000    & 10  & \cite{krizhevsky2009learning}    & CNN    & 268,650   \\ 
\femnist    &  382,705    & 10  & \cite{femnist}    & CNN    & 214,590   \\ \bottomrule
\end{tabular}
\end{table}

\begin{table*}[t]

\caption{The top-1 accuracy of various approaches. We conduct five trials and present the mean accuracy along with the standard derivation. Bold values indicate the highest accuracy achieved by different models.}
\label{table: hightest acc}
\resizebox{\textwidth}{!}{
\begin{tabular}{c c c c c c c c c c}

\toprule
\textbf{Categories}                                                               & \textbf{Datasets}        & \textbf{Partitioning}     & \textbf{\flamehm}           & \textbf{\flamepm}           & \textbf{\flamegm}  & \textbf{\pfedmepm} & \textbf{\pfedmegm} & \textbf{\dittopm}  & \textbf{\dittogm}  \\ \midrule
\multirow{4}{*}{\textbf{\begin{tabular}[c]{@{}c@{}}Hybrid skew\end{tabular}}}   & \mnist                    & \# label=2, Dir(0.5)      & \textbf{0.9494$\pm$0.0129} & 0.9456$\pm$0.0176          & 0.9129$\pm$0.0174 & 0.9264$\pm$0.0193 & 0.8762$\pm$0.0170 & 0.9269$\pm$0.0179 & 0.8709$\pm$0.0100 \\
                                                                                  & \fmnist                   & \# label=2, Dir(0.5)      & \textbf{0.9046$\pm$0.0079} & 0.8989$\pm$0.0109          & 0.8315$\pm$0.0159 & 0.8753$\pm$0.0087 & 0.7840$\pm$0.0180 & 0.8738$\pm$0.0078 & 0.7524$\pm$0.0334 \\ 
                                                                                  & \mmnist                   & \# label=2, Dir(0.5)      & \textbf{0.9973$\pm$0.0010} & 0.9966$\pm$0.0008          & 0.9952$\pm$0.0018 & 0.9932$\pm$0.0006 & 0.9888$\pm$0.0030 & 0.9732$\pm$0.0139 & 0.5723$\pm$0.1631 \\ 
                                                                                  & \cifar                  & \# label=2, Dir(0.5)      & \textbf{0.6633$\pm$0.0414} & 0.6425$\pm$0.0455          & 0.4685$\pm$0.0508 & 0.5737$\pm$0.0242 & 0.3629$\pm$0.0176 & 0.5683$\pm$0.0178 & 0.3273$\pm$0.0114 \\ \midrule
\multirow{6}{*}{\textbf{\begin{tabular}[c]{@{}c@{}}Label skew\end{tabular}}}   & \multirow{6}{*}{\mnist}   & Dir(0.5)                  & \textbf{0.9523$\pm$0.0029} & \textbf{0.9523$\pm$0.0029} & 0.9333$\pm$0.0034 & 0.9245$\pm$0.0022 & 0.8875$\pm$0.0063 & 0.9122$\pm$0.0032 & 0.7997$\pm$0.0388 \\ 
                                                                                  &                          & \# label=2                & \textbf{0.9835$\pm$0.0034} & \textbf{0.9835$\pm$0.0034} & 0.7984$\pm$0.0202 & 0.9791$\pm$0.0034 & 0.7423$\pm$0.0248 & 0.9783$\pm$0.0035 & 0.4782$\pm$0.0585 \\ 
                                                                                  &                          & \# label=3                & \textbf{0.9753$\pm$0.0019} & \textbf{0.9753$\pm$0.0019} & 0.8705$\pm$0.0272 & 0.9661$\pm$0.0027 & 0.7936$\pm$0.0536 & 0.9650$\pm$0.0024 & 0.5649$\pm$0.0995 \\ 
                                                                                  &                          & \# label=4                & \textbf{0.9689$\pm$0.0022} & \textbf{0.9689$\pm$0.0022} & 0.8970$\pm$0.0071 & 0.9566$\pm$0.0038 & 0.8169$\pm$0.0188 & 0.9548$\pm$0.0043 & 0.5881$\pm$0.0581 \\ 
                                                                                  &                          & \# label=5                & \textbf{0.9645$\pm$0.0029} & \textbf{0.9645$\pm$0.0029} & 0.9131$\pm$0.0071 & 0.9489$\pm$0.0042 & 0.8497$\pm$0.0289 & 0.9452$\pm$0.0047 & 0.6905$\pm$0.0279 \\ 
                                                                                  &                          & \# label=6                & \textbf{0.9604$\pm$0.0033} & \textbf{0.9604$\pm$0.0033} & 0.9115$\pm$0.0059 & 0.9450$\pm$0.0058 & 0.8504$\pm$0.0187 & 0.9414$\pm$0.0058 & 0.6338$\pm$0.0301 \\ \midrule
\multirow{6}{*}{\textbf{\begin{tabular}[c]{@{}c@{}}Label skew\end{tabular}}}    & \multirow{6}{*}{\fmnist}  & Dir(0.5)                  & \textbf{0.8963$\pm$0.0065} & \textbf{0.8963$\pm$0.0065} & 0.8463$\pm$0.0082 & 0.8672$\pm$0.0079 & 0.7933$\pm$0.0115 & 0.8525$\pm$0.0068 & 0.7168$\pm$0.0472 \\ 
                                                                                  &                          & \# label=2                & \textbf{0.9642$\pm$0.0198} & \textbf{0.9642$\pm$0.0198} & 0.7501$\pm$0.0567 & 0.9578$\pm$0.0228 & 0.6460$\pm$0.0687 & 0.9543$\pm$0.0245 & 0.4196$\pm$0.0470 \\ 
                                                                                  &                          & \# label=3                & \textbf{0.9549$\pm$0.0100} & \textbf{0.9549$\pm$0.0100} & 0.8012$\pm$0.0084 & 0.9455$\pm$0.0099 & 0.6532$\pm$0.0881 & 0.9412$\pm$0.0104 & 0.5389$\pm$0.0856 \\ 
                                                                                  &                          & \# label=4                & \textbf{0.9333$\pm$0.0076} & \textbf{0.9333$\pm$0.0076} & 0.8121$\pm$0.0154 & 0.9189$\pm$0.0100 & 0.7239$\pm$0.0185 & 0.9132$\pm$0.0111 & 0.5741$\pm$0.0488 \\ 
                                                                                  &                          & \# label=5                & \textbf{0.9278$\pm$0.0054} & \textbf{0.9278$\pm$0.0054} & 0.8259$\pm$0.0065 & 0.9106$\pm$0.0083 & 0.7477$\pm$0.0147 & 0.9048$\pm$0.0110 & 0.6309$\pm$0.0419 \\ 
                                                                                  &                          & \# label=6                & \textbf{0.9153$\pm$0.0083} & \textbf{0.9153$\pm$0.0083} & 0.8264$\pm$0.0122 & 0.8933$\pm$0.0123 & 0.7514$\pm$0.0404 & 0.8845$\pm$0.0122 & 0.6291$\pm$0.0564 \\ \midrule
\multirow{6}{*}{\textbf{\begin{tabular}[c]{@{}c@{}}Label skew\end{tabular}}}    & \multirow{6}{*}{\mmnist}  & Dir(0.5)                   & \textbf{0.9974$\pm$0.0006} & 0.9967$\pm$0.0006          & 0.9965$\pm$0.0009 & 0.9917$\pm$0.0015 & 0.9905$\pm$0.0017 & 0.9733$\pm$0.0061 & 0.8303$\pm$0.0571 \\ 
                                                                                  &                          & \# label=2                & \textbf{0.9987$\pm$0.0006} & \textbf{0.9987$\pm$0.0006} & 0.9649$\pm$0.0244 & 0.9979$\pm$0.0009 & 0.9025$\pm$0.0388 & 0.9948$\pm$0.0024 & 0.4312$\pm$0.0537 \\ 
                                                                                  &                          & \# label=3                & \textbf{0.9984$\pm$0.0008} & \textbf{0.9984$\pm$0.0008} & 0.9616$\pm$0.0675 & 0.9965$\pm$0.0015 & 0.9451$\pm$0.0760 & 0.9904$\pm$0.0045 & 0.6334$\pm$0.0404 \\ 
                                                                                  &                          & \# label=4                & \textbf{0.9984$\pm$0.0002} & \textbf{0.9984$\pm$0.0002} & 0.9944$\pm$0.0013 & 0.9958$\pm$0.0008 & 0.9777$\pm$0.0077 & 0.9860$\pm$0.0019 & 0.6778$\pm$0.0997 \\ 
                                                                                  &                          & \# label=5                & \textbf{0.9980$\pm$0.0003} & 0.9978$\pm$0.0004          & 0.9954$\pm$0.0011 & 0.9948$\pm$0.0012 & 0.9855$\pm$0.0028 & 0.9841$\pm$0.0017 & 0.7188$\pm$0.1502 \\ 
                                                                                  &                          & \# label=6                & \textbf{0.9979$\pm$0.0003} & 0.9976$\pm$0.0003          & 0.9962$\pm$0.0004 & 0.9940$\pm$0.0005 & 0.9896$\pm$0.0016 & 0.9795$\pm$0.0037 & 0.7217$\pm$0.1137 \\ \midrule
\multirow{6}{*}{\textbf{\begin{tabular}[c]{@{}c@{}}Label skew\end{tabular}}}    & \multirow{6}{*}{\cifar} & Dir(0.5)                     & \textbf{0.6629$\pm$0.0172}    & 0.6617$\pm$0.0185       & 0.5228$\pm$0.0149 & 0.5592$\pm$0.0235 & 0.3676$\pm$0.0154 & 0.5427$\pm$0.0204 & 0.2949$\pm$0.0115 \\ 
                                                                                  &                          & \# label=2                & \textbf{0.8525$\pm$0.0065} & \textbf{0.8525$\pm$0.0065} & 0.4370$\pm$0.0131 & 0.7950$\pm$0.0098 & 0.3351$\pm$0.0105 & 0.7796$\pm$0.0129 & 0.2151$\pm$0.0202 \\ 
                                                                                  &                          & \# label=3                & \textbf{0.7731$\pm$0.0169} & \textbf{0.7731$\pm$0.0169} & 0.4697$\pm$0.0144 & 0.6819$\pm$0.0187 & 0.3388$\pm$0.0291 & 0.6718$\pm$0.0244 & 0.2628$\pm$0.0289 \\ 
                                                                                  &                          & \# label=4                & \textbf{0.7539$\pm$0.0148} & \textbf{0.7539$\pm$0.0148} & 0.4921$\pm$0.0026 & 0.6475$\pm$0.0210 & 0.3489$\pm$0.0075 & 0.6276$\pm$0.0235 & 0.2570$\pm$0.0045 \\ 
                                                                                  &                          & \# label=5                & \textbf{0.7103$\pm$0.0193} & \textbf{0.7103$\pm$0.0193} & 0.5062$\pm$0.0130 & 0.5945$\pm$0.0357 & 0.3494$\pm$0.0170 & 0.5791$\pm$0.0360 & 0.2805$\pm$0.0066 \\ 
                                                                                  &                          & \# label=6                & \textbf{0.6932$\pm$0.0144} & \textbf{0.6932$\pm$0.0144} & 0.4989$\pm$0.0170 & 0.5769$\pm$0.0207 & 0.3391$\pm$0.0271 & 0.5552$\pm$0.0163 & 0.2815$\pm$0.0252 \\ \midrule
\multirow{4}{*}{\textbf{\begin{tabular}[c]{@{}c@{}}Quality skew\end{tabular}}}  & \mnist                    & \multirow{4}{*}{Gau(0.1)} & \textbf{0.9457$\pm$0.0016} & 0.9448$\pm$0.0012          & 0.9444$\pm$0.0015 & 0.9019$\pm$0.0021 & 0.8999$\pm$0.0018 & 0.9008$\pm$0.0020 & 0.8966$\pm$0.0015 \\ 
                                                                                  & \fmnist                   &                           & \textbf{0.8579$\pm$0.0016} & 0.8546$\pm$0.0013          & 0.8574$\pm$0.0015 & 0.8087$\pm$0.0028 & 0.8099$\pm$0.0030 & 0.7929$\pm$0.0037 & 0.7843$\pm$0.0063 \\ 
                                                                                  & \mmnist                   &                           & \textbf{0.9991$\pm$0.0003} & \textbf{0.9991$\pm$0.0003} & 0.9376$\pm$0.0158 & 0.9984$\pm$0.0002 & 0.7940$\pm$0.0493 & 0.9974$\pm$0.0010 & 0.5282$\pm$0.1446 \\ 
                                                                                  & \cifar                  &                           & \textbf{0.5581$\pm$0.0069} & 0.5448$\pm$0.0093          & 0.5578$\pm$0.0067 & 0.3810$\pm$0.0146 & 0.3805$\pm$0.0143 & 0.3543$\pm$0.0126 & 0.3424$\pm$0.0120 \\ \midrule
\multirow{4}{*}{\textbf{\begin{tabular}[c]{@{}c@{}}Quantity skew\end{tabular}}} & \mnist                    & \multirow{4}{*}{Dir(0.5)} & \textbf{0.9139$\pm$0.0115} & 0.9036$\pm$0.0180          & 0.9119$\pm$0.0119 & 0.8759$\pm$0.0208 & 0.8890$\pm$0.0100 & 0.8941$\pm$0.0069 & 0.8968$\pm$0.0069 \\ 
                                                                                  & \fmnist                   &                           & \textbf{0.8239$\pm$0.0138} & 0.8093$\pm$0.0175          & 0.8221$\pm$0.0145 & 0.7743$\pm$0.0177 & 0.7824$\pm$0.0145 & 0.7825$\pm$0.0058 & 0.7767$\pm$0.0054 \\ 
                                                                                  & \mmnist                   &                           & \textbf{0.9947$\pm$0.0022} & 0.9884$\pm$0.0084          & 0.9944$\pm$0.0024 & 0.9837$\pm$0.0072 & 0.9862$\pm$0.0088 & 0.9684$\pm$0.0072 & 0.8156$\pm$0.0496 \\ 
                                                                                  & \cifar                   &                           & \textbf{0.4511$\pm$0.0359} & 0.4114$\pm$0.0392          & 0.4502$\pm$0.0365 & 0.3260$\pm$0.0256 & 0.3474$\pm$0.0310 & 0.3478$\pm$0.0140 & 0.3511$\pm$0.0217 \\ \midrule
\textbf{\begin{tabular}[c]{@{}c@{}}Feature skew\end{tabular}}                   & \femnist                  & \# writers=338            & \textbf{0.9993$\pm$0.0001} & 0.9988$\pm$0.0003          & 0.9992$\pm$0.0001 & 0.9850$\pm$0.0029 & 0.9920$\pm$0.0021 & 0.9847$\pm$0.0016 & 0.9805$\pm$0.0044 \\ \bottomrule
\end{tabular}}
\end{table*}

%

In the experiments, we aim to evaluate the performance of \flame in comparison to other state-of-the-art methods, specifically comparing their accuracy, convergence, robustness, and fairness.
Moreover, we investigate how different hyperparameters influence the convergence of \texttt{FLAME}.
\subsection{Settings}

\myparagraph{Datasets}
We employ \mnist \cite{lecun1998gradient}, Fashion MNIST (\fmnist) \cite{xiao2017fashion}, Medical MNIST (\mmnist) \cite{mmnist}, \cifar \cite{krizhevsky2009learning}, and \femnist \cite{femnist}, which are the most widely employed datasets in FL research community. We randomly select 20\% of each dataset to create a testing set, leaving the remaining 80\% as the training set.

\myparagraph{Models}
We evaluate the performance of multilayer perceptron (MLP) models with two hidden layers on both \mnist and \fmnist. For \mmnist, \cifar, and \femnist, we employ convolutional neural networks (CNN) consisting of two convolutional layers, each followed by a max-pooling layer. 
Subsequently, a flattening layer is applied to convert the extracted feature maps into a one-dimensional vector, which is then processed through a fully connected layer with the ReLU activation function. Before the output layer, a dropout layer is incorporated to mitigate overfitting.
Table \ref{table: datasets and models} presents a comprehensive overview of the datasets and models.

\myparagraph{Partitions}
To accommodate data heterogeneity, we adopt the same data partitioning strategies as described in \cite{li2022federated} and \cite{ye2023heterogeneous}, which respectively are a good experimental study on FL with non-i.i.d. data silos and a good review of heterogeneous FL.
These partitions encompass label skew, feature skew, quality skew, and quantity skew. Regarding label skew, quality skew, and quantity skew, we conduct experiments on \mnist, \fmnist, \mnist, and \cifar datasets. Additionally, we address feature skew specifically on the \femnist dataset due to the inclusion of writer information in each image.

For label skew, we consider two scenarios. 
The first involves \textit{quantity-based label imbalance}, where we organize the training data by their labels and distribute them into shards, with each client being assigned 2-6 shards randomly.
The second scenario, referred to as \textit{distribution-based label imbalance}, involves allocating a portion of samples from each label to every client based on the Dirichlet distribution. Specifically, we sample from $p_k\sim Dir(\beta)$ to assign a $p_{ki}$ proportion of samples of class $k$ to client $i$. Here, $Dir(\cdot)$ denotes the Dirichlet distribution with a concentration parameter $\beta>0$. 
For feature skew, we propose partitioning the \femnist dataset into different clients based on individual writers. Given that character features such as stroke width and slant often vary among writers, a discernible feature skew naturally arises across these clients.
For quality skew, we consider utilizing \textit{noise-based quality imbalance}, which involves introducing varying levels of Gaussian noise to each client's local dataset to achieve different quality distributions. 
Specifically, given a noise level $\sigma$, we add $\boldsymbol{n}_i\sim Gau(\sigma\cdot i/m)$ for client $i$, where $Gau(\sigma\cdot i/m)$ is a Gaussian distribution with mean 0 and variance $\sigma\cdot i/m$. 
For quantity skew, we employ the Dirichlet distribution to distribute varying amounts of data samples among each party. 
Specifically, we sample from $p\sim Dir(\beta)$ and assign a $p_i$ proportion of samples to client $i$.
Moreover, to illustrate that different clients may possess various types of heterogeneous data, we designate half of the clients with quantity-based label imbalance data and the other half with quantity skew data, which we refer to as \textit{hybrid skew}.

\myparagraph{Baselines} 
We evaluate the performance of \flame against two state-of-the-art PFL methods, namely, \texttt{pFedMe} \cite{t2020personalized} and \ditto \cite{li2021ditto}. 
We ensure that $\{\btheta_i\}_{i=1}^m$, $\{\bw_i\}_{i=1}^m$, and $\bw$ are updated an equal number of times across all methods.
We use \texttt{FLAME-GM} and \texttt{FLAME-PM} as abbreviations for the global and personalized models of \texttt{FLAME}. Similarly, \pfedmepm, \pfedmegm, \dittopm, and \dittogm follow the same convention. 
Moreover, we use \flamehm to represent the hybrid model, indicating the model with higher accuracy among personalized and global models.
Accuracy is calculated as the average accuracy across all clients, and in the presence of attacks, it is the average accuracy of the benign clients.

\myparagraph{Attacks}
In addition to the three Byzantine attacks discussed in Section \ref{sec: robustness and fairness theory}, we consider a stronger data poisoning attack in the following experiments.

\textit{Label poisoning attacks} \cite{Arjun2019Analyzing, Biggio2011Support}: Corrupted devices do not have access to the training APIs, and the training samples are poisoned with flipped labels (for binary classification) or uniformly random noisy labels.

The corruption levels, i.e., the fractions of malicious clients, are set as $\{0,0.2,0.5,0.8\}$. For the three Byzantine attacks in Section \ref{sec: robustness and fairness theory}, the noise variance is set to 0.1.
To better measure robustness, we introduce \texttt{multi-Krum} as a comparison method, which is a global training approach augmented with a robust aggregation technique.

\myparagraph{Implementations}
Our algorithms were executed on a computational platform comprising two Intel Xeon Gold 5320 CPUs with 52 cores, 512 GB of RAM, four NVIDIA A800 with 320 GB VRAM, and operating on the Ubuntu 22.04 environment. The software implementation was realized in Python 3.8 and Pytorch 2.1 and open-sourced (\textcolor{blue}{\href{https://github.com/zsk66/FLAME}{https://github.com/zsk66/FLAME}}).

\begin{figure*}[p]
    \centering
    \includegraphics[width=1\textwidth]{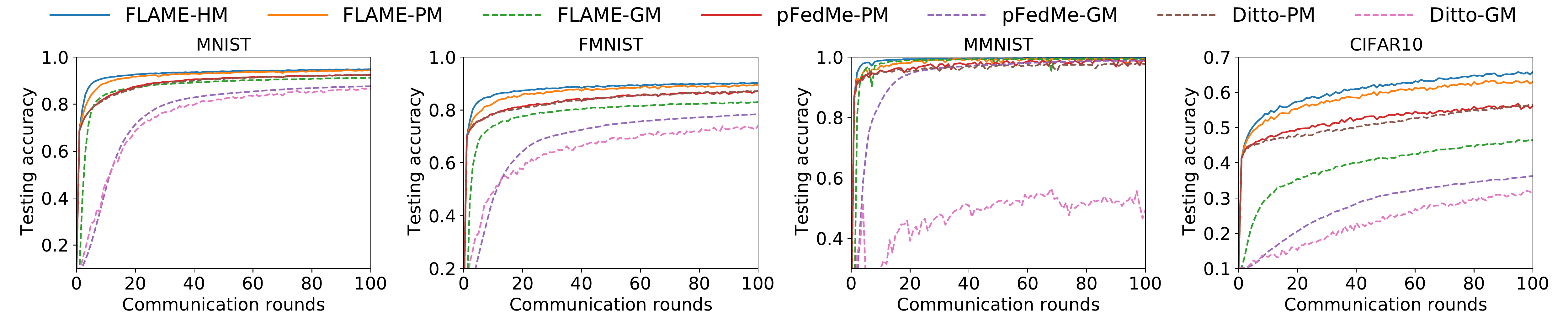} 
    \vspace{-1em}
    \caption{A comparison of the test accuracy across different methods. The dashed lines represent the global models, while the solid lines represent the personalized models. The personalized and global models of \flame outperform other methods on four datasets.}
    \label{fig:acc communication rounds six methods}
    \vspace{1em}
    \includegraphics[width=1\textwidth]{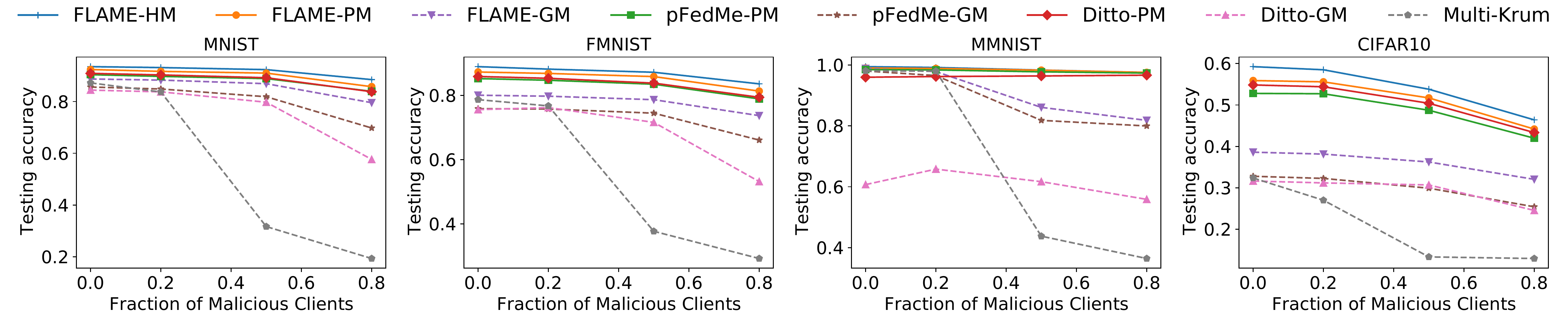} 
    \vspace{-1em}
    \caption{Robustness comparison of different methods under label poisoning attacks with hybrid skew.}
    \label{fig: acc_malicious_hybrid-skew}
    \vspace{1em}
    \includegraphics[width=1\linewidth]{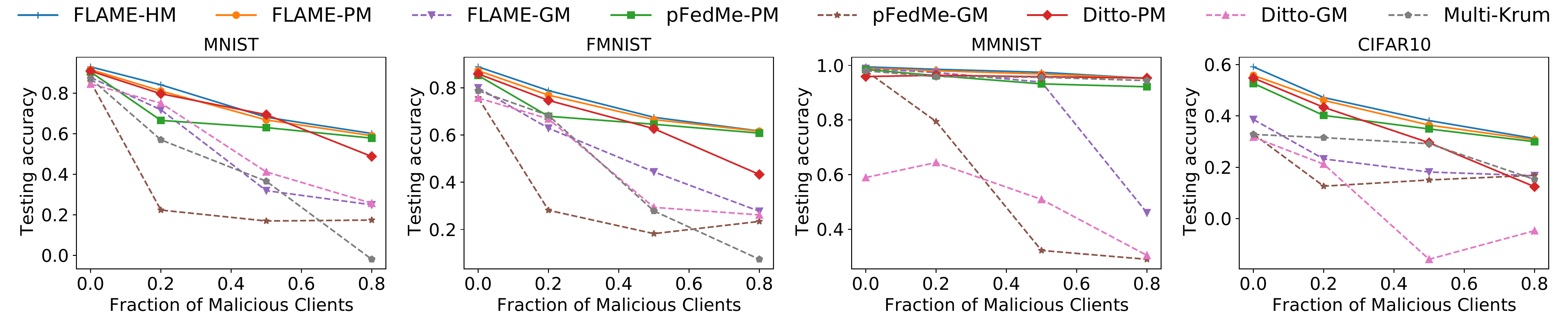}
    \vspace{-1em}
    \caption{Robustness comparison of different methods under same-value attacks with hybrid skew.}
    \label{fig:acc_malicious_attack_2_hybrid-skew_2}
    \vspace{1em}
    \includegraphics[width=1\linewidth]{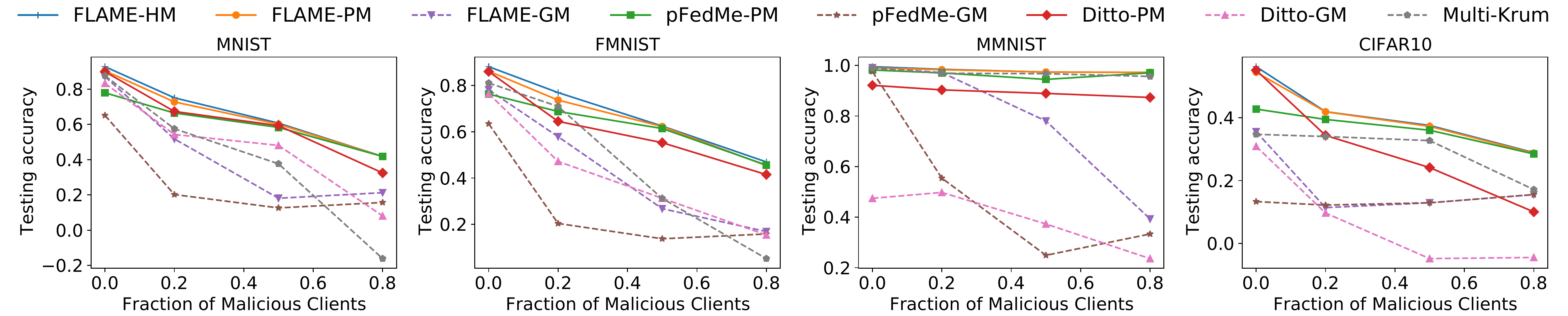}
    \vspace{-1em}
    \caption{Robustness comparison of different methods under sign-flipping attacks with hybrid skew.}
    \label{fig:acc_malicious_attack_3_hybrid-skew_2}
    \vspace{1em}
    \includegraphics[width=1\linewidth]{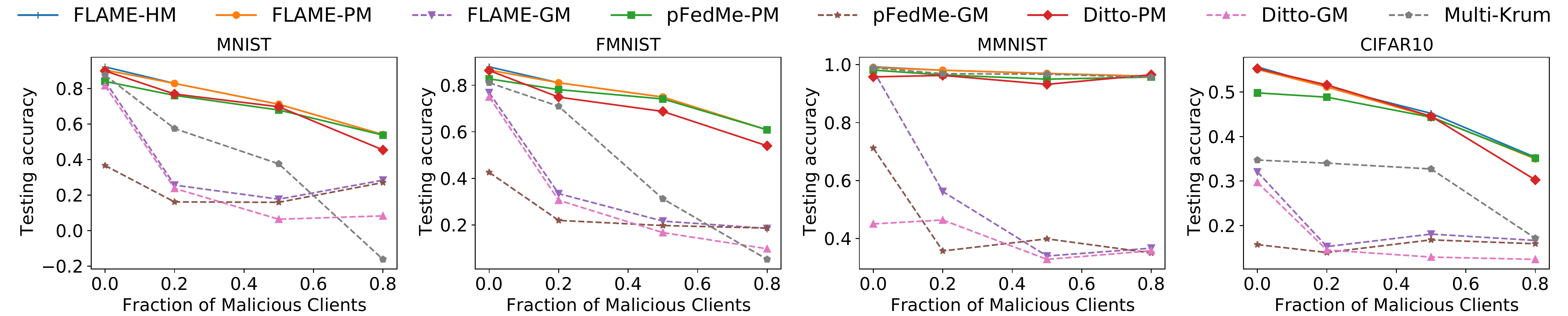}
    \vspace{-1em}
    \caption{Robustness comparison of different methods under Gaussian attacks with hybrid skew.}
    \label{fig:acc_malicious_attack_4_hybrid-skew_2}
\end{figure*}

\begin{figure*}[t]
  \centering
    \includegraphics[width=1\textwidth]{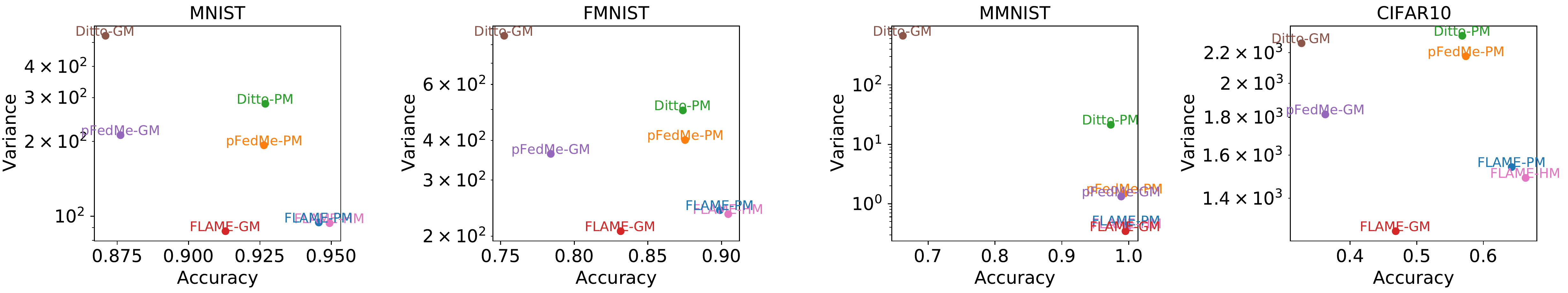} 
    \vspace{-2em}
    \caption{Accuracy-fairness trade-off of competing methods (The point closer to the bottom right corner is better).}
    \label{fig: Accuracy-fairness trade-off}
    \vspace{-0.5em}
\end{figure*}
\subsection{Comparison of Multiple Methods}
\subsubsection{Overall accuracy comparison}\label{sec: Overall accuracy comparison}
Table \ref{table: hightest acc} presents the top-1 accuracy of various methods under different data partitioning strategies. Each entry in the table represents the average accuracy obtained over five different random seeds, with the standard deviation calculated accordingly.
We configured each method with a local learning rate $\eta=0.01$, a batch size of 100, local iterations $H=1$, and the number of clients $m$ set to 10, and hyperparameter $\lambda=1$. Additionally, for all subsequent experiments, we set $\alpha_i=1/m$.
For \flame, we set $\rho=0.1$. For \pfedme and \ditto, we select the best-performing learning rate for their global models from the candidate set $\{0.01, 0.05, 0.1, 0.2, 0.5\}$.

Firstly, we can clearly see that \flamehm achieved the best accuracy across all datasets and data partitioning schemes. 
This is because the hybrid model selects the best performance from both the personalized and global models.
Under the hybrid skew, compared to using only the personalized or global model, we found that the hybrid model can improve accuracy by an average of 4.2\%.
Secondly, we examine the accuracy of personalized and global models of different methods and various data partitioning schemes.
It is evident that \flame's personalized model achieves the highest accuracy in the case of quantity-based label imbalance. 
This can be attributed to the design intent of PFL, which aims to train personalized models tailored to the individual needs of users. 
Therefore, personalized models exhibit enhanced capability in expressing the imbalanced label distribution inherent in the data. 
Thirdly, in the case of quantity-based label imbalance, we observe that as the number of labels owned by each client increases, the accuracy of personalized models in both PFL frameworks decreases, while the accuracy of global models increases. 
This is because, with an increase in the number of labels, data heterogeneity diminishes, thereby deteriorating the generalization capability of personalized models, while enhancing the generalization capability of global models.
Fourthly, in terms of quality skew, quantity skew, and feature skew, global models usually achieve better performance than personalized models.
Therefore, we should not only focus on improving the performance of personalized models but also consider enhancing the performance of global models to handle different types of heterogeneous data.
Finally, in all data partitioning cases, both the personalized and global models of \flame exhibit higher accuracy than \pfedme and \ditto.
On average, the personalized models are 3.9\% more accurate, and the global models are 14.2\% more accurate.
These improvements are attributable to the use of ADMM in \flame, which is a primal-dual method resulting in superior solving precision compared to \pfedme and \ditto.

\subsubsection{Convergence comparison}
Fig. \ref{fig:acc communication rounds six methods} illustrates how the testing accuracy varies with the number of communication rounds for different methods. 
The data partitioning approach employed here is characterized by hybrid skew, with each client possessing two labels. All the algorithms' hyperparameter settings are identical to those presented in Section \ref{sec: Overall accuracy comparison}. Due to space limitations, we only show the comparisons on test accuracy on hybrid skew.  Results on the other partition schemes are left in Appendix \ref{appendix: additional experiments}.
Firstly, we can clearly see that \flamehm consistently achieves higher accuracy across four datasets. 
This confirms that when different clients have various types of heterogeneous data, it is not certain whether personalized or global models will perform better. 
Choosing the better model between the personalized and global models can significantly improve accuracy.
Secondly, it can be observed that both \flame's personalized and global models achieve higher accuracy compared to \pfedme and \ditto, and we can see that both \flame's personalized and global models converge faster than \pfedme and \ditto, demonstrating \flame's superior performance in terms of convergence.

\subsubsection{Robustness comparison}
We compare the robustness of different methods, measured by the average test accuracy on benign devices, under four different attacks. 
We set the number of clients to 50. For \flame, \pfedme, and \ditto, we set $\lambda$ to 1. For \flame, we set $\rho$ to 0.02, while other parameters are set the same as in Section \ref{sec: Overall accuracy comparison}. Due to space limitations, we show in Fig. \ref{fig: acc_malicious_hybrid-skew} - Fig. \ref{fig:acc_malicious_attack_4_hybrid-skew_2} how the test accuracy varies with the number of malicious clients for different methods under hybrid skew and four attacks. The results for the other data partitioning schemes under four attacks can be found in Appendix \ref{appendix: additional experiments}.
Firstly, we find that under four attacks, the accuracy of all methods decreases as the number of malicious clients increases. However, the decline for \flame is significantly smaller than for other methods. Note that under label poisoning attacks on \mnist, \fmnist, and \mmnist, the testing accuracy of \flamehm rarely decays as the fraction of malicious clients increases, while we observe significant drops in the testing accuracy for other algorithms once malicious clients exist. 
Secondly, \flamehm consistently has the highest accuracy compared to other methods under different attacks, \flamepm and \flamegm have higher accuracy than other methods for personalized and global models, respectively.

\begin{figure*}[t]
    \centering
    \includegraphics[width=1\linewidth]{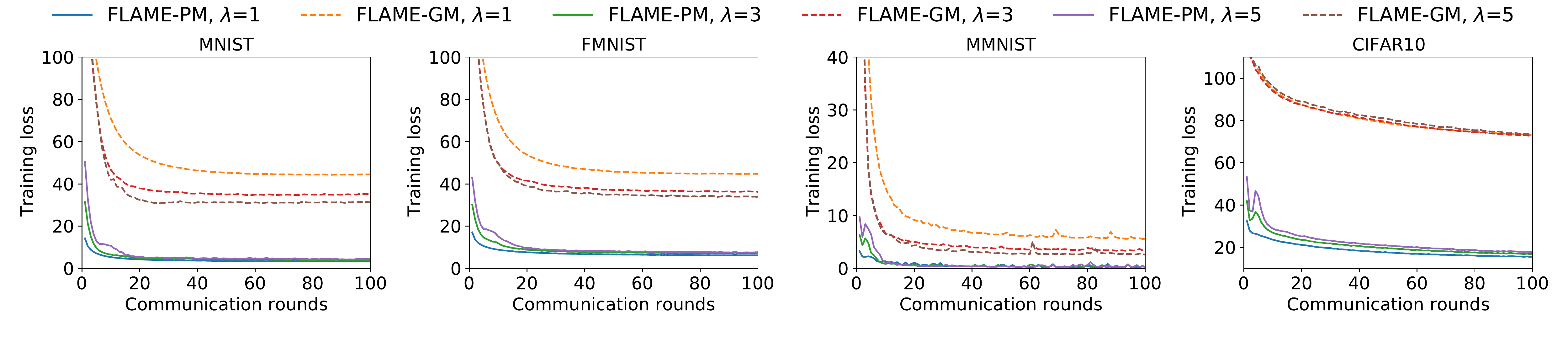}
    \vspace{-1.5em}
    \caption{Effect of the regularization parameter $\lambda$ on the convergence of \flame. As $\lambda$ increases, the performance of the personalized models becomes closer to that of the global models.}
    \label{fig:loss_communication_lambda}
\end{figure*}
\begin{figure*}[t]
    \centering
    \includegraphics[width=1\linewidth]{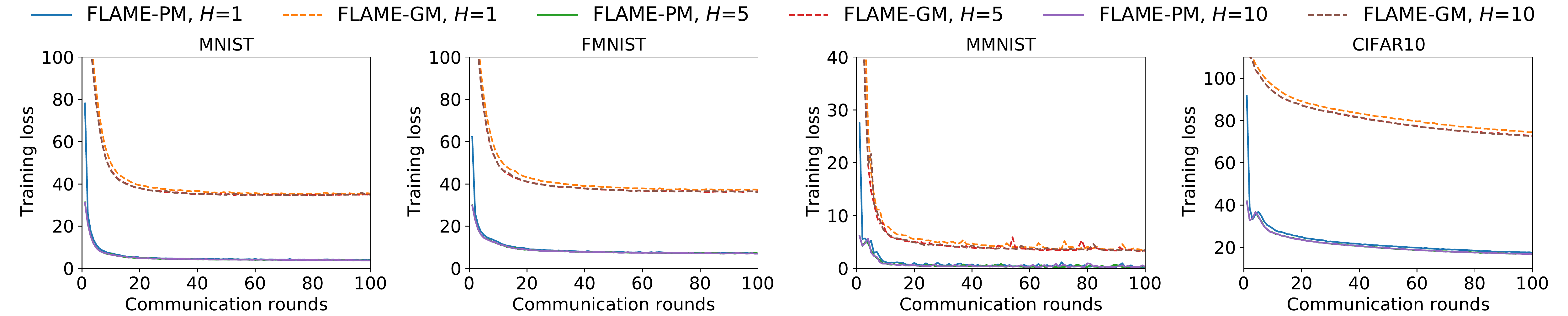}
    \vspace{-1.5em}
    \caption{Effect of the local iterations $H$ on the convergence of \flame. As $H$ increases, the performance of personalized and global models improves.}
    \label{fig:loss_communication_epoch}
\end{figure*}
\begin{figure*}[t]
    \centering
    \includegraphics[width=1\linewidth]{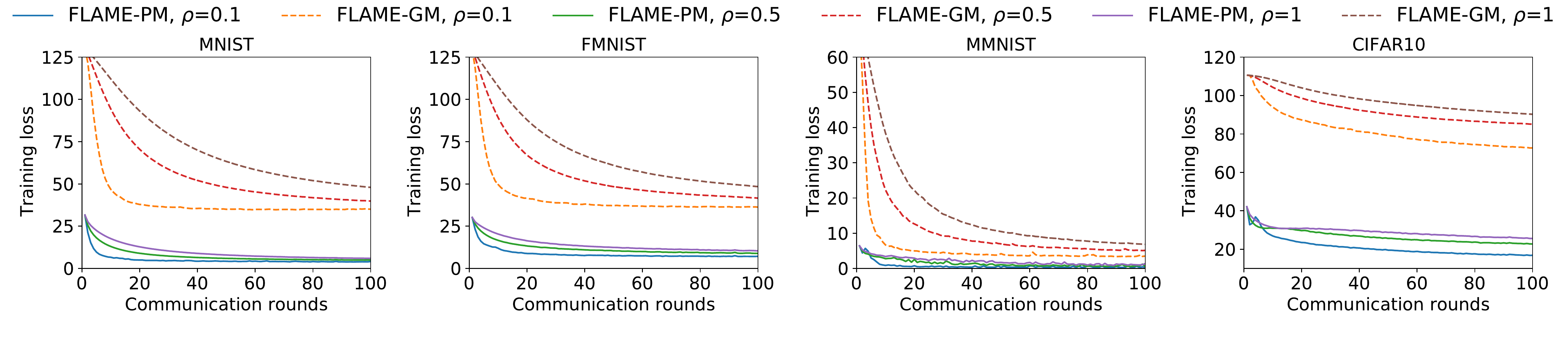}
    \vspace{-1.5em}
    \caption{Effect of the penalty parameter $\rho$ on the convergence of \flame. As $\rho$ decreases, the performance of the personalized and global models improves.}
    \label{fig:loss_communication_rho}
\end{figure*}
\begin{figure*}[t]
    \centering
    \includegraphics[width=1\linewidth]{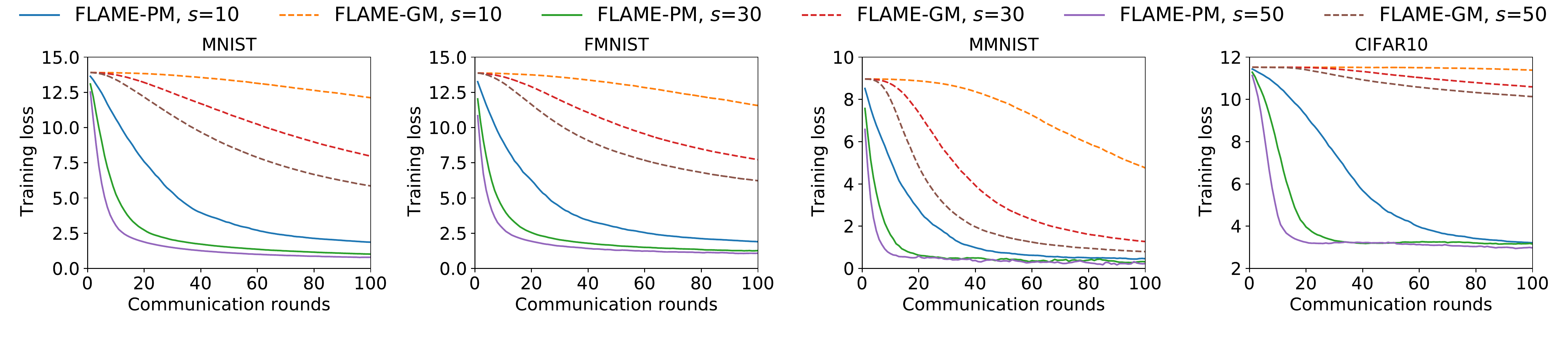}
    \vspace{-1.5em}
    \caption{Effect of the number of clients on the convergence of \flame. As $|\mathcal{S}|^t$ increases, the performance of the personalized and global models improves.}
    \label{fig:loss_communication_frac}
\end{figure*}

\subsubsection{Fairness comparison}
To illustrate the trade-off between accuracy and fairness, we plot the variances of test losses against the corresponding test accuracies for \flame, \pfedme, and \ditto. 
To isolate fairness, the numerical experiments are conducted without adversarial attacks in this section. Due to space limitations, we present the results for hybrid skew here, with the results for other data partitioning schemes provided in Appendix \ref{appendix: additional experiments}.
The results regarding fairness show that \flamehm provides the most accurate and fair solutions. 
We can see that \flamehm improves accuracy by an average of 4.3\% compared to \flamepm and \flamegm, and reduces variance by an average of 4.4\% compared to \flamepm.
Comparing different methods for personalized and global models, we can see that \flamepm and \flamegm consistently achieve more fair and accurate solutions.
On average, the personalized and global models improved by 2.9\% and 4.9\% in accuracy, while the variance decreased by an average of 51.5\% and 47.7\% respectively.

\subsection{Comparison of Multiple Parameters}
\subsubsection{Effect of regularization $\lambda$}
Fig. \ref{fig:loss_communication_lambda} illustrates the impact of different regularization parameters $\lambda$ on convergence. We configured the local learning rate $\eta$ to be 0.01, the number of clients $m$ set to 10, the penalty parameter $\rho$ to be 0.1, and the number of local iterations $H$ to be 3.
We assigned $\lambda$ as 1, 3, and 5, respectively, and conducted experiments with five different random seeds. We averaged the results to observe the variations in the losses of personalized and global models in \flame across communication rounds.
We observe that as the value of $\lambda$ increases, the personalized model converges more slowly, while the global model converges faster. This is attributed to the fact that an increase in the regularization parameter $\lambda$, results in a stronger penalty for minimizing the disparity between the personalized and global models. Consequently, this causes the personalized model to approach the global model more closely.

\subsubsection{Effect of local iterations $H$}
Fig. \ref{fig:loss_communication_epoch} illustrates the impact of different local iterations $H$ on the convergence of \flame. 
We configured the local learning rate $\eta$ to be 0.01, the penalty parameter $\rho$ to be 0.1, and the regularization parameter $\lambda$ to be 5. We set $H$ to 1, 5, and 10, respectively, and conducted experiments with five different random seeds, averaging the results to observe the changes in the loss of \flame's personalized and global models as communication rounds vary. 
It is evident that increasing local iterations accelerates the convergence of \flame. This is due to the improved precision in solving the personalized model $\btheta_i$ with the growth of local iterations, a characteristic inherent to the first-order gradient method. The enhanced accuracy of the personalized model, in turn, improves the precision of solving the dual variables $\bpi_i$ and the global model $\bw$, resulting in accelerated convergence of both the global and personalized models. However, as the number of local iterations increases beyond a certain threshold, the performance of both the personalized and global models stabilizes. For example, when $H=5$ and $H=10$, the training loss curves of the personalized model nearly overlap across the four datasets, and likewise, the training loss curves of the global model also nearly overlap. This is because the accuracy of solving for each $\btheta_i$ may have reached a local optimum, and the accuracy of solving will not continue to increase.


\subsubsection{Effect of penalty $\rho$}
Fig. \ref{fig:loss_communication_rho} illustrates the impact of different values of the penalty parameter $\rho$ on the convergence of \texttt{FLAME}. We set the local learning rate $\eta$ to 0.01, regularization parameter $\lambda$ to 3, and local iterations $H$ to 3. 
We varied the values of $\rho$ as 0.1, 0.5, and 1, and conducted experiments with five different random seeds, averaging the results to observe changes in the loss of \flame's personalized and global models with respect to communication rounds.
It is evident that reducing $\rho$ accelerates the convergence of \flame. This is because with increasing $\rho$, the penalty between the local and global models becomes more stringent, which may push the local model closer to the global model and further away from the better-performing personalized model. Consequently, this slows down the convergence of both the personalized models and the global model.

\subsubsection{Effect of the number of selected clients $s$}
Fig. \ref{fig:loss_communication_frac} demonstrates how varying the number of selected clients $\mathcal{S}^t$ influences the convergence of \texttt{FLAME}. We fix the total number of clients at $m=100$. Additionally, we set the local learning rate $\eta$ to 0.01, regularization parameter $\lambda$ to 3, local iterations $H$ to 3, and penalty parameter $\rho$ to 0.03. Experimentation involved altering $s$ to 10, 30, and 50, respectively, across five different random seeds and averaging the results to observe fluctuations in the loss of both personalized and global models of \texttt{FLAME} with communication rounds.
It is evident that as the number of selected clients increases, \flame converges faster. This phenomenon can be attributed to the larger number of clients selected, resulting in a higher number of variables solved in each iteration of the ADMM process. Consequently, this enhances the precision of the solution at each iteration, thereby expediting algorithm convergence.



\subsection{Summary of Lessons Learned}
We have compared the performance of \flame with state-of-the-art methods and validated the impact of algorithmic hyperparameters on the performance of \flame. 
Our experimental findings have led to the following definitive conclusions:
\begin{itemize}
    \item Under label skew, personalized models typically achieve better accuracy. In contrast, global models generally attain higher accuracy under feature skew, quantity skew, and quality skew. When dealing with hybrid skew, our model selection strategy can effectively improve accuracy.
    \item \flame demonstrates superior accuracy and convergence compared to state-of-the-art methods. This is particularly evident in its personalized and global models, which achieve higher accuracy and faster convergence.
    \item The robustness and fairness of \flame outperform those of state-of-the-art methods. This is primarily evidenced by \flame exhibiting smaller variances in testing losses and achieving lower testing losses under Byzantine attacks.
    \item Due to the fact that \flame does not require the adjustment of learning rate when training the global model, it significantly reduces the burden of hyperparameter tuning in comparison to \pfedme and \ditto.
    \item Choosing appropriate parameters can significantly enhance the performance of \flame. The regularization parameter $\lambda$ can adjust the gap between personalized and global models, with a larger $\lambda$ narrowing the gap between them. Increasing local iterations can improve model accuracy, thus expediting convergence. Reducing the penalty parameter can loosen the constraints, thereby accelerating convergence. When communication capacity permits, it is advisable to engage with a greater number of clients whenever possible, as this can expedite convergence.
\end{itemize}

\section{Conclusion and Future Work}\label{sec conclusion}
In this paper, we proposed a PFL framework, \flame, to address the impact of various types of heterogeneous data across different clients. 
We formulated the optimization problem for PFL based on the Moreau envelope and solved it using the ADMM. We proposed a model selection strategy that chooses the model with higher accuracy from either the personalized or global models. 
We established global convergence for \flame and proposed two kinds of convergence rates under mild conditions.
We theoretically demonstrated that \flame has improved robustness and fairness compared to \pfedme and \ditto on a class of linear problems.
Our experimental results demonstrated the superior performance of \flame in terms of accuracy, convergence, robustness, and fairness on various kinds of heterogeneous data compared to state-of-the-art methods. Furthermore, \flame, when applied to solving the global model, eliminates the need for learning rate adjustments, thereby alleviating the burden of hyperparameter tuning in contrast to \pfedme and \ditto.

In the future, we will focus on addressing privacy issues in FL, with an emphasis on using techniques such as encryption and differential privacy to mitigate privacy leakages. 
\ifCLASSOPTIONcaptionsoff
  \newpage
\fi

\bibliographystyle{abbrv}
\bibliography{sample.bib}
\newpage
\begin{IEEEbiography}
	 [{\includegraphics[width=1in,height=1.22in,clip,keepaspectratio]{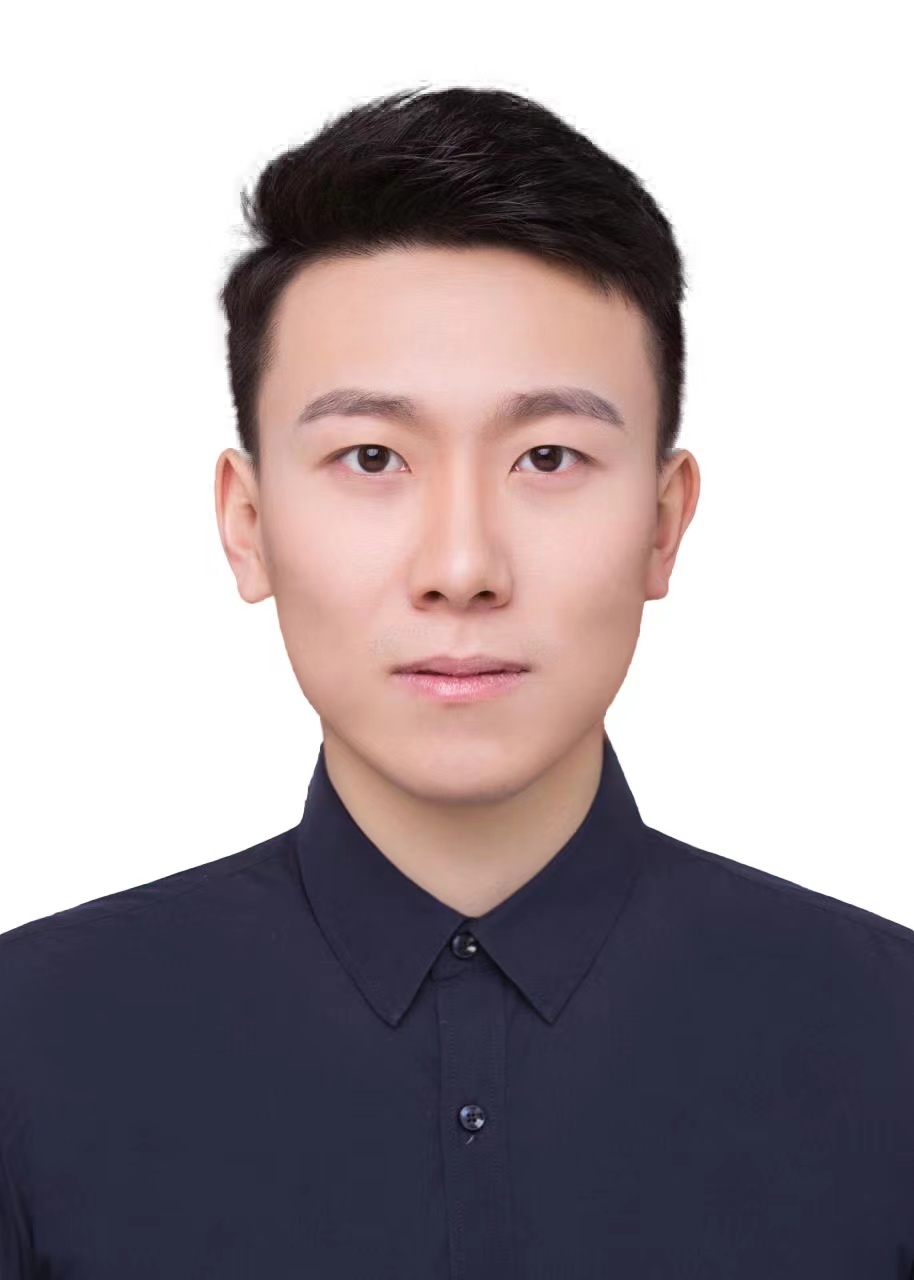}}]
	 {Shengkun Zhu} received the BE degree in electronic information engineering from Dalian University of Technology, China in 2018. He is currently working toward a Ph.D. degree in computer science and technology, School of Computer Science, Wuhan University, China. His research interests mainly include federated learning and nonconvex optimization. 
\end{IEEEbiography}
\begin{IEEEbiography}
	 [{\includegraphics[width=1in,height=1.22in,clip,keepaspectratio]{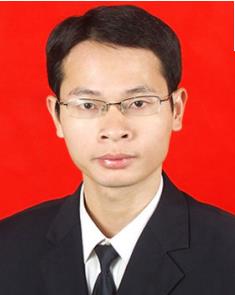}}]
	 {Jinshan Zeng} received the Ph.D. degree in mathematics from Xi’an Jiaotong University, Xi’an, China, in 2015. He is currently a Distinguished Professor with the School of Computer and Information Engineering, Jiangxi Normal University, Nanchang, China, and serves as the Director of the Department of Data Science and Big Data. He has authored more than 40 papers in high-impact journals and conferences such as IEEE TPAMI, JMLR, IEEE TSP, ICML, and AAAI. He has co-authored two papers with collaborators that received the International Consortium of Chinese Mathematicians (ICCM) Best Paper Award in 2018 and 2020). His current research interests include nonconvex optimization, machine learning (in particular deep learning), and remote sensing.
\end{IEEEbiography}

\begin{IEEEbiography}
	 [{\includegraphics[width=1in,height=1.22in,clip,keepaspectratio]{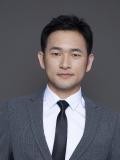}}]
	 {Sheng Wang} received the BE degree in information security, ME degree
	in computer technology from Nanjing University of Aeronautics and Astronautics,
	China in 2013 and 2016, and Ph.D. from RMIT University in 2019. He is a professor at the School of Computer Science, Wuhan University. His research interests mainly include mobile databases, multi-modal data management, and fair data analysis. He has published full research papers on top database and information systems venues as the first author, such as TKDE, SIGMOD, PVLDB, and ICDE.
\end{IEEEbiography}

\begin{IEEEbiography}
	 [{\includegraphics[width=1in,height=1.20in,clip,keepaspectratio]{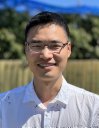}}]
	{Yuan Sun} is a Lecturer in Business Analytics and Artificial Intelligence at La Trobe University, Australia. He received his BSc in Applied Mathematics from Peking University, China, and his PhD in Computer Science from The University of Melbourne, Australia. His research interest is on artificial intelligence, machine learning, operations research, and evolutionary computation. He has contributed significantly to the emerging research area of leveraging machine learning for combinatorial optimization. His research has been published in top-tier journals and conferences such as IEEE TPAMI, IEEE TEVC, EJOR, NeurIPS, ICLR, VLDB, ICDE, and AAAI. 
\end{IEEEbiography}

\begin{IEEEbiography}
	 [{\includegraphics[width=1in,height=1.21in,clip,keepaspectratio]{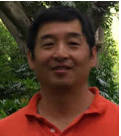}}]
	{Xiaodong Li} (Fellow, IEEE) received the B.Sc. degree from Xidian University, Xi’an, China, in 1988, and the Ph.D. degree in information science from the University of Otago, Dunedin, New Zealand, in 1998. He is a Professor at the School of Science (Computer Science and Software Engineering), RMIT University, Melbourne, VIC, Australia. His research interests include machine learning, evolutionary computation, neural networks, data analytics, multiobjective optimization, multimodal optimization, and swarm intelligence. Prof. Li is the recipient of the 2013 ACM SIGEVO Impact Award and the 2017 IEEE CIS IEEE TRANSACTIONS ON EVOLUTIONARY COMPUTATION Outstanding Paper Award. He serves as an Associate Editor for the IEEE TRANSACTIONS ON EVOLUTIONARY COMPUTATION, Swarm Intelligence (Springer), and International Journal of Swarm Intelligence Research. He is a Founding Member of IEEE CIS Task Force on Swarm Intelligence, the Vice-Chair of IEEE Task Force on Multimodal Optimization, and the Former Chair of IEEE CIS Task Force on Large-Scale Global Optimization.
\end{IEEEbiography}

\begin{IEEEbiography}
	 [{\includegraphics[width=1in,height=1.21in,clip,keepaspectratio]{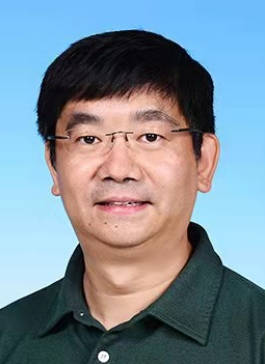}}]
	{Yuan Yao} received the B.S.E and M.S.E in control engineering both from Harbin Institute of Technology, China, in 1996 and 1998, respectively, M.Phil in mathematics from City University of Hong Kong in 2002, and Ph.D. in mathematics from the University of California, Berkeley, in 2006. Since then he has been with Stanford University and in 2009, he joined the Department of Probability and Statistics at School of Mathematical Sciences, Peking University, Beijing, China. He is currently a Professor of Mathematics, Chemical \& Biological Engineering, and by courtesy, Computer Science \& Engineering, Hong Kong University of Science and Technology, Clear Water Bay, Kowloon, Hong Kong SAR, China. His current research interests include topological and geometric methods for high-dimensional data analysis and statistical machine learning, with applications in computational biology, computer vision, and information retrieval.
\end{IEEEbiography}

\begin{IEEEbiography}
	 [{\includegraphics[width=1in,height=1.21in,clip,keepaspectratio]{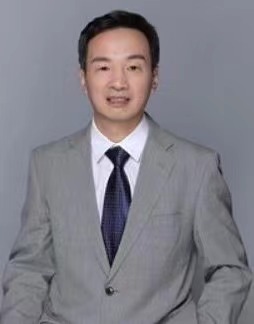}}]
	{Zhiyong Peng} received the BSc degree from
Wuhan University, in 1985, the MEng degree from the Changsha Institute of Technology of China, in 1988, and the PhD degree from the Kyoto University of Japan, in 1995. He is a professor at the School of Computer Science, Wuhan University of China. 
He worked as a researcher at Advanced Software Technology \& Mechatronics Research Institute of Kyoto from 1995 to 1997 and as a member of technical staff at Hewlett-Packard Laboratories Japan from 1997 to 2000.
His research interests include complex data management, web data management, and trusted data management. 
He is a member of IEEE Computer Society, ACM SIGMOD, and vice director of the Database Society of Chinese Computer Federation. He was general co-chair of WAIM 2011, DASFAA 2013, and PC Co-chair of DASFAA 2012, WISE 2006, and CIT 2004.
\end{IEEEbiography}

\newpage

\onecolumn
\appendices
\begin{LARGE}
    \textbf{Appendix}\\
\end{LARGE}

We provide a simple table of contents below for easier navigation of the appendix.\\

\begin{large}
    \textbf{CONTENTS}\\
\end{large}

\textbf{Section \ref{Appendix: convergence}: Convergence}\\

\quad \quad Section \ref{subappendix: useful property}: Some Useful Properties\\

\quad \quad Section \ref{subappendix: Kurdyka-Łojasiewicz Property}: Kurdyka-Łojasiewicz Property of Lagrangian Function $\ml$\\

\quad \quad Section \ref{proof of lemma 1}: Proof of Lemma \ref{lemma1}\\

\quad \quad Section \ref{Proof of Lemma 2}: Proof of Lemma \ref{lemma: relative error}\\

\quad \quad Section \ref{Proof of Theorem 1}: Proof of Theorem \ref{theorem: objective function values convergence}\\

\quad \quad Section \ref{Proof of Theorem 1.5}: Proof of Theorem \ref{theorem: sequences convergence}\\

\quad \quad Section \ref{proof of theorem 2}: Proof of Theorem \ref{theorem1}\\

\quad \quad Section \ref{Proof of Theorem 4}: Proof of Theorem \ref{theorem: convergence rate based on kl property}\\

\textbf{Section \ref{appendix: Robustness and Fairness}: Robustness and Fairness}\\

\quad \quad Section \ref{sec: Solutions of flame}: Solutions of \flame\\

\quad \quad Section \ref{secappendix: Test Loss}: Test Loss\\

\quad \quad Section \ref{appendix: robustness}: Robustness\\

\quad \quad Section \ref{appendix: fairness}: Fairness\\

\textbf{Section \ref{secappendix: Extension}: Extension}\\

\textbf{Section \ref{appendix: additional experiments}: Additional and Complete Experiment Results}\\

\quad \quad Section \ref{secappendix: Complete Results on Accuracy and Convergence}: Complete Results on Accuracy and Convergence\\

\quad \quad Section \ref{secappendix: Complete Results on Robustness}: Complete Results on Robustness\\

\quad \quad Section \ref{secappendix: Complete Results on Fairness}: Complete Results on Fairness\\
\newpage

\section{Convergence}\label{Appendix: convergence}
\subsection{Some Useful Properties}\label{subappendix: useful property}

We provide a list of useful properties and lemmas that are necessary for proving our theorems. Proposition \ref{proposition 2} provides an exposition of the property of smooth function, while Proposition \ref{jensen} presents commonly employed Jensen's inequalities. Lemmas \ref{lemma2} and \ref{lemma: L0>F*} will be employed in the proof of our main theorem.

\begin{proposition}\label{proposition 2}
    For any $L$-smooth function $f$ and $\btheta_i$, $i=1,2$, we have
\begin{align}
    f(\btheta_1)-f(\btheta_2)-\langle\nabla &f(\btheta_i), \btheta_1-\btheta_2\rangle\leq\frac{L}{2}\|\btheta_1-\btheta_2\|^2.   
\end{align}
\end{proposition}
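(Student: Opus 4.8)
The plan is to establish this standard quadratic upper bound (the ``descent lemma'') using the fundamental theorem of calculus along the line segment joining $\btheta_2$ to $\btheta_1$. Note that the inner product term should involve $\nabla f(\btheta_2)$ (the superscript $i$ in the statement is a typo for $2$); I will prove the inequality in that form.

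First I would write
\begin{align*}
f(\btheta_1)-f(\btheta_2)=\int_0^1 \langle \nabla f(\btheta_2+t(\btheta_1-\btheta_2)),\,\btheta_1-\btheta_2\rangle\, dt,
\end{align*}
which holds because $t\mapsto f(\btheta_2+t(\btheta_1-\btheta_2))$ is differentiable with the integrand as its derivative. Subtracting the constant term $\langle\nabla f(\btheta_2),\btheta_1-\btheta_2\rangle=\int_0^1\langle\nabla f(\btheta_2),\btheta_1-\btheta_2\rangle\,dt$ from both sides gives
\begin{align*}
f(\btheta_1)-f(\btheta_2)-\langle\nabla f(\btheta_2),\btheta_1-\btheta_2\rangle=\int_0^1 \langle \nabla f(\btheta_2+t(\btheta_1-\btheta_2))-\nabla f(\btheta_2),\,\btheta_1-\btheta_2\rangle\, dt.
\end{align*}

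Next I would bound the right-hand side: apply the Cauchy--Schwarz inequality inside the integral, then invoke $L$-smoothness (Assumption \ref{assmption lipschitz}, equivalently the hypothesis that $\nabla f$ is $L$-Lipschitz) to get $\|\nabla f(\btheta_2+t(\btheta_1-\btheta_2))-\nabla f(\btheta_2)\|\le L\,t\,\|\btheta_1-\btheta_2\|$. This yields
\begin{align*}
f(\btheta_1)-f(\btheta_2)-\langle\nabla f(\btheta_2),\btheta_1-\btheta_2\rangle\le \int_0^1 L\,t\,\|\btheta_1-\btheta_2\|^2\, dt=\frac{L}{2}\|\btheta_1-\btheta_2\|^2,
\end{align*}
which is the claimed bound. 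There is no real obstacle here: the only points requiring minimal care are justifying the integral identity (valid since $f$ is $C^1$ under the Lipschitz-gradient assumption) and applying the Lipschitz bound inside the integral; everything else is a one-line computation.
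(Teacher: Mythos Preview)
Your argument is correct and is the standard proof of the descent lemma; the paper itself states Proposition~\ref{proposition 2} without proof (it is listed among ``useful properties''), so there is nothing to compare against. One small remark: the subscript $i$ is not a typo---the proposition is meant to hold for both $i=1$ and $i=2$ (indeed, the paper later applies it in the form with the gradient taken at $\btheta_1$, see the step labeled \eqref{eq: 54 follows from assumption lipschitz}), and your same integral argument, run from $\btheta_1$ instead of $\btheta_2$, yields the $i=1$ case by symmetry.
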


\begin{proposition}[Jensen's inequalities]\label{jensen}
For any vectors $\btheta_1$, $\btheta_2$ and $k>0$, we have
\begin{align}
    2\langle\btheta_1,\btheta_2\rangle&\leq k\|\btheta_1\|^2+\frac{1}{k}\|\btheta_2\|^2,\\
    \|\btheta_1-\btheta_2\|^2&\leq (1+k)\|\btheta_1\|^2+(1+\frac{1}{k})\|\btheta_2\|^2,\\
    \|\sum_{i=1}^m \btheta_i\|^2&\leq m\sum_{i=1}^m\|\btheta_i\|^2.
\end{align}

\end{proposition}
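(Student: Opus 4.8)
The plan is to derive all three inequalities from the single elementary fact that squared Euclidean norms are nonnegative, together with the convexity of $\|\cdot\|^2$ for the last one. None of the three steps is genuinely difficult; the only one that invokes a named tool is the third.

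First I would establish the first inequality by expanding the nonnegative quantity $\bigl\|\sqrt{k}\,\btheta_1-\tfrac{1}{\sqrt{k}}\,\btheta_2\bigr\|^2\geq 0$. Expanding the inner product gives $k\|\btheta_1\|^2-2\langle\btheta_1,\btheta_2\rangle+\tfrac{1}{k}\|\btheta_2\|^2\geq 0$, which rearranges to the claimed bound $2\langle\btheta_1,\btheta_2\rangle\leq k\|\btheta_1\|^2+\tfrac{1}{k}\|\btheta_2\|^2$. Here $k>0$ is needed so that $\sqrt{k}$ and $1/\sqrt{k}$ are well defined.

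Next, for the second inequality I would simply expand $\|\btheta_1-\btheta_2\|^2=\|\btheta_1\|^2-2\langle\btheta_1,\btheta_2\rangle+\|\btheta_2\|^2$ and bound the cross term $-2\langle\btheta_1,\btheta_2\rangle\leq k\|\btheta_1\|^2+\tfrac{1}{k}\|\btheta_2\|^2$ using the first inequality (applied with $-\btheta_2$ in place of $\btheta_2$, noting $\|-\btheta_2\|=\|\btheta_2\|$). Collecting terms yields $\|\btheta_1-\btheta_2\|^2\leq(1+k)\|\btheta_1\|^2+(1+\tfrac{1}{k})\|\btheta_2\|^2$.

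Finally, for $\|\sum_{i=1}^m\btheta_i\|^2\leq m\sum_{i=1}^m\|\btheta_i\|^2$ I would apply the Cauchy--Schwarz inequality coordinatewise, or equivalently invoke convexity of the map $\bx\mapsto\|\bx\|^2$ (Jensen's inequality) to get $\bigl\|\tfrac{1}{m}\sum_{i=1}^m\btheta_i\bigr\|^2\leq\tfrac{1}{m}\sum_{i=1}^m\|\btheta_i\|^2$, and then multiply both sides by $m^2$. The main (and only mild) obstacle is just choosing the cleanest of these equivalent routes; the argument itself is immediate.
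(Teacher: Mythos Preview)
Your proof is correct; the paper itself states these inequalities without proof, treating them as standard facts, so there is nothing to compare against. Each of your three steps is valid and is the usual elementary argument.
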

\begin{lemma}\label{lemma2}
For any $t\geq 0$, the following equation holds
\begin{equation}\label{eq: relation among pi wi thetai}
\begin{split}
    \bpi_{i}^{t}=\lambda\alpha_i(\btheta_i^{t}-\bw_i^{t}).
\end{split}
\end{equation}
\end{lemma}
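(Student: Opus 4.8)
The plan is to prove the identity by induction on $t$, using the closed-form update \eqref{eq13} for $\bw_i$ together with the dual update \eqref{eq14}. The key point is purely algebraic: the $\bw_i$-subproblem is an unconstrained quadratic, so its minimizer satisfies $\nabla_{\bw_i}\ml_i(\btheta_i^{t+1},\bw_i^{t+1},\bpi_i^t,\bw^t)=\bzero$, and this stationarity condition already encodes the desired relation once it is combined with the dual step.

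Concretely, I would first rearrange \eqref{eq13}. Multiplying through by $\lambda\alpha_i+\rho$ and grouping terms gives
\begin{align*}
\rho(\bw_i^{t+1}-\bw^t)=\lambda\alpha_i(\btheta_i^{t+1}-\bw_i^{t+1})-\bpi_i^t .
\end{align*}
Substituting the left-hand side into the dual update $\bpi_i^{t+1}=\bpi_i^t+\rho(\bw_i^{t+1}-\bw^t)$ from \eqref{eq14}, the $\bpi_i^t$ terms cancel and we obtain $\bpi_i^{t+1}=\lambda\alpha_i(\btheta_i^{t+1}-\bw_i^{t+1})$. This settles the claim at round $t+1$ for every selected client $i\in\mathcal{S}^t$. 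For $i\notin\mathcal{S}^t$, Line \ref{line 16 alg 1} of Algorithm \ref{alg: flame on server side} freezes $(\btheta_i^{t+1},\bw_i^{t+1},\bpi_i^{t+1})=(\btheta_i^t,\bw_i^t,\bpi_i^t)$, so the identity at round $t$ is inherited at round $t+1$. Combined with the base case — the initialization is chosen consistently, e.g.\ $\btheta_i^0=\bw_i^0$ and $\bpi_i^0=\bzero$, so that $\bpi_i^0=\lambda\alpha_i(\btheta_i^0-\bw_i^0)$ holds — induction on $t$ finishes the argument.

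There is no substantive obstacle here; the inductive step is a one-line cancellation. The only thing requiring care is the base case $t=0$: the statement is true for every $t\geq 0$ only if the initialization respects the relation, so I would make the initialization assumption explicit (or absorb it into the description of Algorithm \ref{alg: flame on server side}) rather than leave it implicit. An equivalent, slightly more conceptual way to present the same proof is to note that \eqref{eq13}–\eqref{eq14} together are exactly the statement that $\bpi_i^{t+1}$ equals the (negative) gradient of the coupling term evaluated at the new primal iterate, which is a standard feature of ADMM dual updates for quadratic blocks.
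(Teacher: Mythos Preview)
Your proof is correct and matches the paper's own argument almost exactly: both rearrange \eqref{eq13}, substitute into \eqref{eq14} to obtain $\bpi_i^{t+1}=\lambda\alpha_i(\btheta_i^{t+1}-\bw_i^{t+1})$, and handle $t=0$ by an explicit initialization convention. Your version is slightly more careful in that you separately treat the unselected clients $i\notin\mathcal{S}^t$, which the paper leaves implicit.
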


\begin{proof}
    Transposing Equation (\ref{eq13}) yields:
\begin{align}\label{eq: transfrom eq13}
    \bpi_i^{t}=\lambda\alpha_i\btheta_i^{t+1}+\rho\bw^t-(\lambda\alpha_i+\rho)\bw_i^{t+1}.
\end{align}

We first establish the case when $t \geq 1$.
Substituting Equation (\ref{eq: transfrom eq13}) into Equation (\ref{eq14}), we obtain:
\begin{align}
    \boldsymbol{\pi}_i^{t+1} &=\boldsymbol{\pi}_i^t+\rho(\bw_i^{t+1}-\bw^{t})\notag\\
    &=\lambda\alpha_i(\btheta_i^{t+1}-\bw_i^{t+1}).
\end{align}

Next, we establish the case when $t=0$ by initializing the parameters as 
\begin{align}
    \boldsymbol{\pi}_i^{0}=\lambda\alpha_i(\btheta_i^{0}-\bw_i^{0}).
\end{align}

Combine the two cases, we have
\begin{align}
    \bpi_{i}^{t}=\lambda\alpha_i(\btheta_i^{t}-\bw_i^{t}), t\geq 0,
\end{align}
which completes the proof.
\end{proof}

\begin{lemma}\label{lemma: L0>F*}
    When $\rho\geq\lambda\alpha_i$ for each client $i\in[m]$, the Lagrangian function value in the $(t+1)$-th iteration is lower bounded as $\ml^{t+1}\geq F^{*}$, where $F^{*}=\min_{\bw,\btheta_i}\sum_{i=1}^{m}\alpha_i \left(f_i(\btheta_i)+\frac{\lambda}{2}\|\btheta_i-\bw\|^2\right)$.
\end{lemma}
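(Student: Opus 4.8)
**Proof plan for Lemma~\ref{lemma: L0>F*}.**

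The plan is to expand the Lagrangian $\ml^{t+1}=\ml(\Theta^{t+1},W^{t+1},\Pi^{t+1},\bw^{t+1})$ term by term, eliminate the dual variable using Lemma~\ref{lemma2}, and then show that the result dominates $F^*$ by a completion-of-square argument. First I would write, for each client $i$,
\[
\ml_i^{t+1}=\alpha_i\Bigl(f_i(\btheta_i^{t+1})+\tfrac{\lambda}{2}\|\btheta_i^{t+1}-\bw_i^{t+1}\|^2\Bigr)+\langle\bpi_i^{t+1},\bw_i^{t+1}-\bw^{t+1}\rangle+\tfrac{\rho}{2}\|\bw_i^{t+1}-\bw^{t+1}\|^2.
\]
Using Lemma~\ref{lemma2}, namely $\bpi_i^{t+1}=\lambda\alpha_i(\btheta_i^{t+1}-\bw_i^{t+1})$, the cross term becomes $\lambda\alpha_i\langle\btheta_i^{t+1}-\bw_i^{t+1},\bw_i^{t+1}-\bw^{t+1}\rangle$. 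So the $\lambda$-dependent part of $\ml_i^{t+1}$, ignoring $\alpha_i f_i(\btheta_i^{t+1})$, is
\[
\tfrac{\lambda\alpha_i}{2}\|\btheta_i^{t+1}-\bw_i^{t+1}\|^2+\lambda\alpha_i\langle\btheta_i^{t+1}-\bw_i^{t+1},\bw_i^{t+1}-\bw^{t+1}\rangle+\tfrac{\rho}{2}\|\bw_i^{t+1}-\bw^{t+1}\|^2.
\]

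Next I would treat this as a quadratic form in the two vectors $\ba:=\btheta_i^{t+1}-\bw_i^{t+1}$ and $\bb:=\bw_i^{t+1}-\bw^{t+1}$, i.e. $\tfrac{\lambda\alpha_i}{2}\|\ba\|^2+\lambda\alpha_i\langle\ba,\bb\rangle+\tfrac{\rho}{2}\|\bb\|^2$. Completing the square gives $\tfrac{\lambda\alpha_i}{2}\|\ba+\bb\|^2+\tfrac{\rho-\lambda\alpha_i}{2}\|\bb\|^2$. Since $\ba+\bb=\btheta_i^{t+1}-\bw^{t+1}$ and, by hypothesis, $\rho\geq\lambda\alpha_i$ so the second term is nonnegative, we obtain
\[
\ml_i^{t+1}\geq\alpha_i\Bigl(f_i(\btheta_i^{t+1})+\tfrac{\lambda}{2}\|\btheta_i^{t+1}-\bw^{t+1}\|^2\Bigr).
\]
Summing over $i\in[m]$ yields $\ml^{t+1}\geq\sum_{i=1}^m\alpha_i\bigl(f_i(\btheta_i^{t+1})+\tfrac{\lambda}{2}\|\btheta_i^{t+1}-\bw^{t+1}\|^2\bigr)\geq\min_{\bw,\{\btheta_i\}}\sum_{i=1}^m\alpha_i\bigl(f_i(\btheta_i)+\tfrac{\lambda}{2}\|\btheta_i-\bw\|^2\bigr)=F^*$, where the last inequality holds because $(\{\btheta_i^{t+1}\},\bw^{t+1})$ is a feasible point of the minimization defining $F^*$.

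The argument is essentially a routine algebraic manipulation; the only place requiring care is the correct application of Lemma~\ref{lemma2} to rewrite $\bpi_i^{t+1}$ (ensuring the index shift $t\to t+1$ is valid, which it is since the lemma holds for all $t\geq 0$), and making sure the completion of square is done so that the leftover coefficient is exactly $\tfrac{\rho-\lambda\alpha_i}{2}$, which is where the hypothesis $\rho\geq\lambda\alpha_i$ enters. I do not anticipate a genuine obstacle here; the main subtlety is bookkeeping rather than mathematical depth.
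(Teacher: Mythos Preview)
Your proposal is correct and follows essentially the same approach as the paper: both substitute $\bpi_i^{t+1}=\lambda\alpha_i(\btheta_i^{t+1}-\bw_i^{t+1})$ from Lemma~\ref{lemma2} and then algebraically reorganize the quadratic terms to obtain the exact identity $\ml^{t+1}=\sum_i\bigl[\alpha_i(f_i(\btheta_i^{t+1})+\tfrac{\lambda}{2}\|\btheta_i^{t+1}-\bw^{t+1}\|^2)+\tfrac{\rho-\lambda\alpha_i}{2}\|\bw_i^{t+1}-\bw^{t+1}\|^2\bigr]$, from which the bound follows. The only cosmetic difference is that the paper reaches this via the polarization identity $\|\boldsymbol{a}-\boldsymbol{b}\|^2=\|\boldsymbol{a}-\boldsymbol{c}\|^2+\langle 2\boldsymbol{a}-\boldsymbol{b}-\boldsymbol{c},\boldsymbol{c}-\boldsymbol{b}\rangle$, whereas you use a direct completion of the square in $\ba,\bb$; the resulting expressions are identical.
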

\begin{proof}
According to Equation (\ref{eq: lagrangian function}), we obtain
\begin{align}
    \ml^{t+1}&=\sum_{i=1}^m \alpha_i(f_i(\btheta_i^{t+1})+\frac{\lambda}{2}\|\btheta_i^{t+1}-\bw_i^{t+1}\|^2)+\langle\bpi_i^{t+1},\bw_i^{t+1}-\bw^{t+1}\rangle+\frac{\rho}{2}\|\bw_i^{t+1}-\bw^{t+1}\|^2\\
    &=\sum_{i=1}^m \alpha_i(f_i(\btheta_i^{t+1})+\frac{\lambda}{2}\|\btheta_i^{t+1}-\bw_i^{t+1}\|^2)+\frac{\lambda\alpha_i}{2}\langle2\btheta_i^{t+1}-\bw_i^{t+1}-\bw^{t+1},\bw_i^{t+1}-\bw^{t+1}\rangle\notag\\
    &-\frac{\lambda\alpha_i}{2}\langle2\btheta_i^{t+1}-\bw_i^{t+1}-\bw^{t+1},\bw_i^{t+1}-\bw^{t+1}\rangle+\langle\bpi_i^{t+1},\bw_i^{t+1}-\bw^{t+1}\rangle+\frac{\rho}{2}\|\bw_i^{t+1}-\bw^{t+1}\|^2\\
    &=\sum_{i=1}^m \alpha_i(f_i(\btheta_i^{t+1})+\frac{\lambda}{2}\|\btheta_i^{t+1}-\bw^{t+1}\|^2)-\frac{\lambda\alpha_i}{2}\langle2\btheta_i^{t+1}-\bw_i^{t+1}-\bw^{t+1},\bw_i^{t+1}-\bw^{t+1}\rangle+\notag\\
    &\langle\lambda\alpha_i(\btheta_i^{t+1}-\bw_i^{t+1}),\bw_i^{t+1}-\bw^{t+1}\rangle+\frac{\rho}{2}\|\bw_i^{t+1}-\bw^{t+1}\|^2\label{eq:37 follows from lemma 3}\\
    &=\sum_{i=1}^m \alpha_i(f_i(\btheta_i^{t+1})+\frac{\lambda}{2}\|\btheta_i^{t+1}-\bw^{t+1}\|^2)+\frac{\rho-\lambda\alpha_i}{2}\|\bw_i^{t+1}-\bw^{t+1}\|^2\\
    &\geq F^{*}+\sum_{i=1}^m\frac{\rho-\lambda\alpha_i}{2}\|\bw_i^{t+1}-\bw^{t+1}\|^2\geq F^{*}\label{eq: 39 follows from optimal condition},
\end{align}   
where \eqref{eq:37 follows from lemma 3} follows from Lemma \ref{lemma2} and the equality $\|\boldsymbol{a}-\boldsymbol{b}\|^2=\|\boldsymbol{a}-\boldsymbol{c}\|^2+\langle2\boldsymbol{a}-\boldsymbol{b}-\boldsymbol{c},\boldsymbol{c}-\boldsymbol{b}\rangle$, and \eqref{eq: 39 follows from optimal condition} follows from the optimal condition.
\end{proof}

\subsection{Kurdyka-Łojasiewicz Property of Lagrangian Function $\ml$}\label{subappendix: Kurdyka-Łojasiewicz Property}
To prove Theorem 1, we need to demonstrate that the KŁ property holds for the considered Lagrangian function $\ml$.
We begin by presenting the definition of KŁ property and elucidating some properties of real analytic and semialgebraic functions.
\begin{definition}[Desingularizing Function]\label{definition: Desingularizing Function}
    A function $\psi:[0,\kappa)\to(0,+\infty)$ satisfying the following conditions is a desingularizing function:
\begin{itemize}
    \item[a)] $\psi$ is concave and continuously differentiable on $(0,\kappa)$;
    \item[b)] $\psi$ is continuous at 0 and $\psi(0)=0$;
    \item[c)] For any $x\in(0,\kappa), $ $\psi'(x)>0$.
\end{itemize}
\end{definition}
\begin{definition}[Fr\'{e}chet Subdifferential \cite{rockafellar2009variational} and Limiting Subdifferential \cite{Mordukhovich2006Variational}]
    For any $\bx\in\text{dom}(h)$, the Fr\'{e}chet subdifferential of $h$ at $\bx$, represented by $\hat{\partial}h(\bx)$, is the set of vectors $\boldsymbol{z}$ which satisfies 
\begin{align*}
    \lim_{\boldsymbol{y}\neq \boldsymbol{x},}\inf_{\boldsymbol{y}\to \bx}\frac{h(\boldsymbol{y})-h(\bx)-\langle
    \boldsymbol{z},\boldsymbol{y}-\bx\rangle}{\|\bx-\boldsymbol{y}\|}\geq 0.
\end{align*}

When $\bx\notin \text{dom}(h)$, we set $\hat{\partial}h(\bx)=\emptyset$. The limiting subdifferential (or simply subdifferential) of $h$, represented by $\partial h(\bx)$ at $\bx\in\text{dom}(h)$ is defined by
\begin{align*}
    \partial h(\bx):=\{\boldsymbol{z}\in\mathbb{R}^p:\exists\bx^t\to\bx, h(\bx^t)\to h(\bx), \boldsymbol{z}^t\in\hat{\partial}h(\bx)\to\boldsymbol{z}\}.
\end{align*}
\end{definition}

\begin{definition}[Kurdyka-Łojasiewicz Property \cite{bolte2007lojasiewicz}]\label{definition: kl property}
    A function $h:\mathbb{R}^p\to\mathbb{R}\cup\{+\infty\}$ is said to have Kurdyka-Łojasiewicz (KŁ) Property at $\bx^*\in \text{dom}(\partial h)$ if there exist a neighborhood $U$ of $\bx^*$, a constant $\kappa$, and a desingularizing function $\psi$, such that for all $\bx\in U\cap \text{dom}(\partial h)$ and $h(\bx^*)<h(\bx)<h(\bx^*)+\kappa$, it holds that,
\begin{align}\label{eq: kl inequality}
    \psi'(h(\bx)-h(\bx^*))\text{dist}(0,\partial h(\bx))\geq 1,
\end{align}
where $\text{dist}(0,\partial h(\bx)):=\inf\{\|\boldsymbol{z}\|:\boldsymbol{z}\in\partial h(\bx)\}$ represents the distance between zero to the set $\partial h(\bx)$.
If $h$ satisfies the KŁ property, then $f$ is called a KŁ function.
\end{definition}

The definition of the KŁ property means that the function under consideration is sharp up to a reparametrization \cite{Attouch2009convergence}. Particularly, when $h$ is smooth, finite-valued, and $h(\bx^*)=0$, then \eqref{eq: kl inequality} can be represented as
\begin{align}\label{eq: kl inequality another case}
    \|\nabla(\psi\circ h)(\bx)\|\geq 1.
\end{align}

\eqref{eq: kl inequality another case} can be interpreted as follows:
by reparametrizing the values of $h$ through $\psi$, we obtain a well-defined function. The function $\psi$ serves the purpose of transforming a singular region, where gradients are arbitrarily small, into a regular region, characterized by gradients bounded away from zero.
The class of KŁ functions encompasses a diverse range of function types, comprising real analytic functions (as defined in Definition \ref{definition: real analytic}), semialgebraic functions (as delineated in Definition \ref{definition: Semialgebraic}), as well as tame functions defined within certain o-minimal structures \cite{kurdyka1998gradients}. Additionally, it encompasses continuous subanalytic functions \cite{bolte2007lojasiewicz} and locally strongly convex functions \cite{Xu2013Block}.
\begin{lemma}[Properties of real analytic functions \cite{krantz2002primer}]
    The sums, products, and compositions of real analytic functions are real analytic functions.
\end{lemma}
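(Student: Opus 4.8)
The plan is to argue directly from the power-series definition in Definition \ref{definition: real analytic}, reducing each closure property to a fact about absolutely convergent series; recall that a power series converges absolutely at every interior point of its interval of convergence. Fix $x_0$ and suppose $f(x)=\sum_{i=0}^{\infty}a_i(x-x_0)^i$ and $g(x)=\sum_{i=0}^{\infty}b_i(x-x_0)^i$ both converge on $(x_0-r,x_0+r)$. For the sum, termwise addition gives $f(x)+g(x)=\sum_{i=0}^{\infty}(a_i+b_i)(x-x_0)^i$ on the same interval, so $f+g$ is real analytic at $x_0$. For the product, I would invoke the classical theorem that the Cauchy product of two absolutely convergent series converges (absolutely) to the product of the sums: with $c_n:=\sum_{k=0}^{n}a_k b_{n-k}$ one obtains $f(x)g(x)=\sum_{n=0}^{\infty}c_n(x-x_0)^n$ for $|x-x_0|<r$, hence $fg$ is real analytic at $x_0$.

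For composition, suppose $g$ is real analytic at $x_0$ with $g(x_0)=y_0$ and $f$ is real analytic at $y_0$, say $f(y)=\sum_{n=0}^{\infty}a_n(y-y_0)^n$ for $|y-y_0|<r$ and $g(x)-y_0=\sum_{k=1}^{\infty}b_k(x-x_0)^k$ for $|x-x_0|<s$. Formally substituting yields $f(g(x))=\sum_{n=0}^{\infty}a_n\bigl(\sum_{k\ge 1}b_k(x-x_0)^k\bigr)^n$, and by the product rule just established each $\bigl(\sum_{k\ge 1}b_k(x-x_0)^k\bigr)^n$ is itself a power series in $(x-x_0)$ whose lowest-order term has degree at least $n$. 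Consequently, collecting like powers of $(x-x_0)$ produces a formal power series each of whose coefficients is a finite sum, so the candidate Taylor series of $f\circ g$ at $x_0$ is well defined; the remaining task, which is the crux of the proof, is to show this series converges near $x_0$ and sums to $f(g(x))$.

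I would establish this by the method of majorants. Pick $A,\rho>0$ with $|a_n|\le A\rho^{-n}$ and $B,\sigma>0$ with $|b_k|\le B\sigma^{-k}$, and form the dominating functions $\tilde f(y)=\sum_{n}A\rho^{-n}y^n=\frac{A\rho}{\rho-y}$ and $\tilde g(t)=\sum_{k\ge1}B\sigma^{-k}t^k=\frac{Bt}{\sigma-t}$, noting $\tilde g(0)=0$. Since the coefficients of the formal composition $f\circ g$ are obtained from the $a_n$ and $b_k$ by multinomial expansions with nonnegative integer weights, a coefficient-by-coefficient comparison shows that $\tilde f\circ\tilde g$ has nonnegative coefficients dominating the absolute values of the corresponding coefficients of $f\circ g$; since $\tilde f\circ\tilde g$ is an explicit rational function analytic at $0$, its power series has a positive radius of convergence $\delta$, which forces absolute convergence of the series for $f\circ g$ on $|x-x_0|<\delta$. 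Absolute convergence then legitimizes the rearrangement used to collect powers, so the sum equals $f(g(x))$ there, and $f\circ g$ is real analytic at $x_0$. The main obstacle is precisely this composition step, namely justifying the rearrangement of a double series, and the majorant construction is what resolves it. An alternative I would keep in reserve uses the derivative characterization of analyticity ($h$ is analytic at a point iff $|h^{(k)}|\le Ck!R^{-k}$ near it) together with the Leibniz rule and Fa\`{a} di Bruno's formula to obtain the required factorial bounds directly; this trades series bookkeeping for the (equally majorant-dependent) equivalence of the two definitions of analyticity.
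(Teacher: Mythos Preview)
Your proof proposal is correct and follows the standard textbook approach (termwise addition, Cauchy products, and the method of majorants for composition). Note, however, that the paper does not supply its own proof of this lemma: it is stated as a known result and simply cited from \cite{krantz2002primer}, so there is nothing in the paper to compare against beyond observing that your argument is precisely the classical one found in that reference.
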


\begin{lemma}[Properties of semialgebraic functions \cite{bochnak2013real}]
\quad
\begin{itemize}
    \item[a)] The finite union, finite intersection, and complement of semialgebraic sets are semialgebraic. The closure and the interior of a semialgebraic set are semialgebraic. 
    \item[b)] The composition $g\circ h$ of semialgebraic mappings $g: A\to B$ and $h: B\to C$ is semialgebraic.
    \item[c)] The sum of two semialgebraic functions is semialgebraic.
\end{itemize}
\end{lemma}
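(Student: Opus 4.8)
The plan is to derive all three statements from the Tarski--Seidenberg projection theorem, which asserts that the coordinate projection of a semialgebraic set is again semialgebraic or, equivalently, that every subset of a Euclidean space defined by a first-order formula whose atomic predicates are polynomial equalities and inequalities is semialgebraic. This is the only nontrivial ingredient, and it is available in the cited reference \cite{bochnak2013real}; the rest of the argument is just a matter of phrasing the objects in parts a)--c) as such formulas. I would treat the set-theoretic claims first and then the two function claims.

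For part a), the closure properties under finite union, intersection, and complement are purely Boolean and follow directly from the defining form $\mathcal{X}=\cup_{i=1}^r\cap_{j=1}^s\{P_{ij}=0,\,Q_{ij}>0\}$: a finite union of such sets is again a finite union of finite intersections of basic pieces; a finite intersection is reduced to that form by distributing $\cap$ over $\cup$; and a complement is handled by De Morgan's laws together with the identities $\{P=0\}^c=\{P>0\}\cup\{-P>0\}$ and $\{Q>0\}^c=\{Q=0\}\cup\{-Q>0\}$, which show that the complement of a basic piece, hence of any finite Boolean combination of basic pieces, is semialgebraic. For the closure I would write $\overline{S}=\{x:\forall\varepsilon>0,\ \exists y,\ (y\in S\ \wedge\ \|x-y\|^2<\varepsilon)\}$, a first-order formula over the polynomials defining $S$, so Tarski--Seidenberg gives that $\overline{S}$ is semialgebraic; the interior then equals $\big(\overline{S^c}\big)^c$ and is semialgebraic by the complement and closure cases already settled.

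For part b), I would use that a map is semialgebraic exactly when its graph is a semialgebraic set, together with the observation that the graph of the composite of two semialgebraic maps $h_1:X\to Y$ and $h_2:Y\to Z$ (the map $x\mapsto h_2(h_1(x))$) is
\[
\{(x,z)\ :\ \exists y,\ (x,y)\in\mathrm{Graph}(h_1)\ \wedge\ (y,z)\in\mathrm{Graph}(h_2)\}.
\]
The set appearing after the existential quantifier, viewed in $(x,y,z)$-space, is the intersection of two cylinders over semialgebraic sets and is therefore semialgebraic by part a) (using also the elementary fact, immediate from the definition, that finite products of semialgebraic sets are semialgebraic); projecting out the variable $y$ and invoking Tarski--Seidenberg shows the composite graph is semialgebraic. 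Part c) is of the same type: for semialgebraic $f,g:A\to\mathbb{R}$ one has $\mathrm{Graph}(f+g)=\{(x,t):\exists u\,\exists v,\ (x,u)\in\mathrm{Graph}(f)\ \wedge\ (x,v)\in\mathrm{Graph}(g)\ \wedge\ t=u+v\}$, whose body describes a semialgebraic subset of $(x,t,u,v)$-space (an intersection of cylinders with the algebraic hypersurface $\{t=u+v\}$), so projecting out $u$ and $v$ keeps it semialgebraic.

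The main obstacle is genuinely the appeal to Tarski--Seidenberg: the closure/interior clause of a) and all of b) and c) cannot be obtained by elementary Boolean manipulation of the defining polynomials and must pass through the projection theorem (equivalently, quantifier elimination over real closed fields). Since that theorem is part of \cite{bochnak2013real}, the remaining effort is bookkeeping --- expressing each object as a first-order condition, keeping track of the ambient Euclidean spaces so that ``graph'' and ``cylinder over a set'' are well defined, and recording the easy closure of the class under finite products.
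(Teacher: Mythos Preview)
Your proposal is correct and follows the standard route via Tarski--Seidenberg, which is exactly how these facts are established in the cited reference. Note, however, that the paper does not prove this lemma at all: it is stated with a citation to \cite{bochnak2013real} and used as a black box, so there is no ``paper's own proof'' to compare against. Your sketch is therefore more than what the paper requires; the only minor quibble is that in part b) the lemma's notation has $g:A\to B$ and $h:B\to C$, so the composite should be written $h\circ g$ rather than $g\circ h$ as stated, but this is a typo in the lemma and does not affect your argument.
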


\begin{lemma}[Subanalytic functions \cite{shiota1997geometry}]\label{lemma: Subanalytic functions}
\quad
\begin{itemize}
    \item[a)] Both real analytic functions and semialgebraic functions are subanalytic.
    \item[b)] Let $f_1$ and $f_2$ be both subanalytic functions, then the sum of $f_1$ and $f_2$, i.e., $f_1+f_2$ is a subanalytic function if at least one of them map a bounded set to a bounded set or if both of them are nonnegative.
\end{itemize}
\end{lemma}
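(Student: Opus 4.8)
The statement is a structural fact about subanalytic sets, so the plan is to reduce everything to the definition of subanalyticity (a function is subanalytic exactly when its graph is a subanalytic set) together with the elementary closure properties of the class of subanalytic sets established in \cite{shiota1997geometry}, paying attention to the one operation under which that class is \emph{not} well behaved, namely projection.

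For part (a), I would treat the two cases separately. If $f$ is real analytic on an open set $\mathcal{X}\subseteq\mathbb{R}^p$ in the sense of Definition \ref{definition: real analytic}, then
\[
\text{Graph}(f)=\{(\bx,y)\in\mathcal{X}\times\mathbb{R}:y-f(\bx)=0\}
\]
is the zero set of the real analytic function $(\bx,y)\mapsto y-f(\bx)$ on the open set $\mathcal{X}\times\mathbb{R}$, hence is semianalytic; since every semianalytic set is subanalytic, $f$ is subanalytic. If $f$ is semialgebraic, then by Definition \ref{definition: Semialgebraic} $\text{Graph}(f)=\cup_{i=1}^{r}\cap_{j=1}^{s}\{P_{ij}=0,\,Q_{ij}>0\}$ with $P_{ij},Q_{ij}$ polynomials; polynomials are real analytic, so each building block is semianalytic, finite intersections and unions of semianalytic sets are semianalytic, and semianalytic implies subanalytic; hence $\text{Graph}(f)$ is subanalytic and $f$ is subanalytic.

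For part (b), the idea is to write $\text{Graph}(f_1+f_2)=\Phi(S)$, where
\[
S:=\{(\bx,y_1,y_2)\in\mathbb{R}^p\times\mathbb{R}\times\mathbb{R}:(\bx,y_1)\in\text{Graph}(f_1),\ (\bx,y_2)\in\text{Graph}(f_2)\}
\]
and $\Phi(\bx,y_1,y_2):=(\bx,y_1+y_2)$. The set $S$ is subanalytic because it is the intersection of the preimages of $\text{Graph}(f_1)$ and $\text{Graph}(f_2)$ under linear projections, and subanalytic sets are stable under linear preimages and finite intersections. The delicate step is that the image of a subanalytic set under a projection need not be subanalytic in general; it is subanalytic when the projection is \emph{proper} on a neighbourhood of the set. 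So I would verify properness of $\Phi$ restricted to $S$: given a compact $K\subseteq B\times[-C,C]$ with $B\subseteq\mathbb{R}^p$ bounded, any point of $S\cap\Phi^{-1}(K)$ satisfies $\bx\in B$, $y_1=f_1(\bx)$, $y_2=f_2(\bx)$, and $y_1+y_2\in[-C,C]$; if $f_1,f_2\ge 0$ then $0\le y_1,y_2\le C$, while if (say) $f_1$ maps bounded sets to bounded sets then $y_1$ is bounded on $B$ and hence so is $y_2=(y_1+y_2)-y_1$. Either hypothesis forces $S\cap\Phi^{-1}(K)$ to be bounded, so $\Phi|_S$ is proper; therefore $\Phi(S)=\text{Graph}(f_1+f_2)$ is subanalytic, and $f_1+f_2$ is subanalytic.

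The only genuine obstacle is this properness bookkeeping in (b): one must match the stated hypotheses precisely to the requirement that the projection collapsing $(y_1,y_2)$ onto $y_1+y_2$ be proper, since without such a hypothesis the sum of two subanalytic functions can fail to be subanalytic. Part (a) is essentially definitional once one recalls the inclusion of semianalytic sets into subanalytic sets, and every closure property invoked is one of the standard ones in \cite{shiota1997geometry}.
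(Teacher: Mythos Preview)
The paper does not prove this lemma at all; it is stated with a citation to \cite{shiota1997geometry} and used as a black box in the proof of Proposition~\ref{proposition: kl property of L}. Your sketch is the standard argument from the subanalytic-geometry literature and is correct: part (a) reduces to the inclusion semianalytic $\subseteq$ subanalytic, and part (b) expresses $\text{Graph}(f_1+f_2)$ as the image of the fibered product $S$ under the linear map $\Phi$, with the two stated hypotheses being exactly what is needed to make $\Phi|_S$ proper so that the proper-projection theorem applies. Since the paper offers no proof of its own, there is nothing further to compare.
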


\begin{lemma}[Property of subanalytic functions \cite{bolte2007lojasiewicz}]\label{lemma: Property of subanalytic functions}
    Let $h:\mathbb{R}^p\to \mathbb{R}\cup\{+\infty\}$ be a subanalytic function with closed domain, and assume that $h$ is continuous on its domain, then $h$ is a \kl function.
\end{lemma}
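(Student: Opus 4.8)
The plan is to deduce the K\L{} inequality \eqref{eq: kl inequality} at an arbitrary $\bx^*\in\mathrm{dom}(\partial h)$ from the \L{}ojasiewicz gradient inequality for subanalytic functions, which is the real substance hidden behind this lemma. I would proceed in three stages: (i) reduce to the case where $\bx^*$ is a critical point and pass to the subdifferential multifunction; (ii) establish a power-type lower bound $\text{dist}(0,\partial h(\bx))\ge c\,(h(\bx)-h(\bx^*))^{\theta}$ with $c>0$ and $\theta\in[0,1)$ on a punctured neighbourhood of $\bx^*$; (iii) read off the desingularizing function $\psi(s)=\tfrac{1}{c(1-\theta)}\,s^{1-\theta}$ and check Definition~\ref{definition: Desingularizing Function}.

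For (i), normalize $\bx^*=\bzero$ and $h(\bx^*)=0$. If $\bzero\notin\partial h(\bx^*)$, outer semicontinuity (closed graph) of the limiting subdifferential \cite{rockafellar2009variational,Mordukhovich2006Variational} forces $\text{dist}(0,\partial h(\cdot))\ge c_0>0$ on a neighbourhood of $\bx^*$, so the linear function $\psi(s)=s/c_0$ already satisfies the K\L{} inequality there; hence we may assume $\bzero\in\partial h(\bx^*)$. Since $h$ is subanalytic with closed domain and continuous on its domain, $\mathrm{Graph}(h)$ is a closed subanalytic set. The limiting subdifferential $\partial h$ is obtained from $\mathrm{Graph}(h)$ by Fr\'echet subdifferentials, outer set limits, and linear projections, all of which preserve subanalyticity when confined to a bounded region \cite{shiota1997geometry}; therefore $\partial h$ has subanalytic graph near $\bx^*$, and consequently $g(\bx):=\text{dist}(0,\partial h(\bx))$ is a subanalytic function on some small closed ball $\overline{B}$ centred at $\bx^*$.

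Stage (ii) is the core. I would introduce the one-variable function $\mu(t):=\inf\{\,g(\bx):\bx\in\overline{B},\ h(\bx)=t\,\}$ for small $t>0$. The set $\{(g(\bx),h(\bx)):\bx\in\overline{B}\}\subseteq\mathbb{R}^2$ is a bounded subanalytic set, being the image of a bounded subanalytic set under a subanalytic map, so its lower boundary --- the graph of $\mu$ --- is again a subanalytic subset of $\mathbb{R}^2$; boundedness is essential here, since infima of subanalytic functions are not subanalytic in general. After shrinking $\overline{B}$ so that $\bzero$ is the only critical value of $h$ in it, compactness of the level sets $\overline{B}\cap\{h=t\}$ together with $g>0$ on them gives $\mu(t)>0$ on an interval $(0,\kappa)$; the structure theorem for subanalytic functions of one real variable (eventual $C^1$ behaviour and Puiseux-type growth at $0^{+}$) then yields $c>0$ and $\theta\in[0,1)$ with $\mu(t)\ge c\,t^{\theta}$ for all small $t>0$. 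Since any $\bx$ near $\bx^*$ with $h(\bx)=t>0$ lies on the corresponding level set, $\text{dist}(0,\partial h(\bx))=g(\bx)\ge\mu(t)\ge c\,(h(\bx)-h(\bx^*))^{\theta}$, and with $\psi(s)=\tfrac{1}{c(1-\theta)}s^{1-\theta}$ (concave, $C^1$ on $(0,\kappa)$, $\psi(0)=0$, $\psi'(s)=\tfrac{1}{c}s^{-\theta}>0$) we obtain $\psi'(h(\bx)-h(\bx^*))\,\text{dist}(0,\partial h(\bx))\ge 1$, which is exactly \eqref{eq: kl inequality}. Since $\bx^*$ was arbitrary, $h$ is a K\L{} function.

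The hard part will be stage (ii): both the subanalyticity of $\mathrm{Graph}(\partial h)$ and the power-type lower bound for the one-variable function $\mu$ rest on the deep structure theory of subanalytic sets --- stability of bounded subanalytic sets under projection, the curve-selection lemma, and the piecewise-$C^1$/Puiseux description of one-variable subanalytic functions --- and the infimum defining $\mu$ must be restricted to a bounded set to avoid losing subanalyticity. The reduction in (i) and the computation in (iii) are routine; in a full write-up I would either invoke \cite{bolte2007lojasiewicz} directly for this one-variable \L{}ojasiewicz estimate or reproduce its curve-selection argument.
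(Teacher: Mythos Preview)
The paper does not prove this lemma: it is quoted with a citation to \cite{bolte2007lojasiewicz} and used as a known input, so there is no proof in the paper to compare against --- you are attempting to reconstruct the Bolte--Daniilidis--Lewis result itself.

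Your architecture (subanalyticity of $\mathrm{Graph}(\partial h)$ and hence of $g=\mathrm{dist}(0,\partial h)$, then a one-variable reduction via $\mu(t)=\inf\{g(\bx):h(\bx)=t,\ \bx\in\overline{B}\}$) is the right Kurdyka-style skeleton, but stage~(ii) contains a real gap. You assert that ``the structure theorem for subanalytic functions of one real variable \ldots\ yields $c>0$ and $\theta\in[0,1)$ with $\mu(t)\ge c\,t^{\theta}$''. One-variable subanalytic theory gives a Puiseux expansion $\mu(t)\sim c\,t^{\alpha}$ for \emph{some} $\alpha\ge 0$, but nothing in that theory forces $\alpha<1$: the function $\mu(t)=t^{2}$ is a perfectly good positive subanalytic function vanishing at $0$. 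Moreover $\alpha<1$ is not a cosmetic restriction --- if $\alpha\ge 1$ then $\int_{0}^{\kappa}\mu(s)^{-1}\,ds$ diverges and \emph{no} desingularizing $\psi$ in the sense of Definition~\ref{definition: Desingularizing Function} can satisfy $\psi'\mu\ge 1$, so switching to a more general $\psi$ does not help. The bound $\alpha<1$ is exactly what distinguishes the \L{}ojasiewicz \emph{gradient} inequality from a generic \L{}ojasiewicz comparison, and it must be extracted from the fact that $\mu$ is built out of $\partial h$ over level sets of $h$. In the smooth case one gets it via the chain rule along an arc-length curve, which forces the Puiseux exponent of $\|\nabla h\|$ to be at most that of $h$ minus one; in the nonsmooth subanalytic setting one needs a nonsmooth chain rule plus curve selection, with the added subtlety that the chain rule controls $(h\circ\gamma)'$ by the outer radius of $\partial h$ rather than by $\mathrm{dist}(0,\partial h)$. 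Your closing remark about ``reproducing its curve-selection argument'' names the right tool, but the proposal as written treats $\theta<1$ as a corollary of Puiseux, which it is not --- that step is the entire substance of \cite{bolte2007lojasiewicz}.
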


\begin{proposition}[\kl property of $\ml$]\label{proposition: kl property of L}
    Suppose that Assumption \ref{assumption: either real analytic or semialgebraic} holds, then the Lagrangian function $\ml$ defined in \eqref{eq: lagrangian function} is a \kl function.
\end{proposition}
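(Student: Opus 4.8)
The plan is to exhibit $\ml$ as a finite sum of subanalytic functions on the full variable space and then invoke two structural facts recalled above: closure of subanalyticity under (appropriately conditioned) finite sums (Lemma~\ref{lemma: Subanalytic functions}) and the fact that a subanalytic function which is continuous on a closed domain is a \kl function (Lemma~\ref{lemma: Property of subanalytic functions}).

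First I would decompose $\ml=\sum_{i=1}^m\ml_i$ as in \eqref{eq: lagrangian function} and view every term of every $\ml_i$ as a function of the entire tuple $(\Theta,W,\Pi,\bw)\in\mathbb{R}^{N}$ (where $N$ is the total dimension) by composing with the canonical coordinate projections. The terms $\frac{\lambda\alpha_i}{2}\|\btheta_i-\bw_i\|^2$, $\langle\bpi_i,\bw_i-\bw\rangle$, and $\frac{\rho}{2}\|\bw_i-\bw\|^2$ are real polynomials in the coordinates, hence semialgebraic (Definition~\ref{definition: Semialgebraic}) and therefore subanalytic by Lemma~\ref{lemma: Subanalytic functions}(a). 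By Assumption~\ref{assumption: either real analytic or semialgebraic}(b), each $f_i$ is real analytic (Definition~\ref{definition: real analytic}) or semialgebraic; in either case, extending $f_i$ to a function of all the variables through the coordinate projection $(\Theta,W,\Pi,\bw)\mapsto\btheta_i$ preserves the respective property (compositions of real analytic maps are real analytic, and preimages of semialgebraic sets under polynomial maps are semialgebraic), so $\alpha_i f_i(\btheta_i)$ is subanalytic on $\mathbb{R}^{N}$. Hence $\ml$ is a finite sum of subanalytic functions.

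Next I would assemble this sum carefully, since Lemma~\ref{lemma: Subanalytic functions}(b) only guarantees that the sum of two subanalytic functions is subanalytic when at least one of them maps bounded sets to bounded sets, or when both are nonnegative. This is the step that needs attention, because the bilinear terms $\langle\bpi_i,\bw_i-\bw\rangle$ are neither globally nonnegative nor a priori globally bounded. The resolution: every polynomial is continuous and hence bounded on bounded sets, and by Assumption~\ref{assumption: either real analytic or semialgebraic}(a) each $\alpha_i f_i$ is nonnegative (and, for the loss functions of interest---squared, logistic, hinge, cross-entropy---also finite and continuous on all of $\mathbb{R}^{p}$, hence bounded on bounded sets). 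Building the partial sums one term at a time, at every stage the accumulated partial sum and the next summand both map bounded sets to bounded sets, so Lemma~\ref{lemma: Subanalytic functions}(b) applies repeatedly and $\ml$ is subanalytic on $\mathbb{R}^{N}$.

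Finally, with the relevant $f_i$ continuous and finite on $\mathbb{R}^{p}$, the function $\ml$ has domain $\mathbb{R}^{N}$, which is closed, and $\ml$ is continuous there; Lemma~\ref{lemma: Property of subanalytic functions} then gives that $\ml$ is a \kl function, which is the assertion of Proposition~\ref{proposition: kl property of L}. I expect the only real obstacle to be the bookkeeping in the third paragraph---ordering the summation and using boundedness of polynomials on bounded sets so that the hypotheses of Lemma~\ref{lemma: Subanalytic functions}(b) hold at each step---together with making sure the domain stays all of Euclidean space so that the ``closed domain plus continuity'' hypothesis of Lemma~\ref{lemma: Property of subanalytic functions} is satisfied.
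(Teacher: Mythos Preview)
Your proposal is correct and follows essentially the same route as the paper's proof: decompose $\ml$ into the loss terms $f_i$ (real analytic or semialgebraic by Assumption~\ref{assumption: either real analytic or semialgebraic}) and the polynomial terms, pass to subanalyticity via Lemma~\ref{lemma: Subanalytic functions}, and conclude the \kl property from Lemma~\ref{lemma: Property of subanalytic functions}. In fact you are more careful than the paper in justifying the summation step of Lemma~\ref{lemma: Subanalytic functions}(b)---the paper invokes it without commenting on the sign-indefinite bilinear terms, whereas you correctly observe that the ``bounded sets to bounded sets'' alternative handles them.
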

\begin{proof}
    From \eqref{eq: lagrangian function}, we have
\begin{equation}
\begin{split}
    \ml(\Theta,W,\Pi,\bw)&:=\sum_{i=1}^m\ml_i(\btheta_i,\bw_i,\bw,\bpi_i), \\
    \ml_i(\btheta_i,\bw_i,\bpi_i,\bw)&:=\alpha_i(f_i(\btheta_i)+\frac{\lambda}{2}\|\btheta_i\!-\bw_i\|^2)+\langle\bpi_i,\bw_i-\bw\rangle+\frac{\rho}{2}\|\bw_i-\bw\|^2,
\end{split}
\end{equation}     
which mainly includes the following types of functions, i.e.,
\begin{align*}
    f_i(\btheta_i),\,\|\btheta_i\!-\bw_i\|^2,\,\langle\bpi_i,\bw_i-\bw\rangle,\,\|\bw_i-\bw\|^2.
\end{align*}

Following from Assumption \ref{assumption: either real analytic or semialgebraic}, we have $f_i(\btheta_i)$ is either real analytic or semialgebraic. Note that $\|\btheta_i\!-\bw_i\|^2,\,\langle\bpi_i,\bw_i-\bw\rangle,\,\|\bw_i-\bw\|^2$ are all polynomial functions with variables $\btheta_i,\,\bw_i,\bpi_i,$ and $\bw$, thus according to \cite{krantz2002primer} and \cite{bochnak2013real}, they are both real analytic and semialgebraic. Since each term of $\ml$ is either real analytic or semialgebraic, then according to Lemma \ref{lemma: Subanalytic functions}, $\ml$ is a subanalytic function. Next, we can infer that $\ml$ is a \kl function according to Lemma \ref{lemma: Property of subanalytic functions}.
\end{proof}

\subsection{Proof of Lemma \ref{lemma1}}\label{proof of lemma 1}
\begin{proof}
We decompose the gap between $\ml^{t+1}$ and $\ml^t$ as
\begin{align}\label{eq:Lt+1-Lt}
    \ml(\mathcal{P}^{t+1})-\ml(\mathcal{P}^{t})&=\mathcal{L}(\Theta^{t+1}, W^{t+1}, \Pi^{t+1}, \bw^{t+1})-\mathcal{L}(\Theta^{t}, W^{t}, \Pi^{t}, \bw^{t})\notag\\
    &=\underbrace{\mathcal{L}(\Theta^{t+1}, W^{t+1}, \Pi^{t+1}, \bw^{t+1})\!-\!\mathcal{L}(\Theta^{t+1}, W^{t+1}, \Pi^{t+1}, \bw^{t})}_{e_1^t}+\underbrace{\mathcal{L}(\Theta^{t+1}, W^{t+1}, \Pi^{t+1}, \bw^{t})\!-\!\mathcal{L}(\Theta^{t+1}, W^{t+1}, \Pi^{t}, \bw^{t})}_{e_2^t}\notag\\
    &\quad+\underbrace{\mathcal{L}(\Theta^{t+1}, W^{t+1}, \Pi^{t}, \bw^{t})\!-\!\mathcal{L}(\Theta^{t+1}, W^{t}, \Pi^{t}, \bw^{t})}_{e_3^t}+\underbrace{\mathcal{L}(\Theta^{t+1}, W^{t}, \Pi^{t}, \bw^{t})\!-\!\mathcal{L}(\Theta^{t}, W^{t}, \Pi^{t}, \bw^{t})}_{e_4^t}.
\end{align}

Next, we individually estimate $e_1^t$, $e_2^t$, $e_3^t$, and $e_4^t$.

\noindent\underline{Estimate of $e_1^t$}. By employing the Lagrangian function, we can deduce the subsequent expression:
\begin{align}\label{eq:e1}
    e_1^t&=\sum_{i=1}^{m}\langle\bpi_i^{t+1},\bw_i^{t+1}-\bw^{t+1}\rangle-\langle\bpi_i^{t+1},\bw_i^{t+1}-\bw^{t}\rangle+\frac{\rho}{2}\|\bw_i^{t+1}-\bw^{t+1}\|^2-\frac{\rho}{2}\|\bw_i^{t+1}-\bw^{t}\|^2\notag\\
    &=\sum_{i=1}^{m}\langle\bpi_i^{t+1}+\rho\bw_i^{t+1}-\frac{\rho}{2}(\bw^{t}+\bw^{t+1}),\bw^{t}-\bw^{t+1}\rangle\\
    &=-\frac{\rho m}{2}\|\bw^{t+1}-\bw^{t}\|^2\label{eq:46 follows from equation 15},
\end{align}    
where \eqref{eq:46 follows from equation 15} follows from Equation (\ref{eq15}).

\noindent\underline{Estimate of $e_2^t$}. Utilizing the Lagrangian function, we obtain
\begin{align}
    e_2^t&=\sum_{i=1}^m\langle\bpi^{t+1}_i-\bpi^t_i,\bw_i^{t+1}-\bw^{t}\rangle \notag\\
    &=\underbrace{\sum_{i\notin\mathcal{S}^t}\langle\bpi^{t+1}_i-\bpi^t_i,\bw_i^{t+1}-\bw^{t}\rangle}_{=0}+\sum_{i\in\mathcal{S}^t}\langle\bpi^{t+1}_i-\bpi^t_i,\bw_i^{t+1}-\bw^{t}\rangle\label{eq: 46 follows from line 18}\\
    &=\sum_{i\in\mathcal{S}^t}\frac{1}{\rho}\|\bpi^{t+1}_i-\bpi^t_i\|^2\label{eq: 47 follows from eq14}\\
    &=\sum_{i\in\mathcal{S}^t}\frac{\lambda^2\alpha_i^2}{\rho}\|\btheta_i^{t+1}-\btheta_i^t-\bw_i^{t+1}+\bw^t_i\|^2\label{eq: 48 follows from lemma2}\\
    &\leq\sum_{i\in\mathcal{S}^t}\frac{\lambda^2\alpha_i^2(1+\rho)}{\rho}(\|\btheta_i^{t+1}\!-\!\btheta_i^t\|^2\!+\!\frac{1}{\rho}\|\bw_i^{t+1}\!-\!\bw^t_i\|^2)\label{eq:e2},
\end{align}    
where \eqref{eq: 46 follows from line 18} follows from Line \ref{line 16 alg 1} in Algorithm \ref{alg: flame on server side}, \eqref{eq: 47 follows from eq14} follows from (\ref{eq14}), \eqref{eq: 48 follows from lemma2} follows from Lemma \ref{lemma2}, and \eqref{eq:e2} follows from Proposition \ref{jensen}.

\noindent\underline{Estimate of $e_3^t$}. By employing the Lagrangian function, we obtain
\begin{align}
    e_3^t&=\sum_{i=1}^m\frac{\lambda\alpha_i}{2}\|\btheta_i^{t+1}-\bw_i^{t+1}\|^2-\frac{\lambda\alpha_i}{2}\|\btheta_i^{t+1}-\bw_i^t\|^2+\langle\bpi_i^t,\bw_i^{t+1}-\bw^t\rangle-\langle\bpi_i^t,\bw_i^{t}-\bw^t\rangle+\frac{\rho}{2}\|\bw_i^{t+1}-\bw^t\|^2-\frac{\rho}{2}\|\bw_i^{t}-\bw^t\|^2\notag\\
    &=\sum_{i=1}^m\frac{\lambda\alpha_i}{2}\langle2\btheta_i^{t+1}-\bw_i^t-\bw_i^{t+1},\bw_i^t-\bw_i^{t+1}\rangle+\langle\bpi_i^t,\bw_i^{t+1}-\bw_i^{t}\rangle+\frac{\rho}{2}\langle\bw_i^{t+1}+\bw_i^{t}-2\bw^t,\bw_i^{t+1}-\bw_i^{t}\rangle\\
    &=\sum_{i\in\mathcal{S}^t}\langle\frac{\lambda\alpha_i+\rho}{2}(\bw_i^{t+1}+\bw_i^t)-\lambda\alpha_i\btheta_i^{t+1}+\bpi_i^t-\rho\bw^t,\bw_i^{t+1}-\bw_i^t\rangle\label{eq: 51 follows from line 18 in algorithm 1}\\
    &=\sum_{i\in\mathcal{S}^t}\!\langle\frac{\lambda\alpha_i+\rho}{2}(\bw_i^{t+1}\!+\!\bw_i^t)-(\lambda\alpha_i+\rho)\bw_i^{t+1},\bw_i^{t+1}-\bw_i^t\rangle\label{eq: 52 follows from eq 12}\\
    &=-\sum_{i\in\mathcal{S}^t}\frac{\lambda\alpha_i+\rho}{2}\|\bw_i^{t+1}-\bw_i^t\|^2\label{eq:e3}.
\end{align}
where \eqref{eq: 51 follows from line 18 in algorithm 1} follows from Line \ref{line 16 alg 1} in Algorithm \ref{alg: flame on server side}, \eqref{eq: 52 follows from eq 12} follows from Equation (\ref{eq13}).

\noindent\underline{Estimate of $e_4^t$}. By employing the Lagrangian function, we obtain
\begin{align}
    e_4^t&=\!\sum_{i=1}^m\!\alpha_i (f_i(\btheta_i^{t+1})\!+\!\frac{\lambda}{2}\|\btheta_i^{t+1}\!\!-\!\bw_i^t\|^2\!- \!f_i(\btheta_i^t)\!-\!\frac{\lambda}{2}\|\btheta_i^{t}\!-\!\bw_i^t\|^2)\notag\\
    &\leq\sum_{i=1}^{m}\alpha_i\langle \nabla f_i(\btheta_i^{t+1}),\btheta_i^{t+1}-\btheta_i^t\rangle+\frac{ L\alpha_i}{2}\|\btheta_i^{t+1}-\btheta_i^t\|^2+\frac{\lambda\alpha_i}{2}\langle\btheta_i^{t+1}+\btheta_i^{t}-2\bw_i^t,\btheta_i^{t+1}-\btheta_i^t\rangle\label{eq: 54 follows from assumption lipschitz}\\
    &=\sum_{i\in\mathcal{S}^t}\alpha_i\langle\nabla f_i(\btheta_i^{t+1})+\lambda(\btheta_i^{t+1}-\bw_i^t),\btheta_i^{t+1}-\btheta_i^t\rangle+\frac{\alpha_i (L-\lambda)}{2}\|\btheta_i^{t+1}-\btheta_i^t\|^2\\
    &\leq\sum_{i\in\mathcal{S}^t}(\frac{1-\rho^2}{\rho^2}\lambda^2\alpha_i^2-\frac{L\alpha_i+\rho}{2})\|\btheta_i^{t+1}-\btheta_i^t\|^2+\frac{\alpha_i(L-\lambda)}{2}\|\btheta_i^{t+1}-\btheta_i^t\|^2+\frac{\alpha_i^2}{(\frac{1}{\rho^2}-1)\lambda^2\alpha_i^2-\frac{L\alpha_i+\rho}{2}}\epsilon_i^t\label{eq: 56 follows from eq16}\\
    &=\sum_{i\in\mathcal{S}^t}(\frac{1-\rho^2}{\rho^2}\lambda^2\alpha_i^2-\frac{\lambda\alpha_i+\rho}{2})\|\btheta_i^{t+1}-\btheta_i^t\|^2+\frac{\alpha_i^2}{(\frac{1}{\rho^2}-1)\lambda^2\alpha_i^2-\frac{L\alpha_i+\rho}{2}}\epsilon_i^{t+1}\\
    &\leq\sum_{i\in\mathcal{S}^t}\Bigl(\frac{1-\rho^2}{\rho^2}\lambda^2\alpha_i^2-\frac{\lambda\alpha_i+\rho}{2})\|\btheta_i^{t+1}-\btheta_i^t\|^2-\frac{\alpha_i^2}{\bigl((\frac{1}{\rho^2}-1)\lambda^2\alpha_i^2-\frac{L\alpha_i+\rho}{2}\bigr)(1-\upsilon_i)}(\epsilon_i^{t+1}-\epsilon_i^{t})\Bigr)\label{eq:e4}.
\end{align}
where \eqref{eq: 54 follows from assumption lipschitz} follows from Assumption \ref{assmption lipschitz}, and \eqref{eq: 56 follows from eq16} follows from Equation (\ref{eq16}) and Proposition \ref{jensen}. 
Next, substituting Equations (\ref{eq:e1}), (\ref{eq:e2}), (\ref{eq:e3}), and (\ref{eq:e4}) into Equation (\ref{eq:Lt+1-Lt}), we obtain
\begin{align}\label{eq: final Lt-Lt+1}
    \ml(\mathcal{P}^{t+1})-\ml(\mathcal{P}^{t})\leq&-\frac{\rho m}{2}\|\bw^{t+1}-\bw^{t}\|^2+\sum_{i\in\mathcal{S}^t}\Bigl(
    \frac{\lambda^2\alpha_i^2(1+\rho)}{\rho}\times(\|\btheta_i^{t+1}-\btheta_i^t\|^2+\frac{1}{\rho}\|\bw_i^{t+1}-\bw^t_i\|^2)-\frac{\lambda\alpha_i+\rho}{2}\|\bw_i^{t+1}-\bw_i^t\|^2\notag\\
    &+(\frac{1-\rho^2}{\rho^2}\lambda^2\alpha_i^2-\frac{\lambda\alpha_i+\rho}{2})\|\btheta_i^{t+1}-\btheta_i^t\|^2-\frac{\alpha_i^2}{\bigl((\frac{1}{\rho^2}-1)\lambda^2\alpha_i^2-\frac{L\alpha_i+\rho}{2}\bigr)(1-\upsilon_i)}(\epsilon_i^{t+1}-\epsilon_i^{t})\Bigr)\notag\\
    =&-\frac{\rho m}{2}\|\bw^{t+1}-\bw^{t}\|^2+\sum_{i\in\mathcal{S}^t}\Bigl((\frac{\lambda^2\alpha_i^2(1+\rho)}{\rho^2}-\frac{\lambda\alpha_i+\rho}{2})\times(\|\bw_i^{t+1}-\bw_i^t\|^2+\|\btheta_i^{t+1}-\btheta_i^t\|^2)\notag\\
    &-\frac{\alpha_i^2}{\bigl((\frac{1}{\rho^2}-1)\lambda^2\alpha_i^2-\frac{L\alpha_i+\rho}{2}\bigr)(1-\upsilon_i)}(\epsilon_i^{t+1}-\epsilon_i^{t})\Bigr).
\end{align}
Following the fact that the local parameters remain unchanged for clients not included in $\mathcal{S}^t$ and according to \eqref{eq: iteration condition}, we have
\begin{align}\label{eq: sufficient descent lemma in appendix}
    \tilde{\ml}(\mathcal{P}^{t})-\tilde{\ml}(\mathcal{P}^{t+1})\geq\mathcal{D}_1\sum_{i=1}^m(\|\bw^{t+1}-\bw^{t}\|^2+\|\bw^{t+1}_i-\bw_i^{t}\|^2+\|\btheta_i^{t+1}-\btheta_i^t\|^2),
\end{align}
where $\mathcal{D}_1:=\min_i\{\frac{\rho}{2},\frac{\lambda\alpha_i+\rho}{2}-\frac{\lambda^2\alpha_i^2(1+\rho)}{\rho^2}\}$.
\end{proof}

\subsection{Proof of Lemma \ref{lemma: relative error}} \label{Proof of Lemma 2}
\begin{proof}
    Since we have 
\begin{align*}
    \nabla \tilde{\ml}(\mathcal{P}^t)=(\{\nabla_{\btheta_i}\tilde{\ml}\}_{i=1}^m,\{\nabla_{\bw_i}\tilde{\ml}\}_{i=1}^m,\{\nabla_{\bpi_i}\tilde{\ml}\}_{i=1}^m,\{\nabla_{\bw}\tilde{\ml}\})(\mathcal{P}^t),
\end{align*}
then it is obvious that
\begin{equation}\label{eq: L(P)}
\begin{split}
    \|\nabla \tilde{\ml}(\mathcal{P}^t)\|^2=\|\nabla_{\bw} \tilde{\ml}(\mathcal{P}^t)\|^2+\sum_{i=1}^m\|\nabla_{\btheta_i} \tilde{\ml}(\mathcal{P}^t)\|^2+\|\nabla_{\bw_i} \tilde{\ml}(\bw_i^t)\|^2+\|\nabla_{\bpi_i} \tilde{\ml}(\bpi_i^t)\|^2.
\end{split}
\end{equation}
Next, we individually bound $\|\nabla_{\bw} \tilde{\ml}(\mathcal{P}^t)\|^2$, $\|\nabla_{\btheta_i} \tilde{\ml}(\mathcal{P}^t)\|^2$, $\|\nabla_{\bw_i} \tilde{\ml}(\bw_i^t)\|^2$ and $\|\nabla_{\bpi_i} \tilde{\ml}(\bpi_i^t)\|^2$.

\noindent\underline{Bound of $\|\nabla_{\bw} \tilde{\ml}(\mathcal{P}^t)\|^2$}. By utilizing the Lagrangian function, we derive   
\begin{equation}\label{eq:bound of w}
\begin{split}
    &\|\nabla_{\bw} \tilde{\ml}(\mathcal{P}^t)\|^2=\|\sum_{i=1}^m\rho(\bw_i^t-\bw^t)-\bpi_i^t\|=0,
\end{split}
\end{equation}
where \eqref{eq:bound of w} follows from Equation (\ref{eq15}).

\noindent\underline{Bound of $\|\nabla_{\btheta_i} \tilde{\ml}(\mathcal{P}^t)\|^2$}. According to the Lagrangian function, We denote
\begin{align}
    \|\nabla_{\btheta_i} \tilde{\ml}(\mathcal{P}^t)\|^2&=\|\alpha_i(\nabla f_i(\btheta_i^t)+\lambda(\btheta_i^t-\bw_i^t))\|^2=\alpha_i^2 \|\nabla f_i(\btheta_i^t)+\lambda\btheta_i^t-\nabla f_i(\btheta_i^{t+1})-\lambda\btheta_i^{t+1}+\nabla f_i(\btheta_i^{t+1})+\lambda(\btheta_i^{t+1}-\bw_i^t)\|^2\notag\\
    &\leq2\alpha_i^2\|\nabla f_i(\btheta_i^t)+\lambda\btheta_i^t-\nabla f_i(\btheta_i^{t+1})-\lambda\btheta_i^{t+1}\|^2+2\alpha_i^2\|\nabla f_i(\btheta_i^{t+1})+\lambda(\btheta_i^{t+1}-\bw_i^t)\|^2\label{eq:63 follows from proposition jensen}\\
    &\leq4\alpha_i^2\|\nabla f_i(\btheta_i^t)-\nabla f_i(\btheta_i^{t+1})\|^2+4\alpha_i^2\lambda^2\|\btheta_i^t-\btheta_i^{t+1}\|^2+2\epsilon_i^{t+1}\label{eq:64 follows from jensen and 16}\\
    &\leq4\alpha_i^2(L^2+\lambda^2)\|\btheta_i^{t+1}-\btheta_i^{t}\|^2+2\epsilon_i^{t+1}\label{eq: gradient of theta},
\end{align}
where \eqref{eq:63 follows from proposition jensen} follows from Proposition \ref{jensen}, \eqref{eq:64 follows from jensen and 16} follows from Proposition \ref{jensen} and (\ref{eq16}), and \eqref{eq: gradient of theta} is due to the $L$-smoothness of $f_i$.

\noindent\underline{Bound of $\|\nabla_{\bw_i} \tilde{\ml}(\mathcal{P}^t)\|^2$}. Leveraging the Lagrangian function, we denote    
\begin{align}
    \|\nabla_{\bw_i} \tilde{\ml}(\mathcal{P}^t)\|^2&=\|\alpha_i\lambda(\bw_i^t-\btheta_i^t)+\bpi_i^t+\rho(\bw_i^t-\bw^t)\|^2 \notag\\
    &=\|\rho(\bw_i^t-\bw^t)\|^2\label{eq:66 follows from lemma 2}\\
    &=\|\rho(\bw_i^t-\bw_i^{t+1}+\bw_i^{t+1}-\bw^t)\|^2  \label{eq: 67 follows from eq14}\\
    &\leq2\rho^2\|\bw_i^t-\bw_i^{t+1}\|^2+2\|\bpi_i^{t+1}-\bpi_i^t\|\\
    &\leq(2\rho^2+4\lambda^2\alpha_i^2)\|\bw_i^t-\bw_i^{t+1}\|^2+4\lambda^2\alpha_i^2\|\btheta_i^{t+1}-\btheta_i^t\|^2.\label{eq: wi-w}
\end{align}   
where \eqref{eq:66 follows from lemma 2} follows from Lemma \ref{lemma2}, \eqref{eq: 67 follows from eq14} follows from Equation (\ref{eq14}) and Proposition \ref{jensen}, and \eqref{eq: wi-w} follows from Lemma \ref{lemma2} and Proposition \ref{jensen}.

\noindent\underline{Bound of $\|\nabla_{\bpi_i} \tilde{\ml}(\mathcal{P}^t)\|^2$}. By using the Lagrangian function, we denote    
\begin{align}\label{eq: bound of pi}
    \|\nabla_{\bpi_i} \tilde{\ml}(\mathcal{P}^t)\|^2&=\|\bw_i^t-\bw^t\|^2\notag\\
    &\leq\frac{2\rho^2+4\lambda^2\alpha_i^2}{\rho^2}\|\bw_i^t-\bw_i^{t+1}\|^2+\frac{4\lambda^2\alpha_i^2}{\rho^2}\|\btheta_i^{t+1}-\btheta_i^t\|^2.    
\end{align}
where \eqref{eq: bound of pi} follows from \eqref{eq:66 follows from lemma 2} and (\ref{eq: wi-w}). Next, by substituting (\ref{eq:bound of w}), (\ref{eq: gradient of theta}), (\ref{eq: wi-w}), and (\ref{eq: bound of pi}) into (\ref{eq: L(P)}), we obtain
\begin{align}\label{eq:48}
    &\|\nabla \tilde{\ml}(\mathcal{P}^t)\|^2=\|\nabla_{\bw} \tilde{\ml}(\mathcal{P}^t)\|^2+\sum_{i=1}^m\|\nabla_{\btheta_i} \tilde{\ml}(\mathcal{P}^t)\|^2+\|\nabla_{\bw_i} \tilde{\ml}(\bw_i^t)\|^2+\|\nabla_{\bpi_i} \tilde{\ml}(\bpi_i^t)\|^2\notag\\
    &\leq\sum_{i=1}^m\left(4\lambda^2\alpha_i^2(1+\frac{1}{\rho^2})+2\alpha_i^2(L^2+2\lambda^2)\right)\|\btheta_i^{t+1}-\btheta_i^{t}\|^2+\left(4\lambda^2\alpha_i^2(1+\frac{1}{\rho^2})+2\rho^2+2\right)\|\bw_i^{t+1}-\bw_i^{t}\|^2+2\epsilon_i^{t+1}\notag\\
    &\leq D_2\sum_{i=1}^m\left(\|\btheta_i^{t+1}-\btheta_i^{t}\|^2+\|\bw_i^{t+1}-\bw_i^{t}\|^2+\|\bw^{t+1}-\bw^{t}\|^2+\epsilon_{i}^{t+1}\right)
\end{align}
where $D_2:=\max_i\{4\lambda^2\alpha_{i}^2(1+\frac{1}{\rho^2})+2\alpha_{i}^2(L^2+2\lambda^2)+2\rho^2+2\}$.
\end{proof}

\subsection{Proof of Theorem \ref{theorem: objective function values convergence}} \label{Proof of Theorem 1}
\begin{lemma}\label{lemma: non increasing results}
    Let $\mathcal{P}^t$ be the sequence generated by Algorithm \ref{alg: flame on server side}, then the following results hold under Assumption \ref{assmption lipschitz}.
    \begin{itemize}
        \item[a)] Sequences $\{\tilde{L}(\mathcal{P}^t)\}$ is non-increasing.
        \item[b)] $\tilde{L}(\mathcal{P}^t)\geq f(\Theta^t,W^t)\geq f^*\geq-\infty$ for any $t\geq 1$.
        \item[c)] For any $i\in[m]$, the limits of the following terms are zero,
        \begin{align}\label{eq: 71 in lemma 9}
            (\epsilon_i^{t+1},\btheta_i^{t+1}-\btheta_i^{t},\bw^{t+1}-\bw^{t},\bw_i^{t+1}-\bw_i^{t},\bw_i^{t+1}-\bw^{t},\bpi_i^{t+1}-\bpi_i^{t})\to 0.
        \end{align}
    \end{itemize}
\end{lemma}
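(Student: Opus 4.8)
The plan is to derive all three parts as direct consequences of the two key lemmas already established (Lemma \ref{lemma1} on sufficient descent and Lemma \ref{lemma: L0>F*} on the lower bound), together with the telescoping trick for the Lyapunov function $\tilde{\ml}$. First I would prove part (a): by Lemma \ref{lemma1}, under the stated parameter choices on $\rho$, $\lambda$, $\alpha_i$, and $L$, we have $\tilde{\ml}(\mathcal{P}^t)-\tilde{\ml}(\mathcal{P}^{t+1})\geq \mathcal{D}_1\Delta\Gamma^{t+1}\geq 0$ for all $t\geq 0$, since $\Delta\Gamma^{t+1}$ is a sum of squared norms and $\mathcal{D}_1>0$ by the parameter conditions. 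Hence $\{\tilde{\ml}(\mathcal{P}^t)\}$ is non-increasing. For part (b), I would combine the definition $\tilde{\ml}(\mathcal{P}^t)=\ml(\mathcal{P}^t)+\sum_{i=1}^m\iota_i\epsilon_i^t$ with the observation that $\iota_i\epsilon_i^t\geq 0$ (checking that $\iota_i>0$ under the parameter conditions, and $\epsilon_i^t\geq 0$ by construction), so $\tilde{\ml}(\mathcal{P}^t)\geq \ml(\mathcal{P}^t)$. Then I would invoke the chain of equalities in the proof of Lemma \ref{lemma: L0>F*} — specifically \eqref{eq: 39 follows from optimal condition} — which shows $\ml(\mathcal{P}^t)=\sum_i\alpha_i(f_i(\btheta_i^t)+\frac{\lambda}{2}\|\btheta_i^t-\bw^t\|^2)+\sum_i\frac{\rho-\lambda\alpha_i}{2}\|\bw_i^t-\bw^t\|^2$; dropping the nonnegative second sum (valid since $\rho\geq\lambda\alpha_i$) gives $\ml(\mathcal{P}^t)\geq F^*\geq f^*$, where the last inequality follows because $F^*$ is the minimum over the joint variable while $f^*=f(\Theta^*,\bw^*)$ is the same quantity (note $F^*=f^*$ by definition of Problem \eqref{eq: bi-variable problem}); finiteness $f^*>-\infty$ follows from Assumption \ref{assumption: either real analytic or semialgebraic}(a), the nonnegativity of each $f_i$.

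For part (c), I would telescope the sufficient-descent inequality: summing $\tilde{\ml}(\mathcal{P}^t)-\tilde{\ml}(\mathcal{P}^{t+1})\geq\mathcal{D}_1\Delta\Gamma^{t+1}$ over $t=0,\dots,T-1$ yields $\mathcal{D}_1\sum_{t=0}^{T-1}\Delta\Gamma^{t+1}\leq \tilde{\ml}(\mathcal{P}^0)-\tilde{\ml}(\mathcal{P}^T)\leq \tilde{\ml}(\mathcal{P}^0)-f^*<\infty$ using part (b). Letting $T\to\infty$, the series $\sum_{t}\Delta\Gamma^{t+1}$ converges, so its general term vanishes: $\Delta\Gamma^{t+1}\to 0$, which forces $\bw^{t+1}-\bw^t\to 0$, $\bw_i^{t+1}-\bw_i^t\to 0$, and $\btheta_i^{t+1}-\btheta_i^t\to 0$ for each $i$. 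Then $\epsilon_i^{t+1}\to 0$ follows from condition \eqref{eq: iteration condition}: for $i\in\mathcal{S}^t$ we have $\epsilon_i^{t+1}\leq\upsilon_i\epsilon_i^t$ with $\upsilon_i\in(0,1)$, and since each client is selected infinitely often (or, more carefully, since the total number of selection events is infinite and the unselected steps leave $\epsilon_i$ unchanged, the product of contraction factors drives it to zero), $\epsilon_i^t\to 0$. Next, $\bpi_i^{t+1}-\bpi_i^t\to 0$ follows from \eqref{eq14}, which gives $\bpi_i^{t+1}-\bpi_i^t=\rho(\bw_i^{t+1}-\bw^t)$ on selected clients (and $0$ otherwise), combined with $\bw_i^{t+1}-\bw^t=(\bw_i^{t+1}-\bw_i^t)+(\bw_i^t-\bw^{t-1})+(\bw^{t-1}-\bw^t)$; alternatively use Lemma \ref{lemma2} to write $\bpi_i^{t+1}-\bpi_i^t=\lambda\alpha_i(\btheta_i^{t+1}-\btheta_i^t)-\lambda\alpha_i(\bw_i^{t+1}-\bw_i^t)\to 0$. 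Finally $\bw_i^{t+1}-\bw^t\to 0$ follows directly from $\bpi_i^{t+1}-\bpi_i^t=\rho(\bw_i^{t+1}-\bw^t)\to 0$.

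The main obstacle I anticipate is the careful handling of partial participation in part (c): since only clients in $\mathcal{S}^t$ are updated at round $t$, the increments $\btheta_i^{t+1}-\btheta_i^t$ etc.\ are literally zero for $i\notin\mathcal{S}^t$, so the vanishing of $\Delta\Gamma^{t+1}$ only directly controls the selected clients at each round; one must argue that the "active" increments still vanish along the subsequence of rounds where $i$ is selected, and that this suffices because the parameters are frozen in between. A secondary subtlety is verifying the positivity of the constants $\mathcal{D}_1$ and $\iota_i$ (hence $\tilde{\ml}\geq\ml$ and the descent is genuine) under the parameter conditions $\frac{\lambda^2\alpha_i^2(1+\rho)}{\rho^2}-\frac{\lambda\alpha_i+\rho}{2}<0$, $\rho\geq\lambda\alpha_i$, and $\frac{1-\rho^2}{\rho^2}\lambda^2\alpha_i^2-\frac{L\alpha_i+\rho}{2}>0$; this is a routine but necessary sign-checking step that I would state explicitly. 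Everything else is telescoping and extraction of vanishing subsequences, which is standard.
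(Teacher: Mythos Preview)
Your proposal is correct and follows essentially the same route as the paper's own proof: part (a) is Lemma~\ref{lemma1} verbatim; part (b) combines $\iota_i\epsilon_i^t\geq 0$ with the identity $\ml(\mathcal{P}^t)=f(\Theta^t,\bw^t)+\sum_i\frac{\rho-\lambda\alpha_i}{2}\|\bw_i^t-\bw^t\|^2$ (which is exactly the computation behind Lemma~\ref{lemma: L0>F*}); part (c) telescopes Lemma~\ref{lemma1}, extracts $\Delta\Gamma^{t+1}\to 0$, uses Lemma~\ref{lemma2} for $\bpi_i^{t+1}-\bpi_i^t\to 0$, and then \eqref{eq14} for $\bw_i^{t+1}-\bw^t\to 0$. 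Your flagged subtlety about partial participation (each client must be selected infinitely often for $\epsilon_i^t\to 0$) is a point the paper itself glosses over, so you are if anything slightly more careful here.
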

\begin{proof}
a) Following from Lemma \ref{lemma1}, we have
\begin{align}
    \tilde{\ml}(\mathcal{P}^{t})-\tilde{\ml}(\mathcal{P}^{t+1})\geq\mathcal{D}_1\sum_{i=1}^m(\|\bw^{t+1}-\bw^{t}\|^2+\|\bw^{t+1}_i-\bw_i^{t}\|^2+\|\btheta_i^{t+1}-\btheta_i^t\|^2)\geq 0,
\end{align}
which implies $\{\ml(\mathcal{P}^t)\}$ is non-increasing.

b) We consider bounding the following term:
\begin{align}
    \tilde{\ml}(\mathcal{P}^t)-f(\Theta^t,\bw^t)&=\sum_{i=1}^m \Bigl(\frac{\lambda\alpha_i}{2}\bigl(\|\btheta_i^t-\bw_i^t\|^2-\|\btheta_i^t-\bw^t\|^2\bigr)+\langle\bpi_i^t,\bw_i^t-\bw^t\rangle+\frac{\rho}{2}\|\bw_i^t-\bw^t\|^2+\iota_i\epsilon_i^{t}\Bigr)\notag\\
    &=\sum_{i=1}^m \Bigl(\frac{\lambda\alpha_i}{2}\langle\bw^t+\bw_i^t-2\btheta_i^t,\bw_i^t-\bw^t\rangle+\langle\lambda\alpha_i(\btheta_i^t-\bw_i^t),\bw_i^t-\bw^t\rangle+\frac{\rho}{2}\|\bw_i^t-\bw^t\|^2+\iota_i\epsilon_i^{t}\Bigr)\label{eq74 follows from lemma 3}\\
    &=\sum_{i=1}^m\Bigl(\frac{\rho-\lambda\alpha_i}{2}\|\bw_i^t-\bw^t\|^2+\iota_i\epsilon_i^{t}\Bigr)\geq0\label{eq: 75 follows from rho>lambda alpha},
\end{align}
where \eqref{eq74 follows from lemma 3} follows from Lemma \ref{lemma2} and \eqref{eq: 75 follows from rho>lambda alpha} follows from the fact that $\rho\geq\lambda\alpha_i$.

c) Following from the fact that $\epsilon_i^{t+1}=\upsilon_i\epsilon_i^t$ and $0<\upsilon_i<1$, then it can be easily indicated that $\epsilon_i^{t+1}\to 0$.
Next, under Lemma \ref{lemma1} and $\tilde{L}^t>-\infty$, we obtain
\begin{align}
    D_1\sum_{t=0}^{\infty}\Delta\Gamma^{t+1}\leq \sum_{t=0}^{\infty} \Bigl(\tilde{\ml}(\mathcal{P}^{t})-\tilde{\ml}(\mathcal{P}^{t+1})\Bigr)\leq\tilde{\ml}(\mathcal{P}^{0})-f^*\leq+\infty,
\end{align}
which implies the sequences $\|\bw^{t+1}-\bw^{t}\|\to0$, $\|\bw_{i}^{t+1}-\bw_{i}^{t}\|\to0$, and $\|\btheta_{i}^{t+1}-\btheta_{i}^{t}\|\to0$. Next, we consider
\begin{align}
    \|\bpi_i^{t+1}-\bpi_i^{t}\|&=\lambda\alpha_i\|\btheta_i^{t+1}-\btheta_i^{t}+\bw_i^{t}-\bw_i^{t+1}\|\notag\\
    &\leq 2\lambda\alpha_i(\|\btheta_i^{t+1}-\btheta_i^{t}\|+\|\bw_i^{t}-\bw_i^{t+1}\|),
\end{align}
which implies $\|\bpi_i^{t+1}-\bpi_i^{t}\|\to 0$. Furthermore, following from the fact that $\boldsymbol{\pi}_i^{t+1}=\boldsymbol{\pi}_i^t+\rho(\bw_i^{t+1}-\bw^{t})$, we have
\begin{align}
    \|\bw_i^{t+1}-\bw^t\|=\frac{1}{\rho}\|\bpi_i^{t+1}-\bpi_i^{t}\|\to 0,
\end{align}
which completes the proof.

\end{proof}

\begin{proof}[Proof of Theorem \ref{theorem: objective function values convergence}]
a) Based on Lemma \ref{lemma: non increasing results}, we have $\tilde{\ml}(\mathcal{P}^1)\geq f(\Theta^t,\bw^t)=\sum_{i=1}^{m}\alpha_i \Bigl(f_i(\btheta_i)+\frac{\lambda}{2}||\btheta_i-\bw||^2\Bigr)$. 
Along with the coercive property of $f_i$, we can infer that the sequences $\{\Theta^t\}$ and $\{\bw^t\}$ are bounded. Following from the fact that $\|\bw_i^{t+1}-\bw^{t}\|\to 0$, we can infer that the sequence $\{\bw_i^t\}$ is bounded. Finally, we consider bounding $\bpi_i^t$ as
\begin{align}
    \|\bpi_{i}^{t}\|=\lambda\alpha_i\|\btheta_i^{t}-\bw_i^{t}\|\leq\lambda\alpha_i(\|\btheta_i^{t}\|+\|\bw_i^{t}\|)\leq+\infty,
\end{align}
which completes the proof.

b) Lemma \ref{lemma: non increasing results} indicates that $\{\tilde{\ml}(\mathcal{P}^t)\}$ is non-increasing and lower bounded. Then it is not hard to indicate that $\tilde{\ml}(\mathcal{P}^t)\to \ml(\mathcal{P}^t)$ due to the fact that $\epsilon_i^{t+1}\to 0$. Next, we consider
\begin{equation}
\begin{split}
    \ml(\mathcal{P}^{t+1})-\tilde{f}(\Theta^{t+1},W^{t+1})
    =\sum_{i=1}^m \langle\bpi_i^{t+1},\bw_i^{t+1}-\bw^{t+1}\rangle+\frac{\rho}{2}\|\bw_i^{t+1}-\bw^{t+1}\|^2\to 0,
\end{split}
\end{equation}
\begin{equation}
\begin{split}
    \ml(\mathcal{P}^{t+1})-f(\Theta^{t+1},\bw^{t+1})
    =\sum_{i=1}^m\Bigl(\frac{\rho-\lambda\alpha_i}{2}\|\bw_i^{t+1}-\bw^{t+1}\|^2\Bigr)\to 0,
\end{split}
\end{equation}
which completes the proof.

c) We begin by defining
\begin{align}
    g_i(\btheta_i,\bw_i):&=\nabla_{\btheta_i}\ml(\btheta_i,\bw_i,\bpi_i,\bw)\notag\\
    &=\alpha_i\Bigl(\nabla f_i(\btheta_i)+\lambda(\btheta_i-\bw_i)\Bigr).
\end{align}

Then it holds that $\|g_i(\btheta_i^{t+1},\bw_i^t)\|^2\leq\epsilon^{t+1}_i$. Next, we consider the following terms
\begin{align}
    \|\nabla_{\Theta}\tilde{f}(\Theta^{t+1},W^{t+1})\|&=\|\sum_{i=1}^m g_i(\btheta_i^{t+1},\bw_i^t)+\lambda \alpha_i(\bw_i^{t}-\bw_i^{t+1})\|\notag\\
    &\leq\sum_{i=1}^m\|g_i(\btheta_i^{t+1},\bw_i^t)\|+\lambda\alpha_i \|(\bw_i^{t}-\bw_i^{t+1})\|\to 0,\label{eq: 82 follows from lemma 9 c}\\
    \|\nabla_{\Theta}f(\Theta^{t+1},W^{t+1})\|&=\|\sum_{i=1}^m g_i(\btheta_i^{t+1},\bw_i^t)+\lambda\alpha_i (\bw_i^{t}-\bw^{t+1})\|\notag\\
    &\leq\sum_{i=1}^m\|g_i(\btheta_i^{t+1},\bw_i^t)\|+\lambda\alpha_i \|(\bw_i^{t}-\bw^{t+1})\|\to 0,\label{eq: 83 follows from eq 71 in lemma 9}\\
    \|\nabla_{W}\tilde{f}(\Theta^{t+1},W^{t+1})\|&=\|\sum_{i=1}^m \lambda\alpha_i(\bw_i^{t+1}-\btheta_i^{t+1})\|\notag\\
    &=\|\sum_{i=1}^m\bpi_i^{t+1}\|=\|\rho\sum_{i=1}^m (\bw^{t+1}-\bw_i^{t+1})\|\label{eq:60}\\
    &\leq \|\rho\sum_{i=1}^m (\bw^{t+1}-\bw^t)\|+\|\rho\sum_{i=1}^m(\bw^t-\bw_i^{t+1})\|\to 0,\label{eq:61}\\
    \|\nabla_{\bw}f(\Theta^{t+1},W^{t+1})\|&=\|\sum_{i=1}^m \lambda\alpha_i(\bw^{t+1}-\btheta_i^{t+1})\|\notag\\
    &=\|\sum_{i=1}^m\bpi_i^{t+1}+\bw^{t+1}-\bw_i^{t+1}\|\to 0\label{eq: 86 follows from 84 and 85},
\end{align}
where \eqref{eq: 82 follows from lemma 9 c} and \eqref{eq: 83 follows from eq 71 in lemma 9} follow from \eqref{eq: 71 in lemma 9} in Lemma \ref{lemma: non increasing results}, \eqref{eq:60} follows from Lemma \ref{lemma2} and \eqref{eq15}, \eqref{eq:60} follows from \eqref{eq: 71 in lemma 9} in Lemma \ref{lemma: non increasing results}, and \eqref{eq: 86 follows from 84 and 85} follows from \eqref{eq:60} and \eqref{eq:61}.
\end{proof}

\subsection{Proof of Theorem \ref{theorem: sequences convergence}}\label{Proof of Theorem 1.5}
\begin{proof}
a) Let $(\Theta^{\infty}, W^{\infty}, \Pi^{\infty}, \bw^{\infty})$ be any accumulating point of the sequence $\{(\Theta^t,W^t,\Pi^t,\bw^t)\}$. Then we consider 
\begin{align}\label{eq:stationary point condition 1}
    g_i(\btheta_i^{\infty},\bw_i^{\infty})=\alpha_i\Bigl(\nabla f_i(\btheta_i^{\infty})+\lambda(\btheta_i^{\infty}-\bw_i^{\infty})\Bigr)=0.
\end{align}

Based on \eqref{eq:60}, we have
\begin{align}\label{eq:stationary point condition 2}
    \sum_{i=1}^m \bpi_i^{\infty} = 0.
\end{align}

According to Lemma \ref{lemma2}, we have
\begin{align}\label{eq:stationary point condition 3}
    \lambda\alpha_i(\btheta_i^{\infty}-\bw_i^{\infty})-\bpi_{i}^{\infty} = 0
\end{align}

Following from Lemma \ref{lemma: non increasing results}, we have
\begin{align}\label{eq:stationary point condition 4}
    \bw_i^{\infty}-\bw^{\infty} =0.
\end{align}

Combining \eqref{eq:stationary point condition 1}, \eqref{eq:stationary point condition 2}, \eqref{eq:stationary point condition 3}, and \eqref{eq:stationary point condition 4} yields the stationary point condition in \eqref{eq: optimal condition of (7)}. Moreover, it can be easily inferred that $(\Theta^{\infty}, W^{\infty})$ is a stationary point of Problem \eqref{eq: bi-variable problem}, which completes the proof.

b) Following from Proposition \ref{proposition: kl property of L}, it holds that $\tilde{\ml}$ is a \kl function.
According to Definition \ref{definition: kl property}, we have
\begin{align}
    \psi'\Bigl(\tilde{\ml}(\mathcal{P}^t)-\tilde{\ml}(\mathcal{P}^{\infty})\Bigr)\text{dist}\Bigl(0,\,\partial\tilde{\ml}(\mathcal{P}^t)\Bigr)\geq 1,
\end{align}
which implies 
\begin{align}
    \psi'\Bigl(\tilde{\ml}(\mathcal{P}^t)-\tilde{\ml}(\mathcal{P}^{\infty})\Bigr)\geq\frac{1}{\text{dist}\Bigl(0,\,\partial\tilde{\ml}(\mathcal{P}^t)\Bigr)}\geq \frac{1}{\sqrt{\mathcal{D}_2(\Delta\Gamma^{t+1}+\varepsilon^{t+1})}}.
\end{align}

Since $\psi$ is a concave function, we have
\begin{align}
    \psi\Bigl(\tilde{\ml}(\mathcal{P}^{t+1})-\tilde{\ml}(\mathcal{P}^{\infty})\Bigr)-\psi\Bigl(\tilde{\ml}(\mathcal{P}^t)-\tilde{\ml}(\mathcal{P}^{\infty})\Bigr)&\leq\psi'\Bigl(\tilde{\ml}(\mathcal{P}^t)-\tilde{\ml}(\mathcal{P}^{\infty})\Bigr)\Bigl(\tilde{\ml}(\mathcal{P}^{t+1})-\tilde{\ml}(\mathcal{P}^t)\Bigr)\notag\\
    &\leq-\frac{\mathcal{D}_1\Delta\Gamma^{t+1}}{\sqrt{\mathcal{D}_2(\Delta\Gamma^{t+1}+\varepsilon^{t+1})}},
\end{align}
which implies 
\begin{align}
    \sqrt{\Delta\Gamma^{t+1}}&\leq \sqrt{\frac{\sqrt{\mathcal{D}_2(\Delta\Gamma^{t+1}+\varepsilon^{t+1})}}{\mathcal{D}_1}\Biggl(\psi\Bigl(\tilde{\ml}(\mathcal{P}^t)-\tilde{\ml}(\mathcal{P}^{\infty})\Bigr)-\psi\Bigl(\tilde{\ml}(\mathcal{P}^{t+1})-\tilde{\ml}(\mathcal{P}^{\infty})\Bigr)\Biggr)}\notag\\
    &\leq\frac{1}{2}\frac{\sqrt{\mathcal{D}_2(\Delta\Gamma^{t+1}+\varepsilon^{t+1})}}{\mathcal{D}_1}+\frac{1}{2}\Biggl(\psi\Bigl(\tilde{\ml}(\mathcal{P}^t)-\tilde{\ml}(\mathcal{P}^{\infty})\Bigr)-\psi\Bigl(\tilde{\ml}(\mathcal{P}^{t+1})-\tilde{\ml}(\mathcal{P}^{\infty})\Bigr)\Biggr)\\
    &\leq\frac{1}{2}\frac{\sqrt{\mathcal{D}_2\Delta\Gamma^{t+1}}}{\mathcal{D}_1}+\frac{1}{2}\frac{\sqrt{\mathcal{D}_2\varepsilon^{t+1}}}{\mathcal{D}_1}+\frac{1}{2}\Biggl(\psi\Bigl(\tilde{\ml}(\mathcal{P}^t)-\tilde{\ml}(\mathcal{P}^{\infty})\Bigr)-\psi\Bigl(\tilde{\ml}(\mathcal{P}^{t+1})-\tilde{\ml}(\mathcal{P}^{\infty})\Bigr)\Biggr)\label{eq: 79}.
\end{align}
Summing over \eqref{eq: 79} from 0 to $+\infty$ yields
\begin{align}
    \sum_{t=0}^{+\infty}\sqrt{\Delta\Gamma^{t+1}}\leq\sum_{t=0}^{+\infty}\frac{1}{2}\frac{\sqrt{\mathcal{D}_2\Delta\Gamma^{t+1}}}{\mathcal{D}_1}+\frac{1}{2}\frac{\sqrt{\mathcal{D}_2\varepsilon^{t+1}}}{\mathcal{D}_1}+ \frac{1}{2}\psi\Bigl(\tilde{\ml}(\mathcal{P}^0)-\tilde{\ml}(\mathcal{P}^{\infty}\Bigr),
\end{align}
which implies 
\begin{align}
    \sum_{t=0}^{+\infty}\sqrt{\Delta\Gamma^{t+1}}\leq \frac{D_1}{2D_1-\sqrt{D_2}}\psi\Bigl(\tilde{\ml}(\mathcal{P}^0)-\tilde{\ml}(\mathcal{P}^{\infty}\Bigr)+\sum_{t=1}^{+\infty}\frac{\sqrt{D_2\varepsilon^{t+1}}}{2D_1-\sqrt{D_2}}\leq \infty
\end{align}
Following from the definition of $\Delta\Gamma^{t+1}$, we have
\begin{align}\label{eq: 80}
    \Delta\Gamma^{t+1}=\sum_{i=1}^m(\|\bw^{t+1}-\bw^{t}\|^2+\|\bw^{t+1}_i-\bw_i^{t}\|^2+\|\btheta_i^{t+1}-\btheta_i^t\|^2)<\infty.
\end{align}
Then we can infer from \eqref{eq: 80} that the sequence $\{\Theta^t,\bw^t,W^t\}$ is convergent. Next, by Lemma \ref{lemma2}, we can infer that $\{\Pi^t\}$ is convergent. Overall, the sequence $\{\mathcal{P}^t\}$ is convergent.
\end{proof}

\subsection{Proof of Theorem \ref{theorem1}}\label{proof of theorem 2}

\begin{proof}
Following from Lemma \ref{lemma1} and Lemma \ref{lemma: relative error}, it holds that
\begin{align}\label{eq: 99 to telescope}
    \|\partial \tilde{L}(\mathcal{P}^t)\|^2\leq D_2(\Delta\Gamma^{t+1}+\varepsilon^{t+1})\leq\frac{D_2}{D_1}\Bigl(\tilde{\ml}(\mathcal{P}^{t})-\tilde{\ml}(\mathcal{P}^{t+1})\Bigr)+D_2\varepsilon^{t+1}
\end{align}
Telescoping \eqref{eq: 99 to telescope} and dividing $T$, we obtain
\begin{align}
    \frac{1}{T}\sum_{t=0}^{T-1}\|\partial \tilde{L}(\mathcal{P}^t)\|^2&\leq\frac{D_2}{D_1 T}\Bigl(\tilde{\ml}(\mathcal{P}^{0})-\tilde{\ml}(\mathcal{P}^{T})\Bigr)+D_2\varepsilon^{1}\notag\\
    &\leq\frac{D_2}{D_1 T}\Bigl(\tilde{\ml}(\mathcal{P}^{0})-f^*\Bigr)+D_2\varepsilon^{1},
\end{align}
which completes the proof.
\end{proof}

\subsection{Proof of Theorem \ref{theorem: convergence rate based on kl property}} \label{Proof of Theorem 4}
\begin{proof}
Given the desingularizing function $\psi(x)=\frac{\sqrt{c}}{1-\tau}x^{1-\tau}$, let $\varepsilon^{t+1}:=\sum_{i=1}^m \epsilon_i^{t+1}$ we have
\begin{align}
    1&\leq\psi'\Bigl(\tilde{\ml}(\mathcal{P}^{t+1})-\tilde{\ml}(\mathcal{P}^{\infty})\Bigr)^2 \text{dist}\Bigl(0,\,\partial\tilde{\ml}(\mathcal{P}^t)\Bigr)^2\\
    &\leq c \Bigl(\tilde{\ml}(\mathcal{P}^{t+1})-\tilde{\ml}(\mathcal{P}^{\infty})\Bigr)^{-2\tau} \mathcal{D}_2(\Delta\Gamma^{t+1}+\varepsilon^{t+1})\label{eq:103 follows from lemma 1}\\
    &\leq  c \Bigl(\tilde{\ml}(\mathcal{P}^{t+1})-\tilde{\ml}(\mathcal{P}^{\infty})\Bigr)^{-2\tau}D_2\Bigl(\frac{\tilde{\ml}(\mathcal{P}^t)-\tilde{\ml}(\mathcal{P}^{t+1})}{D_1}+\varepsilon^{t+1}\Bigr)\label{eq: 104 follows from lemma 2},
\end{align}
where \eqref{eq:103 follows from lemma 1} follows from Lemma \ref{lemma1} and \eqref{eq: 104 follows from lemma 2} follows from Lemma \ref{lemma: relative error}. Then we can obtain
\begin{align}\label{eq: kl final}
    \frac{D_1}{cD_2} \Bigl(\tilde{\ml}(\mathcal{P}^{t+1})-\tilde{\ml}(\mathcal{P}^{\infty})\Bigr)^{2\tau}\leq\Bigl(\tilde{\ml}(\mathcal{P}^t)-\tilde{\ml}(\mathcal{P}^{\infty})\Bigr)-\Bigl(\tilde{\ml}(\mathcal{P}^{t+1})-\tilde{\ml}(\mathcal{P}^{\infty})\Bigr)+D_1 \varepsilon^{t+1}.
\end{align}
Next, we consider the three cases with respect to $\tau$.
\begin{itemize}
    \item If $\tau=0$, then according to \eqref{eq: kl final}, it holds that 
\begin{align}\label{eq:conflict}
    \Bigl(\tilde{\ml}(\mathcal{P}^t)-\tilde{\ml}(\mathcal{P}^{\infty})\Bigr)-\Bigl(\tilde{\ml}(\mathcal{P}^{t+1})-\tilde{\ml}(\mathcal{P}^{\infty})\Bigr)+D_1 \varepsilon^{t+1}\geq \frac{D_1}{c D_2}.
\end{align}

However, $\tilde{\ml}(\mathcal{P}^t)-\tilde{\ml}(\mathcal{P}^{\infty})\to 0$ and $\varepsilon^{t+1}\to 0$, which are conflict with \eqref{eq:conflict}. Therefore, we can deduce that there must exist $t\geq t_1>0$ such that $\tilde{\ml}(\mathcal{P}^t)-\tilde{\ml}(\mathcal{P}^{\infty})= 0$.
    \item If $\tau\in (0,1/2]$, there must exist $t\geq t_2>0$, such that $0\leq\tilde{\ml}(\mathcal{P}^{t+1})-\tilde{\ml}(\mathcal{P}^{\infty})\leq 1$. Then according to \eqref{eq: kl final}, we have
\begin{align*}
    \Bigl(\tilde{\ml}(\mathcal{P}^t)-\tilde{\ml}(\mathcal{P}^{\infty})\Bigr)-\Bigl(\tilde{\ml}(\mathcal{P}^{t+1})-\tilde{\ml}(\mathcal{P}^{\infty})\Bigr)+D_1 \varepsilon^{t+1}\geq\frac{D_1}{c D_2} \Bigl(\tilde{\ml}(\mathcal{P}^{t+1})-\tilde{\ml}(\mathcal{P}^{\infty})\Bigr)^{2\tau}\geq \frac{D_1}{c D_2} \Bigl(\tilde{\ml}(\mathcal{P}^{t+1})-\tilde{\ml}(\mathcal{P}^{\infty})\Bigr),
\end{align*}
which implies 
\begin{align}
    \tilde{\ml}(\mathcal{P}^{t+1})-\tilde{\ml}(\mathcal{P}^{\infty})&\leq\frac{cD_2}{D_1+cD_2}\Bigl(\tilde{\ml}(\mathcal{P}^t)-\tilde{\ml}(\mathcal{P}^{\infty})+D_1 \varepsilon^{t+1}\Bigr)\\
    &\leq\Bigl(\frac{cD_2}{D_1+cD_2}\Bigr)^2\Bigl(\tilde{\ml}(\mathcal{P}^{t-1})-\tilde{\ml}(\mathcal{P}^{\infty})+D_1 \varepsilon^{t+1}\Bigr)\\
    &\leq\Bigl(\frac{cD_2}{D_1+cD_2}\Bigr)^{t-t_2+1}\Bigl(\tilde{\ml}(\mathcal{P}^{t_2})-\tilde{\ml}(\mathcal{P}^{\infty})+D_1 \varepsilon^{t+1}\Bigr)\\
    &\leq\Bigl(\frac{cD_2}{D_1+cD_2}\Bigr)^{t-t_2+1}\Bigl(\tilde{\ml}(\mathcal{P}^{t_2})-f^*+D_1 \varepsilon^{t+1}\Bigr).
\end{align}
\item If $\tau\in(1/2,1)$, we define a function $\phi(z):=\frac{CD_2}{D_1(1-2\tau)}z^{1-2\tau}$. Let $1>\frac{1}{R}>\min_i{\nu_i}>0$ be a constant, we consider two cases, if $\Bigl(\tilde{\ml}(\mathcal{P}^t)-\tilde{\ml}(\mathcal{P}^{\infty})\Bigr)^{-2\tau}\geq\Bigl(\tilde{\ml}(\mathcal{P}^{t+1})-\tilde{\ml}(\mathcal{P}^{\infty})\Bigr)^{-2\tau}/R$, we obtain
\begin{align}\label{eq: geq 0}
    &\phi\Bigl(\tilde{\ml}(\mathcal{P}^t)-\tilde{\ml}(\mathcal{P}^{\infty})\Bigr)-\phi\Bigl(\tilde{\ml}(\mathcal{P}^{t+1})-\tilde{\ml}(\mathcal{P}^{\infty})\Bigr)\\
    =&\int_{\tilde{\ml}(\mathcal{P}^{t+1})-\tilde{\ml}(\mathcal{P}^{\infty})}^{\tilde{\ml}(\mathcal{P}^t)-\tilde{\ml}(\mathcal{P}^{\infty})}\phi'(z)dz=\int_{\tilde{\ml}(\mathcal{P}^{t+1})-\tilde{\ml}(\mathcal{P}^{\infty})}^{\tilde{\ml}(\mathcal{P}^t)-\tilde{\ml}(\mathcal{P}^{\infty})}\frac{cD_2}{D_1}z^{-2\tau}dz\\
    \geq&\frac{cD_2}{D_1}\Bigl(\tilde{\ml}(\mathcal{P}^t)-\tilde{\ml}(\mathcal{P}^{t+1})\Bigr)\Bigl(\tilde{\ml}(\mathcal{P}^t)-\tilde{\ml}(\mathcal{P}^{\infty})\Bigr)^{-2\tau}\\
    \geq&\frac{cD_2}{RD_1}\Bigl(\tilde{\ml}(\mathcal{P}^t)-\tilde{\ml}(\mathcal{P}^{t+1})\Bigr)\Bigl(\tilde{\ml}(\mathcal{P}^{t+1})-\tilde{\ml}(\mathcal{P}^{\infty})\Bigr)^{-2\tau}\\
    =&\frac{cD_2}{RD_1}\Bigl(\tilde{\ml}(\mathcal{P}^t)-\tilde{\ml}(\mathcal{P}^{t+1})+D_1 \varepsilon^{t+1}\Bigr)\Bigl(\tilde{\ml}(\mathcal{P}^{t+1})-\tilde{\ml}(\mathcal{P}^{\infty})\Bigr)^{-2\tau}-\frac{cD_2}{RD_1}D_1 \varepsilon^{t+1}\Bigl(\tilde{\ml}(\mathcal{P}^{t+1})-\tilde{\ml}(\mathcal{P}^{\infty})\Bigr)^{-2\tau}\\
    =&\frac{1}{R}-\frac{cD_2}{R}\frac{\varepsilon^{t+1}}{\bigl(\tilde{\ml}(\mathcal{P}^{t+1})-\tilde{\ml}(\mathcal{P}^{\infty})\bigr)^{2\tau}}.
\end{align}

\begin{itemize}
    \item[a)]
If $\varepsilon^{t+1}$ is a lower-order infinitesimal of $\Bigl(\tilde{\ml}(\mathcal{P}^{t+1})-\tilde{\ml}(\mathcal{P}^{\infty})\Bigr)^{2\tau}$, then we can obtain
\begin{align}\label{eq: 116 lower-order infinitesimal}
    \lim_{t\to +\infty}\frac{\varepsilon^{t+1}}{\bigl(\tilde{\ml}(\mathcal{P}^{t+1})-\tilde{\ml}(\mathcal{P}^{\infty})\bigr)^{2\tau}}= \lim_{t\to +\infty}\frac{\sum_{i=1}^m\nu_i\epsilon_i^{t+1}}{\bigl(\tilde{\ml}(\mathcal{P}^{t+1})-\tilde{\ml}(\mathcal{P}^{\infty})\bigr)^{2\tau}}=\infty.
\end{align}
Based on \eqref{eq: 116 lower-order infinitesimal}, we can derive the convergence rate of $\bigl(\tilde{\ml}(\mathcal{P}^{t+1})-\tilde{\ml}(\mathcal{P}^{\infty})\bigr)^{2\tau}\to 0$ is faster than $\sum_{i=1}^m\nu_i\epsilon_i^{t+1}\to 0$, which implies 
\begin{align}\label{eq:117 if confilit with our assumption}
    \bigl(\tilde{\ml}(\mathcal{P}^{t+1})-\tilde{\ml}(\mathcal{P}^{\infty})\bigr)^{2\tau}<\min_i \nu_i \times \bigl(\tilde{\ml}(\mathcal{P}^{t})-\tilde{\ml}(\mathcal{P}^{\infty})\bigr)^{2\tau}<\frac{1}{R} \bigl(\tilde{\ml}(\mathcal{P}^{t})-\tilde{\ml}(\mathcal{P}^{\infty})\bigr)^{2\tau},
\end{align}
Note that \eqref{eq:117 if confilit with our assumption} is conflict with $\Bigl(\tilde{\ml}(\mathcal{P}^t)-\tilde{\ml}(\mathcal{P}^{\infty})\Bigr)^{-2\tau}\geq\Bigl(\tilde{\ml}(\mathcal{P}^{t+1})-\tilde{\ml}(\mathcal{P}^{\infty})\Bigr)^{-2\tau}/R$, then $\varepsilon^{t+1}$ cannot be a lower-order infinitesimal of $\Bigl(\tilde{\ml}(\mathcal{P}^{t+1})-\tilde{\ml}(\mathcal{P}^{\infty})\Bigr)^{2\tau}$.
\item[b)]
If $\varepsilon^{t+1}$ is a higher-order infinitesimal of $\Bigl(\tilde{\ml}(\mathcal{P}^{t+1})-\tilde{\ml}(\mathcal{P}^{\infty})\Bigr)^{2\tau}$, then we obtain
\begin{align}
    \lim_{t\to +\infty}\frac{\varepsilon^{t+1}}{\bigl(\tilde{\ml}(\mathcal{P}^{t+1})-\tilde{\ml}(\mathcal{P}^{\infty})\bigr)^{2\tau}}=0.
\end{align}

If $\varepsilon^{t+1}$ and $\Bigl(\tilde{\ml}(\mathcal{P}^{t+1})-\tilde{\ml}(\mathcal{P}^{\infty})\Bigr)^{2\tau}$ are same-order infinitesimals, then we obtain
\begin{align}
    \lim_{t\to +\infty}\frac{\varepsilon^{t+1}}{\bigl(\tilde{\ml}(\mathcal{P}^{t+1})-\tilde{\ml}(\mathcal{P}^{\infty})\bigr)^{2\tau}}=\alpha\neq0.
\end{align}

Then there exists $Q>0$ such that $\frac{\varepsilon^{t+1}}{\bigl(\tilde{\ml}(\mathcal{P}^{t+1})-\tilde{\ml}(\mathcal{P}^{\infty})\bigr)^{2\tau}}\leq Q$. Therefore, we obtain that
\begin{align}
    \phi\Bigl(\tilde{\ml}(\mathcal{P}^t)-\tilde{\ml}(\mathcal{P}^{\infty})\Bigr)-\phi\Bigl(\tilde{\ml}(\mathcal{P}^{t+1})-\tilde{\ml}(\mathcal{P}^{\infty})\Bigr)\geq\frac{1}{R}-\frac{cD_2Q}{R}.
\end{align}
\end{itemize}


Consider another case, if $\Bigl(\tilde{\ml}(\mathcal{P}^t)-\tilde{\ml}(\mathcal{P}^{\infty})\Bigr)^{-2\tau}<\Bigl(\tilde{\ml}(\mathcal{P}^{t+1})-\tilde{\ml}(\mathcal{P}^{\infty})\Bigr)^{-2\tau}/R$, then we can infer
\begin{align}
    R^{\frac{1}{2\tau}}\Bigl(\tilde{\ml}(\mathcal{P}^{t+1})-\tilde{\ml}(\mathcal{P}^{\infty})\Bigr)< \tilde{\ml}(\mathcal{P}^{t})-\tilde{\ml}(\mathcal{P}^{\infty}),
\end{align}
which implies
\begin{align}
    \Bigl(\tilde{\ml}(\mathcal{P}^{t+1})-\tilde{\ml}(\mathcal{P}^{\infty})\Bigr)^{1-2\tau}&>\overline{R} \Bigl(\tilde{\ml}(\mathcal{P}^{t})-\tilde{\ml}(\mathcal{P}^{\infty})\Bigr)^{1-2\tau},\\
    \Bigl(\tilde{\ml}(\mathcal{P}^{t+1})-\tilde{\ml}(\mathcal{P}^{\infty})\Bigr)^{1-2\tau}-\Bigl(\tilde{\ml}(\mathcal{P}^{t})-\tilde{\ml}(\mathcal{P}^{\infty})\Bigr)^{1-2\tau}&>(\overline{R}-1)\Bigl(\tilde{\ml}(\mathcal{P}^{t})-\tilde{\ml}(\mathcal{P}^{\infty})\Bigr)^{1-2\tau},
\end{align}
where $\overline{R}=R^{\frac{2\tau-1}{\tau}}> 1$. Since $\overline{R}-1>0$ and $\tilde{\ml}(\mathcal{P}^{t})-\tilde{\ml}(\mathcal{P}^{\infty})\to 0^{+}$, then there exists $\overline{\mu}>0$ such that $(\overline{R}-1)\Bigl(\tilde{\ml}(\mathcal{P}^{t})-\tilde{\ml}(\mathcal{P}^{\infty})\Bigr)^{1-2\tau}>\overline{\mu}$ for all $t\geq t_3$. 
Therefore, we obtain that
\begin{align}
    \Bigl(\tilde{\ml}(\mathcal{P}^{t+1})-\tilde{\ml}(\mathcal{P}^{\infty})\Bigr)^{1-2\tau}-\Bigl(\tilde{\ml}(\mathcal{P}^{t})-\tilde{\ml}(\mathcal{P}^{\infty})\Bigr)^{1-2\tau}\geq\overline{\mu}>0
\end{align}
for all $t\geq t_3$. Then we bound $\phi\Bigl(\ml(\mathcal{P}^t)-\ml(\mathcal{P}^{\infty})\Bigr)-\phi\Bigl(\ml(\mathcal{P}^{t+1})-\ml(\mathcal{P}^{\infty})\Bigr)$ as 
\begin{align}\label{eq: geq 1}
    \phi\Bigl(\ml(\mathcal{P}^t)-\ml(\mathcal{P}^{\infty})\Bigr)-\phi\Bigl(\ml(\mathcal{P}^{t+1})-\ml(\mathcal{P}^{\infty})\Bigr)
    =&\frac{cD_2}{(1-2\tau)D_1}\Biggl(\Bigl(\ml(\mathcal{P}^t)-\ml(\mathcal{P}^{\infty})\Bigr)^{1-2\tau}-\Bigl(\ml(\mathcal{P}^{t+1})-\ml(\mathcal{P}^{\infty})\Bigr)^{1-2\tau}\Biggr)\notag\\
    \geq&\frac{c\overline{\mu}D_2}{(2\tau-1)D_1}.
\end{align}


If we define $\mu:=\min\{\frac{1}{R}-\frac{cD_2Q}{R},\frac{c\overline{\mu}D_2}{(2\tau-1)D_1}\}>0$, one can combine \eqref{eq: geq 0} and \eqref{eq: geq 1} to obtain that
\begin{align}\label{eq: to sum inequality}
    \phi\Bigl(\ml(\mathcal{P}^t)-\ml(\mathcal{P}^{\infty})\Bigr)-\phi\Bigl(\ml(\mathcal{P}^{t+1})-\ml(\mathcal{P}^{\infty})\Bigr)\geq\mu
\end{align}
for all $t\geq t_3$. By summing \eqref{eq: to sum inequality} from $t_3$ to some $t$ greater than $t_3$, we obtain
\begin{align}
    \phi\Bigl(\ml(\mathcal{P}^{t_3})-\ml(\mathcal{P}^{\infty})\Bigr)-\phi\Bigl(\ml(\mathcal{P}^{t+1})-\ml(\mathcal{P}^{\infty})\Bigr)\geq (t-t_3)\mu,
\end{align}
which implies 
\begin{align}
    \ml(\mathcal{P}^{t+1})-\ml(\mathcal{P}^{\infty})\leq\Bigl(\frac{(2\tau-1)\mu D_1}{cD_2}(t-t_3)\Bigr)^{\frac{1}{1-2\tau}},
\end{align}
which completes the proof.
\end{itemize}
\end{proof}


\newpage
\section{Robustness and Fairness} \label{appendix: Robustness and Fairness}
In this section, we are inspired by \cite{Lin2022Personalized} to consider employing an example of \textit{federated linear regression} to analyze the fairness and robustness of \flame. 
Suppose the truly personalized model on client $i$ is $\btheta_i$, each client possesses $N$ samples\footnote{For simplicity, we assume each client possesses an equal number of samples.}, and the covariate on client $i$ is $\{\bx_{i,j}\}_{j=1}^N$ with $\bx_{i, j}\in\mathbb{R}^d$ is fixed. The observations are generated by $y_{i,j}=\bx_{i,j}^{\top}\btheta_i+z_{i,j}$, where $z_{i,j}$ denotes an i.i.d. Gaussian noise with distribution $\mathcal{N}(0,\sigma^2)$. 
Then the loss on client $i$ is $f_i(\btheta_i)=\frac{1}{2N}\sum_{j=1}^N(y_{i,j}-\bx_{i,j}^{\top}\btheta_i)$.
\subsection{Solutions of \flame}\label{sec: Solutions of flame}
We consider to derive the solution of $\flame$ under the case of federated linear regression.
Let $X_i=(\bx_{i,1},\bx_{i,2},\cdots,\bx_{i,N})$ and $\by_i=(y_{i,1},y_{i,2},\cdots,y_{i,N})^{\top}$, then the observations can be rewritten as $\by_i=X_i\btheta_i + \bz_i$ with $\bz_i\sim\mathcal{N}(\boldsymbol{0},\sigma^2\biden_d)$, and the loss on client $i$ can be rewritten as $f_i(\btheta_i)=\frac{1}{2N}\|X_i\btheta_i-\by_i\|^2$. Suppose $X_i^{\top}X_i$ is invertible, then the estimator of $\btheta_i$ is 
\begin{align}\label{eq: 129 solution of linear regression}
    \hat{\btheta}_i=(X_i^{\top}X_i)^{-1}X_i^{\top}\by_i.
\end{align}
Recall the optimization problem of \flame is 
\begin{align}\label{eq: appendix optimization problem}
    \min_{\bw,\btheta_i}\Bigl\{f(\Theta,\bw):=\sum_{i=1}^{m}\alpha_i \Bigl(f_i(\btheta_i)+\frac{\lambda}{2}||\btheta_i-\bw||^2\Bigr)\Bigr\}
\end{align}
with optimal conditions
\begin{align}\label{eq: appendix optimal condition}
\left\{\begin{aligned}
\nabla f_i(\btheta_i^*)+\lambda(\btheta_i^*-\bw^*)& =0, & i \in[m], \\
\bw^* - \frac{1}{m}\sum_{i=1}^m\btheta_i^*& =0. &  \\
\end{aligned}\right.    
\end{align}
Substituting the loss function $f_i(\btheta_i)=\frac{1}{2N}\|X_i\btheta_i-\by_i\|^2$ of linear regression into \eqref{eq: appendix optimization problem} and \eqref{eq: appendix optimal condition} yields
\begin{align}
    f(\Theta,\bw)=\sum_{i=1}^{m}\alpha_i \Bigl(\frac{1}{2N}\|X_i\btheta_i-\by_i\|^2+\frac{\lambda}{2}||\btheta_i-\bw||^2\Bigr),
\end{align}
and
\begin{align}\label{eq: derive from eq 133}
\left\{\begin{aligned}
\frac{1}{N}X_i^{\top}(X_i\btheta_i^*-\by_i)+\lambda(\btheta_i^*-\bw^*)& =0, & i \in[m], \\
\bw^* - \sum_{i=1}^m\alpha_i\btheta_i^*& =0. &  \\
\end{aligned}\right.     
\end{align}
Without loss of generality, we let $\alpha_i=\frac{1}{m}$, then we can derive from \eqref{eq: derive from eq 133} that
\begin{align}\label{eq: solutions of flame appendix}
\left\{\begin{aligned}
\bw^*& =\Bigl(\biden_d-\frac{1}{m}\sum_{i=1}^m \lambda\Bigl(\frac{1}{N}X_i^{\top}X_i+\lambda \biden_d\Bigr)^{-1}\Bigr)^{-1}\frac{1}{m}\sum_{i=1}^m \lambda\Bigl(\frac{1}{N}X_i^{\top}X_i+\lambda \biden_d\Bigr)^{-1}\frac{1}{N}X_i^{\top}X_i\hat{\btheta}_i, & \\
\btheta_i^*& =\Bigl(\frac{1}{N}X_i^{\top}X_i+\lambda \biden_d\Bigr)^{-1}\Bigl(\frac{1}{N}X_i^{\top}X_i\hat{\btheta}_i+\lambda\bw^*\Bigr),\quad i \in[m]. &  \\
\end{aligned}\right.     
\end{align}
However, for the general $X_i$, obtaining a concise expression for $\bw^{*}$ and $\btheta_i^{*}$ is challenging. To streamline the calculations, we assume $X_i^{\top}X_i=N b_i \biden_d$. Consequently, the solution in \eqref{eq: solutions of flame appendix} can be simplified as 
\begin{align}\label{eq: simplified solutions of flame appendix}
\left\{\begin{aligned}
\bw^*& =\frac{\sum_{i=1}^m b_i\hat{\btheta}_i/(b_i+\lambda)}{\sum_{i=1}^m b_i/(b_i+\lambda)}, & \\
\btheta_i^*& =\frac{b_i\hat{\btheta}_i+\lambda\bw^*}{b_i+\lambda},\quad i \in[m]. &  \\
\end{aligned}\right.     
\end{align}
If we further assume $b_i=b$\footnote{This assumption is reasonable since datasets are often normalized.}, then we obtain 
\begin{align}\label{eq: solutions of flame when b_i=b in appendix}
\left\{\begin{aligned}
\bw^*& =\frac{1}{m}\sum_{i=1}^m \hat{\btheta}_i, & \\
\btheta_i^*& =\frac{b\hat{\btheta}_i+\lambda\bw^*}{b+\lambda},\quad i \in[m]. &  \\
\end{aligned}\right.     
\end{align}
According to \cite{Lin2022Personalized}, we can obtain the solutions of \pfedme and \ditto are the same with \flame when $b_i=b,\,i\in[m]$.
\subsection{Test Loss}\label{secappendix: Test Loss}
Recall that the local dataset on client $i$ is $(X_i, \by_i)$, where $X_i$ is fixed and $\by_i$ follows a Gaussian distribution $\mathcal{N}(X_i\btheta_i, \sigma^2 \biden_d)$. Consequently, the data heterogeneity among clients arises solely from the heterogeneity of $\btheta_i$. Then based on \eqref{eq: 129 solution of linear regression} and \eqref{eq: solutions of flame when b_i=b in appendix}, we can derive the distribution of the \flame's solutions as follows,
\begin{align}\label{eq: 137 plug into in appendix}
    \bw^*\sim \mathcal{N}\Bigl(\overline{\btheta}, \frac{\sigma^2}{bmN}\biden_d\Bigr),\,\btheta_i^*\sim\mathcal{N}\Bigl(\frac{b\btheta_i+\lambda\overline{\btheta}}{b+\lambda}, \frac{(b^2+\frac{2b\lambda}{m})\frac{\sigma^2}{bN}+\frac{\lambda^2\sigma^2}{bmN}}{(b+\lambda)^2}\biden_d\Bigr)
\end{align}
where $\overline{\btheta}=\frac{1}{m}\sum_{i=1}^m\btheta_i$. Given that $X_i$ is fixed, we assume the test dataset is $(X_i,\by_i')$ where $\by_i'=X_i\btheta_i+\bz_i'$ with $\bz_i'\sim \mathcal{N}(\boldsymbol{0},\sigma^2\biden_d)$, which is independent of $\bz_i$. Next, according to \cite{Lin2022Personalized}, the test losses of the global and personalized models on client $i$ are defined as follows,
\begin{align}\label{eq: 138 be plugged in appendix}
    f_i(\bw^*)&:=\frac{1}{2N}\mathbb{E}\|X_i\bw^*-\by_i'\|^2=\frac{\sigma^2}{2}+\frac{b}{2}\trace(\var(\bw^*))+\frac{b}{2}\|\mathbb{E}\bw^*-\btheta_i\|^2               \\
    f_i(\btheta_i^*)&:=\frac{1}{2N}\mathbb{E}\|X_i\btheta_i^*-\by_i'\|^2=\frac{\sigma^2}{2}+\frac{b}{2}\trace(\var(\btheta_i^*))+\frac{b}{2}\|\mathbb{E}\btheta_i^*-\btheta_i\|^2,
\end{align}
and the average losses are defined as follows,
\begin{align}
    \frac{1}{m}\sum_{i=1}^m f_i(\bw^*)&=\frac{\sigma^2}{2}+\frac{b}{2}\trace(\var(\bw^*))+\frac{b}{2m}\sum_{i=1}^m\|\mathbb{E}\bw^*-\btheta_i\|^2,\label{eq: 139 be plugged in appendix}\\
    \frac{1}{m}\sum_{i=1}^m f_i(\btheta_i^*)&=\frac{\sigma^2}{2}+\frac{b}{2m}\sum_{i=1}^m\trace(\var(\btheta_i^*))+\frac{b}{2m}\sum_{i=1}^m\|\mathbb{E}\btheta_i^*-\btheta_i\|^2.\label{eq: 141 be plugged in appendix}
\end{align}
Substitute the mean and variance of $\bw^*$ and $\btheta_i^*$ in \eqref{eq: 137 plug into in appendix} into \eqref{eq: 139 be plugged in appendix} and \eqref{eq: 141 be plugged in appendix}, we obtain
\begin{align}
    \frac{1}{m}\sum_{i=1}^m f_i(\bw^*)&= \frac{\sigma^2}{2}+ \frac{\sigma^2 d}{2 m N} +\frac{b}{2m}  \sum_{i=1}^m\|\overline{\btheta}-\btheta_i\|^2,     \\  
    \frac{1}{m}\sum_{i=1}^m f_i(\btheta_i^*)&=\frac{\sigma^2}{2}+\frac{mb^2+2b\lambda+\lambda^2}{m(b+\lambda)^2}\cdot\frac{\sigma^2 d}{2N}+\frac{b\lambda^2}{2m(b+\lambda^2)}\sum_{i=1}^m\|\overline{\btheta}-\btheta_i\|^2.
\end{align}
\subsection{Robustness}\label{appendix: robustness}

In this subsection, we analyze the robustness of \flame against three types of Byzantine attacks. In the previous subsection, we concentrate solely on the exact solutions of \flame and overlook the algorithmic processes. To thoroughly assess robustness, it is essential to examine the algorithm's procedures. We consider a simplified scenario with an infinite number of local update steps, a single round of communication, and participation from all clients.
Throughout this subsection, we assume there are \(m_a\) malicious clients and \(m_b\) benign clients, with \(m_a + m_b = m\), and let \(\mathcal{S}_a\) denote the indices of malicious clients and \(\mathcal{S}_b\) denote the indices of benign clients. 

Before we analyze the impact of these attacks on the solutions of \flame and compare the average test losses on benign clients, we demonstrate that after one round of communication, \flame obtains inexact solutions that are different from the exact solutions as defined in Appendix \ref{sec: Solutions of flame}.
Consider the update step of $\bw_i$:
\begin{align}
    \bw_i^{1}=\frac{1}{\lambda\alpha_i+\rho}(\lambda\alpha_i\btheta_i^{1}+\rho\bw^{0}-\bpi_i^{0})
\end{align}
where $\btheta_i^{1}=\argmin_{\btheta_i}\Bigl\{f_i(\btheta_i)+\frac{\lambda}{2}\|\btheta_i-\bw_i^0\|\Bigr\}=\frac{b\hat{\btheta}_i+\lambda\bw_i^{0}}{b+\lambda}$\footnote{For the local update step of $\btheta_i$: $\btheta_i^{t,h+1}=\btheta_i^{t,h}-\eta(\nabla f_i(\btheta_i^{t,h},\xi)+\lambda(\btheta_i^{t,h}-\bw_i^t))$,
it is noteworthy that when the number of local update steps is infinite, it is reasonable to assume that we can obtain the exact solution of $\btheta_i$.}.
Then we can rewrite the update step of $\bw_i$ as follows
\begin{align}
    \bw_i^{1}=\frac{1}{\lambda\alpha_i+\rho}(\lambda\alpha_i \frac{b\hat{\btheta}_i+\lambda\bw_i^{0}}{b+\lambda} +\rho\bw^0-\bpi_i^0)
\end{align}

Suppose we initialize $\bw^0$, $\bpi^0$, and $\bw_i^0$ to \textbf{0}, we obtain $\bw_i^{1}=\frac{\lambda\alpha_i}{\lambda\alpha_i+\rho}\frac{b}{b+\lambda}\hat{\btheta}_i$. Then based on \eqref{eq14} and \eqref{eq15}, each client sends the update messages as $\bu_i=\frac{2\lambda\alpha_i}{\lambda\alpha_i+\rho}\frac{b}{b+\lambda}\hat{\btheta}_i$. Next, the server aggregates the update messages and updates the global model as $\bw^{\flame}=\frac{1}{m}\sum_{i=1}^m \frac{2\lambda\alpha_i}{\lambda\alpha_i+\rho}\frac{b}{b+\lambda}\hat{\btheta}_i$ to each client. Finally, each client solves the personalized model as $\btheta_i^\flame=\frac{b\hat{\btheta}_i+\lambda\bw^\flame}{b+\lambda}$.
\subsubsection{Same-value Attacks}
Consider the following two cases:
\begin{itemize}
    \item When client $i$ is benign, then it will send $\bu_i^{(be)}=\frac{2\lambda\alpha_i}{\lambda\alpha_i+\rho}\frac{b}{b+\lambda}\hat{\btheta}_i$ to the server. Recall that $\hat{\btheta}_i=(X_i^{\top}X_i)^{-1}X_i^{\top}\by_i\sim\mathcal{N}(\btheta_i,\frac{\sigma^2}{bN}\biden_d)$. Then we can derive that $\bu_i^{(be)}\sim\mathcal{N}(\frac{2\lambda\alpha_i}{\lambda\alpha_i+\rho}\frac{b}{b+\lambda}\btheta_i,(\frac{2\lambda\alpha_i}{\lambda\alpha_i+\rho}\frac{b}{b+\lambda})^2\cdot\frac{\sigma^2}{bN}\biden_d)$. If we set $\rho\geq\lambda\alpha_i$ (this condition should also be satisfied in our convergence analysis), then we can define $q:=\frac{2\lambda\alpha_i}{\lambda\alpha_i+\rho}\frac{b}{b+\lambda}<1$, which implies $\bu_i^{(be)}\sim\mathcal{N}(q \btheta_i,\frac{q^2\sigma^2}{bN}\biden_d)$.
    \item When client $i$ is malicious, then it will send $\bu_i^{(ma)}=p\bone_d\in\mathbb{R}^d$ to the server with $p\sim\mathcal{N}(0,\gamma^2)$, which implies that $\bu_i^{(ma)}\sim\mathcal{N}(\bzero,\gamma^2\boldsymbol{1}_d)$, where $\boldsymbol{1}_d=\begin{pmatrix}  
  1 & 1 & \cdots & 1 \\  
  1 & 1 & \cdots & 1 \\  
  \vdots & \vdots & \ddots & \vdots \\  
  1 & 1 & \cdots & 1  
\end{pmatrix}\in\mathbb{R}^{d\times d}$.
\end{itemize}

Then the server aggregates the update messages from each client and obtains $\bw^\sva=\frac{1}{m}\Bigl(\sum_{i\in\mathcal{S}_a}\bu_i^{(ma)} +\sum_{i\in\mathcal{S}_b}\bu_i^{(be)}\Bigr)$, which implies $\bw^{\sva}\sim\mathcal{N}\Bigl(\frac{1}{m}\sum_{i\in\mathcal{S}_b}q\btheta_i,\frac{1}{m^2}(m_a\gamma^2\boldsymbol{1}_d+\frac{m_b q^2\sigma^2}{bN}\biden_d)\Bigr)$. Next, based on \eqref{eq: 138 be plugged in appendix}, we obtain the average test loss of the global model $\bw^\sva$ on benign clients as 
\begin{align}
    \loss^{\flamegm,\sva}(q) =& \frac{1}{m_b}\sum_{i\in\mathcal{S}_b}f_i(\bw^\sva)\notag\\
    =&\frac{\sigma^2}{2}+\frac{b}{2}\trace\bigl({\var(\bw^\sva)}\bigr)+\frac{b}{2m_b}\sum_{i\in\mathcal{S}_b}\|\mathbb{E}\bw^\sva-\btheta_i\|^2\\
    =& \frac{\sigma^2}{2}+\frac{bd}{2m^2}\Bigl(m_a\gamma^2+\frac{m_b q^2\sigma^2}{bN}\Bigr)+\frac{b}{2m_b}\sum_{i\in\mathcal{S}_b}\|\frac{\sum_{i'\in\mathcal{S}_b}q\btheta_{i'}}{m}-\btheta_i\|^2.
\end{align}
Next, we consider obtaining the distribution of the personalized model. Recall that $\btheta_i^*=\frac{b\hat{\btheta}_i+\lambda\bw^*}{b+\lambda}$, then the benign clients compute the personalized model as $\btheta_i^\sva=\frac{b\hat{\btheta}_i+\lambda\bw^\sva}{b+\lambda}$, which implies the distribution of $\btheta_i^\sva$ is
\begin{align}
    \btheta_i^\sva\sim\mathcal{N}\Bigl(\frac{b\btheta_i+\frac{\lambda}{m}\sum_{i'\in\mathcal{S}_b}q\btheta_{i'}}{b+\lambda},\frac{\Bigl(\bigl(b+\frac{q\lambda}{m}\bigr)^2\frac{\sigma^2}{bN}+(m_b-1)\frac{q^2\lambda^2}{m^2}\frac{\sigma^2}{bN}\Bigr)\biden_d+\frac{m_a\lambda^2\gamma^2}{m^2}\boldsymbol{1}_d}{(b+\lambda)^2}\Bigr).
\end{align}
Then the average test loss of the personalized model on benign clients is
\begin{align}
    \loss^{\flamepm,\sva}(q)=&\frac{1}{m_b}\sum_{i\in\mathcal{S}_b}f_i(\btheta_i^\sva) \notag\\
    =&\frac{\sigma^2}{2}+\frac{b}{2m_b}\sum_{i\in\mathcal{S}_b}\trace\bigl({\var(\btheta_i^\sva)}\bigr)+\frac{b}{2m_b}\sum_{i\in\mathcal{S}_b}\|\mathbb{E}\btheta_i^\sva-\btheta_i\|^2\\
    =&\frac{\sigma^2}{2}+\frac{bd}{2}\frac{\Bigl(b^2+\frac{2bq\lambda}{m}+\frac{m_b q^2\lambda^2}{m^2}\Bigr)\frac{\sigma^2}{bN}+\frac{m_a\lambda^2\gamma^2}{m^2}}{(b+\lambda)^2}+\frac{b\lambda^2}{2m_b(b+\lambda)^2}\sum_{i\in\mathcal{S}_b}\|\frac{\sum_{i'\in\mathcal{S}_b}q\btheta_{i'}}{m}-\btheta_i\|^2.
\end{align}
According to \cite{Lin2022Personalized}, the average test loss of personalized and global models on benign clients for \pfedme and \ditto are 
\begin{align}
   \text{Loss}^{{\pfedmepm,\sva}}= &\text{Loss}^{{\dittopm,\sva}}=\frac{\sigma^2}{2}+\frac{bd}{2}\frac{\Bigl(b^2+\frac{2b\lambda}{m}+\frac{m_b \lambda^2}{m^2}\Bigr)\frac{\sigma^2}{bN}+\frac{m_a\lambda^2\gamma^2}{m^2}}{(b+\lambda)^2}+\frac{b\lambda^2}{2m_b(b+\lambda)^2}\sum_{i\in\mathcal{S}_b}\|\frac{\sum_{i'\in\mathcal{S}_b}\btheta_{i'}}{m}-\btheta_i\|^2,\\
   \text{Loss}^{{\pfedmegm,\sva}}= &\text{Loss}^{{\dittogm,\sva}}=\frac{\sigma^2}{2}+\frac{bd}{2m^2}\Bigl(m_a\gamma^2+\frac{m_b \sigma^2}{bN}\Bigr)+\frac{b}{2m_b}\sum_{i\in\mathcal{S}_b}\|\frac{\sum_{i'\in\mathcal{S}_b}\btheta_{i'}}{m}-\btheta_i\|^2.
\end{align}
Note that the testing losses of \pfedme and \ditto can be considered as the testing loss of \flame when $q=1$. Next, we take the derivatives of the testing loss of \flame with respect to $q$. They are
\begin{align}
    \frac{\partial \text{Loss}^{{\flamegm,\sva}}(q)}{\partial q}&=\frac{bd}{2m^2}\frac{2m_bq\sigma^2}{bN}+\frac{b}{2m_b}2(qm_b-m)\Bigl(\frac{\sum_{i'\in\mathcal{S}_b}\btheta_{i'}}{m}\Bigr)^\top\frac{\sum_{i'\in\mathcal{S}_b}\btheta_{i'}}{m}, \\
    \frac{\partial \text{Loss}^{{\flamepm,\sva}}(q)}{\partial q} &= \frac{bd}{2}\frac{\frac{\sigma^2}{bN}}{(b+\lambda)^2}\Bigl(\frac{2b\lambda}{m}+\frac{2m_bq\lambda^2}{m^2}\Bigr)+\frac{b\lambda^2}{2m_b(b+\lambda)^2}\cdot 2(q m_b-m)\Bigl(\frac{\sum_{i'\in\mathcal{S}_b}\btheta_{i'}}{m}\Bigr)^\top\frac{\sum_{i'\in\mathcal{S}_b}\btheta_{i'}}{m}.
\end{align}
One can check $\frac{\partial \text{Loss}^{{\flamegm,\sva}}(q)}{\partial q}\geq 0$ when $q\geq \frac{m N b\overline{\btheta}^\top_{b}\overline{\btheta}_{b}}{d\sigma^2+m_b Nb}$, and 
$\frac{\partial \text{Loss}^{{\flamepm,\sva}}(q)}{\partial q}\geq 0$ when $q\geq \frac{mb (m_bN\lambda\overline{\btheta}^\top_{b}\overline{\btheta}_{b}-d\sigma^2)}{d\sigma^2m_b\lambda+m_b^2 Nb\lambda\overline{\btheta}^\top_{b}\overline{\btheta}_{b}}$, 
where $\overline{\btheta}_{b} =\frac{\sum_{i'\in\mathcal{S}_b}\btheta_{i'}}{m_b}$.
Then we can infer that $\loss^{\flamepm,\sva}\leq\text{Loss}^{{\pfedmepm,\sva}}(\lambda)= \text{Loss}^{{\dittopm,\sva}}(\lambda)$ and $\loss^{\flamegm,\sva}\leq\text{Loss}^{{\pfedmegm,\sva}}= \text{Loss}^{{\dittogm,\sva}}$.

\subsubsection{Sign-flipping Attacks} 
We consider computing the test loss of \flame under Sign-flipping attacks. 
\begin{itemize}
    \item When client $i$ is benign, then it will send $\bu_i^{(be)}$ to the server and $\bu_i^{(be)}\sim\mathcal{N}(q \btheta_i,\frac{q^2\sigma^2}{bm}\biden_d)$.
    \item When client $i$ is malicious, then it will send $\bu_i^{(ma)}=-|p|\bu_i$ to the server, where $p\sim \mathcal{N}(0,\gamma^2)$. Then we can obtain the mean and covariance of $\bu_i^{(ma)}$ by the independence of $|p|$ and $\bu_i$ as \cite{Lin2022Personalized}
\begin{align}
    \mathbb{E}[\bu_i^{(ma)}]&=\mathbb{E}[-|p|\bu_i^{(be)}] = -\sqrt{\frac{2}{\pi}}\gamma q\btheta_i,\\
    \var[\bu_i^{(ma)}] &=\var[-|p|\bu_i^{(be)}] \notag\\
    &= \mathbb{E}[p^2 \bu_i^{(be)}(\bu_i^{(be)})^{\top}]-\mathbb{E}[|c|\bu_i^{(be)}] \mathbb{E}[|c|(\bu_i^{(be)})^{\top}]   \notag  \\
    &= \gamma^2\Bigl(q^2\btheta_i\btheta_i^{\top}+\frac{q^2\sigma^2}{bm}\biden_d\Bigr)-\frac{2}{\pi}\gamma^2 q^2 \btheta_i\btheta_i^{\top}      \notag\\
    &= \frac{\pi-2}{\pi}\gamma^2 q^2 \btheta_i\btheta_i^{\top} + \gamma^2\frac{q^2 \sigma^2}{bm}\biden_d=:q^2 \bV_i. \label{eq: define vi}
\end{align}
where \eqref{eq: define vi} defines $\bV_i:=\frac{\pi-2}{\pi}\gamma^2  \btheta_i\btheta_i^{\top} + \gamma^2\frac{ \sigma^2}{bm}\biden_d$.
\end{itemize}

Then the server aggregates the updated messages from each client and updates the global model by $\bw^\sfa=\frac{1}{m}\Bigl(\sum_{i\in\mathcal{S}_b}\bu_i^{(be)}+\sum_{i\in\mathcal{S}_a}\bu_i^{(ma)}\Bigr)$, which implies 
\begin{align}
    \mathbb{E}[\bw^\sfa]&=\frac{1}{m}\Bigl(\sum_{i\in\mathcal{S}_b} q\btheta_i -\sum_{i\in\mathcal{S}_a}\sqrt{\frac{2}{\pi}}\gamma q \btheta_i\Bigr),\\
    \var[\bw^\sfa]&=\frac{1}{m^2}\Bigl(\frac{m_b q^2\sigma^2}{bm}\biden_d + \sum_{i\in\mathcal{S}_a}q^2 \bV_i\Bigr).
\end{align}
Then the average test loss of the global model on benign clients is 
\begin{align}
    \!\!\!\!\!\!\loss^{\flamegm,\sva}(q)=& \frac{1}{m_b}\sum_{i\in\mathcal{S}_b}f_i(\bw^\sfa)    \notag\\
    =&\frac{\sigma^2}{2}+\frac{b}{2}\trace\bigl({\var(\bw^\sfa)}\bigr)+\frac{b}{2m_b}\sum_{i\in\mathcal{S}_b}\|\mathbb{E}\bw^\sfa-\btheta_i\|^2\\
    =&\frac{\sigma^2}{2}+\frac{bd}{2m^2}\Bigl(\frac{m_b q^2\sigma^2}{bm} + q^2\sum_{i\in\mathcal{S}_a} \frac{\trace(\bV_i)}{d}\Bigr)+\frac{b}{2m_b} \sum_{i\in\mathcal{S}_a}\Bigl\|\frac{1}{m}\Bigl(\sum_{i'\in\mathcal{S}_b} q\btheta_{i'} -\sum_{i'\in\mathcal{S}_a}\sqrt{\frac{2}{\pi}}\gamma q \btheta_{i'}\Bigr)-\btheta_i\Bigr\|^2.
\end{align}
Next, we consider obtaining the mean and variance of the personalized model. Recall that $\btheta_i^*=\frac{b\hat{\btheta}_i+\lambda\bw^*}{b+\lambda}$, then the benign clients compute the personalized model as $\btheta_i^\sfa=\frac{b\hat{\btheta}_i+\lambda\bw^\sfa}{b+\lambda}$, which implies that
\begin{align}
    \mathbb{E}[\btheta_i^\sfa]=&\frac{1}{b+\lambda}\Biggl(b\theta_i+\frac{\lambda}{m}\Bigl(\sum_{i'\in\mathcal{S}_b} q\btheta_{i'} -\sum_{i'\in\mathcal{S}_a}\sqrt{\frac{2}{\pi}}\gamma q \btheta_{i'}\Bigr)\Biggr)\\
    \var[\btheta_i^\sfa] =&\frac{\Bigl(\bigl(b+\frac{q\lambda}{m}\bigr)^2\frac{\sigma^2}{bN}+(m_b-1)\frac{q^2\lambda^2}{m^2}\frac{\sigma^2}{bN}\Bigr)\biden_d+\frac{q^2 \lambda^2}{m^2} \sum_{i'\in\mathcal{S}_a} \bV_i}{(b+\lambda)^2}
\end{align}
Then the test loss of the personalized model on benign clients is 
\begin{align}
    \loss^{\flamepm,\sfa}(q) &= \frac{1}{m_b}\sum_{i\in\mathcal{S}_b}f_i(\btheta_i^\sfa)    \notag\\
    &=\frac{\sigma^2}{2}+\frac{b}{2m_b}\sum_{i\in\mathcal{S}_b}\trace\bigl({\var(\btheta_i^\sfa)}\bigr)+\frac{b}{2m_b}\sum_{i\in\mathcal{S}_b}\|\mathbb{E}\btheta_i^\sfa-\btheta_i\|^2\\
    &=\frac{\sigma^2}{2}+\frac{bd}{2}\frac{\Bigl(b^2+\frac{2bq\lambda}{m}+\frac{m_b q^2\lambda^2}{m^2}\Bigr)\frac{\sigma^2}{bN}+\frac{q^2\lambda^2}{m^2}\sum_{i'\in\mathcal{S}_a} \frac{\trace({\bV_i}) }{d} }{(b+\lambda)^2}\notag\\
    &+\frac{b\lambda^2}{2(b+\lambda)^2}\frac{1}{m_b}\sum_{i\in\mathcal{S}_b}\Bigl\|\frac{q}{m}\Bigl(\sum_{i'\in\mathcal{S}_b} \btheta_{i'} -\sum_{i'\in\mathcal{S}_a}\sqrt{\frac{2}{\pi}}\gamma  \btheta_{i'}\Bigr) -\btheta_i\Bigr\|^2
\end{align}
According to \cite{Lin2022Personalized}, the average test loss of personalized and global models on benign clients for \pfedme and \ditto are 
\begin{align*}
   \text{Loss}^{{\pfedmepm,\sfa}}= \text{Loss}^{{\dittopm,\sfa}}&=\frac{\sigma^2}{2}+\frac{bd}{2}\frac{\Bigl(b^2+\frac{2b\lambda}{m}+\frac{m_b \lambda^2}{m^2}\Bigr)\frac{\sigma^2}{bN}+\frac{\lambda^2}{m^2}\sum_{i\in\mathcal{S}_a} \frac{\trace({\bV_i}) }{d} }{(b+\lambda)^2}\notag\\
    &+\frac{b\lambda^2}{2(b+\lambda)^2}\frac{1}{m_b}\sum_{i\in\mathcal{S}_b}\Bigl\|\frac{1}{m}\Bigl(\sum_{i'\in\mathcal{S}_b} \btheta_{i'} -\sum_{i'\in\mathcal{S}_a}\sqrt{\frac{2}{\pi}}\gamma  \btheta_{i'}\Bigr) -\btheta_i\Bigr\|^2,
\end{align*}
\begin{align*}
   \text{Loss}^{{\pfedmegm,\sfa}}= \text{Loss}^{{\dittogm,\sfa}}&=\frac{\sigma^2}{2}+\frac{bd}{2m^2}\Bigl(\frac{m_b \sigma^2}{bm} + \sum_{i\in\mathcal{S}_a} \frac{\trace(\bV_{i})}{d}\Bigr)+\frac{b}{2m_b} \sum_{i\in\mathcal{S}_a}\Bigl\|\frac{1}{m}\Bigl(\sum_{i'\in\mathcal{S}_b} \btheta_{i'} -\sum_{i'\in\mathcal{S}_a}\sqrt{\frac{2}{\pi}}\gamma  \btheta_{i'}\Bigr)-\btheta_i\Bigr\|^2.
\end{align*}
Note that the testing losses of \pfedme and \ditto can be considered as the testing loss of \flame when $q=1$. Next, we take the derivatives of the testing loss of \flame with respect to $q$. They are
\begin{align}
    \frac{\partial \text{Loss}^{{\flamegm,\sfa}}(q)}{\partial q} &= \frac{bd}{m^2}\Bigl(\frac{m_b\sigma^2q}{bm}+ q\sum_{i\in\mathcal{S}_a}\frac{\trace(\bV_i)}{d} \Bigr)+\frac{bm_a}{m_b}\overline{\btheta}_m^\top\overline{\btheta}_m q - \frac{b}{m_b} \sum_{i\in\mathcal{S}_a}\btheta_i^\top \overline{\btheta}_m,\\
    \frac{\partial \text{Loss}^{{\flamepm,\sfa}}(q)}{\partial q} &= \frac{bd}{(b+\lambda)^2}\Bigl[\frac{\sigma^2}{bN}\Bigl(\frac{b\lambda}{m}+\frac{m_b\lambda^2q}{m^2} \Bigr)+\frac{q\lambda^2}{m^2}\sum_{i'\in\mathcal{S}_a } \frac{\trace(\bV_{i'})}{d}\Bigr] + \frac{\lambda^2}{(b+\lambda)^2}\Bigl(\frac{bm_a}{m_b}\overline{\btheta}_m^\top \overline{\btheta}_m q-\frac{b}{m_b}\sum_{i'\in\mathcal{S}_a}\btheta_i^\top \overline{\btheta}_m\Bigr).
\end{align}
One can check $\frac{\partial \text{Loss}^{{\flamegm,\sfa}}(q)}{\partial q}\geq 0$ when $q\geq\frac{\frac{1}{m_b} \sum_{i\in\mathcal{S}_a}\btheta_i^\top \overline{\btheta}_m }{\frac{d}{m^2}\Bigl(\frac{m_b\sigma^2}{bm}+\sum_{i\in\mathcal{S}_a} \frac{\trace(\bV_i)}{d} \Bigr)+\frac{m_a}{m_b}\overline{\btheta}_m^\top\overline{\btheta}_m}$, and $\frac{\partial \text{Loss}^{{\flamepm,\sfa}}(q)}{\partial q}\geq 0$ when $q\geq \frac{\frac{b\lambda}{m_b}\sum_{i'\in\mathcal{S}_a}\btheta_i^\top \overline{\btheta}_m-\frac{bd\sigma^2}{mN} }{\frac{d\sigma^2m_b\lambda}{m^2N}+ \frac{b\lambda d}{m^2}\sum_{i\in\mathcal{S}_a} \frac{\trace(\bV_i)}{d} + \frac{b\lambda m_a}{m_b} \overline{\btheta}_m^\top \overline{\btheta}_m}$, where $\overline{\btheta}_m = \frac{1}{m}\Bigl(\sum_{i'\in\mathcal{S}_b}\btheta_{i'} -\sum_{i'\in\mathcal{S}_a}\sqrt{\frac{2}{\pi}} \gamma\btheta_{i'} \Bigr)$.
Then, following the monotonic increasing property of the testing loss function, we can infer that $\loss^{\flamepm,\sfa}\leq\text{Loss}^{{\pfedmepm,\sfa}}(\lambda)= \text{Loss}^{{\dittopm,\sfa}}(\lambda)$ and $\loss^{\flamegm,\sfa}\leq\text{Loss}^{{\pfedmegm,\sfa}}= \text{Loss}^{{\dittogm,\sfa}}$.


\subsubsection{Gaussian Attacks}
When client $i$ is malicious, then it will send $\bu_i^{(ma)}= p\biden_d$ to the server, where $p\sim\mathcal{N}(0,\gamma^2)$, which implies that $\bu_i^{(ma)}\sim\mathcal{N}(\boldsymbol{0}_d,\gamma^2\biden_d)$. Note that $\trace(\biden_d)=\trace(\boldsymbol{1 }_d)$, then the test loss on benign clients is similar to that of the same-value attacks.

\subsection{Fairness}\label{appendix: fairness}
We compute the variance of the test loss of the global model and personalized model to measure fairness. Recall that after one communication, the personalized and global models are updated as 
\begin{align}
     \bw^{\flame} &=\frac{1}{m}\sum_{i=1}^m q\hat{\btheta}_i \sim \mathcal{N}\Bigl(q\overline{\btheta}, \frac{q^2\sigma^2}{bmN}\biden_d  \Bigr), \\
     \btheta_i^\flame &=\frac{b\hat{\btheta}_i+\lambda\bw^\flame}{b+\lambda} \sim \mathcal{N}\Bigl(\frac{b\btheta_i+q\lambda\overline{\btheta}}{b+\lambda}, \frac{(b^2+\frac{2b q \lambda}{m})\frac{\sigma^2}{bN}+\frac{q^2 \lambda^2\sigma^2}{bmN}}{(b+\lambda)^2}\biden_d \Bigr), i\in[m].
\end{align}
Then the test losses of the personalized and global models on each client are
\begin{align}
     f_i(\bw^{\flame})&=\frac{\sigma^2}{2}+\frac{b}{2}\trace(\var(\bw^{\flame}))+\frac{b}{2}\|\mathbb{E}\bw^{\flame}-\btheta_i\|^2 \notag\\
     &= \frac{\sigma^2}{2}+ \frac{q^2\sigma^2 d}{2mN} +\frac{b}{2} \|q \overline{\btheta}-\btheta_i\|^2,     \\  
     f_i(\btheta_i^{\flame})&=\frac{\sigma^2}{2}+\frac{b}{2}\trace(\var(\btheta_i^\flame))+\frac{b}{2}\|\mathbb{E}\btheta_i^{\flame}-\btheta_i\|^2 \notag\\
     &=\frac{\sigma^2}{2}+\frac{b^2+\frac{2b q \lambda}{m} + \frac{q^2\lambda^2}{m}}{(b+\lambda)^2}\cdot\frac{\sigma^2 d}{2N}+\frac{b\lambda^2}{2(b+\lambda)^2}\|q \overline{\btheta}-\btheta_i\|^2.
\end{align}
Then the variances of test losses are obtained as follows,
\begin{align}
    \var[f_i(\bw^{\flame})] &= \frac{b^2}{4} \var[\|q \overline{\btheta}-\btheta_i\|^2], \label{eq:variance of flame global} \\
    \var[f_i(\btheta_i^{\flame})] &= \frac{b^2\lambda^4}{4(b+\lambda)^4} \var [\|q \overline{\btheta}-\btheta_i\|^4].\label{eq:variance of flame personal} 
\end{align}
According to \cite{Lin2022Personalized}, the variances \pfedme and \ditto correspond to the special cases where $q=1$ in \eqref{eq:variance of flame global} and \eqref{eq:variance of flame personal}.
Next, we consider investigating the monotonicity of $\var[\|q \overline{\btheta}-\btheta_i\|^2]$ with respect to $q$. We now give an equivalent form of $\var[\|q \overline{\btheta}-\btheta_i\|^2]$ as 
\begin{align}
    \var[\|q \overline{\btheta}-\btheta_i\|^2] &= \frac{1}{m}\sum_{i=1}^m \|q \overline{\btheta}-\btheta_i\|^4 - \Bigl(\frac{1}{m} \sum_{i=1}^m\|q \overline{\btheta}-\btheta_i\|^2    \Bigr)^2\notag\\
    & = \frac{1}{m^2}\Bigl( (m-1) \sum_{i=1}^m \|q \overline{\btheta}-\btheta_i\|^4 -  \sum_{i\neq i'} \|q \overline{\btheta}-\btheta_i\|^2 \|q \overline{\btheta}-\btheta_{i'}\|^2 \Bigr) \\
    & = \frac{1}{m^2}  \sum_{i\neq i'}\frac{(\|q \overline{\btheta}-\btheta_i\|^2-\|q \overline{\btheta}-\btheta_{i'}\|^2)^2}{2}
\end{align}
Then we can obtain the derivative of $\var[\|q \overline{\btheta}-\btheta_i\|^2]$ with respect to $q$
\begin{align}
    \frac{\partial \var[\|q \overline{\btheta}-\btheta_i\|^2]}{\partial q} = \frac{2}{m^2} \sum_{i\neq i'} (\|q \overline{\btheta}-\btheta_i\|^2-\|q \overline{\btheta}-\btheta_{i'}\|^2) (\btheta_{i'}  -\btheta_i)^{\top}\overline{\btheta}.
\end{align}
Next, we derive the second-order derivative of $\var[\|q \overline{\btheta}-\btheta_i\|^2]$ with respect to $q$
\begin{align}
    \frac{\partial^2 \var[\|q \overline{\btheta}-\btheta_i\|^2]}{\partial q^2} = -\frac{4}{m^2} \sum_{i\neq i'} \overline{\btheta}^\top (\btheta_{i'}  -\btheta_i)(\btheta_{i'}  -\btheta_i)^{\top}\overline{\btheta}\leq 0
\end{align}
Thus, we can deduce that $ \frac{\partial \var[\|q \overline{\btheta}-\btheta_i\|^2]}{\partial q}$ decreases monotonically with $q$, reaching its minimum when $q=1$. That is
\begin{align}
    \frac{\partial \var[\|q \overline{\btheta}-\btheta_i\|^2]}{\partial q} \Big |_{q=1} &= \frac{2}{m^2} \sum_{i\neq i'} (\|\overline{\btheta}-\btheta_i\|^2-\|\overline{\btheta}-\btheta_{i'}\|^2) (\btheta_{i'}  -\btheta_i)^{\top}\overline{\btheta}\notag \\
    &= \frac{2}{m^2} \sum_{i\neq i'} (2\overline{\btheta}-\btheta_{i'}-\btheta_i)^\top(\btheta_{i'}  -\btheta_i) (\btheta_{i'}  -\btheta_i)^{\top}\overline{\btheta}\\
    &= \frac{2}{m^2} \sum_{i\neq i'} 2\overline{\btheta}^\top(\btheta_{i'}  -\btheta_i) (\btheta_{i'}  -\btheta_i)^{\top}\overline{\btheta}-\frac{2}{m^2} \sum_{i\neq i'} (\btheta_{i'}  +\btheta_i)^\top(\btheta_{i'}  -\btheta_i) (\btheta_{i'}  -\btheta_i)^{\top}\overline{\btheta}\\
    &= \frac{2}{m^2} \sum_{i\neq i'} 2\overline{\btheta}^\top(\btheta_{i'}  -\btheta_i) (\btheta_{i'}  -\btheta_i)^{\top}\overline{\btheta}-\frac{2}{m^2} \sum_{i\neq i'} (\|\btheta_{i'}\|^2  -\|\btheta_i\|^2) (\btheta_{i'}  -\btheta_i)^{\top}\overline{\btheta}.
\end{align}
Suppose that all personalized models are uniform, then we can obtain $\|\btheta_{i'}\|^2  =\|\btheta_i\|^2$ for any $i'\neq i$, which implies
\begin{align}
    \frac{\partial \var[\|q \overline{\btheta}-\btheta_i\|^2]}{\partial q} \Big |_{q=1} = \frac{2}{m^2} \sum_{i\neq i'} 2\overline{\btheta}^\top(\btheta_{i'}  -\btheta_i) (\btheta_{i'}  -\btheta_i)^{\top}\overline{\btheta}\geq 0.
\end{align}
Therefore, we can infer that $\var[\|q \overline{\btheta}-\btheta_i\|^2]$ increases monotonically with respect to $q$, which completes the proof.

\newpage
\section{Extension}\label{secappendix: Extension}

We extend the established ADMM-based training method to the recently proposed fair and robust PFL framework: \lpproj \cite{Lin2022Personalized}. 
We present the specific algorithm of ADMM-based \lpproj in Algorithm \ref{alg: lpproj-ADMM}.
\begingroup    
\begin{algorithm}[h]
  \caption{ADMM-based \lpprojtwo}
  \label{alg: lpproj-ADMM}
  \SetAlgoLined
  \KwIn{$T:$ the total communication rounds, $\rho$: the penalty parameter, $\lambda$: the hyperparameter, $m$: the number of clients, $X_i,i\in[m]$: the local dataset, $\eta$: the learning rate, $H$: the number of local iterations, $\bP$: the projection matrix.}
  \textbf{Initialize:} $\btheta_i^0,\bw_i^0,\bpi_i^0,\bu_i^{0}=\bw_i^{0}+\frac{1}{\rho}\bpi_i^{0}$, $i\in[m]$.
  
  \For{$t=0,1,\dots, T-1$}{
  \tcc{On the server side.}
  Randomly select $s$ clients $\mathcal{S}^t\subset[m]$\;

  Call each client, uploading $\{\boldsymbol{z}^{t}_1,\dots,\boldsymbol{z}^{t}_m\}$ to the server\;

  Update $\bw^{t}=\frac{1}{m}\sum_{i=1}^{m}\bu_i^{t}$\;

  Broadcast $\bw^{t}$ to the selected clients\;

  \tcc{On the client side.}

  \For{each client $i\in\mathcal{S}^t$}{
  Create Batches $\mathcal{B}$\;

  \For{$h=0,1,\dots,H-1$}{
    \For{batch $\xi\in\mathcal{B}$}{
    Compute the gradient $f_i(\btheta_i^{t,h},\xi)$\;

    $\btheta_i^{t,h+1}=\btheta_i^{t,h}-\eta(\nabla f_i(\btheta_i^{t,h},\xi)+\lambda\bP^\top(\bP\btheta_i^{t,h}-\bw_i^t))$\;

    }
  }
    $\btheta_i^{t+1}=\btheta_i^{t,H-1}$\;
    
    $\bw_i^{t+1}=\frac{1}{\lambda\alpha_i+\rho}(\lambda\alpha_i\bP\btheta_i^{t+1}+\rho\bw^t-\bpi_i^t)$\;

    $\bpi_i^{t+1}=\bpi_i^t+\rho(\bw_i^{t+1}-\bw^t)$\;

    $\bu_i^{t+1}=\bw_i^{t+1}+\frac{1}{\rho}\bpi_i^{t+1}$\;
    
  }
  \For{each client $i\notin\mathcal{S}^t$}{
  $(\btheta_i^{t+1},\bw_i^{t+1},\bpi_i^{t+1},\bu_i^{t+1})=(\btheta_i^{t},\bw_i^{t},\bpi_i^{t},\bu_i^{t})$\;
  
  }
  }

\end{algorithm}    
\setlength{\textfloatsep}{2pt}

\endgroup

Next, we introduce the objective function and propose our training framework for \lpproj.
Lin et al. \cite{Lin2022Personalized} proposed to project local models into a shared-and-fixed low-dimensional random subspace and uses infimal convolution to control the deviation between the global model and projected local models. Specifically, they construct the objective function for PFL as
\begin{equation}\label{eq: lpproj}
\begin{split}
    \min_{\bw}\!\sum_{i=1}^{m}\alpha_i F_i(\bw),\,\,
    \text{where } F_i(\bw):=\min_{\btheta_i}f_i(\btheta_i)+\frac{\lambda}{p}||\bw-\bP\btheta_i||^p_p,\,\, i\in[m].
\end{split}
\end{equation} 
where $p \geq 1$ and $\bP\in \mathbb{R}^{d_{sub} \times d}$ is a random matrix that is generated initially and will not vary anymore.
$d_{sub}$ is the dimension of the shared-and-fixed random subspace. 
Without loss of generality, we consider the case where $p=2$, referring to \lpprojtwo.
Next, we consider converting \eqref{eq: lpproj} into the following optimization problem using the same approach as applied to \flame,
\begin{equation}\label{eq:problem lpproj}
\begin{split}
    \min_{\btheta_i,\bw_i,\bw}\Bigr\{\tilde{f}(\Theta,W):=\sum_{i=1}^{m}\alpha_i \Bigl(f_i(\btheta_i)+\frac{\lambda}{2}||\bw_i-\bP\btheta_i||^2\Bigr)\Bigl\},\,\,\text{ s.t. }\,\bw_i=\bw,\,\, i\in[m].
\end{split}
\end{equation}   
To implement ADMM for Problem (\ref{eq:problem lpproj}), we establish the corresponding augmented Lagrangian function as follows:
\begin{equation}\label{eq: lagrangian function for lpproj}
\begin{split}
    \ml(\Theta,W,\Pi,\bw)&:=\sum_{i=1}^m\ml_i(\btheta_i,\bw_i,\bw,\bpi_i), \\
    \ml_i(\btheta_i,\bw_i,\bpi_i,\bw)&:=\alpha_i(f_i(\btheta_i)+\frac{\lambda}{2}||\bw_i-\bP\btheta_i||^2)+\langle\bpi_i,\bw_i-\bw\rangle+\frac{\rho}{2}||\bw_i-\bw||^2,
\end{split}
\end{equation} 
The ADMM framework for solving Problem (\ref{eq:problem lpproj}) can be summarized as follows: after initializing the variables with $(\Theta^0,W^0,\Pi^0,\bw^0)$, the following update steps are executed iteratively for each $t\geq0, $
\begin{align}
\btheta_i^{t+1}&=\operatorname{argmin}_{\btheta_i}\ml_i(\btheta_i,  \bw_i^t, \boldsymbol{\pi}^t_i,\bw^{t},),\label{eq183}\\ 
\bw_i^{t+1} & =\operatorname{argmin}_{\bw_i} \ml_i(\btheta_i^{t+1}, \bw_i, \boldsymbol{\pi}^t_i, \bw^{t})  \notag\\ 
&=\frac{1}{\lambda\alpha_i+\rho}(\lambda\alpha_i\bP\btheta_i^{t+1}+\rho\bw^t-\bpi_i^t),\label{eq184}\\
\boldsymbol{\pi}_i^{t+1} & =\boldsymbol{\pi}_i^t+\rho(\bw_i^{t+1}-\bw^{t}),\label{eq185}\\
\bw^{t+1} & =\operatorname{argmin}_{\bw} \mathcal{L}(\Theta^{t+1}, W^{t+1}, \bw, \Pi^{t+1})\notag\\
&=\frac{1}{m} \sum_{i=1}^m(\bw_i^{t+1}+\frac{1}{\rho}\boldsymbol{\bpi}_i^{t+1})\label{eq186}. 
\end{align}

In Algorithm \ref{alg: lpproj-ADMM}, we introduce the ADMM-based \lpprojtwo. Except for the updates of $\btheta_i$ and $\bw_i$, all other aspects are identical to \flame.
After being mapped by the projection matrix $\bP$, the global model is compressed into a lower-dimensional matrix, rendering it no longer useful.

\newpage



\newpage

\section{Additional and Complete Experiment Results}\label{appendix: additional experiments}
\subsection{Complete Results on Accuracy and Convergence}\label{secappendix: Complete Results on Accuracy and Convergence}
Fig. \ref{fig:ACC and loss_communication_q-label-skew_2_attack_0_num_malicious_0} - Fig. \ref{fig:ACC and loss_communication_dir-quantity-skew_2_attack_0_num_malicious_0} show how the accuracy of different methods changes with communication rounds for different types of heterogeneous data. We can see that \flamehm achieves the highest accuracy and convergence in most datasets and data partitioning methods. In label skew scenarios, personalized models typically perform better, while in other types of heterogeneous data, the global model generally performs better.
\subsection{Complete Results on Robustness}\label{secappendix: Complete Results on Robustness}
Complete results for different methods under various types of Byzantine attacks and heterogeneous data are shown in Fig. \ref{fig:acc_malicious_attack_1_dir-label-skew_2} - Fig. \ref{fig:acc_malicious_attack_4_dir-quantity-skew_2}. 
\flame shows stable performance and is more robust than other methods against most attacks and with most types of heterogeneous data.
\subsection{Complete Results on Fairness}\label{secappendix: Complete Results on Fairness}
Fig. \ref{fig:variance_accuracy_dir-label-skew_q=2} - Fig. \ref{fig:variance_accuracy_dir-quantity-skew_q=2} show the trade-off between variance and accuracy for different methods on various types of heterogeneous data. We can see that \flamehm achieves the best trade-off in most datasets and data partitioning methods, and its accuracy is consistently the highest. In the case of label skew, personalized models usually achieve a better trade-off, while for other types of heterogeneous data, the global model usually achieves a better trade-off.

\begin{figure*}[ht]
  \centering
  \includegraphics[width=1\textwidth]{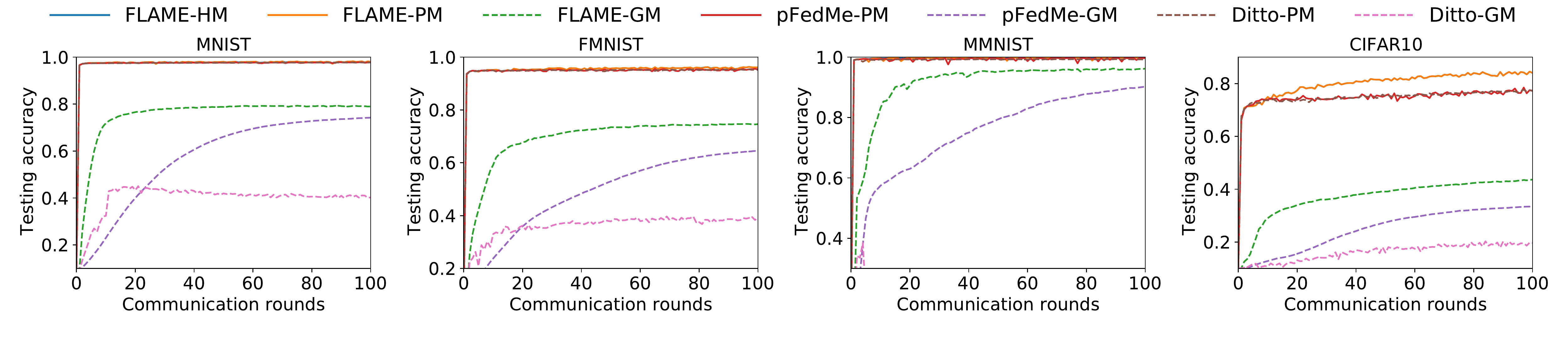} 
  \vspace{-2em}
  \caption{A comparison of the test accuracy across different methods with quantity-based label imbalance ($q=2$).}
  \label{fig:ACC and loss_communication_q-label-skew_2_attack_0_num_malicious_0}
  \vspace{-1em}
\end{figure*}

\begin{figure*}[ht]
  \centering
  \includegraphics[width=1\textwidth]{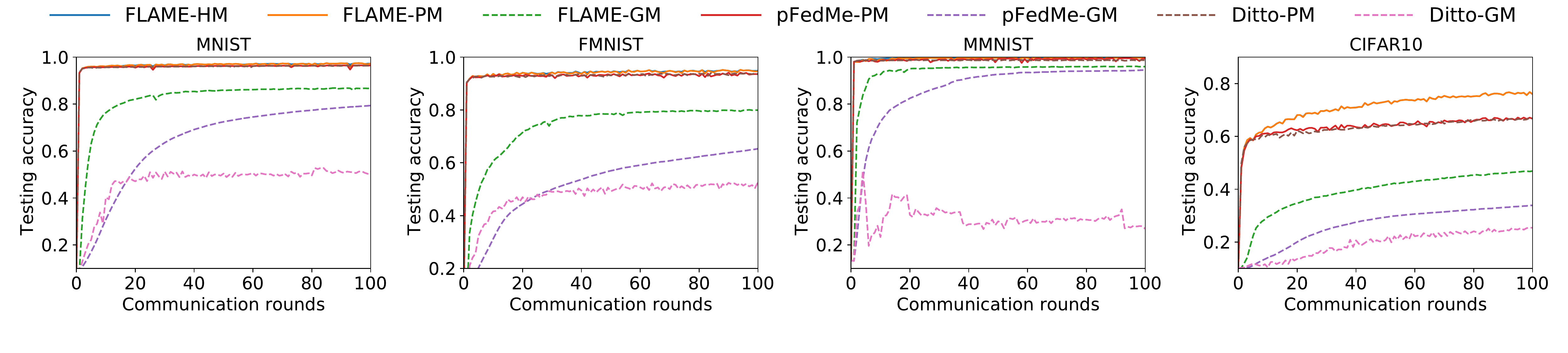} 
  \vspace{-2em}
  \caption{A comparison of the test accuracy across different methods with quantity-based label imbalance ($q=3$).}
  \label{fig:ACC and loss_communication_q-label-skew_3_attack_0_num_malicious_0}
  \vspace{-1em}
\end{figure*}

\begin{figure*}[ht]
  \centering
  \includegraphics[width=1\textwidth]{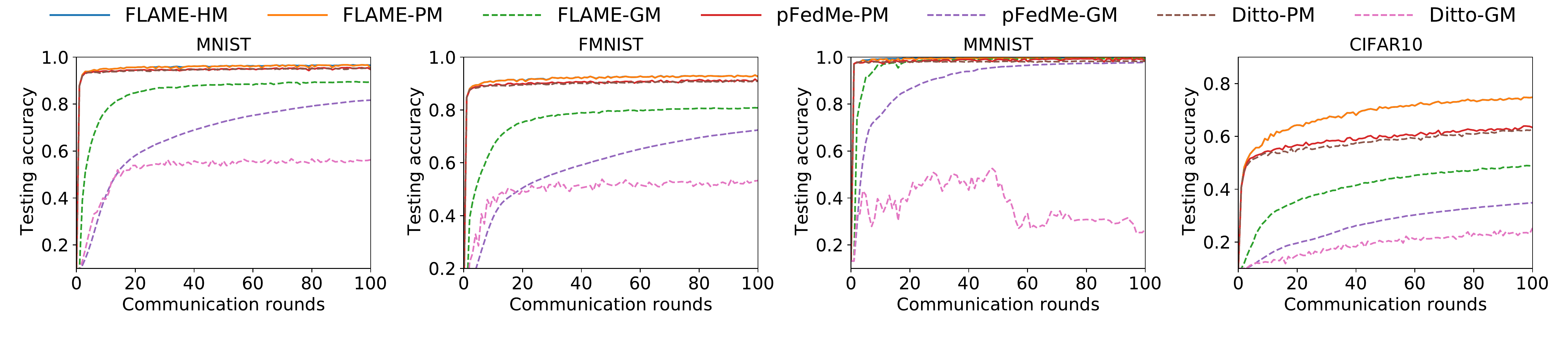} 
  \vspace{-2em}
  \caption{A comparison of the test accuracy across different methods with quantity-based label imbalance ($q=4$).}
  \label{fig:ACC and loss_communication_q-label-skew_4_attack_0_num_malicious_0}
  \vspace{-1em}
\end{figure*}

\begin{figure*}[ht]
  \centering
  \includegraphics[width=1\textwidth]{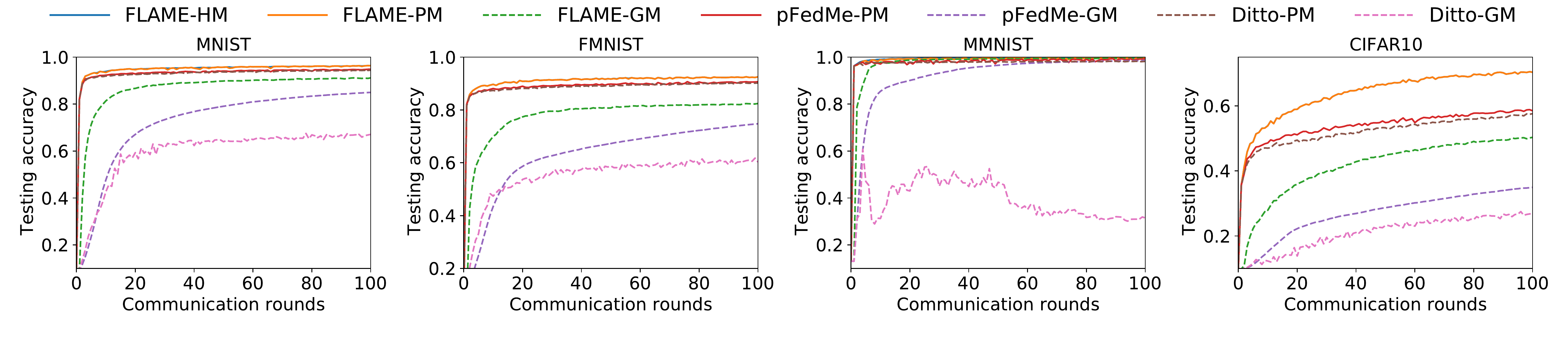} 
  \vspace{-2em}
  \caption{A comparison of the test accuracy across different methods with quantity-based label imbalance ($q=5$).}
  \label{fig:ACC and loss_communication_q-label-skew_5_attack_0_num_malicious_0}
  \vspace{-1em}
\end{figure*}

\begin{figure*}[ht]
  \centering
  \includegraphics[width=1\textwidth]{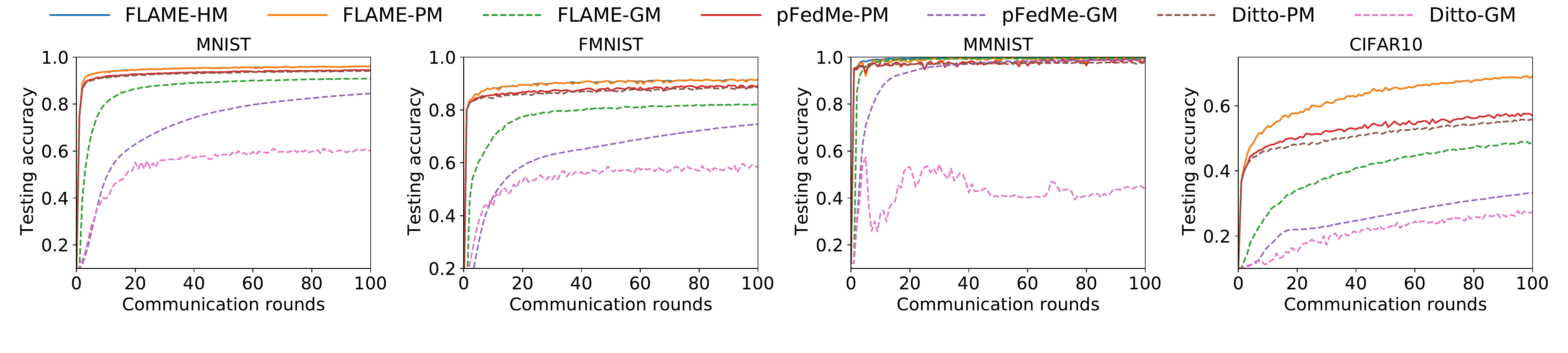} 
  \vspace{-2em}
  \caption{A comparison of the test accuracy across different methods with quantity-based label imbalance ($q=6$).}
  \label{fig:ACC and loss_communication_q-label-skew_6_attack_0_num_malicious_0}
  \vspace{-1em}
\end{figure*}

\begin{figure*}[ht]
  \centering
  \includegraphics[width=1\textwidth]{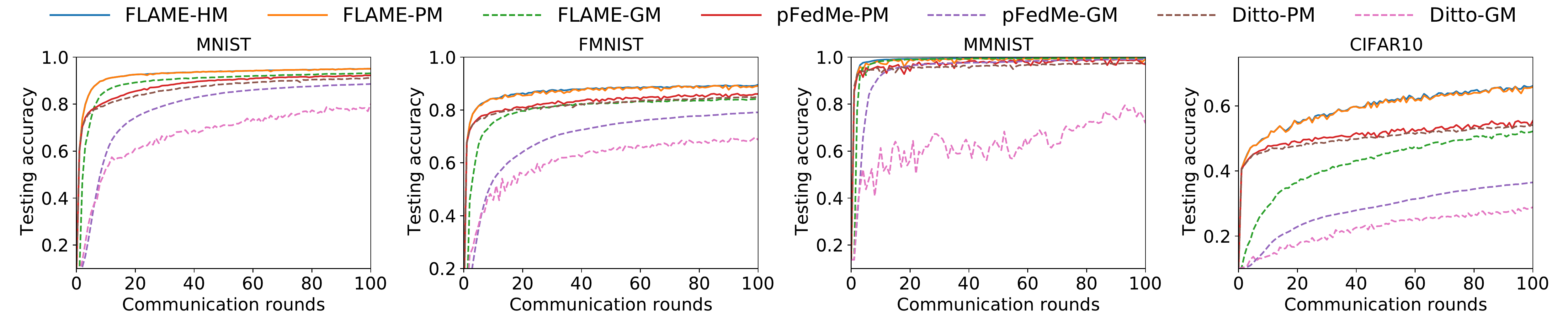} 
  \vspace{-2em}
  \caption{A comparison of the test accuracy across different methods with distribution-based label imbalance.}
  \label{fig:ACC and loss_communication_dir-label-skew_2_attack_0_num_malicious_0}
  \vspace{-1em}
\end{figure*}

\begin{figure*}[ht]
  \centering
  \includegraphics[width=1\textwidth]{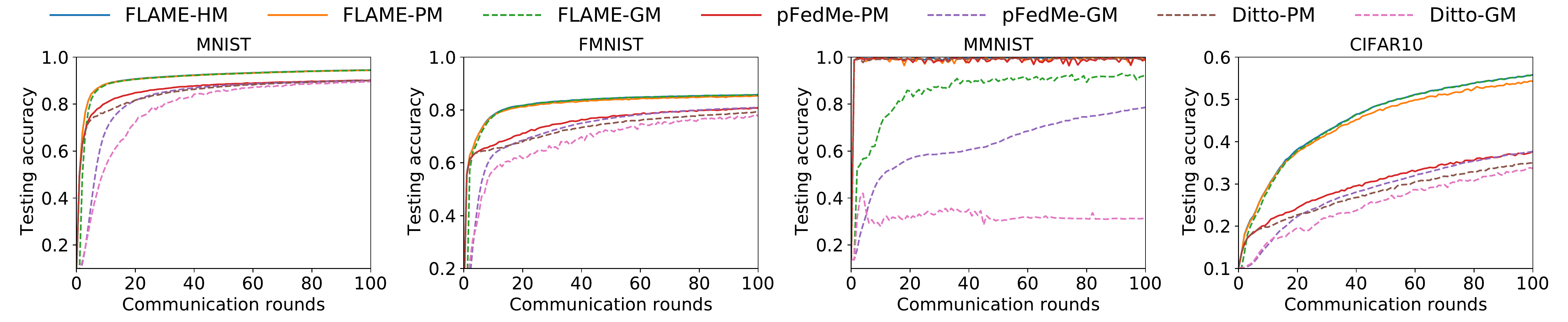} 
  \vspace{-2em}
  \caption{A comparison of the test accuracy across different methods with quality skew.}
  \label{fig:ACC and loss_communication_quality-skew_2_attack_0_num_malicious_0}
  \vspace{-1em}
\end{figure*}

\begin{figure*}[ht]
  \centering
  \includegraphics[width=1\textwidth]{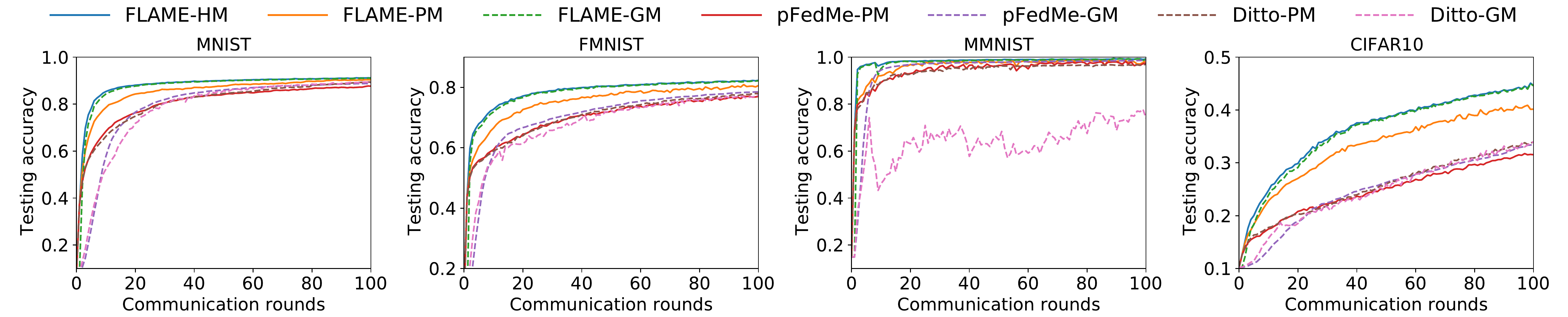} 
  \vspace{-2em}
  \caption{A comparison of the test accuracy across different methods with quantity skew.}
  \label{fig:ACC and loss_communication_dir-quantity-skew_2_attack_0_num_malicious_0}
  \vspace{-1em}
\end{figure*}

\newpage
\begin{figure*}[ht]
    \centering
    \includegraphics[width=1\linewidth]{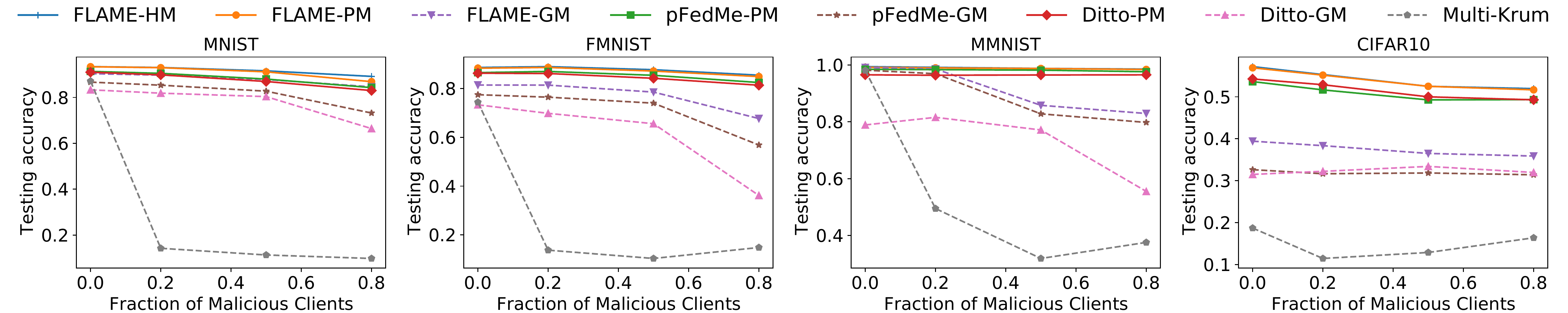}
    \vspace{-2em}
    \caption{Robustness comparison of different methods under label poisoning attacks with distribution-based label imbalance.}
    \label{fig:acc_malicious_attack_1_dir-label-skew_2}
    \vspace{-1em}
\end{figure*}
\begin{figure*}[ht]
    \centering
    \includegraphics[width=1\linewidth]{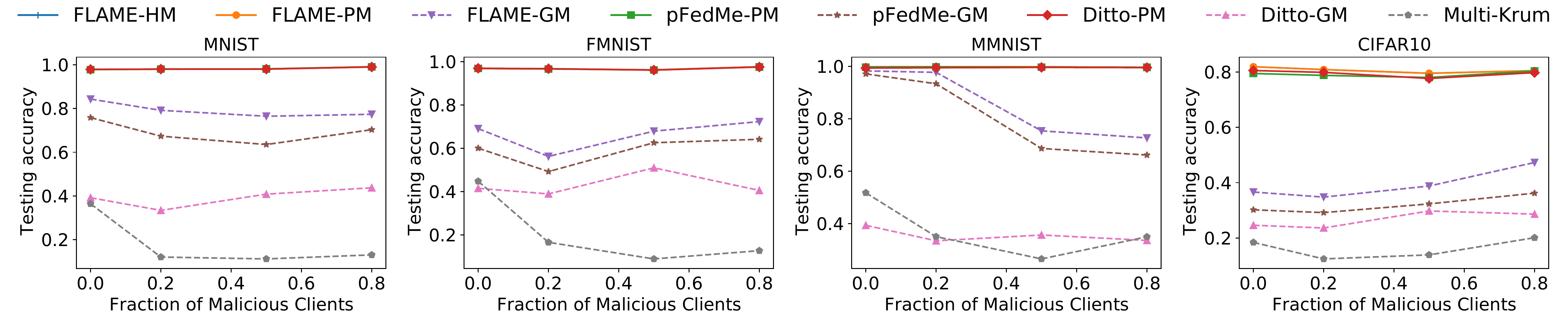}
    \vspace{-2em}
    \caption{Robustness comparison of different methods under label poisoning attacks with quantity-based label imbalance.}
    \label{fig:acc_malicious_attack_1_q-label-skew_2}
    \vspace{-1em}
\end{figure*}
\begin{figure*}[ht]
    \centering
    \includegraphics[width=1\linewidth]{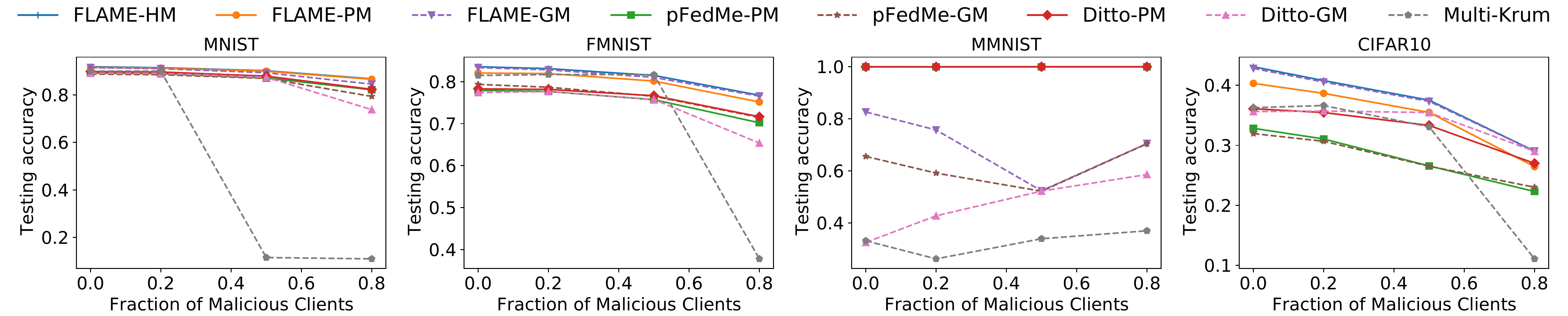}
    \vspace{-2em}
    \caption{Robustness comparison of different methods under label poisoning attacks with quality skew.}
    \label{fig:acc_malicious_attack_1_quality-skew_2}
    \vspace{-1em}
\end{figure*}
\begin{figure*}[ht]
    \centering
    \includegraphics[width=1\linewidth]{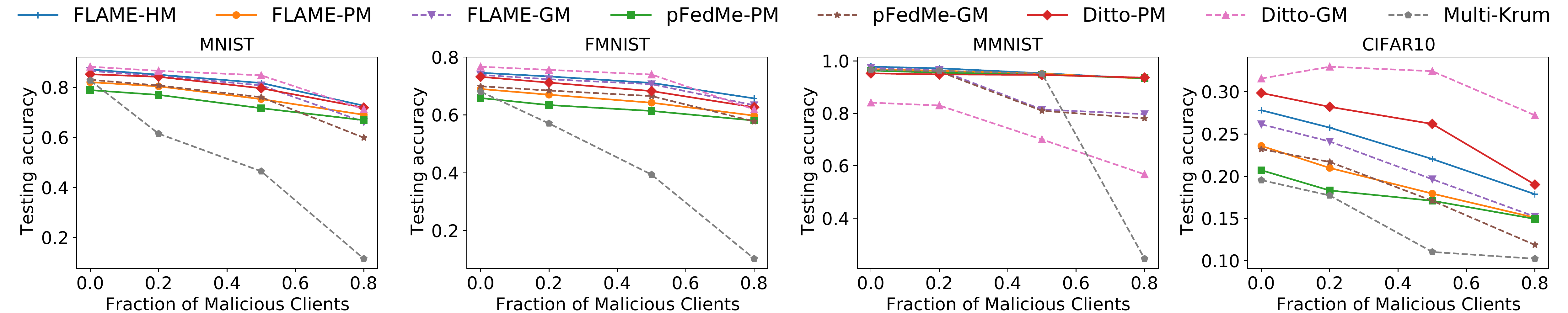}
    \vspace{-2em}
    \caption{Robustness comparison of different methods under label poisoning attacks with quantity skew.}
    \label{fig:acc_malicious_attack_1_dir-quantity-skew_2}
    \vspace{-1em}
\end{figure*}

\begin{figure*}[ht]
    \centering
    \includegraphics[width=1\linewidth]{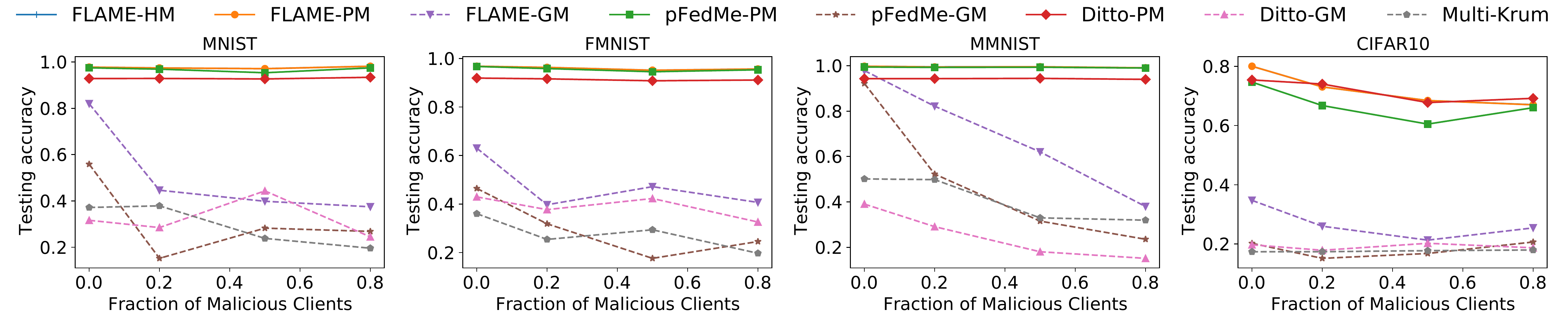}
    \vspace{-2em}
    \caption{Robustness comparison of different methods under same-value attacks with quantity-based label imbalance ($q=2$).}
    \label{fig:acc_malicious_attack_2_q-label-skew_2}
    \vspace{-1em}
\end{figure*}
\begin{figure*}[ht]
    \centering
    \includegraphics[width=1\linewidth]{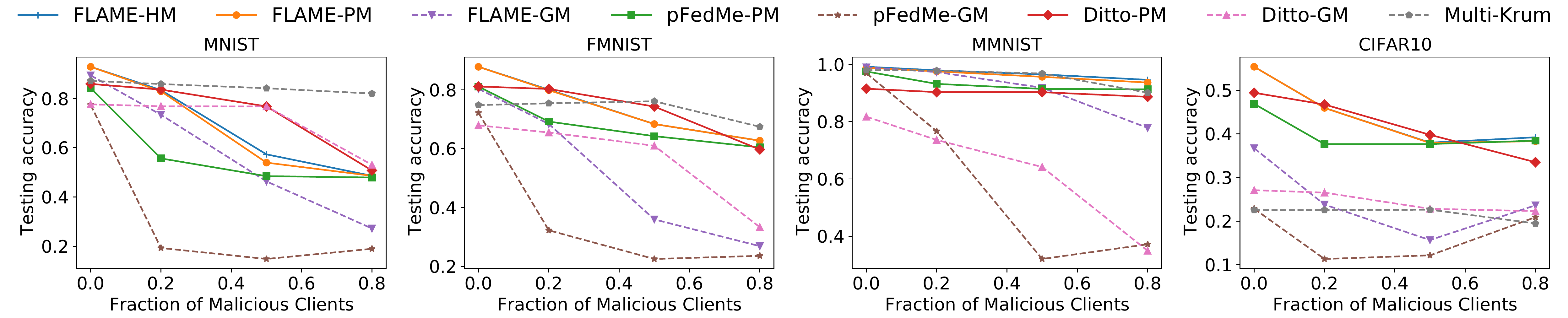}
    \vspace{-2em}
    \caption{Robustness comparison of different methods under same-value attacks with distribution-based label imbalance.}
    \label{fig:acc_malicious_attack_2_dir-label-skew_2}
    \vspace{-1em}
\end{figure*}
\begin{figure*}[ht]
    \centering
    \includegraphics[width=1\linewidth]{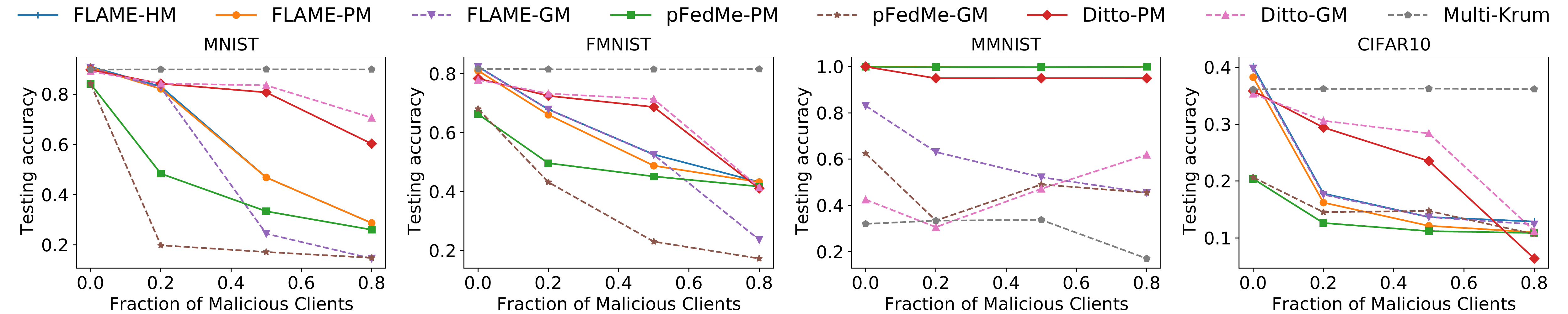}
    \vspace{-2em}
    \caption{Robustness comparison of different methods under same-value attacks with quality skew.}
    \label{fig:acc_malicious_attack_2_quality-skew_2}
    \vspace{-1em}
\end{figure*}
\begin{figure*}[ht]
    \centering
    \includegraphics[width=1\linewidth]{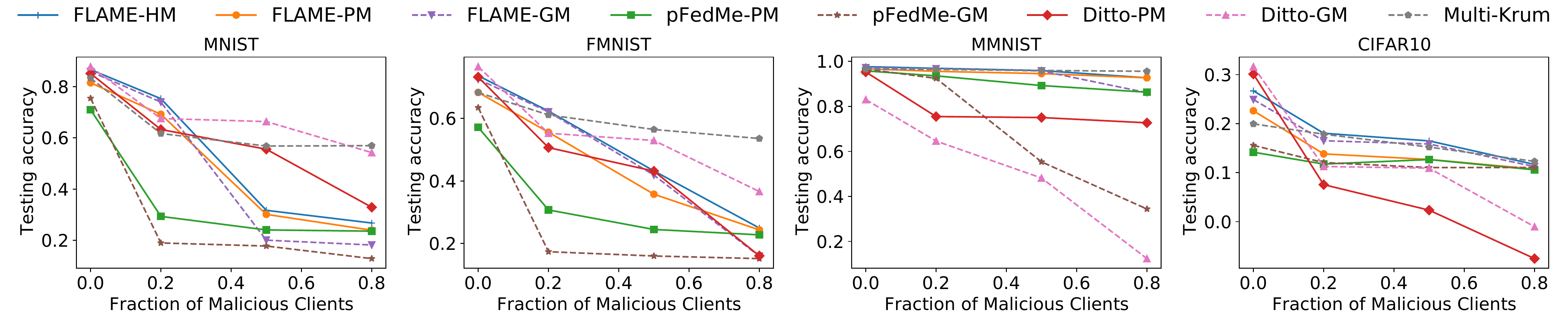}
    \vspace{-2em}
    \caption{Robustness comparison of different methods under same-value attacks with quantity skew.}
    \label{fig:acc_malicious_attack_2_dir-quantity-skew_2}
    \vspace{-1em}
\end{figure*}
\begin{figure*}[ht]
    \centering
    \includegraphics[width=1\linewidth]{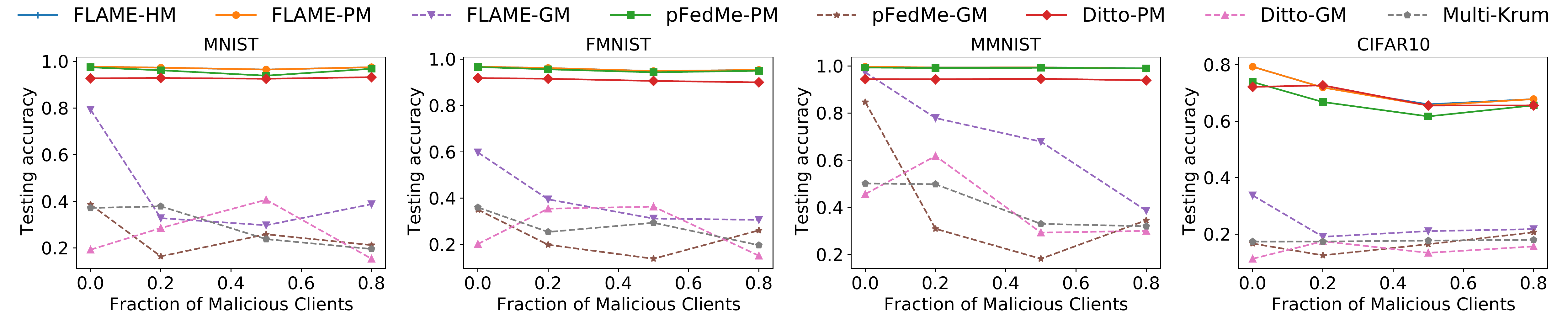}
    \vspace{-2em}
    \caption{Robustness comparison of different methods under sign-flipping attacks with quantity-based label imbalance ($q=2$).}
    \label{fig:acc_malicious_attack_3_q-label-skew_2}
    \vspace{-1em}
\end{figure*}
\begin{figure*}[ht]
    \centering
    \includegraphics[width=1\linewidth]{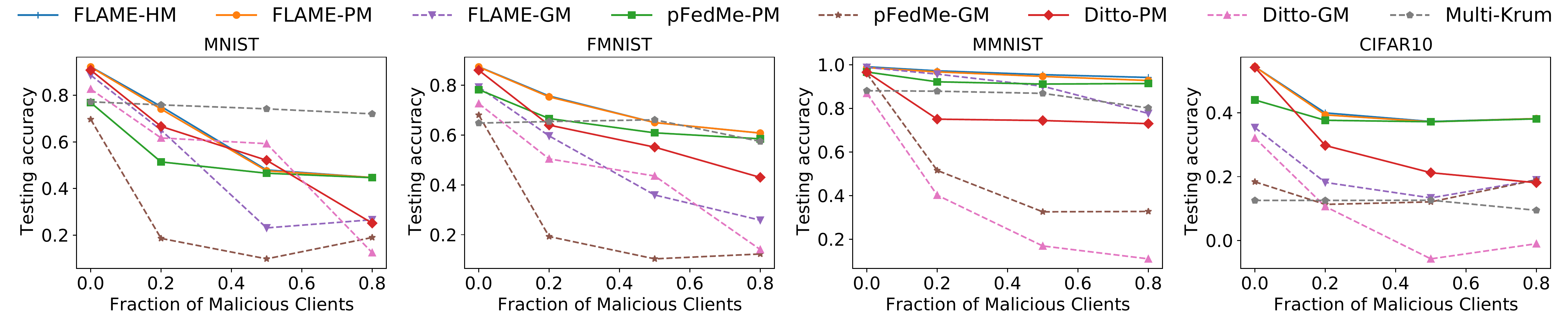}
    \vspace{-2em}
    \caption{Robustness comparison of different methods under sign-flipping attacks with distribution-based label imbalance.}
    \label{fig:acc_malicious_attack_3_dir-label-skew_2}
    \vspace{-1em}
\end{figure*}
\begin{figure*}[ht]
    \centering
    \includegraphics[width=1\linewidth]{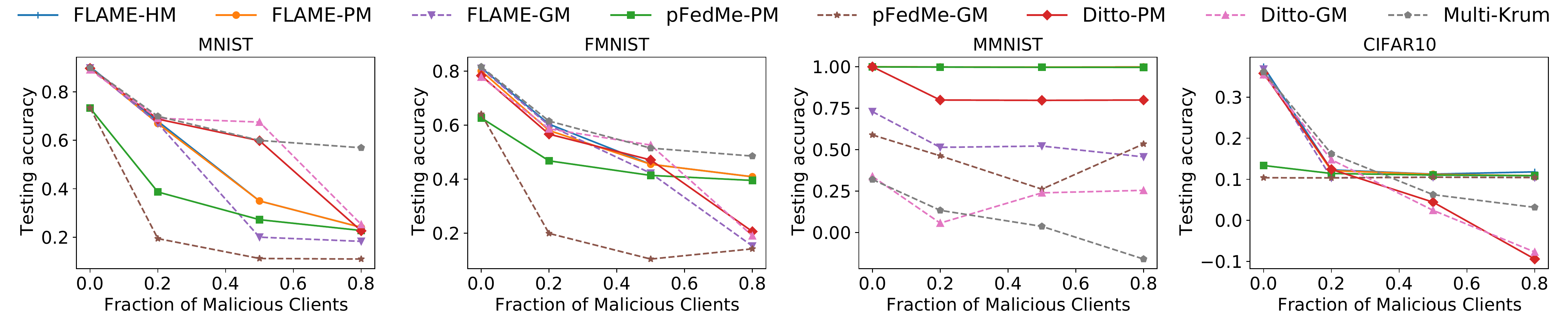}
    \vspace{-2em}
    \caption{Robustness comparison of different methods under sign-flipping attacks with quality skew.}
    \label{fig:acc_malicious_attack_3_quality-skew_2}
    \vspace{-1em}
\end{figure*}
\begin{figure*}[ht]
    \centering
    \includegraphics[width=1\linewidth]{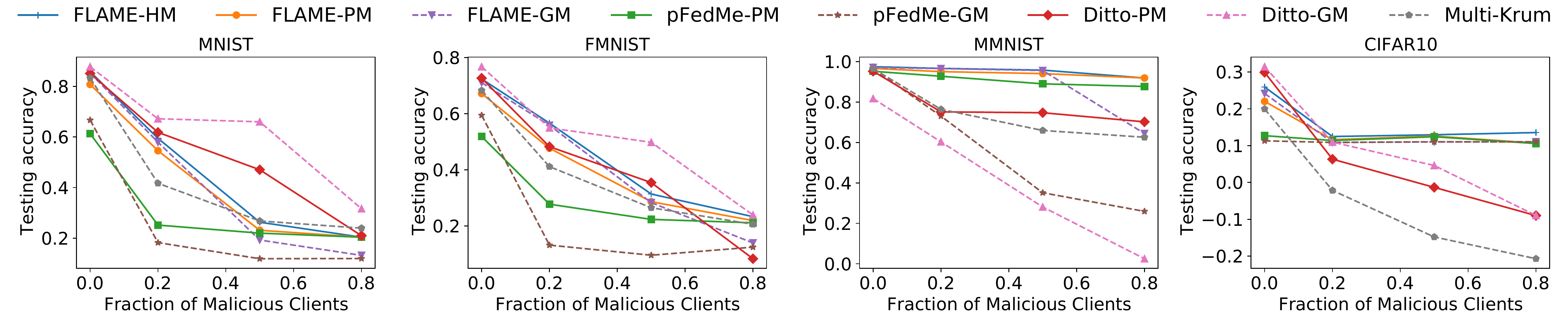}
    \vspace{-2em}
    \caption{Robustness comparison of different methods under sign-flipping attacks with quantity skew.}
    \label{fig:acc_malicious_attack_3_dir-quantity-skew_2}
    \vspace{-1em}
\end{figure*}
\begin{figure*}[ht]
    \centering
    \includegraphics[width=1\linewidth]{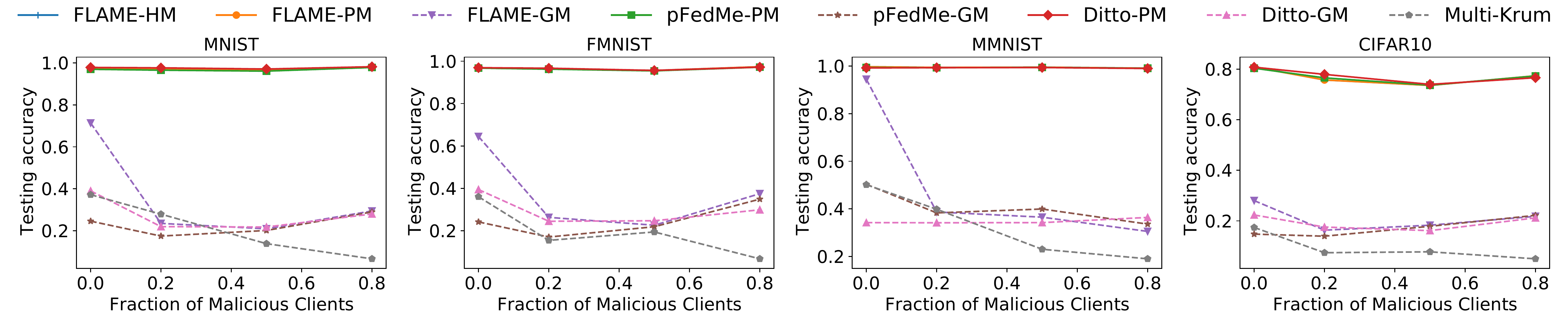}
    \vspace{-2em}
    \caption{Robustness comparison of different methods under Gaussian attacks with quantity-based label imbalance ($q=2$).}
    \label{fig:acc_malicious_attack_4_q-label-skew_2}
    \vspace{-1em}
\end{figure*}
\begin{figure*}[ht]
    \centering
    \includegraphics[width=1\linewidth]{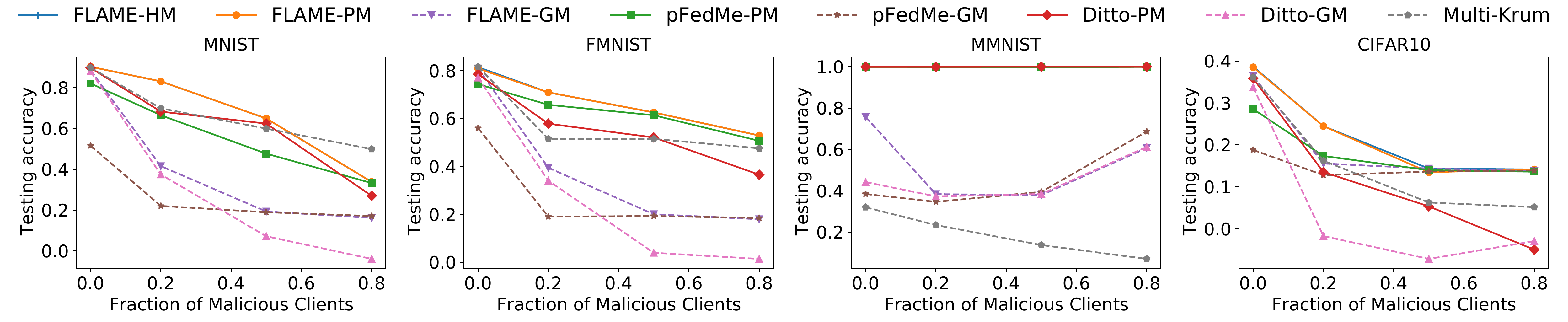}
    \vspace{-2em}
    \caption{Robustness comparison of different methods under Gaussian attacks with quality skew.}
    \label{fig:acc_malicious_attack_4_quality-skew_2}
    \vspace{-0.5em}
\end{figure*}
\begin{figure*}[ht]
    \centering
    \includegraphics[width=1\linewidth]{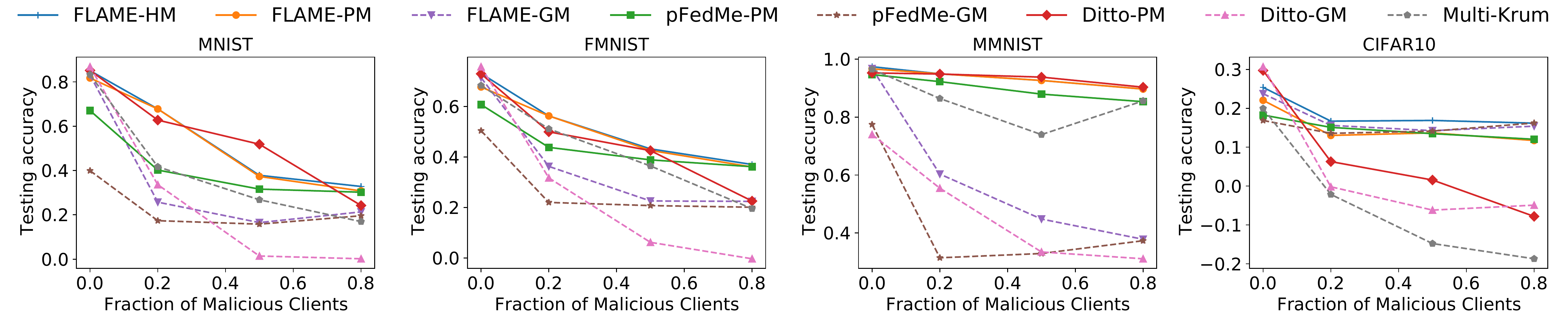}
    \vspace{-2em}
    \caption{Robustness comparison of different methods under Gaussian attacks with quantity skew.}
    \label{fig:acc_malicious_attack_4_dir-quantity-skew_2}
    \vspace{-1em}
\end{figure*}
\newpage

\begin{figure*}[ht]
    \centering
    \includegraphics[width=1\linewidth]{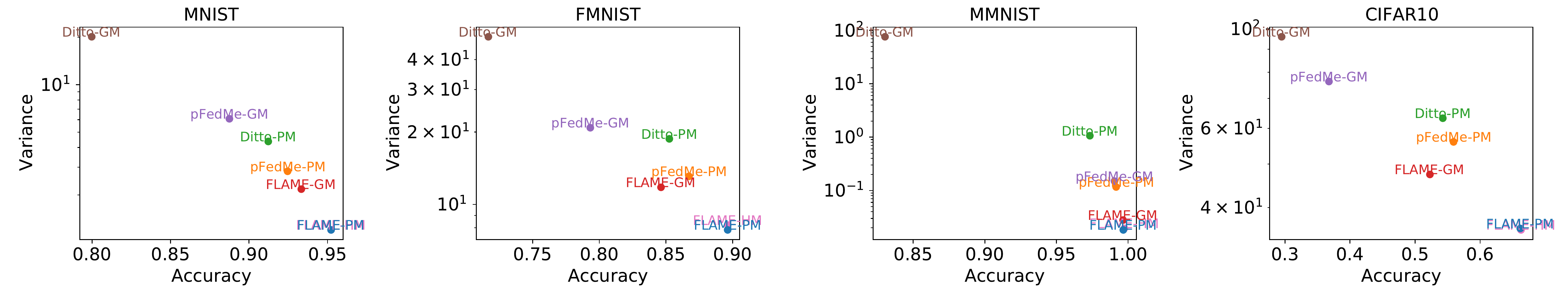}
    \vspace{-2em}
    \caption{Accuracy-fairness trade-off of competing methods with distribution-based label imbalance.}
    \label{fig:variance_accuracy_dir-label-skew_q=2}
    \vspace{-1em}
\end{figure*}
\begin{figure*}[ht]
    \centering
    \includegraphics[width=1\linewidth]{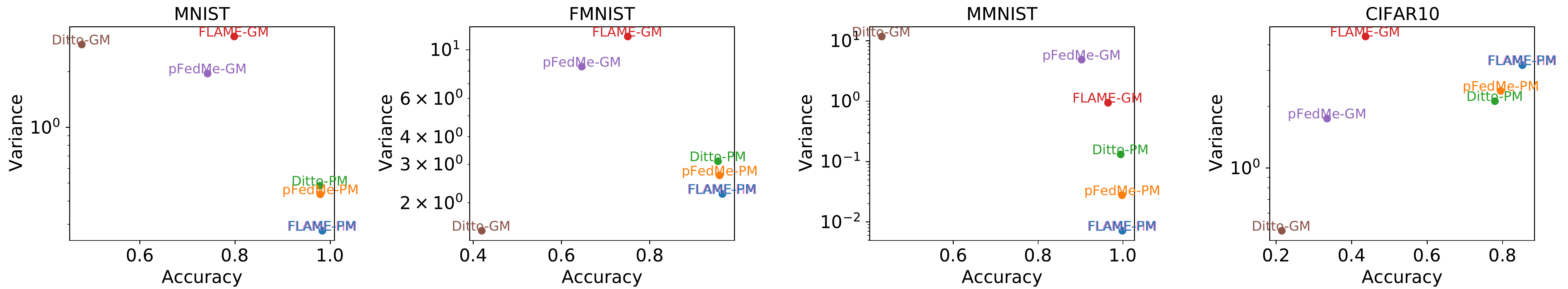}
    \vspace{-2em}
    \caption{Accuracy-fairness trade-off of competing methods with quantity-based label imbalance ($q=2$).}
    \label{fig:variance_accuracy_q-label-skew_q=2}
    \vspace{-1em}
\end{figure*}
\begin{figure*}[ht]
    \centering
    \includegraphics[width=1\linewidth]{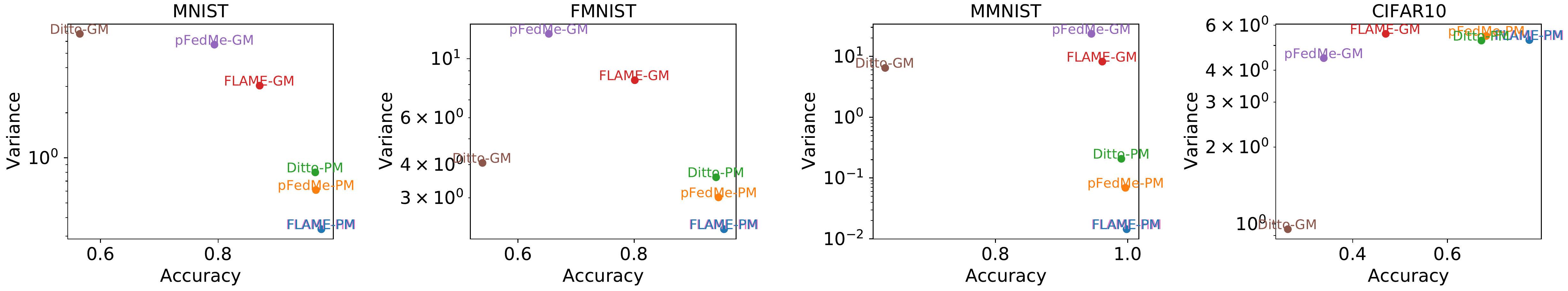}
    \vspace{-2em}
    \caption{Accuracy-fairness trade-off of competing methods with quantity-based label imbalance ($q=3$).}
    \label{fig:variance_accuracy_q-label-skew_q=3}
    \vspace{-1em}
\end{figure*}
\begin{figure*}[ht]
    \centering
    \includegraphics[width=1\linewidth]{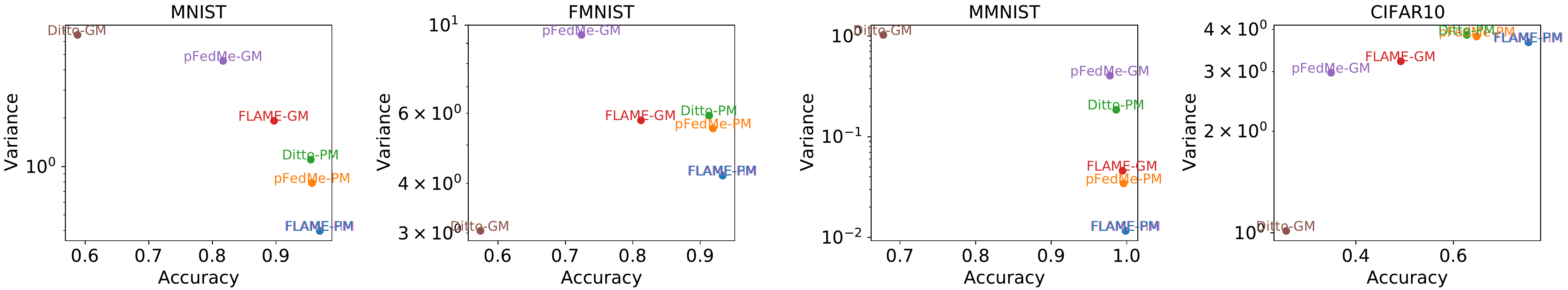}
    \vspace{-2em}
    \caption{Accuracy-fairness trade-off of competing methods with quantity-based label imbalance ($q=4$).}
    \label{fig:variance_accuracy_q-label-skew_q=4}
    \vspace{-1em}
\end{figure*}
\begin{figure*}[ht]
    \centering
    \includegraphics[width=1\linewidth]{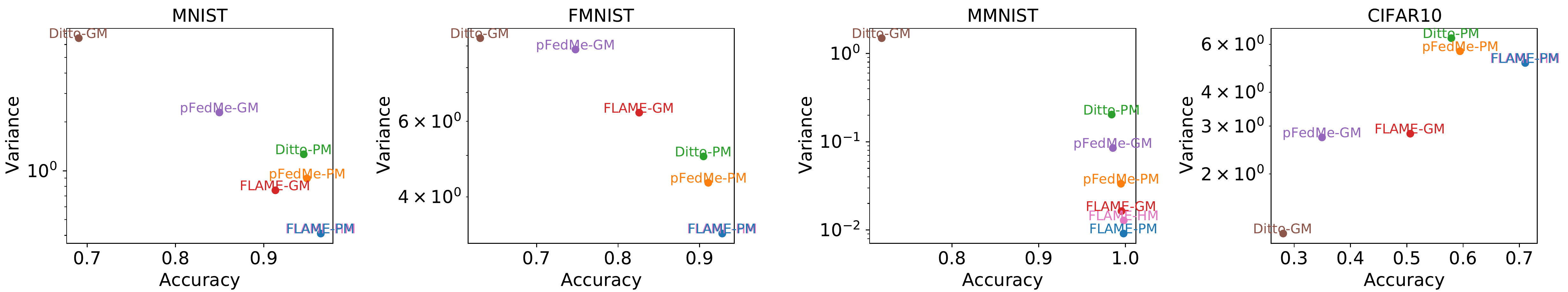}
    \vspace{-2em}
    \caption{Accuracy-fairness trade-off of competing methods with quantity-based label imbalance ($q=5$).}
    \label{fig:variance_accuracy_q-label-skew_q=5}
    \vspace{-1em}
\end{figure*}
\begin{figure*}[ht]
    \centering
    \includegraphics[width=1\linewidth]{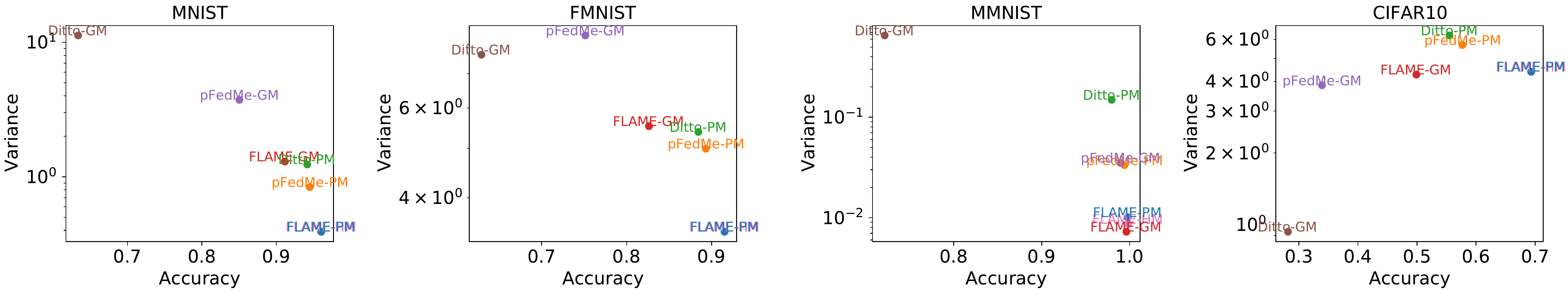}
    \vspace{-2em}
    \caption{Accuracy-fairness trade-off of competing methods with quantity-based label imbalance ($q=6$).}
    \label{fig:variance_accuracy_q-label-skew_q=6}
    \vspace{-1em}
\end{figure*}
\begin{figure*}[ht]
    \centering
    \includegraphics[width=1\linewidth]{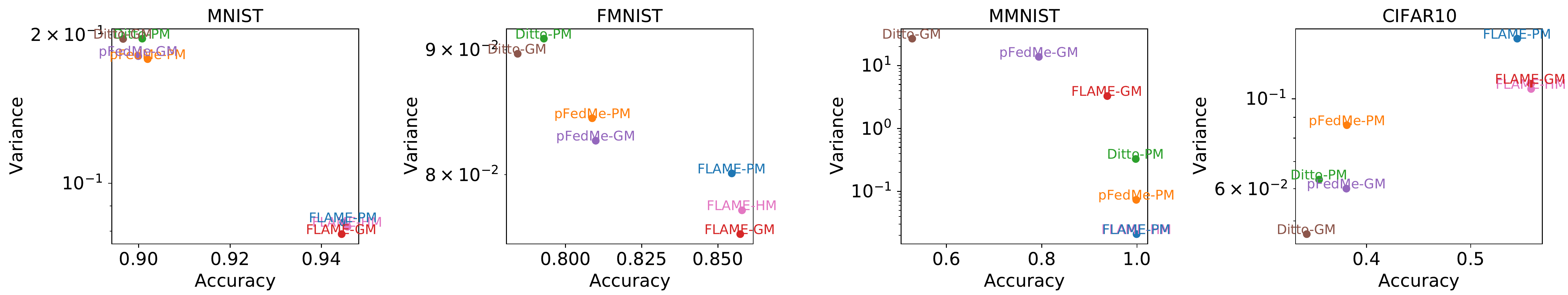}
    \vspace{-2em}
    \caption{Accuracy-fairness trade-off of competing methods with quality skew.}
    \label{fig:variance_accuracy_quality-skew_q=2}
    \vspace{-1em}
\end{figure*}
\begin{figure*}[ht]
    \centering
    \includegraphics[width=1\linewidth]{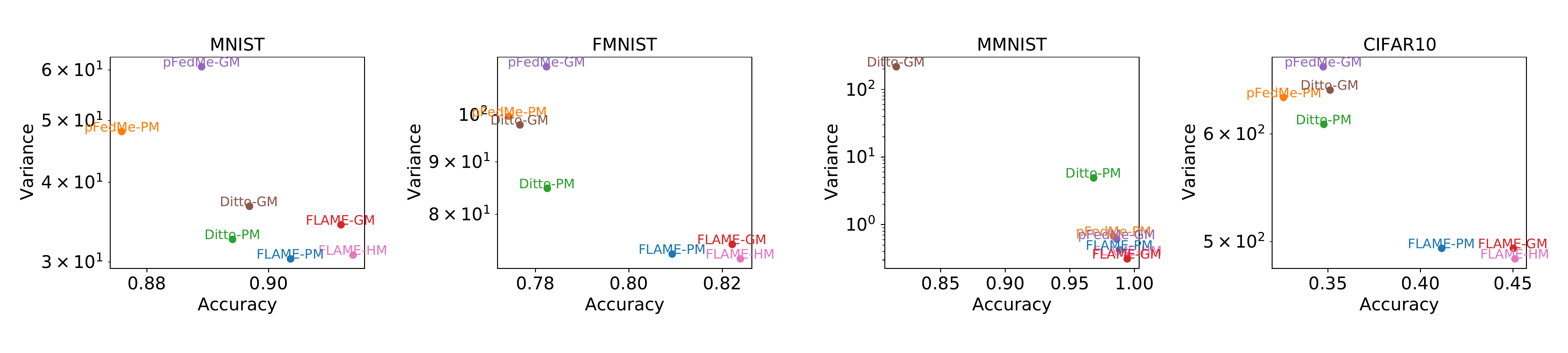}
    \vspace{-2em}
    \caption{Accuracy-fairness trade-off of competing methods with quantity skew.}
    \label{fig:variance_accuracy_dir-quantity-skew_q=2}
    \vspace{-1em}
\end{figure*}

\end{document}